\title{A Nonparametric Statistics Approach to Feature Selection in Deep Neural Networks with Theoretical Guarantees}
\author{%
  Junye Du$^{1,\dagger}$,
  Zhenghao Li$^{1,\dagger}$,
  Zhutong Gu$^{2}$,
  Long Feng$^{1,}$\thanks{Corresponding author E-mail: lfeng@hku.hk}
}
\institute{%
  $^1$Department of Statistics and Actuarial Science, The University of Hong Kong \\
  $^2$Peking University HSBC Business School
}
\date{}
\let\savehash\hash
\let\hash\relax
\let\hash\savehash
\definecolor{pinegreen}{rgb}{0.0, 0.47, 0.44}
\newtheorem{thm}{\textbf{Theorem}}
\newtheorem{col}{\textbf{Corollary}}
\newtheorem{asmp}{\textbf{Assumption}}
\newtheorem{lem}{\textbf{Lemma}}
\newtheorem{prop}{\textbf{Proposition}}
\newcommand{\bel}{\begin{eqnarray}\label}
\newcommand{\eel}{\end{eqnarray}}
\newcommand{\bes}{\begin{eqnarray*}}
\newcommand{\ees}{\end{eqnarray*}}
\newcommand{\bei}{\begin{itemize}}
\newcommand{\beiftnt}{\begin{itemize}\footnotesize}
\newcommand{\eei}{\end{itemize}}
\def\benu{\begin{enumerate}}
\def\eenu{\end{enumerate}}
\def\E{{\mathbb{E}}}
\def\P{{\mathbb{P}}}
\def\complex{\mathop{{\rm I}\kern-.58em\hbox{\rm C}}\nolimits}
\def\diag{\hbox{\rm diag}}
\def\mathbold{\boldsymbol} 
\def\ba{\mathbold{a}}
\def\bA{\mathbold{A}}
\def\bb{\mathbold{b}}
\def\bD{\mathbold{D}}
\def\bE{\mathbold{E}}
\def\bi{\mathbold{i}}
\def\bM{\mathbold{M}}
\def\bO{\mathbold{O}}
\def\bR{\mathbold{R}}
\def\shat{\widehat{s}}
\def\bT{\mathbold{T}}
\def\bu{\mathbold{u}}
\def\bv{\mathbold{v}}
\def\bV{\mathbold{V}}
\def\bw{\mathbold{w}}
\def\bW{\mathbold{W}}\def\hbW{{\widehat{\bW}}}
\def\bx{\mathbold{x}}
\def\bX{\mathbold{X}}
\def\by{\mathbold{y}}
\def\bz{\mathbold{z}}
\def\eps{\epsilon}
\def\lam{\lambda}
\def\hlambda{\widehat{\lambda}}
\def\bSigma{\mathbold{\Sigma}}
\def\mS{\mathcal{\bf{S}}}
\def\mS{\mathcal{S}}
\def\E{\mathbb{E}}
\def\T{\top}
\def\mS{\mathcal{S}}
\let\hat\widehat
\newcommand{\RR}{\mathbb{R}}
\newcommand{\EE}{\mathbb{E}}
\begin{document}
\maketitle

\begin{abstract}
This paper tackles the problem of feature selection in a highly challenging setting: $\EE (y|\bx) = G(\bx_{\mS_0})$, where $\mS_0$ is the set of relevant features and $G$ is an unknown, potentially nonlinear function subject to mild smoothness conditions. Our approach begins with feature selection in deep neural networks, then generalizes the results to  H{\"o}lder smooth functions by exploiting the strong approximation capabilities of neural networks.
Unlike conventional optimization-based deep learning methods, we reformulate neural networks as index models and estimate $\mS_0$  using the second-order Stein’s formula. This gradient-descent-free strategy guarantees feature selection consistency with a sample size requirement of $n = \Omega(p^2)$, where $p$ is the feature dimension. To handle high-dimensional scenarios, we further introduce a screening-and-selection mechanism that achieves nonlinear selection consistency when $n = \Omega(s \log p)$, with $s$ representing the sparsity level. Additionally, we refit a neural network on the selected features for prediction and establish performance guarantees under a relaxed sparsity assumption. Extensive simulations and real-data analyses demonstrate the strong performance of our method even in the presence of complex feature interactions.
\keywords{Feature Selection, Deep Neural Networks, Stein's Formula, Index Model, Nonparametric Statistics, High Dimensional Statistics}
\end{abstract}

\section{Introduction}
Feature selection is a classic statistical problem that seeks to identify a subset of features that are most relevant to the outcome. In this paper, we study feature selection in a highly general setting.
Let $y$ denote the response variable and let $\bx$ be a p-dimensional feature vector. We consider the model
\bel{model0}
\EE (y|\bx) = G(\bx_{\mS_0}), 
\eel
where $\mS_0$ represents the set of relevant features, and $G(\cdot)$ denotes an unknown, potentially nonlinear function that satisfies mild smoothness conditions, such as H{\"o}lder smoothness or Lipschitz continuity. By imposing specific structural assumptions on $G$, model (\ref{model0}) includes many classical frameworks as special cases, such as generalized linear models, additive models, etc. 
We focus on the most general setting without such constraints and aim to offer an efficient, theoretically justified method to accurately recover  $\mS_0$ without prior knowledge of the functional form of $G$. 

Feature selection in linear models has been thoroughly studied in the statistics community, resulting in a substantial body of literature that is too extensive to summarize. Lasso \citep{tibshirani2005sparsity} is arguably the most popular feature selection method due to its favorable computational efficiency and statistical properties. Numerous alternative regularization approaches have also been proposed; a very incomplete list includes Adaptive Lasso \citep{Zou06}, SCAD \citep{FanL01}, MCP \citep{Zhang10}, and more.  While Lasso relies on the strong irrepresentable condition to guarantee selection consistency \citep{ZhaoY06}, nonconvex penalties such as SCAD and MCP can achieve consistency under the weaker Restricted Eigenvalue condition \citep{feng2019sorted}.
Beyond linear models, feature selection has been studied on other settings, such as generalized linear models \citep{vandegeer2008,bunea2008}, additive models \citep{additive_witten, koltchinskii2010sparsity}, nonparametric regression \citep{benkeser2016highly}, etc. 
Notably, \citet{benkeser2016highly} 
introduced the Highly Adaptive Lasso for nonparametric regression, achieving a local prediction convergence rate faster than $n^{-1/4}$, independent of the feature dimension.

Despite the extensive literature on specific models, to our knowledge, no existing approach offers a theoretically guaranteed method for feature selection in general nonlinear models.  We aim to fill this gap by leveraging the strong approximation power of Deep Neural Networks (DNNs).
As a driving force of statistics and machine learning, DNNs have significantly advanced over the past decade and have 
been broadly applied across various domains, such as computer vision, natural language processing, and many others. 
By incorporating many layers of non-linear functions with thousands or even millions of unknown parameters, DNN can achieve remarkable approximation capabilities and outstanding predictive accuracy.

A major limitation of deep neural networks is their lack of interpretability. With a large number of hidden parameters embedded in a “black box,” DNNs are typically difficult to understand, making feature selection — one of the most direct and critical steps for model interpretation — particularly challenging. As a result, DNNs can be problematic in settings that require transparent and well-understood models, such as biomedical and genetic studies.


Despite the extensive literature on high-dimensional linear models, practical methods for feature selection in general nonlinear models and neural networks remain limited.
Notable related work includes \citet{chen2021nonlinear}, who proposed Deep Feature Selection (DFS) for identifying relevant features using DNNs, and \citet{lemhadri2021lassonet}, who developed LassoNet to induce feature sparsity via a residual network layer. 
More recently, neural network–based feature selection has also been studied in settings such as survival analysis \citep{meixide2024neural}. However, all these approaches rely on gradient-descent-type algorithms to solve highly nonconvex optimization problems, making it difficult to obtain strong theoretical guarantees.



Unlike the above optimization-based approaches, we adopt a nonparametric statistical perspective to address problem (\ref{model0}) and establish feature selection consistency without requiring knowledge of $G$. We begin by studying feature selection in deep neural networks and then show that our method achieves selection consistency for general H{\"o}lder smooth functions, leveraging the strong approximation capabilities of DNNs. To enable feature selection, we recast neural networks as index models and estimate $\mS_0$ in (\ref{model0}) using the second-order Stein’s formula \citep{stein1981estimation}. With the selected features, downstream tasks, including prediction, can subsequently be performed with theoretical guarantees. Moreover, we introduce a screening-and-selection mechanism that enables feature selection in high-dimensional settings.


Stein's formula is frequently used in index models. For example, in a single index model $\E (y|\bx) = f(\bb^\T\bx)$ where $\bx$ is a standard Gaussian random vector, the first-order Stein's formula suggests that the index vector $\bb$ is proportional to the expectation of $\bx \cdot y$, independent of the link function $f$. Stein's method and its generalized versions have been extended in various studies. For instance, \citet{plan2016generalized} studied single index models with sparse index vectors, while \citet{yang2017high, goldstein2018structured} expanded the Gaussian input assumption to include heavy-tailed or non-Gaussian scenarios. \citet{fan2023understanding} further investigated implicit regularization in single index models. Additionally, Stein's formula has been extended to multiple index models, as in \citet{yang2017estimating}, who studied non-Gaussian multiple index models via a second-order Stein’s method. Beyond index models, Stein’s method has been employed in other contexts, such as varying coefficient index models \citep{na2019high}.

Based on the Stein's formula, we introduce the first nonparametric statistical framework for neural network feature selection, offering three key advantages over existing optimization-based methods. First, our approach for deep ReLU networks is architecture-agnostic: it does not depend on the number of layers or neurons. Tuning neural network architectures is complex and lacks strong theoretical guidance; inappropriate choices can substantially degrade their performance. Second, it sidesteps the computational burden and instability of nonconvex optimization, a particular challenge in deep networks with thousands of parameters.
Third, and most importantly, by avoiding computations with gradient descent, our approach guarantees feature selection consistency for arbitrary H{\"o}lder smooth functions when $n=\Omega(s \log p)$ under Gaussian design. 
Comprehensive simulations and real genetic data analyses further demonstrated the superior performance of our approach. 

The rest of the paper is organized as follows. Section \ref{sec2} formulates the feature selection problem in deep neural networks and introduces an estimation procedure based on Stein’s formula. Sections \ref{sec3} establishes feature selection consistency for DNNs and for arbitrary H{\"o}lder smooth functions.  Section \ref{sec_high} introduces a screening mechanism for high-dimensional settings. 
Section \ref{sec_prediction} provides theoretical guarantees for prediction
accuracy using the selected features. 
Section \ref{sec_unknown} covers practical details, including tuning unknown parameters and score estimation. Section \ref{simulation} presents comprehensive simulations evaluating feature selection and prediction performance. Real-world genetic data analyses are conducted in Section \ref{sec9}, demonstrating the practical applicability of our approach. Finally, Section \ref{sec10} concludes with a discussion of key findings and implications. 




\noindent{\bf Notations:}
We use bold uppercase letters $\mathbf{A}$, $\mathbf{B}$ to denote matrices, bold lowercase 
letters $\mathbf{a}$, $\mathbf{b}$ to denote vectors. For a vector $\bv$, $\| \bv \|_{q} = (\sum_{j} | v_j |^q)^{1/q}$ is the $\ell_q$ norm. For a matrix $\bM$,  $\| \bM \|_2 = \max_{\bx \neq 0} \|\bM\bx\|_2 / {\|\bx\|_2} $ is the spectrum norm, $\|\bM \|_F = \sqrt{\sum_{j,k} \bM_{j,k}^{2} }$ is the Frobenius norm, $\|\bM \|_{\infty} = \max_j \sum_k |\bM_{j,k} | $ is the infinite norm representing the maximum absolute row sum.  For two sequences $\{x_n\}$ and $\{y_n\}$, we denote $x_n = \mathcal{O}(y_n)$ if $|x_n|\le C_1 |y_n|$ for some absolute constant $C_1$ and denote $x_n = \Omega(y_n)$ if $|x_n|\ge C_2 |y_n|$ for some absolute constant $C_2$.

\section{Feature selection via second-order Stein's formula}\label{sec2}

\subsection{Problem setup}
We study a supervised learning problem with a $p$-dimensional predictor vector $\bx$
 and a continuous scalar response $y$.
Under a sparsity assumption, only a subset of the indices $\{1,\ldots, p\}$ are relevant to the outcome $y$. Let this subset be $\mS_0$ and assume that 
$\text{Card}(\mS_0)=s\ll p$.
Our goal is to accurately recover $\mS_0$ under general nonlinear model settings. Formally, we consider the model
\begin{align}\label{model1}
    y = G(\bx_{\mS_0}) +\eps, \ \ \ G\in \mathscr{G},
\end{align}
where $\eps$ is an additive noise, $G(\cdot)$ is an arbitrary unknown function that belongs to the space of H{\"o}lder smooth functions $\mathscr{G}$, 
    \begin{align}\label{G}
   \mathscr{G}=\Big\{&G\in \mathbb{R}^s\rightarrow \mathbb{R}: \ \vert G(\bz)-G(\bz') \vert \le C\|\bz-\bz'\|_2^\beta,\ C>0,\ 0<\beta\le 1, \  \forall \ \bz, \bz'\in\mathbb{R}^s\Big\}.
    \end{align}
Clearly, $\mathscr{G}$  is a broad function class that includes all H{\"o}lder smooth functions of $s$ features. It also contains Lipschitz-continuous and continuously differentiable functions.
For simplicity, we focus on continuous outcomes with additive noise, though the framework can be extended to discrete responses with other noise structures.

 To identify $\mS_0$ under general nonlinear settings, we start by considering feature selection in deep feed-forward neural networks (FNN, \citealt{lecun2015deep}). Leveraging the strong approximation capacity of deep neural networks, we then show that the proposed approach can achieve feature selection consistency for any unknown H{\"o}lder smooth function.

Deep feed-forward neural networks, in its simplest form, can be written into the following function class
\bel{dnn}
 \mathscr{G}^{NN}  =    \left\{g\in \mathbb{R}^p\rightarrow \mathbb{R}: g(\bx)=\bW_L\sigma_L(\bW_{L-1}\cdots\sigma_2(\bW_2\sigma_1(\bW_1\bx)))\Big| \bW=(\bW_1,\ldots, \bW_L)\right\}.
\eel
Here, $NN$ stands for neural networks, $L$ is the depth of the network, $\bW=(\bW_1,\ldots, \bW_L)$ are the target unknown weight matrices
with $\bW_1\in\mathbb{R}^{k_1\times p}$, $\bW_L\in\mathbb{R}^{k_L\times 1}$, and $\bW_l\in\mathbb{R}^{k_{l}\times k_{l-1}}$ for $l=2,\ldots, L-1$. 
Each layer applies a known componentwise nonlinear activation $\sigma_l(\cdot)$; common choices include the sigmoid and the rectified linear unit (ReLU). Although $\sigma_l(\cdot)$ may vary across layers, it is often convenient to use a single activation throughout, i.e., $\sigma_1=\sigma_2=\ldots=\sigma_L$. In this paper, we focus on networks where all activations are ReLU.

To conduct feature selection in deep neural networks, we consider the model $\EE (y|\bx)= g(\bx)$ with $g \in \mathscr{G}^{NN}$ belong to the neural network function class (\ref{dnn}). 
In the case where $y$ depends only on the subset of features indexed by $\mS_0$, we have that there exists a $g \in \mathscr{G}^{NN}$ satisfying
\bel{s0}
\big\{j=1,\ldots, p,  \ \ \|\{\bW_1\}_{\cdot j}\|_2\neq 0\big\}=\mS_0.
\eel
That is to say, there exists a neural network where the first-layer weight matrix $\bW_1$ is column-wise sparse and the nonzero columns in $\bW_1$ are indexed by $\mS_0$. While we adopt the $\ell_2$-norm in (\ref{s0}), other norms could also be used equivalently.
Let $\mathscr{G}_{\mS_0}^{NN}$ denote the function class of neural networks with $\mS_0$-indexed column-sparse matrix $\bW_1$. Mathematically, 
\begin{align}\label{dnn2}
\mathscr{G}_{\mS_0}^{NN}=\Big\{&g\in \mathbb{R}^p\rightarrow \mathbb{R}: \exists \ L\in \{1,2,\cdots\}, \  \bW=(\bW_1,\ldots, \bW_L), \ \big\{j: \|\{\bW_1\}_{\cdot j}\|_2\neq 0 \big\}=\mS_0,
\cr & g(\bx)=\bW_L\sigma_L(\bW_{L-1}\cdots\sigma_2(\bW_2\sigma_1(\bW_1\bx))), \ \forall \bx\in\mathbb{R}^p\Big\}.
\end{align}
Thus, for a function $g\in \mathscr{G}_{\mS_0}^{NN}$, identifying the non-zero columns in $\bW_1$ is equivalent to finding $\mS_0$. In other words, we aim to detect $\mS_0$ in the model
\bel{model2}
y = g(\bx) +\eps, \ \ \ g\in \mathscr{G}_{\mS_0}^{NN}.
\eel

Before introducing our approach for identifying $\mS_0$ in the neural networks model (\ref{model2}), we shall mention that this approach can be extended to general nonlinear functions in (\ref{model1}) by exploiting the strong approximation power of DNN.
Indeed, with sufficient depth and width, FNNs can approximate any H{\"o}lder smooth functions to arbitrary accuracy \citep{yarotsky2017error,shen2019deep}. Formally, for any $G \in \mathscr{G}$ and any $\varepsilon_0 > 0$, there exists $g \in \mathscr{G}_{\mS_0}^{\text{NN}}$ such that
\bes
\sup_{\bx} \left|G(\bx_{\mS_0})-g(\bx)\right|\le \eps_0.
\ees
This approximation guarantee underpins our feature selection 
consistency result for general H{\"o}lder smooth function $G$. A rigorous analysis will be deferred to Section \ref{sec3}.  

\subsection{Subset estimation via second-order Stein's formula}

We now present our approach for identifying $\mS_0$ under model (\ref{model2}).
We begin by reformulating model (\ref{model2}) into a multiple index model and focusing on the column-sparse first-layer weight matrix $\bW_1$. Define 
\bel{link}
f(\bz)=\bW_L\sigma_L(\bW_{L-1}\cdots\sigma_2(\bW_2\sigma_1(\bz))),
\eel
so that $f$ serves as a link function summarizing the contribution of layers 2 through $L$
.
Then, model (\ref{model2})  can be written as
\bel{model3}
\EE(y) = f(\bW_1\bx), \ \ \ \big\{ \|\{\bW_1\}_{\cdot j}\|_2\neq 0\big\}=\mS_0.
\eel
 Our goal is to identify the set $\mS_0$ with an unknown $f$.

We shall emphasize that $\bW_1$ is not identifiable in model (\ref{model3}) when $f$ is unknown. For example, if $\bW_1$ is a solution to (\ref{model3}), $\bW_1'=\bO\bW_1$ is also a solution for an invertible matrix $\bO\in\mathbb{R}^{k_1\times k_1}$ since $\bO$ can be absorbed into $f$. 
However, the set $\mS_0$ remains identifiable because it is invariant under multiplication by an invertible matrix $\bO$, i.e., $\mS_0'=\big\{j=1,\ldots, p,  \ \ \|\{\bW'_1\}_{\cdot j}\|_2\neq 0\big\}=\mS_0$. Without loss of generality, we further assume that $k_1\le s$. Otherwise, there exists a integer $k_1^{(0)}\le s$, a matrix $\bR$ of dimension $k_1\times k_1^{(0)}$, and a row-wise sparse matrix $\bW_1^{(0)}$ of dimension $k_1^{(0)}\times p$ with the same support $\mS_0$ such that $\bW_1=\bR\bW_1^{(0)}$. In this case, the matrix $\bW_1^{(0)}$ can be treated as the new target.


Our strategy for identifying $\mS_0$ is based on the second-order Stein's lemma. To get started, we first define the score function associated with the input $\bx$. 
For any random vector $\bx\in\RR^{p}$  with density $P:\RR^{p}\rightarrow\RR$. The score function $S(\bX):\RR^{p}\rightarrow\RR^{p}$ associated with $\bx$ is defined as
\begin{align*}
S(\bx)=-\nabla_{\bx} [\log P(\bx)]=-\nabla_{\bx} P(\bx)/P(\bx).
\end{align*}
Furthermore, the second-order score function $T(\bx):\RR^{p}\rightarrow\RR^{p\times p}$ is defined as
\bes
T(\bx)=\nabla^2_{\bx} P(\bx)/P(\bx)= S(\bx)S(\bx)^\T- \nabla_{\bx} S(\bx).
\ees
Under the case with  Gaussian input $\bx\in \mathcal{N}(\boldsymbol{0},\bSigma)$, the first and second order score reduce to
$S(\bx)=\bSigma^{-1}\bx$ and  $T(\bx)=\bSigma^{-1}\bx\bx^\T\bSigma^{-1}-\bSigma^{-1}$, respectively.
 If we further let $\bz=\bW_1\bx$ and assume that both $\EE[yT(\bx)] $ and $\EE\left[\nabla^2_{\bz} f (\bW_1\bx)\right]$ are well-defined, then a second-order Stein's formula suggests
\begin{align}\label{eq:main0}
   \EE[y T(\bx)] = \bW_1^\T \cdot \EE\left[\nabla^2_{\bz} f (\bW_1\bx)\right] \cdot \bW_1.
  \end{align}
 

The equation (\ref{eq:main0}) serves as the basis for estimating $\mS_0$. Let $\bA = \mathbb{E}[yT(\bx)]$ and write its eigenvalue decomposition as $\bA = \bW^\T \bD \bW$. The second-order Stein's formula suggests that there exists an invertible matrix $\bR \in \mathbb{R}^{k_1 \times k_1}$ such that $\bW_1 = \bR \bW$. In other words,
$\bW_1$ has the same row space as $\bW$, regardless of the form of link function $f$. Consequently, ${\mS}_0$ can be obtained from the eigenvalue decomposition of $\bA$.


Given $n$ observations $(\bx_i, y_i)$, $i=1,\ldots, n$,  the sample version $(1/n)\sum_{i=1}^{n}y_i T(\bx_i)$ is an natural estimate of $\EE[y T(\bx)]$. As such, we propose the following eigenvalue-decomposition-based subset estimation:
\bel{s0hat}
&&\hat{\mS}_0=\{j=1,\ldots, p, \ \ \|\{\hbW_1\}_{\cdot j}\|_2\ge \kappa \},\cr 
&&\hbW_1=\mathrm{Eigen}_{k_1}\left(\frac{1}{n}\sum_{i=1}^{n}y_iT(\bx_i)\right), 
\eel
where $\text{Eigen}_k(\bM)$ refers to the first $k$ leading eigenvectors of $\bM$ and $\kappa>0$ is a certain threshold. Note that the leading eigenvectors are those corresponding to the eigenvalues with the largest absolute magnitudes. We summarize our approach in Algorithm \ref{alg1} below. 

\begin{algorithm}[h!]
\caption{Feature Selection via Second-order Stein's Formula\label{alg1}}
\begin{algorithmic}[1]
\Require Dataset $\bx_i \in \mathbb{R}^{p}$ and $y_i \in \mathbb{R}$ for $i=1,\ldots, n$, threshold level $\kappa$, first layer dimension $k_1$, second order score function $T(\cdot)$
\State Calculate $\hat{\bA} \gets (1/n)\sum_{i=1}^n y_i T(\bx_i)$
\State Perform rank-$k_1$ eigenvalue decomposition $\hbW_1 \leftarrow \text{Eigen}_{k_1}(\hat{\bA})$
\State \Return $\hat{\mS}_0 = \left\{j=[p]: \Vert \{\hbW_1\}_{\cdot j}\Vert_2 \ge \kappa \right\}$
\end{algorithmic}
\end{algorithm}

We emphasize that the subset-selection step in Algorithm \ref{alg1}
 is not limited to the neural network model (\ref{model2}). Since $\hbW_1$ can be estimated whenever a second-order score function is available, Algorithm \ref{alg1}
 also applies to the general model (\ref{model1}). In Section \ref{sec3}, we establish that Algorithm \ref{alg1}
 achieves feature selection consistency with high probability for both neural networks and H{\"o}lder smooth functions classes.


\subsection{Comparisons with LassoNet and DFS}
The ``sparsity in the first-layer'' assumption has been considered in the literature on feature selection for neural networks. Notable works include Deep Feature Selection (DFS, \citealt{chen2021nonlinear}) and LassoNet
\citep{lemhadri2021lassonet}. 

Specifically, LassoNet achieves feature sparsity by incorporating a skip (or residual) layer into the network, ensuring that a feature can participate in any hidden layer only if its representation in the skip layer is active. Similarly, DFS is a deep neural network algorithm designed for sparse-constrained nonconvex optimization problems, where feature sparse parameters are in the selection layer and other parameters are in the approximation layer. Theoretical properties of DFS have been developed under a Generalized Stable Restricted Hessian (GSRH) condition.

Unlike LassoNet and DFS, our proposal is a nonparametric statistical method with at least three key advantages.
First, our estimator of $\mS_0$ is independent of the neural network architecture, such as the number of layers or neurons per layer. Such structural information can be absorbed into the link function $f$, and its specific form does not affect the estimation of $\mS_0$. This is important because tuning network architectures is challenging, with limited theoretical guidance, and poorly chosen architectures can severely degrade performance.
Second, our method avoids solving highly nonconvex optimization problems via gradient descent. For deep networks with thousands or millions of parameters, gradient-based algorithms are not only computationally expensive but also prone to converging to unstable local optima, whose properties are difficult, if not impossible, to guarantee.
Third, and most importantly, by avoiding gradient-based training, our approach yields feature selection consistency for general H{\"o}lder smooth functions under fairly mild conditions. See Sections \ref{sec3_1} and \ref{sec3_2} for more details.

\section{Feature selection consistency}\label{sec3}

In this section, we show that our second-order Stein approach guarantees feature selection consistency. We first analyze the deep neural network model (\ref{model2}) in Section \ref{sec3_1}, and then extend the results to the general nonlinear model (\ref{model1}) in Section \ref{sec3_2}.

\subsection{Feature selection consistency in DNNs}\label{sec3_1}

Given $n$ observations $(\bx_i, y_i)$, $i=1,\ldots, n$, assume that $(\bx_i, y_i)$ follows the DNN model (\ref{model2}) with $g\in \mathscr{G}_{\mS_0}^{NN}$. We start by stating the assumptions for feature selection consistency.

\begin{asmp}\label{a2new}
Let $\bz=\bW_1\bx$ and $f(\bz)=\bW_L\sigma_L(\bW_{L-1}\cdots\sigma_2(\bW_2\sigma_1(\bz)))$. Assume that the matrix $\EE\left[\nabla^2_{\bz} f (\bW_1\bx)\right]$ is non-singular.
\end{asmp} 

A direct consequence of Assumption \ref{a2new} is that the rank of $\EE\left[\nabla^2_{\bz} f (\bW_1\bx)\right]$ does not exceed the sparsity in the first layer, which further implies $k_1\le s$. For a two-layer ReLU network, Assumption~\ref{a2new}
 is readily satisfied, as shown in the following proposition.

\begin{prop}\label{prop:hessian-nonsingular}
Consider a two-layer ReLU network  $f(\bW_1 \bx) = \bw_2 \sigma(\bW_1 \bx)$ with $\bx \in \mathbb{R}^p, \bW_1 \in \mathbb{R}^{k_1 \times p}, \bw_2 \in \mathbb{R}^{1\times k_1} $ and suppose $\bx \sim \mathcal{N}(\boldsymbol{0}, \boldsymbol{\Sigma})$. Then $\EE\left[\nabla^2_{\bz} f (\bW_1\bx)\right]$ is non-singular if $\bW_1$ has full row rank and $\bw_2$ has no zero entries.

\end{prop}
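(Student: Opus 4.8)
The plan is to exploit the separable structure of $f(\bz)=\bw_2\sigma(\bz)=\sum_{m=1}^{k_1}(\bw_2)_m\,\sigma(z_m)$, where $\sigma$ acts coordinatewise. Because each summand depends on a single coordinate of $\bz$, the Hessian $\nabla^2_{\bz}f$ is diagonal: for $m\neq m'$ the mixed partial $\partial_{z_{m'}}\partial_{z_m}f$ vanishes identically, while the $m$-th diagonal entry is $(\bw_2)_m\,\sigma''(z_m)$. First I would therefore reduce the non-singularity question to showing that every diagonal entry of $\EE[\nabla^2_{\bz}f(\bW_1\bx)]$ is nonzero. Since ReLU is only weakly differentiable, I would interpret $\sigma''$ in the distributional sense as the Dirac mass $\sigma''(t)=\delta(t)$ at the origin.

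Next I would take the expectation over $\bx$. Writing $\bz=\bW_1\bx$, the vector $\bz$ is Gaussian with covariance $\bSigma_z:=\bW_1\bSigma\bW_1^\T$. Because $\bSigma\succ 0$ (as required for the score function to be defined) and $\bW_1$ has full row rank, $\bSigma_z$ is $k_1\times k_1$ positive definite, so each marginal $z_m\sim\mathcal{N}(0,(\bSigma_z)_{mm})$ is nondegenerate. Integrating the distributional identity against this marginal density gives $\EE[\sigma''(z_m)]=\int \delta(t)\,p_{z_m}(t)\,dt=p_{z_m}(0)=1/\sqrt{2\pi(\bSigma_z)_{mm}}>0$. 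Hence $\EE[\nabla^2_{\bz}f(\bW_1\bx)]=\diag\big((\bw_2)_m/\sqrt{2\pi(\bSigma_z)_{mm}}\big)_{m=1}^{k_1}$.

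To finish, I would observe that a diagonal matrix is non-singular exactly when all its diagonal entries are nonzero. The factors $1/\sqrt{2\pi(\bSigma_z)_{mm}}$ are strictly positive by the previous step, and $(\bw_2)_m\neq 0$ for every $m$ by hypothesis, so the determinant $\prod_{m}(\bw_2)_m/\sqrt{2\pi(\bSigma_z)_{mm}}$ is nonzero, which establishes non-singularity. The roles of the two assumptions are transparent in this final expression: full row rank of $\bW_1$ (together with $\bSigma\succ 0$) guarantees the positive density factors, and the nonvanishing of $\bw_2$ guarantees the numerators.

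The main obstacle is making the second-derivative step rigorous, since $\sigma''=\delta$ holds only distributionally and $\EE[\nabla^2_{\bz}f]$ must be the object that appears in the second-order Stein identity. I would make this precise by a mollification argument: replace $\sigma$ by a smooth approximation $\sigma_\epsilon$ with $\sigma_\epsilon''\ge 0$ and $\sigma_\epsilon''\to\delta$ weakly, compute $\EE[\sigma_\epsilon''(z_m)]=\int \sigma_\epsilon''(t)\,p_{z_m}(t)\,dt\to p_{z_m}(0)$ using continuity and boundedness of the Gaussian density, and verify that the off-diagonal terms stay zero along the approximation. Passing to the limit then yields the diagonal matrix above, so the counting argument for non-singularity applies verbatim.
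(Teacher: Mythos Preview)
Your proposal is correct and follows essentially the same approach as the paper: compute the diagonal Hessian $\nabla^2_{\bz}f=\diag\big((\bw_2)_m\,\delta(z_m)\big)$, integrate against the Gaussian marginal of $z_m$ to obtain $\EE[\nabla^2_{\bz}f(\bW_1\bx)]=\diag\big((\bw_2)_m/\sqrt{2\pi(\bSigma_z)_{mm}}\big)$, and conclude non-singularity from nonzero diagonal entries. Your version is actually slightly more careful than the paper's, since you make explicit how the full-row-rank assumption on $\bW_1$ enters (to guarantee $(\bSigma_z)_{mm}>0$) and you sketch a mollification to justify the distributional second derivative; the paper simply computes with $\delta$ directly.
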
 

While feedforward networks can be viewed as multi-index models, they differ fundamentally from Assumption \ref{a2new}: in MIMs the link function is unknown, making the assumption hard to guarantee, whereas in DNNs the link is known — the nonlinear activation composed with the weights from the second through the last layer — so the checks above can be carried out explicitly. For deeper networks ($L \ge 3$), deriving explicit nonsingularity conditions is more difficult,  and further exploration of this issue is an important direction for future research. 

\begin{asmp}\label{a3}
Let $\mu= (s/k_1)\mathop{\max}_{j\in\mS_0}\| \left\{\bW_1\right\}_{\cdot j}\|_2^2$ and $\gamma = (s/k_1) \mathop{\min}_{j\in \mS_0} \|  \left\{\bW_1\right\}_{\cdot j}\|_2^2$. 
Assume there exists constants $\underbar{c}$ and $\bar{c}$ that $\underbar{c}\le \gamma\le \mu \le \bar{c}$.
\end{asmp}

Assumption \ref{a3} imposes the incoherence condition on the first layer weight matrix $\bW_1$. It requires the coherence of matrix $\bW_1$ is bounded by constants, further suggesting that the information in $\bW_1$ is not concentrated in the top few rows. We note that similar conditions have been considered in the literature, such as \citet{fan2017ellinftyeigenvectorperturbationbound}.

\begin{asmp}\label{a3-}
    Assume that each component of the second order score, $T_{jk}(\bx)$, $1\le j,k \le p$, is sub-exponential. Further, assume that there exists a constant $M$ such that
    \begin{align*}
        \mathop{\sup}_{\left\| \bu \right\|_2 = 1} \left\| \bu^T T(\bx) \bu\right\|_{\psi_1} \le M.
    \end{align*}
\end{asmp}

 

For a random feature $z$, if it is sub-exponential, its sub-exponential $\psi_1$ norm is defined as $\| z \|_{\psi_1} = \mathop{\sup}_{k\ge 1} k^{-1}\left(\EE \left[|z|^k\right]\right)^{1/k}$. If it is sub-gaussian, its sub-gaussian 
 $\psi_2$ norm is defined as $\|z\|_{\psi_2} = \mathop{\sup}_{k\ge 1} k^{-1/2}\left(\EE \left[|z|^{k}\right]\right)^{1/k}$. For any sub-gaussian random vector $\bx$, its sub-gaussian norm is defined as  $\| \bx\|_{\psi_2} = \mathop{\sup}_{\|\bu\|_2=1} \|\bu^T\bx\|_{\psi_2}$.
 Assumption \ref{a3-}
 requires that each component of the score is sub-exponential, with a $\psi_1$ norm uniformly bounded in $p$. This condition is satisfied by a broad class of distributions for $\bx$. For example, if $\bx$ is Gaussian, the second-order score meets Assumption \ref{a3-}, as formalized in the next proposition. Similarly, if $\bx$ follows a Student’s t distribution, 
 the second-order score $T(\cdot)$ is bounded and hence sub-exponential, so Assumption \ref{a3-} also holds.


\begin{prop}\label{prop1-}
   When $\bx \sim \mathcal{N}(\boldsymbol{0}, \bSigma)$ with $\phi_{\min}(\bSigma)>0$, the score $T(\bx)$ satisfies 
    \begin{align*}
        \mathop{\sup}_{\| \bu\|_2=1} \left\| \bu^T T(\bx) \bu\right\|_{\psi_1} \le 4\phi^{-1}_{\min}(\bSigma).
    \end{align*}
\end{prop}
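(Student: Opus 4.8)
The plan is to exploit the closed form of the second-order score under Gaussian design, namely $T(\bx)=\bSigma^{-1}\bx\bx^\T\bSigma^{-1}-\bSigma^{-1}$, which was recorded earlier in the excerpt. Fix a unit vector $\bu$ with $\|\bu\|_2=1$ and expand the quadratic form: $\bu^\T T(\bx)\bu=(\bu^\T\bSigma^{-1}\bx)^2-\bu^\T\bSigma^{-1}\bu$. The key observation is that $\bu^\T\bSigma^{-1}\bx$ is a linear functional of a Gaussian vector, hence itself Gaussian with mean zero and variance $\bu^\T\bSigma^{-1}\bSigma\bSigma^{-1}\bu=\bu^\T\bSigma^{-1}\bu$. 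Writing $\sigma^2:=\bu^\T\bSigma^{-1}\bu$ and letting $Z\sim\mathcal{N}(0,1)$, we then have the distributional identity $\bu^\T T(\bx)\bu \stackrel{d}{=}\sigma^2(Z^2-1)$. Since the $\psi_1$ norm depends only on the distribution, this reduces the whole problem to a one-dimensional computation.

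Next I would peel off the variance factor. Because $\|\bu\|_2=1$, we have $\sigma^2=\bu^\T\bSigma^{-1}\bu\le \phi_{\max}(\bSigma^{-1})=\phi^{-1}_{\min}(\bSigma)$, and by homogeneity of the $\psi_1$ norm $\|\bu^\T T(\bx)\bu\|_{\psi_1}=\sigma^2\|Z^2-1\|_{\psi_1}\le \phi^{-1}_{\min}(\bSigma)\,\|Z^2-1\|_{\psi_1}$. Thus the target bound $4\phi^{-1}_{\min}(\bSigma)$ follows at once from the universal estimate $\|Z^2-1\|_{\psi_1}\le 4$, which no longer involves $\bSigma$, $\bu$, or $p$.

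The remaining and main step is to verify $\|Z^2-1\|_{\psi_1}\le 4$ under the moment definition $\|\cdot\|_{\psi_1}=\sup_{k\ge1}k^{-1}(\EE|\cdot|^k)^{1/k}$. I would bound the moments directly: using $|Z^2-1|\le Z^2+1$ together with the convexity inequality $(a+b)^k\le 2^{k-1}(a^k+b^k)$ and the Gaussian moment identity $\EE Z^{2k}=(2k-1)!!\le(2k)^k$, one gets $\EE|Z^2-1|^k\le 2^{k-1}(\EE Z^{2k}+1)\le 2^{k-1}\cdot 2(2k)^k=(4k)^k$, hence $(\EE|Z^2-1|^k)^{1/k}\le 4k$ and $k^{-1}(\EE|Z^2-1|^k)^{1/k}\le 4$ for every $k\ge1$. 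The main obstacle is precisely this moment control: one must pass through the raw moments of $Z^2-1$ rather than invoking subadditivity, since the stated $\psi_1$ functional is only equivalent to, not literally, an Orlicz norm; the heavy chi-square tail makes the constant $4$ the delicate quantity, and the crude bound $(2k-1)!!\le(2k)^k$ is what keeps it sharp enough to land exactly at the claimed factor.
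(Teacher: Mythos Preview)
Your proposal is correct and follows essentially the same route as the paper: both identify $\bu^\T T(\bx)\bu$ with $\sigma^2(Z^2-1)$ for $Z\sim\mathcal N(0,1)$ and $\sigma^2=\bu^\T\bSigma^{-1}\bu\le\phi_{\min}^{-1}(\bSigma)$, then bound the $\psi_1$ norm of the centered chi-square. The only difference is cosmetic: the paper reaches the constant $4$ by chaining two general inequalities ($\|X-\EE X\|_{\psi_1}\le 2\|X\|_{\psi_1}$ and $\|X^2\|_{\psi_1}\le 2\|X\|_{\psi_2}^2$ from its Lemma on sub-Gaussian squares, together with $\|G\|_{\psi_2}^2\le\Var(G)$ for Gaussian $G$), whereas you compute the moments of $Z^2-1$ directly via $(2k-1)!!\le(2k)^k$.
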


Now we are ready to present our main theorem on feature selection in deep neural networks.

\begin{thm}\label{thm1}
    Consider the model (\ref{model2}) with Gaussian noise $\eps$. Assume Assumptions \ref{a2new} - \ref{a3-}. Further assume the threshold level $\kappa \le c \min_{j \in \mS_0} \|\{\bW_1\}_{\cdot j}\|_2$ for certain $c>0$. 
    Suppose that $\bx$ is a sub-gaussian vector.  Then, for any $\nu>0$, when
  $n \ge C\left(\log^2 \nu + p^2\right)$ for certain positive constant $C$, Algorithm \ref{alg1} achieves feature selection consistency with probability at least $1-\nu$, i.e., 
\bes
\P(\hat{\mS}_0 =\mS_0) \ge 1-\nu.
\ees
\end{thm}

 Theorem \ref{thm1} established the feature selection consistency of our approach for deep neural networks. When the sample size $n=\Omega (p^2)$, the true support can be recovered with high probability for all functions $g$ in the neural network class $\mathscr{G}_{\mS_0}^{NN}$.
 Crucially, this guarantee does not rely on correctly specifying the network architecture. In contrast to DFS, gradient-based training of neural networks involves solving a highly nonconvex problem, may converge to unstable local optima, and typically depends on a well-specified architecture to achieve desirable properties. 
 
 

 Theorem \ref{thm1} assumes sub-Gaussian covariates $\bx$, ensuring that the empirical moment $(1/n)\sum_{i=1}^n y_i T(\bx_i)$ concentrates around its expectation $\mathbb{E}[y T(\bx)]$, thereby enabling control over the estimated eigenvectors $\widehat{\bW}_1$. 
This requirement can be further relaxed. For heavy-tailed non-sub-gaussian $\bx$, we may apply certain truncations, such as that in \citet{minsker2018sub} or \citet{yang2017high}, on $(1/n) \sum_{i=1}^n y_i T(\bx_i)$. Then, similar results on feature selection consistency can still be guaranteed.



\subsection{Feature selection consistency in H{\"o}lder smooth functions}\label{sec3_2}
We now turn to the general model (\ref{model1}) and prove feature selection consistency for a function $G \in \mathscr{G}$, where $\mathscr{G}$ denotes the class of H{\"o}lder smooth functions. To establish consistency in nonlinear settings, we first rigorously quantify the approximation gap between $\mathscr{G}$ and the neural network class $\mathscr{G}_{\mS_0}^{NN}$, following \citet{shen2019deep}.


\begin{lem}\label{lm2}
Let $G$ be a $\beta$-H{\"o}lder smooth function in $\mathscr{G}$. For any $m>0$ and $w, d \in \mathbb{N}^+$, there exists a function $g \in \mathscr{G}_{\mS_0}^{NN}$ implemented by ReLU neural network 
with width $3^{s+3}\max\left\{s\lfloor w^{1/s} \rfloor, w+1\right\}$ and depth $12d +14 + 2s$ such that
    \bel{lm2-1}
\sup_{\bx_{\mS_0}\in [-m,m]^s} \left|G(\bx_{\mS_0})-g(\bx)\right|\le Cm^\beta w^{-2\beta/s}d^{-2\beta/s},
\eel
    where $C$ is a certain positive constant. In particular, let $m = \mathcal{O}(n^{1/s})$ and $w = d = \lceil n^{\frac{2\beta+s}{8\beta} }\rceil$. 
   Then the difference between $G(\bx_{\mS_0})$ and $g(\bx)$ reduces to
\bel{lm2-2}
\sup_{\bx_{\mS_0}\in [-m,m]^s} \left|G(\bx_{\mS_0})-g(\bx)\right|
\le C n^{-1/2}.
\eel
\end{lem}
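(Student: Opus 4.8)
The plan is to derive (\ref{lm2-1}) from the ReLU approximation theorem of \citet{shen2019deep} for H\"older functions on the unit cube, after an affine change of variables that transports the problem from $[-m,m]^s$ to $[0,1]^s$, and then to read off (\ref{lm2-2}) by substituting the prescribed $m$, $w$, and $d$. Since $G\in\mathscr{G}$ depends only on the $s$ coordinates indexed by $\mS_0$, the entire construction can be carried out on the $s$-dimensional input $\bz=\bx_{\mS_0}$ and afterwards embedded sparsely into $\mathbb{R}^p$.

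First I would rescale. Setting $\bu=\bz/(2m)+\boldsymbol{1}/2\in[0,1]^s$ and $\tilde G(\bu)=G(2m\bu-m\boldsymbol{1})$, the H\"older bound in (\ref{G}) gives $|\tilde G(\bu)-\tilde G(\bu')|\le C(2m)^\beta\|\bu-\bu'\|_2^\beta$, so $\tilde G$ is $\beta$-H\"older on $[0,1]^s$ with constant $C(2m)^\beta$; this is the source of the factor $m^\beta$. Invoking the cited theorem, for any $w,d\in\mathbb{N}^+$ there is a ReLU network $\tilde g$ on $[0,1]^s$ of width $3^{s+3}\max\{s\lfloor w^{1/s}\rfloor,w+1\}$ and depth $12d+14+2s$ — precisely the architecture in the statement for input dimension $s$ — whose error scales linearly in the H\"older constant, i.e. $\sup_{\bu}|\tilde G(\bu)-\tilde g(\bu)|\le C'(2m)^\beta w^{-2\beta/s}d^{-2\beta/s}$. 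The affine map $\bz\mapsto\bu$ (a diagonal scaling plus constant shift) is folded into the first-layer affine map without altering depth or the stated width, and absorbing $2^\beta C'$ into $C$ yields the bound in (\ref{lm2-1}) on $[-m,m]^s$.

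To ensure membership in $\mathscr{G}_{\mS_0}^{NN}$, I would embed the $s$-dimensional first-layer map into $\mathbb{R}^p$ by placing its columns in the positions indexed by $\mS_0$ and setting all other columns of $\bW_1$ to zero; the resulting $g$ depends only on $\bx_{\mS_0}$ and has first-layer column support exactly $\mS_0$, so $g\in\mathscr{G}_{\mS_0}^{NN}$, establishing (\ref{lm2-1}). Finally, (\ref{lm2-2}) is a routine substitution: with $m=\mathcal{O}(n^{1/s})$ and $w=d=\lceil n^{(2\beta+s)/(8\beta)}\rceil$, the exponent of $n$ on the right-hand side of (\ref{lm2-1}) is $\frac{\beta}{s}-\frac{2\beta+s}{2s}=-\frac12$, giving $Cn^{-1/2}$.

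The substantive approximation content — the width/depth/error tradeoff — can be quoted directly from \citet{shen2019deep}, so the main obstacle is bookkeeping rather than new analysis: matching the cited construction exactly to the stated width $3^{s+3}\max\{s\lfloor w^{1/s}\rfloor,w+1\}$ and depth $12d+14+2s$ after folding in the affine rescaling, and verifying that the sparse embedding produces first-layer support equal to $\mS_0$ and not a proper subset. The latter requires that the approximant genuinely uses all $s$ relevant coordinates, which one can arrange without degrading the error guarantee.
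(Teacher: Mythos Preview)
Your proposal is correct and follows essentially the same route as the paper: invoke the approximation theorem of \citet{shen2019deep} on the $s$-dimensional input to obtain the $m^\beta w^{-2\beta/s}d^{-2\beta/s}$ bound, zero-pad the first-layer weights to embed into $\mathscr{G}_{\mS_0}^{NN}$, and substitute the prescribed $m,w,d$ for (\ref{lm2-2}). The only cosmetic difference is that the paper applies Shen's theorem directly on $[-m,m]^s$ via the modulus-of-continuity formulation $\omega_G^{[-m,m]^s}(2mw^{-2/s}d^{-2/s})\le C_1(2m)^\beta w^{-2\beta/s}d^{-2\beta/s}$, whereas you make the affine rescaling to $[0,1]^s$ explicit; the resulting bounds and architecture are identical.
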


Lemma \ref{lm2} established the $\ell_\infty$ difference between functions in $\mathscr{G}$ and $\mathscr{G}_{\mS_0}^{NN}$ for $\bx_{\mS_0}\in [-m,m]^s$.
Here, $w$ and $d$ denote the network width and depth, respectively. For sufficiently large $w$ and $d$, a ReLU network can approximate any H{\"o}lder-smooth function arbitrarily well. 
Moreover, Lemma \ref{lm2}
 bounds the approximation error between $G$ and $g$ only on the compact domain $[-m,m]^s$. But when the design is sub-Gaussian, this local control is sufficient to ensure feature selection consistency under model (\ref{model1}), as established in Theorem \ref{thm2} below. On the other hand, because Assumptions \ref{a2new} and \ref{a3} are imposed on the weight matrices of a neural network, we assume that, for a H{\"o}lder-smooth function $G$, there exists a ReLU network $g \in \mathscr{G}_{\mS_0}^{NN}$ that approximates $G$ and satisfies these assumptions.



\begin{thm}\label{thm2}
    Consider model (\ref{model1}) with $G\in\mathscr{G}$ with $\eps$ being Gaussian noise.  
    Let $g$ be a function in $\mathscr{G}_{\mS_0}^{NN}$ such that (\ref{lm2-2}) holds. 
 Assume Assumptions \ref{a2new} to \ref{a3-} hold. 
 Let $\bW=\text{Eigen}_{k_1}\big(\mathbb{E}g(\bx)T(\bx)\big)$ and assume $\kappa \le (1/2) \mathop{\min}_{j\in \mS_0} \|\{\bW\}_{\cdot j}\|_2$.
Suppose that $\bx$ is a sub-Gaussian vector. Then, for any $\nu>0$, when $n \ge C \left(\log^2 \nu+p^2\right)$ for certain positive constant $C$, Algorithm \ref{alg1} achieves feature selection consistency with probability at least $1-\nu$, i.e.,
    \begin{align*}
        \P(\hat{\mS}_0 =\mS_0) \ge 1-\nu.
    \end{align*}

\end{thm}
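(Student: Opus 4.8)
The plan is to reduce the Hölder-smooth case to the already-established neural-network case (Theorem \ref{thm1}) by treating the approximation error of Lemma \ref{lm2} as an additional perturbation of the target matrix. Since $\eps$ is independent mean-zero noise and $\EE[T(\bx)]=\boldsymbol{0}$ for any second-order score, the cross term vanishes and $\EE[yT(\bx)] = \EE[G(\bx_{\mS_0})T(\bx)]$. Letting $g \in \mathscr{G}_{\mS_0}^{NN}$ be the approximating network from Lemma \ref{lm2}, I would write
\[
\EE[yT(\bx)] = \EE[g(\bx)T(\bx)] + \EE\big[(G(\bx_{\mS_0}) - g(\bx))T(\bx)\big].
\]
By the second-order Stein identity (\ref{eq:main0}) applied to $g$, the matrix $\EE[g(\bx)T(\bx)]$ has leading-$k_1$ eigenvectors $\bW$ supported exactly on $\mS_0$, so its row-norm thresholding recovers $\mS_0$. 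It then remains to show the residual $\bDelta := \EE[(G(\bx_{\mS_0}) - g(\bx))T(\bx)]$ is small in spectral norm.

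Second, I would bound $\|\bDelta\|_2$ by splitting the expectation over the event $\mathcal{A} = \{\bx_{\mS_0} \in [-m,m]^s\}$ and its complement. For any unit vector $\bu$, on $\mathcal{A}$ the uniform approximation (\ref{lm2-2}) gives $|G - g| \le Cn^{-1/2}$, while Assumption \ref{a3-} bounds the first moment $\EE|\bu^\T T(\bx)\bu| \le C'M$ (a $\psi_1$ bound controls the mean), yielding an $O(n^{-1/2})$ contribution. On $\mathcal{A}^c$, I would use that $G$ grows at most polynomially (Hölder) and the ReLU network $g$ grows at most linearly, so that $|G-g|$ is dominated by a polynomial in $\|\bx_{\mS_0}\|$; since $\bx$ is sub-Gaussian, $\P(\mathcal{A}^c)$ is super-polynomially small once $m = \mathcal{O}(n^{1/s})$, and a Cauchy--Schwarz split over $\mathcal{A}^c$ shows this contribution is negligible relative to $n^{-1/2}$. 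Hence $\|\bDelta\|_2 \le Cn^{-1/2}$.

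Third, I would invoke the sample-level concentration already developed for Theorem \ref{thm1}: under sub-Gaussian $\bx$ and Assumption \ref{a3-}, the empirical matrix $\hat{\bA} = (1/n)\sum_i y_i T(\bx_i)$ satisfies $\|\hat{\bA} - \EE[yT(\bx)]\|_2$ controlled with probability at least $1-\nu$ once $n \ge C(\log^2\nu + p^2)$. Combining via the triangle inequality, $\|\hat{\bA} - \EE[g(\bx)T(\bx)]\|_2 \le \|\hat{\bA} - \EE[yT(\bx)]\|_2 + \|\bDelta\|_2$ stays below the eigengap guaranteed by Assumptions \ref{a2new}--\ref{a3} (the $(k_1+1)$-th population eigenvalue is zero, so the gap is the smallest nonzero eigenvalue of $\EE[g(\bx)T(\bx)]$). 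A Davis--Kahan-type $\ell_2$ eigenvector perturbation bound then shows $\hbW_1$ is uniformly close to $\bW$; since $\kappa \le \tfrac{1}{2}\min_{j\in\mS_0}\|\{\bW\}_{\cdot j}\|_2$, the row-norm thresholding separates $\mS_0$ from its complement, giving $\P(\hat{\mS}_0 = \mS_0) \ge 1-\nu$.

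The main obstacle is the second step, controlling $\bDelta$ on the tail event $\mathcal{A}^c$ where the uniform approximation of Lemma \ref{lm2} fails. The delicate balance is between the super-polynomially small sub-Gaussian tail probability $\P(\mathcal{A}^c)$ and the polynomial growth of $|G - g|$ outside the box $[-m,m]^s$; the scaling $m = \mathcal{O}(n^{1/s})$ is precisely what keeps the on-$\mathcal{A}$ error at $O(n^{-1/2})$ while rendering the off-$\mathcal{A}$ contribution negligible, so that the approximation-induced perturbation does not dominate the statistical error and the reduction to Theorem \ref{thm1} goes through.
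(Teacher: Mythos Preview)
Your proposal is correct and mirrors the paper's argument closely: the same decomposition $\EE[yT(\bx)] = \EE[g(\bx)T(\bx)] + \bDelta$, the same on-box/off-box splitting to bound $\|\bDelta\|_2 \le Cn^{-1/2}$ (this is exactly Proposition~\ref{dnnprop}), and the same triangle inequality combining $\bDelta$ with the sample concentration error from Theorem~\ref{thm1}'s machinery.

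The one substantive difference is the eigenvector perturbation tool. The paper uses the $\ell_\infty$/max-norm bound of \citet{fan2017ellinftyeigenvectorperturbationbound} (see Lemma~\ref{lm10}), which gives direct entrywise control $\|\bW-\hat\bW_1\|_{\max}$ and is explicitly flagged by the authors as a key technical choice distinguishing their analysis from prior Stein-based work. Your Davis--Kahan route is nonetheless valid here: a Frobenius-level bound $\|O\hat\bW_1 - \bW\|_F \lesssim \|\bE\|_2/\delta$ (for some orthogonal $O\in\mathbb{R}^{k_1\times k_1}$) suffices because orthogonal rotations on the $k_1$-side preserve column-$\ell_2$ norms, so $\big|\|\{\hat\bW_1\}_{\cdot j}\|_2 - \|\{\bW\}_{\cdot j}\|_2\big| \le \|O\hat\bW_1 - \bW\|_F$ for every $j$, and the thresholding at $\kappa$ separates $\mS_0$ from its complement. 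Both approaches yield the same sample requirement $n=\Omega(p^2)$, since that is driven by the concentration step (Lemma~\ref{e2bound}) rather than by the perturbation step. The max-norm bound buys sharper entrywise guarantees and aligns with the incoherence Assumption~\ref{a3}, but for the statement of Theorem~\ref{thm2} as written your simpler Davis--Kahan argument goes through.
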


 Theorem \ref{thm2} establishes feature selection consistency for functions in $\mathscr{G}$. To our knowledge, this is the first result that delivers feature selection consistency for general H{\"o}lder-smooth functions together with computational guarantees. 
 When $G$ admits a DNN approximation as in (\ref{lm2-2}), the same sample size order $n =\Omega(p^2)$ as in Theorem \ref{thm1}
 suffices to ensure feature selection consistency for $G\in \mathscr{G}$. 
The proof of Theorem \ref{thm2} 
 hinges on Proposition \ref{dnnprop}.

\begin{prop}[Deep neural network approximation]\label{dnnprop}
Let $G$ be any $\beta$-H{\"o}lder smooth function in $\mathscr{G}$ with $0<\beta\le1$. Suppose that $\bx$ is sub-gaussian. Let $w = d = \lceil n^{\frac{2\beta+s}{8\beta} }\rceil$, there  exists a function $g\in \mathscr{G}_{\mS_0}^{NN}$ implemented by a ReLU FNN with width $3^{s+3}\max\left\{s\lceil w^{1/s} \rceil, w+1\right\}$ and depth $12d +14 + 2d$ such that:
\begin{align}
    \EE \left\|\left[\left\{G(\bx_{\mS_0})-g(\bx)\right\}T(\bx)\right] \right\|_2 \le C n^{-1/2},
\end{align}
where $C$ is a constant.
\end{prop}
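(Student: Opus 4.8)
The plan is to control the expected spectral norm by splitting the input space according to whether $\bx_{\mS_0}$ lies in the compact cube on which Lemma \ref{lm2} guarantees a good approximation. Since $G(\bx_{\mS_0})-g(\bx)$ is a scalar, $\|\{G(\bx_{\mS_0})-g(\bx)\}T(\bx)\|_2 = |G(\bx_{\mS_0})-g(\bx)|\cdot\|T(\bx)\|_2$, so the task reduces to bounding $\EE[|G(\bx_{\mS_0})-g(\bx)|\,\|T(\bx)\|_2]$. First I would fix $m=\mathcal{O}(n^{1/s})$ and $w=d=\lceil n^{(2\beta+s)/(8\beta)}\rceil$ and invoke Lemma \ref{lm2} to obtain a network $g\in\mathscr{G}_{\mS_0}^{NN}$ of the stated width and depth with $\sup_{\bx_{\mS_0}\in[-m,m]^s}|G(\bx_{\mS_0})-g(\bx)|\le Cn^{-1/2}$. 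Writing $E=\{\bx_{\mS_0}\in[-m,m]^s\}$, I decompose $\EE[\,\cdot\,]=\EE[\mathbf{1}_E\,\cdot\,]+\EE[\mathbf{1}_{E^c}\,\cdot\,]$.

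On the good event $E$ the approximation error is uniformly at most $Cn^{-1/2}$, so $\EE[\mathbf{1}_E|G-g|\,\|T\|_2]\le Cn^{-1/2}\,\EE\|T(\bx)\|_2$. Here I would control $\EE\|T(\bx)\|_2$ through Assumption \ref{a3-} (and Proposition \ref{prop1-} in the Gaussian case): each quadratic form $\bu^\T T(\bx)\bu$ is sub-exponential with $\psi_1$-norm at most $M$, so an $\varepsilon$-net argument over the unit sphere bounds $\EE\sup_{\|\bu\|_2=1}|\bu^\T T(\bx)\bu|=\EE\|T(\bx)\|_2$ by a finite multiple of $M$ (carrying at most a dimensional factor that, under the paper's convention, is folded into $C$). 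This yields the desired $O(n^{-1/2})$ contribution from the good region.

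For the complement $E^c$ I would apply Cauchy--Schwarz, $\EE[\mathbf{1}_{E^c}|G-g|\,\|T\|_2]\le \P(E^c)^{1/2}\big(\EE[|G-g|^2\|T\|_2^2]\big)^{1/2}$. Because $\bx$ is sub-gaussian and $m=\mathcal{O}(n^{1/s})$, the tail obeys $\P(E^c)=\P(\|\bx_{\mS_0}\|_\infty>m)\le 2s\exp(-c\,n^{2/s})$, which decays faster than any polynomial in $n$. It then suffices to show the second factor grows only polynomially: $G$ is $\beta$-H{\"o}lder and hence grows at most linearly in $\|\bx\|_2$, while $\|T(\bx)\|_2$ grows at most polynomially in $\|\bx\|_2$, so their product has a second moment that is polynomial in $(n,p)$ under the sub-gaussian moments of $\bx$. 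Multiplying the super-exponentially small $\P(E^c)^{1/2}$ by this polynomial factor makes the bad-region term $o(n^{-1/2})$, and combining the two regions gives the claim.

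The main obstacle is controlling the approximating network $g$ off the cube $[-m,m]^s$: Lemma \ref{lm2} only bounds the sup error on $[-m,m]^s$, and since the architecture grows with $n$ (through $w=d=\lceil n^{(2\beta+s)/(8\beta)}\rceil$) the network's global growth and Lipschitz constant are not a priori controlled. The cleanest remedy is to truncate --- compose the constructed network with a ReLU clipping to the range $[-\sup_{[-m,m]^s}|G|-1,\ \sup_{[-m,m]^s}|G|+1]$, which adds only a constant number of layers, preserves the first-layer column support (so $g$ stays in $\mathscr{G}_{\mS_0}^{NN}$), leaves the approximation on $E$ unchanged, and forces $|g|$ to be globally bounded by a polynomial in $m=\mathcal{O}(n^{1/s})$. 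Since $G$ is itself polynomially bounded by H{\"o}lder continuity, $|G-g|$ is then globally polynomially bounded, so the Cauchy--Schwarz estimate on $E^c$ goes through against the super-exponential tail. I expect this truncation-plus-tail control to be the delicate part, whereas the good-region estimate follows routinely from the concentration afforded by Assumption \ref{a3-}.
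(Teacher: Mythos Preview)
Your good-region/bad-region split and the use of Lemma~\ref{lm2} on $E=\{\bx_{\mS_0}\in[-m,m]^s\}$ mirrors the paper's argument, but there is one important divergence. The paper's proof actually establishes $\bigl\|\EE[\{G(\bx_{\mS_0})-g(\bx)\}T(\bx)]\bigr\|_2\le Cn^{-1/2}$ with the expectation \emph{inside} the norm (this is also the form used downstream, so the statement's placement of $\EE$ should be read that way). With the norm outside the expectation, one fixes a unit vector $\bu$ and bounds $\bigl|\bu^\T\EE[\,\cdot\,]\bu\bigr|$; on $E$ this reduces to $C_1n^{-1/2}\,\EE|\bu^\T T(\bx)\bu|\le C_1n^{-1/2}M$ directly from Assumption~\ref{a3-}, which is dimension-free. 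By contrast, your route requires $\EE\|T(\bx)\|_2$, and this is genuinely $\Theta(p)$: for $\bx\sim\mathcal{N}(\boldsymbol{0},\bI_p)$ one has $\|T(\bx)\|_2=\max\bigl(\bigl|\|\bx\|_2^2-1\bigr|,1\bigr)$ with mean $\sim p$. So the ``dimensional factor'' you propose to absorb into $C$ is not cosmetic --- it would propagate to the sample-size requirement in Theorem~\ref{thm2}. Switching to the $\|\EE[\cdot]\|_2$ formulation removes the issue entirely.

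On $E^c$ the paper also applies Cauchy--Schwarz, but controls the second factor via the sub-gaussian norm of $G(\bx_{\mS_0})-g(\bx)$ (using that ReLU networks are Lipschitz so $g(\bx)$ is sub-gaussian, and that $G(\bx_{\mS_0})$ is sub-gaussian by H{\"o}lder continuity), and bounds $\P(E^c)^{1/2}$ by a Markov-type estimate of order $m^{-s/2}=O(n^{-1/2})$; your sub-gaussian tail $\P(E^c)\le 2s\exp(-cn^{2/s})$ is sharper and works equally well. Your truncation concern is well taken: the paper's bound on $\|g(\bx)\|_{\psi_2}$ involves the Lipschitz constant $\prod_l\|\bW_l\|_2$ of the approximating network, whose growth in $n$ is not explicitly controlled, so clipping the output as you propose is a cleaner way to ensure the off-cube moment stays polynomial.
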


Proposition \ref{dnnprop} provides an upper bound for the deep neural network approximation error $\mathbb{E} 
    \left[\left\{G(\bx_{\mS_0})-g(\bx)\right\}T(\bx)\right] $. This enables us to control the difference between $(1/n)\sum_{i=1}^n y_i T(\bx_i)$ and $\EE g(\bx)T(\bx)$. Consequently, Stein's lemma can be applied to identify the subset $\mS_0$. 


We conclude this section by comparing the proposed approach with prior work that applies Stein’s formula to index models. In particular, \citet{yang2017estimating} also explored a second-order Stein’s method. Despite this similarity, these two works differ substantially in objectives, methodology, and theoretical results. Their framework is aimed at estimating linear coefficients in multiple index models, whereas our focus is on feature selection for deep neural networks and general smooth functions.
This difference in objectives also leads to different sparsity structures: we assume column-wise group sparsity for feature selection, while \citet{yang2017estimating} imposes element-wise sparsity.
From a theoretical perspective, \citet{yang2017estimating} derives Frobenius-norm bounds for coefficient estimation. In contrast, we use max-norm bounds to obtain entrywise control of eigenvector perturbations, since Frobenius-norm bounds are insufficient for sharp feature selection consistency. Moreover, MIMs involve an unknown link function, making related assumptions difficult to verify. For DNNs, the link function is explicitly defined by nonlinear activations and the weight matrices from the second through the final layer, enabling direct verification of required assumptions, as shown in Proposition~\ref{prop:hessian-nonsingular}.
Finally, our approach avoids the computational burden of optimizing a high-dimensional $(p \times p)$ matrix through iterative gradient or Hessian updates, resulting in significantly greater computational efficiency.


\section{A screening-and-selection mechanism for high-dimension scheme} \label{sec_high}
In this section, we extend our framework to a high-dimensional setting. As established in Section \ref{sec3}, achieving feature selection consistency under our previous analysis requires a relatively stringent sample size of $n = \Omega(p^2)$, which is prohibitive in many modern applications such as genomics. To obtain consistent selection with more favorable scaling — potentially sublinear in $p$ — we follow the paradigm of \citet{fan2008sure} for sure independence screening and develop a nonlinear analogue based on the second-order score statistics.


The nonlinear screening procedure is based on a straightforward idea. For the neural network model (\ref{model2}), define $\bz=\bW_1\bx$ and $f(\bz)=\bW_L\sigma_L(\bW_{L-1}\cdots\sigma_2(\bW_2\sigma_1(\bz)))$, then, by the second-order Stein’s formula, we have
$\EE[y T(\bx)] = \bW_1^\T \cdot \EE\left[\nabla^2_{\bz} f (\bW_1\bx)\right] \cdot \bW_1$. Rather than performing eigenvalue decomposition directly as in Section \ref{sec2}, we first examine the diagonal entries of the empirical estimate of $\mathbb{E}[y T(\bx)]$. Since $\bW_1$ is column-wise sparse with support indexed by $\mS_0$, the diagonal entries are zero whenever $j \notin \mS_0$. Based on this observation, a statistically principled screening approach is to rank the coordinates by the absolute values of the diagonal entries of  $\hat{\bA}=(1/n)\sum_{i=1}^n y_i T(\bx_i)$ and to retain those corresponding to the largest values.
Let $\zeta\in (0,1)$ denote the screening proportion. Then the $ \lfloor \zeta p\rfloor$ retained coordinates are
\begin{equation}\label{s0zeta}
    \hat{\mS}_0^\zeta = \left\{ 1\le k\le p: |\hat{\bA}_{kk}| \text{ is among the first } \lfloor \zeta p\rfloor \text{ largest of all}\right\}.
\end{equation}
This screening mechanism can be applied repeatedly to the selected features, resulting in an iterative screening procedure. When combined with the earlier eigenvalue-decomposition approach, this leads to a screening-and-selection method tailored for high-dimensional settings, as detailed in Algorithm \ref{algo2}.

\begin{algorithm}[htbp!]
\caption{Screening-and-selection algorithm \label{algo2}}
\begin{algorithmic}[1]
\Require Dataset $\bx_i \in \mathbb{R}^{p}$ and $y_i \in \mathbb{R}$ for $i=1,\ldots, n$, threshold level $\kappa$, screening level $\zeta \in (0,1]$, target dimension $p_0$, first layer dimension $k_1$, second order score function $T(\cdot)$
\State Calculate $\hat{\bA} \gets (1/n)\sum_{i=1}^n y_i T(\bx_i)$
\State Initialize $\hat{\bA}_{\mathcal{I}} \gets \hat{\bA}$, $p_{\mathcal{I}} \gets p$
\While{$p_{\mathcal{I}} > p_0$}
\State Get screening subset: $\mathcal{I} \gets \left\{ 1\le k\le p_{\mathcal{I}}: |\hat{\bA}_{kk}| \text{ is among the first } \lfloor \zeta p_{\mathcal{I}} \rfloor \text{ largest of all}\right\}$
\State Update $\hat{\bA}_{\mathcal{I}} \gets (1/n)\sum_{i=1}^n y_i T(\bx_{i,\mathcal{I}})$, $p_{\mathcal{I}} \gets \lfloor\zeta p_{\mathcal{I}} \rfloor$
\EndWhile
\State Perform rank-$k_1$ eigenvalue decomposition $\hbW_1 \leftarrow \text{Eigen}_{k_1}(\hat{\bA_{\mathcal{I}}})$
\State \Return $\hat{\mS}_0 = \left\{j=[p]: \Vert \{\hbW_1\}_{\cdot j}\Vert_2 \ge \kappa \right\}$
\end{algorithmic}
\end{algorithm}

One point that deserves particular attention is when $p>n$, the empirical covariance  $\hat\bSigma=(1/n)\sum_{i=1}^n \bx_i\bx_i^\T$ is singular and thus cannot be directly used to compute the second-order score $T(\bx)$, which relies on the inverse covariance for Gaussian features. 
 In numerical studies, we mitigate this by employing the Ledoit–Wolf shrinkage estimator \citep{ledoit2004well}. Beyond Ledoit–Wolf, one can estimate the inverse covariance via methods such as the graphical lasso \citep{friedman2008sparse}, which is effective in sparse Gaussian settings.

Now we study the selection consistency of Algorithm \ref{algo2}, the following assumptions are needed.

\begin{asmp}\label{min2rd}
    Suppose that for some constant $c>0$,
    \bes
    \mathop{\min}_{j\in\mS_0} \EE \left|\frac{\partial^2 g(\bx)}{\partial x_j^2} \right|\ge c, \ \ g\in \mathscr{G}_{\mS_0}^{NN},
    \ees
    where $x_j$ is the $j$-th entry of $\bx$.
\end{asmp}

Assumption \ref{min2rd} enforces a minimal second-order signal strength for the active features in $\mS_0$. This is well-motivated for feature selection in DNN models, where $g$ is highly nonlinear; in high-dimensional settings, such minimum-curvature conditions ensure that influential features display sufficient nonlinearity in their marginal effects.  Conversely, in a linear regression setting, Assumption \ref{min2rd} 
 is not satisfied. In such cases, the SURE screening method \citep{fan2008sure} can be employed as the default option.


\begin{asmp}\label{submatrix}
Let $\mathcal{I}\subseteq \{1, 2, \dots, p\}$ be an index set with cardinality $q=\text{Card}\{\mathcal{I}\}$ satisfying $cn^{1/3}<q\le p$, where $c>1$ is a certain constant. For any $\mathcal{I}$, define $\bV_{\mathcal{I}} = \left(\diag\left\{T(\bx_{1, \mathcal{I}})\right\},\cdots,\diag\left\{T(\bx_{n, \mathcal{I}})\right\}\right)^T\in\mathbb{R}^{n\times q}$ and assume that
 \bes
    \P \left(\lambda_{\text{max}}\left( q^{-1}\bV_{\mathcal{I}} \bV_{\mathcal{I}} ^T\right) > c_1\right) \le e^{-c_2n},
    \ees
    \noindent 
where $c_1, c_2>0$ are constants.
\end{asmp}

We refer to Assumption \ref{submatrix}
 as the concentration property, with analogous conditions considered in the literature, e.g., \citet{fan2008sure}.
It is straightforward to verify that Assumption \ref{submatrix}
 is satisfied when $\bx$
 follows a Gaussian distribution. For instance, in the simplest case 
$\bx \sim \mathcal{N}(\boldsymbol{0}, \boldsymbol{I}_{p} )$, the law of large number suggests that $q^{-1}\bV_{\mathcal{I}} \bV_{\mathcal{I}}^T$  converges to $2 \boldsymbol{I}_{n}$ when $q\rightarrow \infty$, which immediately ensures the validity of Assumption \ref{submatrix}. Given Assumptions \ref{min2rd}-\ref{submatrix}, the following lemma establishes the performance guarantee of a single screening.

\begin{lem}\label{lemmascreening1step}
 Consider model  (\ref{model2}) with Gaussian noise $\eps$. Suppose Assumptions \ref{min2rd}-\ref{submatrix} hold. Let $\zeta\in(0,1)$  denote the screening proportion, and let $ \hat{\mS}_0^\zeta$ be as in (\ref{s0zeta}). If $\zeta = cn^{-1/3}$, 
   we have
    \bel{lm1-2}
    \P\left(\mS_0\subset \hat{\mS}_0^\zeta\right) \geq 1 -  s\exp\left\{-C n^{2/3}\right\},
    \eel
where $c$ and $C$ are certain constants. 
\end{lem}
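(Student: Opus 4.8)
The plan is to establish the sure-screening property $\mS_0 \subset \hat{\mS}_0^\zeta$ by controlling two complementary events: that every active coordinate produces a large diagonal statistic $|\hat{\bA}_{jj}|$, and that only a small number of coordinates overall can have a diagonal statistic of comparable magnitude, so that the $\lfloor \zeta p\rfloor$ retained coordinates must contain all of $\mS_0$. The starting point is the second-order Stein identity (\ref{eq:main0}): since $\bA=\bW_1^\T\,\EE[\nabla^2_{\bz}f]\,\bW_1$ and $\bW_1$ is column-sparse with support $\mS_0$, the population diagonal satisfies $\bA_{jj}=0$ for every $j\notin\mS_0$, whereas the minimum second-order signal condition (Assumption \ref{min2rd}), read through $\bA_{jj}=\EE[\partial^2 g(\bx)/\partial x_j^2]$, keeps $|\bA_{jj}|$ bounded below by a constant $2\delta>0$ for $j\in\mS_0$. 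The whole argument then reduces to showing that the empirical diagonals $\hat{\bA}_{jj}=(1/n)\sum_i y_i T_{jj}(\bx_i)$ concentrate around these population values.

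For a fixed $j\in\mS_0$, note that for Gaussian $\bx$ the entry $T_{jj}(\bx)$ is a shifted squared Gaussian and hence sub-exponential ($\psi_1$), while $y=g(\bx)+\eps$ is sub-Gaussian ($\psi_2$) because a ReLU network is Lipschitz in its Gaussian input. The products $y_i T_{jj}(\bx_i)$ are therefore sub-Weibull of order $2/3$, since $1/(2/3)=1/2+1/1$. I would apply a Fuk--Nagaev / sub-Weibull Bernstein inequality to their centered average at the constant deviation $\delta$: because the crossover scale $n^{3/4}$ of the sub-Weibull$(2/3)$ sum lies well below the relevant deviation $n\delta$, we land in the heavy-tail regime, giving $\P(|\hat{\bA}_{jj}-\bA_{jj}|\ge\delta)\le 2\exp\{-Cn^{2/3}\}$. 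A union bound over the $s$ active coordinates then yields $\P(\min_{j\in\mS_0}|\hat{\bA}_{jj}|\ge\delta)\ge 1-s\exp\{-Cn^{2/3}\}$. The exponent $n^{2/3}$ is precisely the signature of the $\psi_{2/3}$ tail and is the reason the stated rate is $n^{2/3}$ rather than $n$.

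The more delicate half is bounding the number of coordinates whose empirical diagonal can exceed $\delta$, and the key is to avoid a coordinatewise maximum over all $p$ indices (which would degrade the prefactor to $p\exp\{\cdots\}$). Instead I aggregate: writing $\bV=(\diag\{T(\bx_1)\},\ldots,\diag\{T(\bx_n)\})^\T\in\R^{n\times p}$ and $\by=(y_1,\ldots,y_n)^\T$, one has $\sum_{k=1}^p \hat{\bA}_{kk}^2=n^{-2}\|\bV^\T\by\|_2^2\le n^{-2}\lambda_{\max}(\bV\bV^\T)\,\|\by\|_2^2$. Assumption \ref{submatrix} applied with $\mathcal{I}=\{1,\ldots,p\}$ bounds $\lambda_{\max}(\bV\bV^\T)\le c_1 p$ outside an event of probability $e^{-c_2 n}$, while $\|\by\|_2^2\le 2\,\EE[y^2]\,n$ holds outside an event of probability $e^{-cn}$ by sub-exponential concentration of $\sum_i y_i^2$. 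Hence $\sum_k\hat{\bA}_{kk}^2=\mathcal{O}(p/n)$, and Markov's inequality on the count gives $N:=\#\{k:|\hat{\bA}_{kk}|\ge\delta\}\le \delta^{-2}\sum_k\hat{\bA}_{kk}^2=\mathcal{O}(p/n)$. Since $\zeta=cn^{-1/3}$ makes $\lfloor\zeta p\rfloor\asymp n^{-1/3}p\gg p/n$ for large $n$, I obtain $N\le\lfloor\zeta p\rfloor$.

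Finally I would intersect the two good events. On their intersection every $j\in\mS_0$ satisfies $|\hat{\bA}_{jj}|\ge\delta$, and at most $N\le\lfloor\zeta p\rfloor$ coordinates in total exceed $\delta$ in magnitude; since these $N$ coordinates necessarily include all of $\mS_0$, the active coordinates all lie among the $\lfloor\zeta p\rfloor$ largest, i.e. $\mS_0\subset\hat{\mS}_0^\zeta$. Collecting failure probabilities, $s\exp\{-Cn^{2/3}\}+e^{-c_2 n}+e^{-cn}\le s\exp\{-C'n^{2/3}\}$, which is the claimed bound. I expect the main obstacle to be the false-positive control of the third paragraph: routing it through the operator-norm concentration of $\bV\bV^\T$ (Assumption \ref{submatrix}) rather than a coordinatewise maximum is exactly what keeps the prefactor at $s$ instead of $p$, and one must check that the sub-Weibull concentration genuinely sits in the heavy-tail regime so that the exponent is $n^{2/3}$, which requires some care with the crossover scale.
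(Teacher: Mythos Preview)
Your proposal is correct and follows essentially the same route as the paper. The paper likewise splits into (i) showing $\min_{j\in\mS_0}|\hat{\bA}_{jj}|\ge c_1/2$ via a tail bound for $\frac{1}{n}\sum_i y_iT_{jj}(\bx_i)$ that yields the $\exp\{-Cn^{2/3}\}$ rate (their Lemma \ref{lm8new}, derived from Adamczak's polynomial concentration, plays the role of your sub-Weibull$(2/3)$ Bernstein inequality), and (ii) bounding $\|\diag\{\hat{\bA}\}\|_2^2=n^{-2}\by^\T\bV\bV^\T\by$ via Assumption \ref{submatrix} and Bernstein for $\|\by\|_2^2$, then counting $\#\{k:|\hat{\bA}_{kk}|\ge c_1/2\}\le O(p/n)\le \lfloor\zeta p\rfloor$ --- exactly your aggregate/Markov step.
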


Lemma \ref{lemmascreening1step} 
 shows that, in a single screening step, the relevant feature set $\mS_0$ is contained in the screened feature set 
 $\hat{\mS}_0^\zeta$ with high probability.
Building on this result, iterative application of the screening procedure can progressively reduce the size of the feature set. 
By incorporating Theorem \ref{thm1} on eigenvalue-decomposition-based selection, selection consistency of Algorithm \ref{algo2}
 can be established in high-dimensional settings, as formalized in the following theorem.

\begin{thm}\label{algo2consistency}
    Assume the conditions in Theorem \ref{thm1}. Further assume Assumptions \ref{min2rd} - \ref{submatrix} hold and $\bx$ is a sub-gaussian vector. Suppose screening level $\zeta  = c_1 n^{-1/3}$ and the target dimension $p_0 = c_2 n^{1/3}$ for some positive constant $c_1, c_2$. Then, for any $\nu>0$, when
  $n \ge C\left(\log^{2} \nu + s \log p \right)$ for certain positive constant $C$, Algorithm \ref{algo2} achieves feature selection consistency with probability at least $1-\nu$, i.e., 
\bes
\P(\hat{\mS}_0 =\mS_0) \ge 1-\nu.
\ees
\end{thm}

For deep neural network functions $g \in \mathscr{G}_{\mS_0}^{NN}$, 
 Theorem \ref{algo2consistency}
 establishes the feature selection consistency of Algorithm \ref{algo2}
 in the high-dimensional regime, under the sample size requirement $n = \Omega(s \log p)$. This result can be extended to general $\beta$-H{\"o}lder smooth functions in a manner analogous to Section \ref{sec3_2}
. The detailed derivations are omitted here due to space constraints.

\section{Theoretical guarantees of prediction accuracy under a two-step approach} \label{sec_prediction}
In Section \ref{sec3}, we established the consistency of feature selection in neural networks and extended these results to general H{\"o}lder smooth functions. 
In this section, we introduce a two-step procedure that first applies the proposed method to select relevant features, and then retrains a neural network to minimize the MSE. We also establish theoretical guarantees for the prediction performance of this two-step approach.

Rather than the strict sparsity assumption in Section \ref{sec3}, here we consider a relaxed setting in which the first‑layer weight matrix $\bW_1$ is allowed
 to be approximately sparse, as described in the following assumption.



\begin{asmp}\label{nsa1}
    Let $\eta_j = \|\{\bW_1\}_{\cdot j}\|_2, 1\le j \le p$. Assume that
    \bel{nsa1-1}
        \mathop{\min}_{\left|\mS\right|\le s}\sum\limits_{j \notin \mS} \eta_j \le \eps_0,
    \eel
    where $\eps_0$ is a small positive constant.
\end{asmp}


Assumption \ref{nsa1} allows that, beyond the $s$ most important features, other features can still have a small influence on the output. We measure each influence using the $\ell_2$-norm of the respective column of the first-layer weight matrix $\bW_1$, and require their combined effect to be at most $\eps_0$. Under Assumption
\ref{nsa1}, let $\hat{\mS}$ denote the set of features selected by either the Algorithm \ref{alg1}, or 
the screening-and-selection Algorithm \ref{algo2}, and let $\shat= \text{Card}(\hat{\mS})$ denotes its cardinality. We now turn to the following optimization problem: 
\bel{op1}
\hat{g}_{\hat{\mS}} = \mathop{\arg\min}_{g\in\mathcal{G}_{\hat{\mS}} } \frac1n \sum\limits_{i=1}^n \left(y_i - g\left(\bx_i\right)\right)^2,
\eel
where $\mathcal{G}_{\hat{\mS}}$ is the class of ReLU neural network functions defined on the selected features. Finally, Theorem \ref{nsthm1} 
establishes a prediction error bound using the selected feature set $\hat{\mS}$.

\begin{thm}\label{nsthm1}
Let $\hat{\mS}$ be generated using the Algorithm \ref{alg1}, and let $\hat{g}_{\hat{\mS}}$
 be computed according to
(\ref{op1}). Assume Assumptions \ref{a2new}-\ref{a3-} and Assumption \ref{nsa1}. Further assume that the threshold level $\eps_0< \kappa\le c \cdot \max_{|\mS | \leq s} \min_{j \in \mS} \|\{\bW_1\}_{\cdot j} \|_2$ for a positive constant $c$. When $n \ge C_1\left(\log^2 \nu +p^2\right)$ for any $0<\nu<1$, the prediction error for a new observation $(\bx,y)$ can be bounded by
    \bel{nsthm1-1}
    \EE_{(\bx_{\text{data}},y_{\text{data}}),\hat{\mS}} \left( y - \hat{g}_{\hat{\mS}} \left(\bx\right) \right)^2 \le C_2 \left(sn^{-2/(s+8)} \log n +s\nu+\eps_0^2 \right),
    \eel
    where $(\bx_{\text{data}},y_{\text{data}})$ refers to the sample $\left\{\bx_i, y_i\right\}_{i=1}^n$ and a new data $(\bx, y)$.
    Furthermore, when $\nu = \mathop{\min}\left\{1/2,n^{-2/(s+8)} \log n\right\}$, $n\ge C_3p^2$ and $\eps_0^2<sn^{-2/(s+8)}\log n$, we have
    \bel{nsthm1-2}
    \EE_{(\bx_{\text{data}},y_{\text{data}}),\hat{\mS}} \left( y - \hat{g}_{\hat{\mS}} \left(\bx\right) \right)^2 \le C_4 sn^{-2/(s+8)} \log n .
    \eel
 Here, $C_1$ through $C_4$ denote positive constants.
\end{thm}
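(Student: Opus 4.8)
The plan is to bound the prediction risk by splitting on the event that the selection step recovers the right features, and then running a bias--variance analysis of the refit on that event. Writing the true regression function as $g^\star\in\mathscr{G}_{\mS_0}^{NN}$ and $y=g^\star(\bx)+\eps$ for an independent test pair, the cross term vanishes and
\[
\EE\left(y-\hat g_{\hat\mS}(\bx)\right)^2=\sigma^2+\EE\left(g^\star(\bx)-\hat g_{\hat\mS}(\bx)\right)^2,
\]
so after separating the irreducible $\sigma^2$ it suffices to control the reducible error $\EE(g^\star-\hat g_{\hat\mS})^2$, which is what the right-hand side of (\ref{nsthm1-1}) captures. Let $\mS^\star$ be the minimizing set in Assumption \ref{nsa1}, carrying all but $\eps_0$ of the first-layer column mass, and set $E=\{\hat\mS=\mS^\star\}$. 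The hypothesis $\eps_0<\kappa\le c\max_{|\mS|\le s}\min_{j\in\mS}\|\{\bW_1\}_{\cdot j}\|_2$ furnishes a gap between the $s$ signal columns and the aggregate of the remaining ones (whose total $\ell_2$-mass is at most $\eps_0<\kappa$), which is exactly what the entrywise eigenvector perturbation argument behind Theorem \ref{thm1} exploits; it yields $\P(E)\ge 1-\nu$ once $n\ge C_1(\log^2\nu+p^2)$.

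On $E$ the refit $\hat g_{\hat\mS}$ is an empirical risk minimizer over the ReLU class $\mathcal{G}_{\mS^\star}$ on a \emph{fixed} set of $s$ coordinates, so the support is no longer random and the usual least-squares machinery applies. I would then run the standard two-part argument. For the bias, Lemma \ref{lm2} and Proposition \ref{dnnprop} furnish a network $g\in\mathcal{G}_{\mS^\star}$ whose sup-distance to $g^\star$ is negligible once width and depth are of the prescribed order, so the squared approximation error is of smaller order than the target rate. For the variance, I would bound the empirical process $\sup_{g\in\mathcal{G}_{\mS^\star}}\left|\frac1n\sum_{i=1}^n(g^\star-g)^2(\bx_i)-\EE(g^\star-g)^2\right|$ through a covering-number (pseudo-dimension) estimate for ReLU networks on $s$ inputs, and convert the basic inequality $\frac1n\sum_{i=1}^n(y_i-\hat g_{\hat\mS}(\bx_i))^2\le\frac1n\sum_{i=1}^n(y_i-g(\bx_i))^2$ into a population bound by a peeling/localization step. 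Balancing the network size across bias and variance yields the leading term $s\,n^{-2/(s+8)}\log n$, with the factor $s$ entering through the $s$-dimensional approximation rate and the complexity of $\mathcal{G}_{\mS^\star}$.

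The two remaining terms are treated directly. The approximate-sparsity slack is absorbed by observing that deleting the columns outside $\mS^\star$ perturbs the target by at most their aggregate mass $\eps_0$, contributing $\eps_0^2$. On $E^c$, where selection may fail, a deterministic bound on $(g^\star-\hat g_{\hat\mS})^2$ (using the boundedness of the truncated network outputs) combined with $\P(E^c)\le\nu$ gives a contribution of order $s\nu$. Summing the three pieces gives (\ref{nsthm1-1}), and (\ref{nsthm1-2}) follows by taking $\nu=\min\{1/2,\,n^{-2/(s+8)}\log n\}$ and using $\eps_0^2<s\,n^{-2/(s+8)}\log n$, so that the $s\nu$ and $\eps_0^2$ terms are dominated by the leading one.

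I expect the main obstacle to be the variance bound on $E$: obtaining a complexity estimate for $\mathcal{G}_{\mS^\star}$ and a localization sharp enough that the estimation error matches $s\,n^{-2/(s+8)}\log n$ rather than a looser polynomial rate, and checking that the optimal width/depth balance is consistent with the network sizes used in Lemma \ref{lm2}. A secondary subtlety is the reuse of the same data for selection and refitting; I resolve this by conditioning on $E$, since once $\hat\mS=\mS^\star$ is deterministic the refit is an ordinary least-squares problem over a non-random function class and no further decoupling is needed.
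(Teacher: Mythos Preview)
Your proposal is correct and follows essentially the same route as the paper: split on the selection event, bound the bias via a ReLU approximation result plus the $\eps_0$ slack from Assumption \ref{nsa1}, bound the variance via a pseudo-dimension/covering argument for the network class on $s$ inputs, and balance width and depth to obtain $sn^{-2/(s+8)}\log n$. Two small remarks: the paper does not localize---a global symmetrization/Rademacher bound combined with the Bartlett et al.\ pseudo-dimension estimate already gives the stated rate, so your peeling step is unnecessary here; and for the approximation piece the paper invokes a Lipschitz (rather than H\"older) version of the Shen et al.\ bound, since the target $g^\star$ is itself a ReLU network.
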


Theorem \ref{nsthm1} is established under the approximate sparsity assumption, yielding a final prediction error on the order of $sn^{-2/(s+8)}\log n$. Conversely, if a neural network is trained directly on the full feature vector $\bx$, the prediction error bound becomes $\mathcal{O}(pn^{-2/(p+8)}\log n)$, which is significantly larger than that in (\ref{nsthm1-2}) due to the substantial increase in the number of parameters. 
Moreover, this result can be readily extended to Algorithm \ref{algo2} when a screening step is incorporated. In that case, the sample size requirement can be reduced to $\Omega(s \log p)$.
We summarize this extension in the following corollary.

\begin{col}\label{nscol1}
   Let $\hat{\mS}$ be generated using the Algorithm \ref{algo2}, and let $\hat{g}_{\hat{\mS}}$
 be computed according to
(\ref{op1}). Assume the conditions in Theorem \ref{nsthm1} and further assume the Assumptions \ref{min2rd}-\ref{nsa1}. When
    $n \ge C_1s\left(\log^{2} \nu + \log p \right)$, the prediction error for a new observation $(\bx,y)$ can be bounded by
    \bes
    \EE_{(\bx_{\text{data}},y_{\text{data}}),\hat{\mS}} \left( y - \hat{g}_{\hat{\mS}}\left(\bx\right) \right)^2 \le C_2 \left(sn^{-2/(s+8)}\log n+s\nu+\eps_0\right),
    \ees    
    where $(\bx_{\text{data}},y_{\text{data}})$ refers to the sample $\left\{\bx_i, y_i\right\}_{i=1}^n$ and a new data $(\bx, y)$. Furthermore, when $\nu = \mathop{\min}\left\{1/2,n^{-2/(s+8)}\log n\right\}$, $n\ge C_3s\log p$ and $\eps_0^2<sn^{-2/(s+8)}\log n$, we have
    \bes
    \EE_{(\bx_{\text{data}},y_{\text{data}}),\hat{\mS}} \left( y - \hat{g}_{\hat \mS}\left(\bx\right) \right)^2 \le C_4 sn^{-2/(s+8)}\log n.
    \ees
    Here, $C_1$ through $C_4$ denote positive constants.
\end{col}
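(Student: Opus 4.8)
The plan is to reuse verbatim the risk decomposition from the proof of Theorem~\ref{nsthm1} and to swap out only the feature-selection step. The key observation is that the bound (\ref{nsthm1-1}) depends on the selection procedure through a single high-probability event $\mathcal{E}$: that the returned set $\hat{\mS}$ contains every dominant feature (those whose column norm $\eta_j = \|\{\bW_1\}_{\cdot j}\|_2$ exceeds the threshold $\kappa$) while discarding the weak ones, and that its cardinality satisfies $\hat{s} = \mathcal{O}(s)$. Conditional on $\mathcal{E}$, the remainder of the argument is untouched: the term $sn^{-2/(s+8)}\log n$ is the nonparametric approximation-plus-generalization rate for a ReLU network refitted on $\hat{s}=\mathcal{O}(s)$ inputs via (\ref{op1}), the residual $\eps_0$ accounts for the omitted weak features through Assumption~\ref{nsa1}, and off $\mathcal{E}$ (probability at most $\nu$) a crude uniform bound contributes the $s\nu$ term. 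Since none of this references how $\hat{\mS}$ was produced, it transfers directly from Algorithm~\ref{alg1} to Algorithm~\ref{algo2}.

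Thus the only thing to re-establish is $\P(\mathcal{E})\ge 1-\nu$ under the weaker sample size $n\ge C_1 s(\log^2\nu+\log p)$, which I would obtain by following the screening-and-selection analysis behind Theorem~\ref{algo2consistency}. First, apply Lemma~\ref{lemmascreening1step} iteratively with $\zeta = c_1 n^{-1/3}$: each pass retains the dominant features with probability at least $1-s\exp(-Cn^{2/3})$ — here Assumption~\ref{min2rd} guarantees the population diagonal of $\EE[yT(\bx)]$ is bounded away from zero on the dominant support, and Assumption~\ref{submatrix} controls the empirical second-order score — and after finitely many passes the working dimension is driven from $p$ down to $p_0 = c_2 n^{1/3}$. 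Controlling the screening ranking over all $p$ coordinates, together with the per-feature rate of Lemma~\ref{lemmascreening1step}, is exactly what produces the $s\log p$ contribution to the sample size. Second, on the reduced $p_0$-dimensional problem the eigenvalue-decomposition-and-threshold step of Theorem~\ref{thm1} recovers precisely the dominant set, now requiring only $n=\Omega(p_0^2+\log^2\nu)=\Omega(n^{2/3}+\log^2\nu)$, which is implied by the stated $n$. A union bound over the two stages gives $\P(\mathcal{E})\ge 1-\nu$; substituting into the unchanged decomposition yields the first bound, and the specialization to the second follows from the same choices $\nu=\min\{1/2, n^{-2/(s+8)}\log n\}$ and $\eps_0^2<sn^{-2/(s+8)}\log n$ used in Theorem~\ref{nsthm1}.

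The step I expect to be the main obstacle is reconciling the strict-sparsity guarantees of Lemma~\ref{lemmascreening1step} and Theorem~\ref{algo2consistency} with the approximate-sparsity regime of Assumption~\ref{nsa1}. Under approximate sparsity there is no exact support $\mS_0$ to preserve; instead the screening statistic $|\hat{\bA}_{kk}|$ must rank the $s$ dominant coordinates ahead of the weak ones, whose cumulative $\ell_2$ mass is only bounded by $\eps_0$. Making this precise requires showing that the population diagonal is bounded below on the dominant support (via Assumption~\ref{min2rd}) yet negligible off it up to an $\eps_0$ slack, so that a weak coordinate surviving the screen is subsequently eliminated by the threshold $\kappa>\eps_0$; this slack is what lets the residual feature mass re-enter the final risk as $\eps_0$. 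Once this separation between strong and weak coordinates is established, the rest is bookkeeping: propagating the per-pass screening failure probabilities through a union bound and verifying that the reduced selection step inherits the conclusion of Theorem~\ref{thm1}.
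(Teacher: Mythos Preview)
Your proposal is correct and follows the same approach the paper takes: the paper does not give a separate proof of Corollary~\ref{nscol1} but simply remarks that Theorem~\ref{nsthm1} ``can be readily extended to Algorithm~\ref{algo2},'' i.e., one reuses the risk decomposition of Theorem~\ref{nsthm1} (Lemmas~\ref{nslm1}--\ref{nslm3}) and replaces the invocation of Theorem~\ref{thm1} inside Lemma~\ref{nslm3} by Theorem~\ref{algo2consistency}, thereby weakening the sample-size requirement from $n\ge C(\log^2\nu+p^2)$ to $n\ge C(\log^2\nu+s\log p)$. Your identification of the approximate-vs-strict sparsity reconciliation as the one nontrivial bookkeeping step is accurate and is something the paper glosses over (in Lemma~\ref{nslm3} it simply applies Theorem~\ref{thm1} with the dominant set $\mS$ playing the role of $\mS_0$, without dwelling on the $\eps_0$ perturbation).
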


\section{Additional issues}\label{sec_unknown}
The proposed method relies on the second-order score function 
$T(\cdot)$, first layer dimension $k_1$, and the sparsity threshold $\kappa$.
In this section, we examine the estimation of these quantities and establish feature selection consistency based on these estimated values.


We first consider an eigengap-based approach for determining the first-layer dimension $k_1$. 
 For $k=1,\ldots, p$, let $\lam_k$ and $\hat{\lam}_k$ be the $k$-th eigenvalue of $\E \left(y T(\bx)\right)$ and $(1/n) \sum_{i=1}^n y_i T(\bx_i)$, respectively. Note that the eigenvalues $\hlambda_k$ (and $\lam_k$) are considered in descending order of their magnitudes, i.e., $|\hlambda_1| \ge |\hlambda_2| \ge \cdots \ge |\hlambda_p|$.  Due to Stein's formula, we have that $\lam_{k+1}=\ldots=\lam_p=0$. Accordingly, $k_1$ can be determined using the criteria: 
$\hat{k}_1 = \mathop{\max} \left\{j \in [p]: \vert\hlambda_j\vert - \vert \hlambda_{j+1}\vert > \tau\right\}$,
 where $\tau$ is a certain threshold level to be specified later. The following theorem suggests that $\hat{k}_1$ provides a consistent estimate for $k_1$. 
 

    


\begin{thm}\label{thm3}
     Let $\delta = \mathop{\min}_{i\in [k_1]} \left\{\vert \lambda_i\vert - \vert \lambda_{i+1}\vert \right\} > 0$. Suppose that $\tau < \delta/2$, then for any $\nu>0 $, when $n\ge C(\log^2 \nu + p^2)$ for certain positive constant $C$, we have 
   $\P\left(\hat{k}_1 = k_1\right) \ge 1-\nu$.
\end{thm}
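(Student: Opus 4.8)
The plan is to reduce the event $\{\hat k_1 = k_1\}$ to a single operator-norm concentration event and then read off the eigengap structure via a Weyl-type inequality. Write $\bA = \EE[y T(\bx)]$ and $\hat\bA = (1/n)\sum_{i=1}^n y_i T(\bx_i)$. Both are symmetric, since $T(\bx)$ is a normalized Hessian and hence symmetric; consequently the magnitude-ordered eigenvalues coincide with the singular values, $|\lambda_k| = \sigma_k(\bA)$ and $|\hat\lambda_k| = \sigma_k(\hat\bA)$. By the second-order Stein identity (\ref{eq:main0}), $\bA = \bW_1^\T \EE[\nabla^2_{\bz} f(\bW_1\bx)] \bW_1$, and Assumption \ref{a2new} makes the inner $k_1\times k_1$ matrix non-singular, so $\text{rank}(\bA) = k_1$ and $\lambda_{k_1+1} = \cdots = \lambda_p = 0$. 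In particular the $k_1$-th gap satisfies $|\lambda_{k_1}| - |\lambda_{k_1+1}| = |\lambda_{k_1}| \ge \delta$.

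The crucial deterministic step is Weyl's inequality for singular values: setting $\epsilon := \|\hat\bA - \bA\|_2$, one has $\big||\hat\lambda_k| - |\lambda_k|\big| = |\sigma_k(\hat\bA) - \sigma_k(\bA)| \le \epsilon$ for every $k$. I would then invoke the same operator-norm concentration bound that underlies Theorem \ref{thm1}: under the sub-Gaussian design and Assumption \ref{a3-}, for $n \ge C(\log^2\nu + p^2)$ one has $\P(\epsilon \le \epsilon_n) \ge 1-\nu$, where $\epsilon_n$ can be driven below any prescribed constant by enlarging $C$. Restricting to this event, I would verify the two conditions that pin down $\hat k_1$. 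First, the gap at $k_1$ is large: $|\hat\lambda_{k_1}| - |\hat\lambda_{k_1+1}| \ge |\lambda_{k_1}| - 2\epsilon_n \ge \delta - 2\epsilon_n$. Second, for every $j > k_1$ both $|\hat\lambda_j|$ and $|\hat\lambda_{j+1}|$ lie in $[0,\epsilon_n]$, so the gap there is at most $\epsilon_n$. Hence, provided $\epsilon_n \le \min\{\tau,\ (\delta-\tau)/2\}$ — an upper bound that is a strictly positive constant because $0<\tau<\delta/2$, and that is guaranteed once $C$ is chosen large enough — we get $|\hat\lambda_{k_1}| - |\hat\lambda_{k_1+1}| > \tau$ while $|\hat\lambda_j| - |\hat\lambda_{j+1}| \le \tau$ for all $j > k_1$. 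By the definition $\hat k_1 = \max\{j : |\hat\lambda_j| - |\hat\lambda_{j+1}| > \tau\}$, the largest index with a super-$\tau$ gap is exactly $k_1$ irrespective of the gaps at $j<k_1$, so $\hat k_1 = k_1$ on an event of probability at least $1-\nu$. The constants $\delta,\tau$ are fixed model/tuning quantities, so the required $C=C(\delta,\tau)$ is a legitimate absolute constant in the sense of the statement.

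The main obstacle is the operator-norm concentration $\P(\|\hat\bA - \bA\|_2 \le \epsilon_n) \ge 1-\nu$. The summands $y_i\,\bu^\T T(\bx_i)\bu$ are products of a sub-Gaussian factor $y_i$ and a sub-exponential factor $\bu^\T T(\bx_i)\bu$ (Assumption \ref{a3-}), hence sub-Weibull with a tail heavier than sub-exponential; controlling $\|\hat\bA - \bA\|_2 = \sup_{\|\bu\|_2=1}|\bu^\T(\hat\bA-\bA)\bu|$ then requires a Bernstein-type deviation inequality for such products combined with an $\varepsilon$-net over $S^{p-1}$, whose cardinality $e^{\mathcal{O}(p)}$ is exactly what produces the $p^2$ term and whose heavy tail produces the $\log^2\nu$ term in the sample-size requirement. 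Since this is precisely the concentration already established en route to Theorem \ref{thm1} (there it feeds a Davis--Kahan / entrywise eigenvector-perturbation analysis, which is strictly harder than what we need here), I would reuse it rather than reprove it; the remaining work in Theorem \ref{thm3} is only the elementary gap bookkeeping above.
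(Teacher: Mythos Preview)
Your proposal is correct and follows the same high-level architecture as the paper: control $\|\hat\bA-\bA\|_2$ by the operator-norm concentration lemma already used for Theorem~\ref{thm1} (Lemma~\ref{e2bound} in the supplement), then do eigengap bookkeeping via a Weyl inequality. The one genuine difference is in the intermediate step. The paper proves a separate Lemma~\ref{eigenvaluebound} bounding $\sup_j|\lambda_j-\hat\lambda_j|$ for the \emph{signed} eigenvalues ordered by magnitude, which requires a somewhat delicate argument matching the magnitude ordering of $\bA$ to that of $\hat\bA$ through the natural eigenvalue ordering; it then checks that \emph{every} gap at $j\le k_1$ exceeds $\tau$. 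You instead observe that for symmetric matrices the magnitude-ordered eigenvalues are exactly the singular values, so Weyl for singular values gives $\big||\hat\lambda_k|-|\lambda_k|\big|\le\|\hat\bA-\bA\|_2$ immediately, with no ordering argument needed; and you correctly note that only the gap at $k_1$ (not all $j\le k_1$) must exceed $\tau$, since $\hat k_1$ is a maximum. Your route is strictly more elementary and avoids the entire content of Lemma~\ref{eigenvaluebound}; the paper's version, on the other hand, yields the slightly stronger conclusion that the signed empirical eigenvalues (not just their magnitudes) track the population ones, which is not needed here but could be useful elsewhere.
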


In practice, the value of $\tau$ is unknown, which makes directly applying the above procedure difficult. Nevertheless, an eigengap-based estimation can still be utilized. Intuitively, $k_1$ can be identified as the index at which the eigengap exhibits a sharp drop. Specifically, by selecting $k_1$ such that
$|\hat{\lambda}_{k-1}| - |\hat{\lambda}_{k}|$ is much larger than $|\hat{\lambda}_{k}| - |\hat{\lambda}_{k+1}|$.  Formally, we adopt the strategy of gap-statistic \citep{tibshirani2001gap} in determining the number of clusters and define the absolute eigengap ratio:
$$
r(k) = \left( |\hat{\lambda}_{k-1}| - |\hat{\lambda}_{k}| \right ) / \left ( |\hat{\lambda}_{k}| - |\hat{\lambda}_{k+1}| + \gamma_{\text{reg}} \right),
$$ 
where $\gamma_{\text{reg}}$ is a small regularization constant to ensure numerical stability. Then, $k_1$ can be selected by maximizing $r(k)$. Simulation results confirm the practical effectiveness of the eigengap ratio statistic; See Section F in the supplementary material for details.


We now consider the selection of the sparsity threshold $\kappa$. Since a hard-thresholding procedure is applied in the Algorithm \ref{alg1}, determining 
$\kappa$ is equivalent to specifying $s$, the desired sparsity level. To select $s$, we employ the Bayesian Information Criterion (BIC), choosing the value that minimizes
$\text{BIC} = n \cdot \ln\big(\text{MSE}(s)\big) + \lam_s \cdot \ln(n)$
where $\text{MSE}(s)$ is the prediction MSE obtained from a neural network trained on the $s$ selected features. Strictly speaking, $\lam_s$ corresponds to the total number of parameters in a neural network with $s$ features. To avoid the intricacies of architectural tuning, we fix $\lam_s=100 s$ throughout all simulation experiments.
Empirical results indicate that this BIC-based procedure performs well in our setting. Additional visualization results are presented in Section F of the supplementary materials.

Finally, we turn to the case of an unknown score function and show that, when the input features follow a Gaussian distribution, feature selection consistency can still be guaranteed.
Let $\bx_i \sim \mathcal{N}(\mathbf{0}, \bSigma)$ be centered Gaussian random vectors, where the covariance matrix $\bSigma$ is unknown.
A natural estimate for $\bSigma$ is the sample covariance matrix $\hat{\bSigma} = (1/n) \sum\limits_{i=1}^n \bx_i\bx_i^T$. This yields the empirical second-order score function $\hat{T}(\bx) = \hat{\bSigma}^{-1}\bx\bx^T \hat{\bSigma}^{-1} - \hat{\bSigma}^{-1}$.
Accordingly, $\hbW_1$ is be obtained by computing the top-$k_1$ eigenvectors of 
$(1/n)\sum\limits_{i=1}^n y_i \hat{T}(\bx_i)$.
The following theorem establishes that the empirical second-order score ensures feature selection consistency in both deep neural network models and H{\"o}lder-smooth function classes.


\begin{thm}\label{thm4}
    Suppose $(\bx_i,y_i)$ follows model (\ref{model1}) and assume the conditions in Theorem \ref{thm2}. Suppose that $\bx_i\sim \mathcal{N}(\mathbf{0}, \bSigma)$. Let $\hat{\bSigma} = (1/n) \sum\limits_{i=1}^n \bx_i\bx_i^T$ and  $\hat{T}(\bx) = \hat{\bSigma}^{-1}\bx\bx^T \hat{\bSigma}^{-1} - \hat{\bSigma}^{-1}$.  Then, for any $\nu>0$, when
  $n \ge C\left(\log^2 \nu + p^2\right)$ for certain positive constant $C$, Algorithm \ref{alg1} guarantees feature selection consistency for H{\"o}lder smooth functions, i.e.,
$\P(\hat{\mS}_0 =\mS_0) \ge 1-\nu$.
Moreover, if $(\bx_i,y_i)$ follows model (\ref{model2}) and the conditions in Theorem \ref{thm1} hold, we have feature selection consistency for DNNs.
\end{thm}

In Theorem \ref{thm4}, the required sample size in the case of unknown covariance is no greater than that specified in Theorems \ref{thm1} and \ref{thm2}. This is because the additional error arising from score estimation, $(1/n)\sum_{i=1}^n y_i \hat{T}(\bx_i) - (1/n) \sum_{i=1}^n y_i T(\bx_i)$, can be dominated by the original error, $\EE y T(\bx) - (1/n)\sum_{i=1}^n y_i T(\bx_i)$. As a result, a sample size of the same order is sufficient.

\section{Numerical experiments} \label{simulation}
In this section, we present a comprehensive simulation study to assess the feature selection performance of the proposed method in comparison with two competing deep learning approaches, LassoNet and DFS, while including the standard Lasso as a baseline benchmark. The results are organized as follows: Section~\ref{low}
 examines feature selection performance of Algorithm \ref{alg1} in classical settings; Section~\ref{high}
 evaluates the effectiveness of the screening-and-selection mechanism in high-dimensional settings; Section~\ref{mse}
 analyzes the prediction performance when the neural network is refitted using the selected features; and Section~\ref{tdistribution}
 investigates the robustness of our algorithm under a non-Gaussian input. 

\subsection{Feature selection in classical settings}\label{low}

In this simulation study, we assess feature selection performance using the 
\emph{True Positive Rate} (TPR) and \emph{False Positive Rate} (FPR), defined as
$\text{TPR} = |\hat{\mS} \cap \mS_0|/|\mS_0|$, 
$\text{FPR} = |\hat{\mS} \ \setminus \ \mS_0|/(p - |\mS_0|)$,
where  $\mS_0$ and $\hat{\mS}$ represent the true and selected relevant feature sets, respectively.
 We begin with a standard Gaussian design and extend to $t$-distributed inputs in Section \ref{tdistribution}.

We consider five nonlinear models, each with a sparsity level of $s=5$. 
The first three correspond to index models with $k_1=5$:
\bel{sim1}
y_i= \ba^T f(\bW_1\bx_i) +\eps_i, \ \ \ i=1,\ldots, n,
 \eel
where $\eps_i$ denotes i.i.d. standard Gaussian noise $\ba\in\mathbb{R}^{k_1}$ is a vector of linear weights drawn from a uniform distribution, the nonzero rows of $\bW_1\in\mathbb{R}^{k_1\times p}$ are randomly selected and normalized to have unit $\ell_2$-norm, and $f=f_1, f_2, f_3$ is applied element-wise as specified below:
\begin{itemize}
    \item[] \textbf{Case 1:} $f_1(z) = z^2$, \ 
\textbf{Case 2:} $f_2(z) = z^4 + 2z^2 - 10\cos(z)$, \ 
\textbf{Case 3:} $f_3(z) = \exp(z) + z^4 - z^2$.
\end{itemize}
In addition to the index models, we also consider general cases
$y_i= G (\{\bx_i\}_{\mS_0})+\eps_i
$, 
with Case 4 as an additive model and Case 5 as a more complex form:
\begin{itemize}
    \item[] {\bf Case 4}: \ \ $G(\bz)=\sum_{s=1}^5 f_s(z_s)$, \ where $f_4(z) = f_2(z)$ and  $f_5(z)= z^4 + z^2 - \cos (z)$.
    \item[] {\bf Case 5}: \ \ $G(\bz) = f_1(z_1)\times f_2(z_2) + f_3(z_3) + f_1(z_4) \times f_5(z_5)$.
\end{itemize}
For the two deep learning methods, LassoNet and DFS, we adopt the default parameter settings provided in their respective packages (e.g., the initial penalty $\lam_{start}$
 in LassoNet and the intersection number $Ts$ 
 in DFS) and configure the hidden layer sizes to 100 and 50, respectively.
 We compare the performance of different methods across multiple scenarios by varying the sample size $n$ from 100 to 5000 while keeping $p=200$ (Figure~\ref{fig:low_n}), and by varying the feature dimension $p$ from 200 to 1000 with $n=2000$ fixed (Figure~\ref{fig:low_p}).

\begin{figure}[htbp!]
    \centering
    \includegraphics[width=1\linewidth]{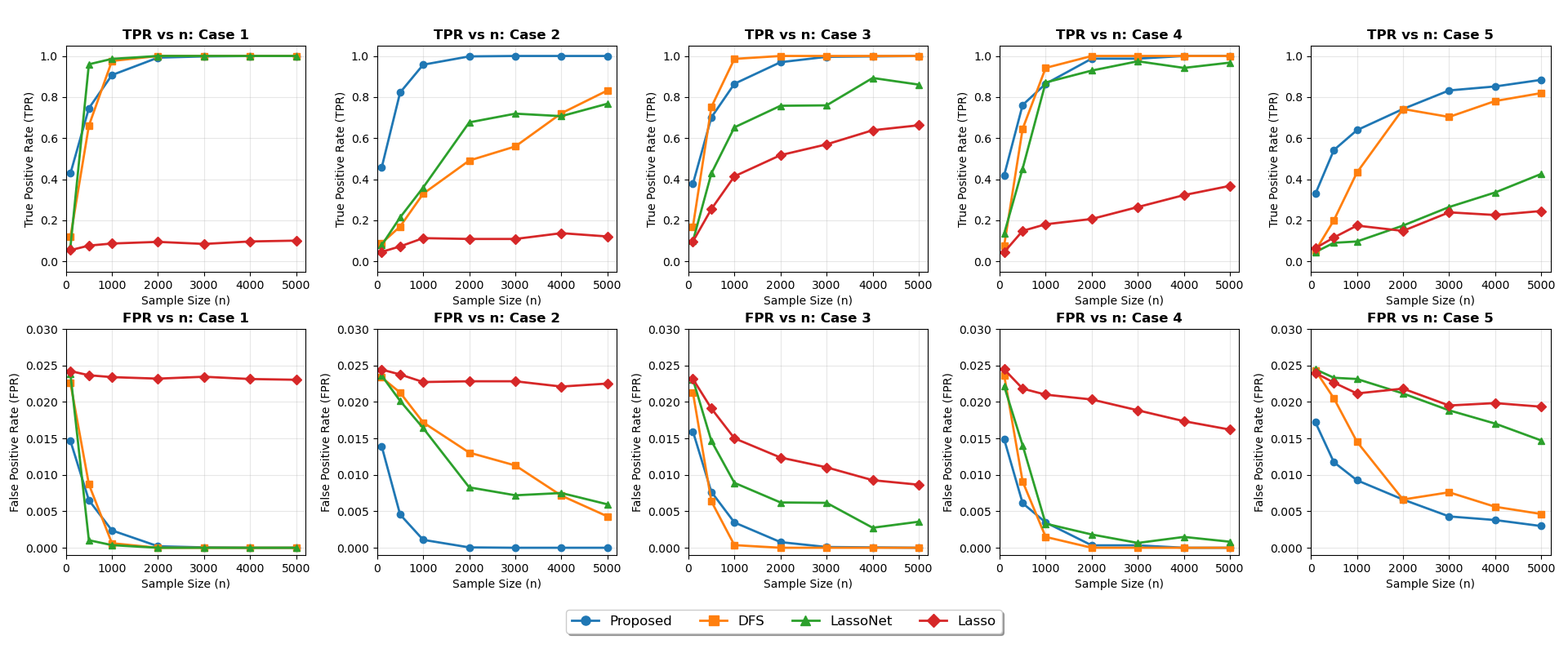}
    \caption{TPR and FPR of various methods across different $n$, with $p$ fixed at $200$. }
    \label{fig:low_n}
\end{figure}

\begin{figure}[htbp!]
    \centering
    \includegraphics[width=1\linewidth]{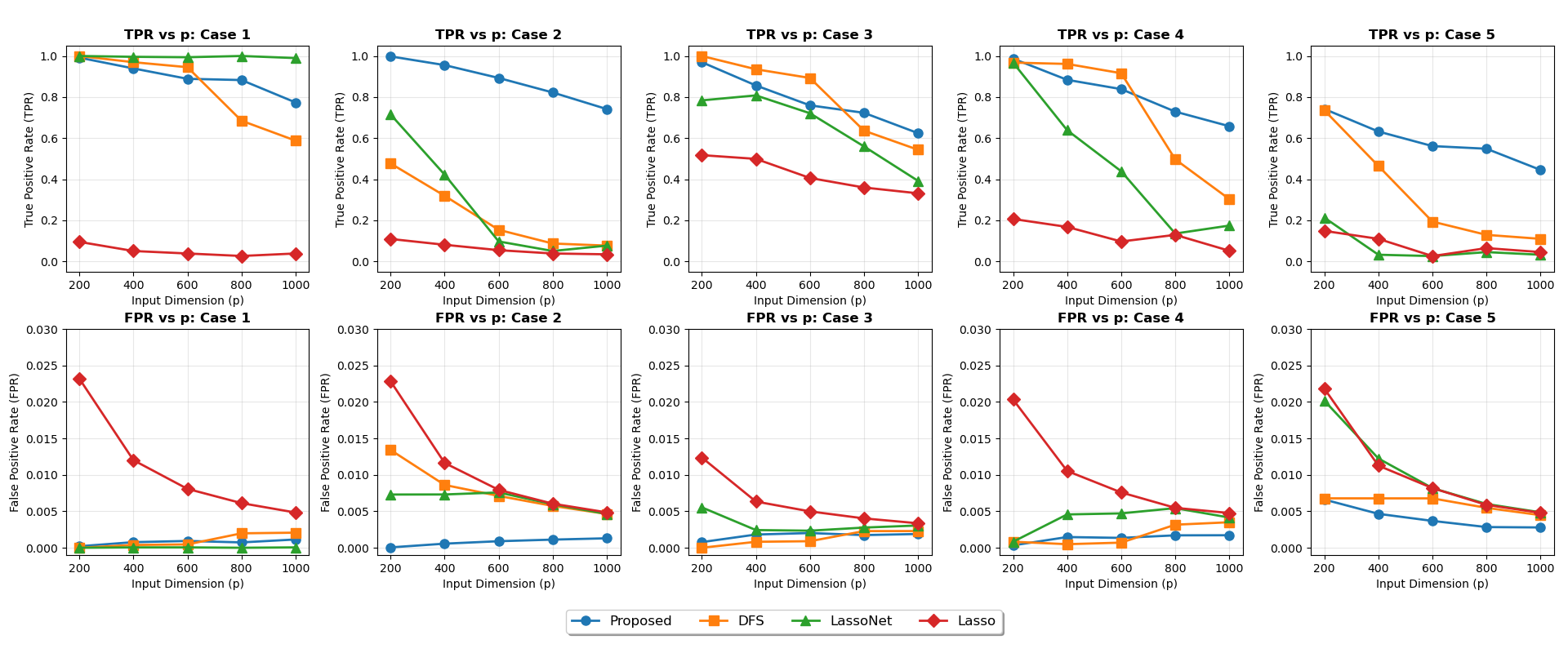}
    \caption{TPR and FPR of various methods across different $p$, with $n$ fixed at 2000.}
    \label{fig:low_p}
\end{figure}

As illustrated in Figure~\ref{fig:low_n}, the proposed method demonstrates strong performance under limited sample sizes, with particularly notable results in Case 2, highlighting its effectiveness in data-constrained settings. As $n$ increases, achieving performance on par with DFS and surpassing that of LassoNet even in Case 5 with complex interaction terms. In contrast, LassoNet’s performance declines markedly in Case 5, while Lasso performs inadequately across all sample sizes.

Figure \ref{fig:low_p} demonstrates that the proposed approach maintains robust performance as $p$ increases. While LassoNet attains near-perfect accuracy in Case 1, gradient-descent–based methods, including LassoNet, experience substantial performance degradation in Cases 2–5 when $p$ grows. Across all settings, Lasso yields the worst results, suggesting that purely linear models are inadequate for capturing the nonlinear relationships in these datasets.

\subsection{Feature selection in high-dimensional settings}\label{high}
We next examine the high-dimensional setting, comparing the proposed method with and without the screening step. As in the previous analysis, we vary the sample size $n$ from 100 to 5000 with $p=2000$ fixed (Figure~\ref{fig:high_n}), and the feature dimension $p$ from 1000 to 5000 with $n=2000$ fixed (Figure~\ref{fig:high_p}).

\begin{figure}[htbp!]
    \centering
    \includegraphics[width=1\linewidth]{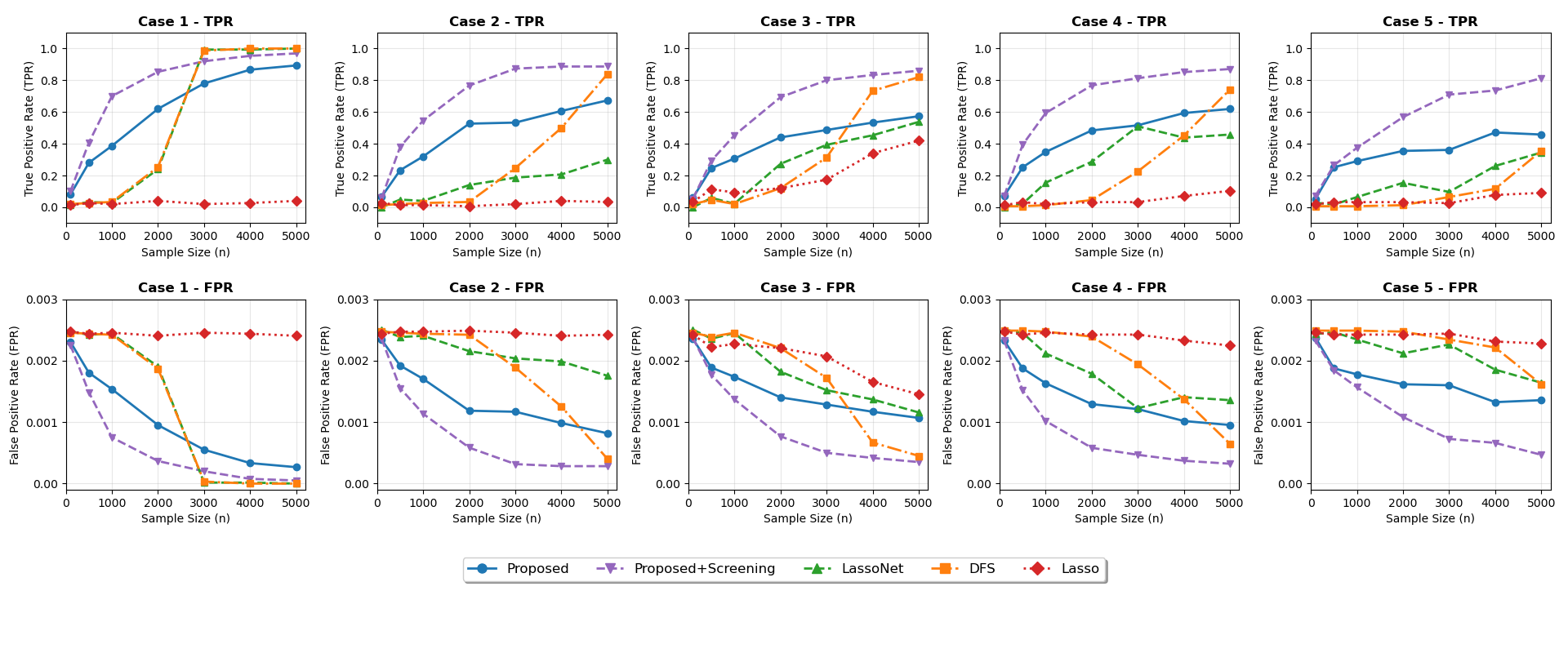}
    \caption{TPR and FPR of various methods across different $n$, with $p$ fixed at 2000.}
    \label{fig:high_n}
\end{figure}

\begin{figure}[htbp!]
    \centering
    \includegraphics[width=0.99\linewidth]{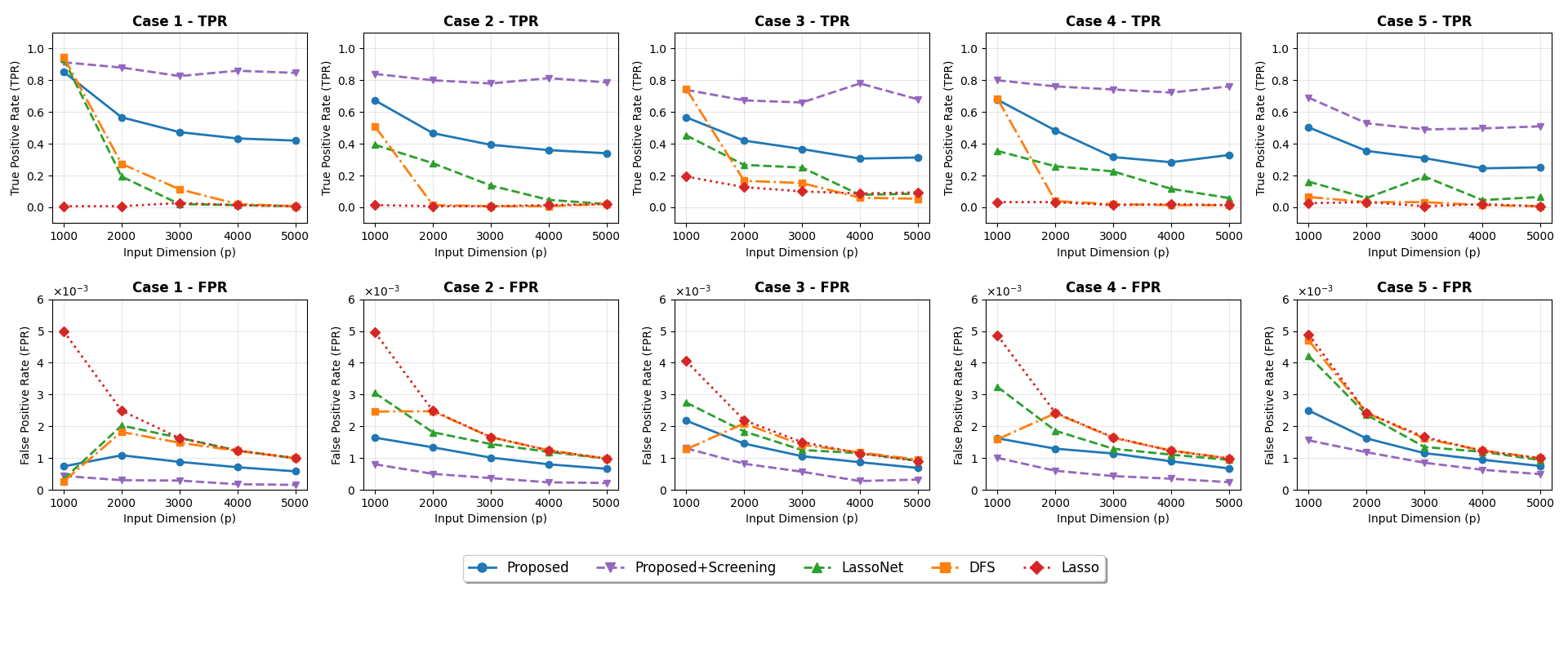}
    \caption{TPR and FPR of various methods across different $p$, with $n$ fixed at 2000. }
    \label{fig:high_p}
\end{figure}


Figure \ref{fig:high_n} shows that incorporating the screening step outperforms the original method without screening and offers a significant advantage over gradient-descent–based methods in most cases.
On the other hand, we observe that as the sample size grows, DFS performance improves substantially, indicating that gradient-descent–based methods need considerably larger samples than our nonparametric approach to reach similar accuracy. A similar pattern appears in Figure \ref{fig:high_p}, where both LassoNet and DFS deteriorate rapidly when  $p >n$. These results further demonstrate the value of our screening mechanism in effectively reducing the feature dimension.


\subsection{Prediction comparison}\label{mse}
We now evaluate the predictive performance of our approach using the two-step procedure described earlier. The predictive performance of the different methods is evaluated on a new dataset, with a prediction sample size equal to 2000. In addition to LassoNet and DFS, we also report the MSE of a full neural network model without feature selection as a benchmark.  As before, we compare performance across varying (training) sample sizes and input dimensions 
 and report the MSE in Figure \ref{fig:prediction}. 



\begin{figure}[htbp!]
    \centering
    \includegraphics[width=1\linewidth]{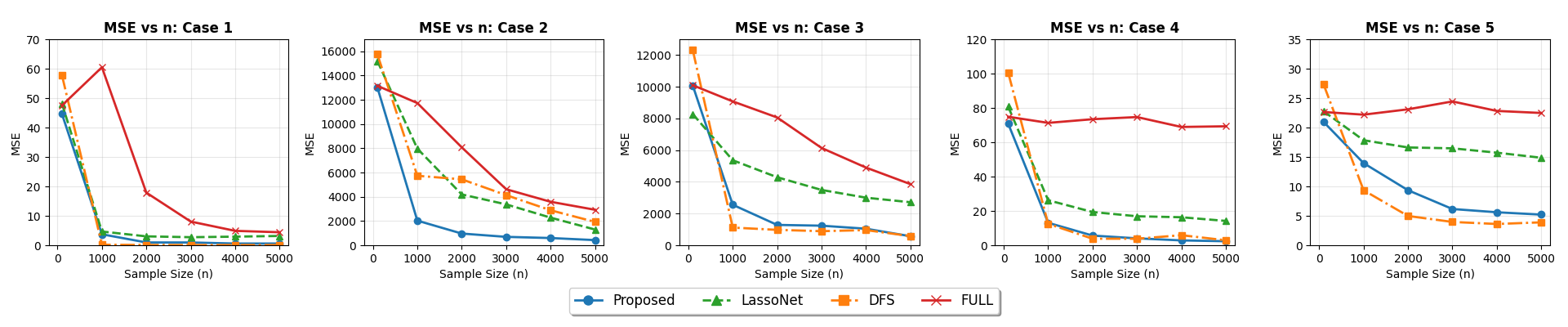}  \includegraphics[width=1\linewidth]{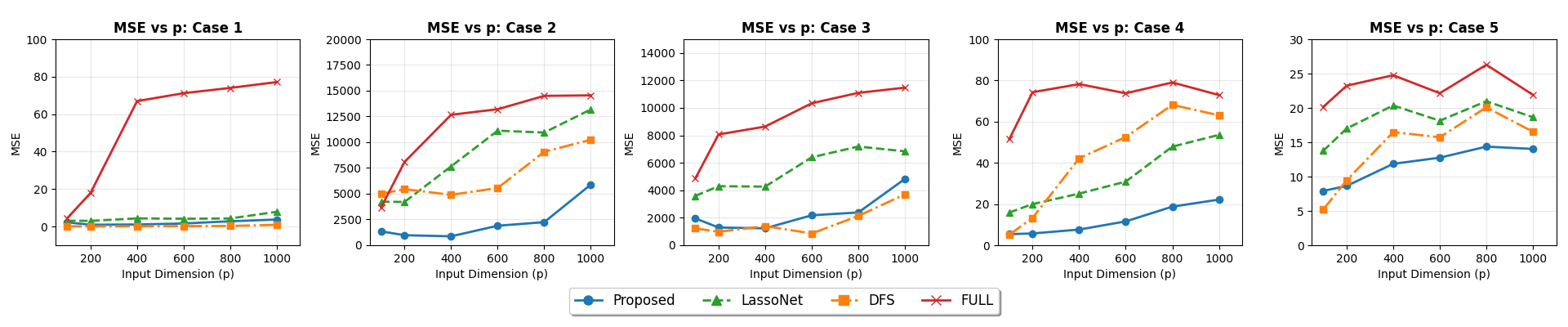}
    \caption{Row 1: Prediction MSE
     of various methods across different $n$, with $p$ fixed at 200. Row 2: Prediction MSE 
    of various methods across different $p$, with $n$ fixed at 2000.}
    \label{fig:prediction}
\end{figure}

From the first row of Figure \ref{fig:prediction}, our two-step method clearly achieves a lower prediction MSE than LassoNet and DFS when the sample size is small. As the sample size increases, LassoNet attains predictive performance comparable to ours. The full model without feature selection performs worst in nearly all settings, demonstrating the necessity of feature selection.
Likewise, the second row of Figure \ref{fig:prediction}
 shows that as the feature dimension increases, our approach is much more robust: it maintains a stable, low MSE as the number of features grows, whereas the gradient-descent-based methods and the full-model baseline exhibit worsening performance.

We also compare the computational efficiency of our method with the competing approaches, with runtimes reported in Table 5 of the supplementary materials. Because the computation times of LassoNet and DFS are highly sensitive to their respective tuning parameters --- the path multiplier for LassoNet and the intersection parameter for DFS --- we report their runtimes over a range of values. The results show that our two-step approach delivers markedly superior computational efficiency, consistently running about three times faster than DFS variants and more than ten times faster than LassoNet across all evaluated sample sizes, highlighting the computational advantages of our nonparametric approach over gradient-based methods.


\subsection{Feature selection performance under t-distribution}\label{tdistribution}
Our approach depends only on the second-order score function and is therefore readily applicable to non-Gaussian distributions. As an illustration, we present simulation results for a non-Gaussian input generated from a 
$t$-distribution with 7 degrees of freedom, with the results summarized in Figure \ref{fig:t_p}. 

\begin{figure}[htbp!]
    \centering
    \includegraphics[width=1\linewidth]{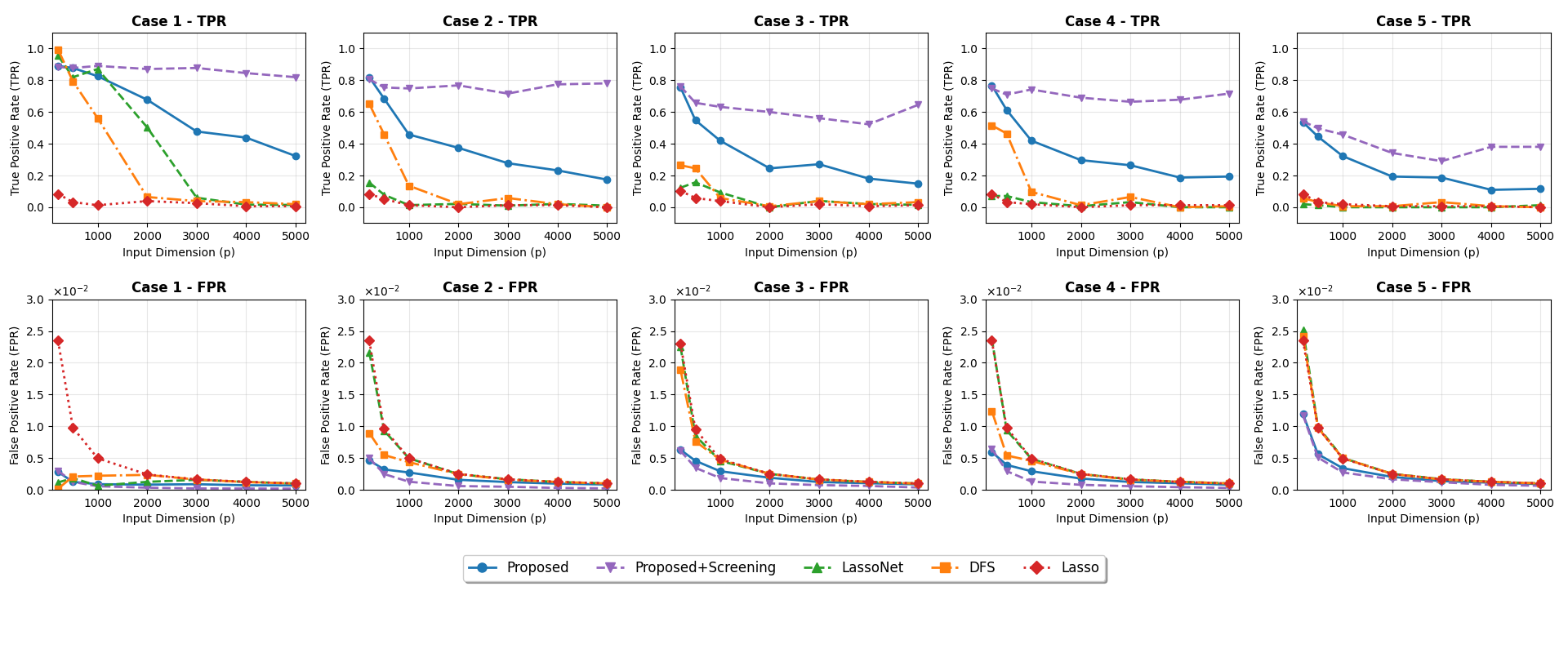}
    \caption{TPR and FPR of various methods across different $p$ under t-distributed input with degrees of freedom 7. The sample size is fixed to be 2000.}
    \label{fig:t_p}
\end{figure}

This evaluation focuses on a high-dimensional setting, comparing our method with and without the screening step. Surprisingly, the results show that our approach demonstrates even greater robustness than LassoNet and DFS—particularly when using the screening-and-selection mechanism—whose performance remains largely unaffected by the increased feature dimension.
A possible explanation is that both LassoNet and DFS are highly sensitive to the underlying network architecture, and their performance could potentially be improved with careful tuning of network structure. In our experiments, we used the default architecture design, which may have led to suboptimal results for these methods.
In contrast, our approach offers a computational advantage by eliminating the need for architecture tuning, while maintaining strong performance across different input distributions. These findings further validate the effectiveness of our method in diverse high-dimensional scenarios.

\section{The ADNI data analysis} \label{sec9}
In this section, we apply the proposed method to analyze real genetic data from the 
Alzheimer’s Disease Neuroimaging Initiative (ADNI).
The ADNI (\href{https://adni.loni.usc.edu/}{https://adni.loni.usc.edu/}) study is aimed at advancing research and developing treatments for Alzheimer's disease (AD).  It includes a comprehensive collection of clinical, imaging, genetic, and other biomarker data.

Our analysis focuses on the genotype data, specifically the single-nucleotide polymorphisms (SNP) data, to study the progression of AD as measured by the Mini-Mental State Examination (MMSE) score. The MMSE is widely used to assess cognitive function and serves as a reference for AD. Scores on the MMSE range from 0 to 30, with higher values reflecting better cognitive performance. In clinical interpretation, scores below 17 are typically associated with moderate to severe cognitive impairment, whereas scores of 24 or above are generally indicative of normal cognitive function.
For our analysis, we filtered patients with MMSE scores from three phases of the study: ADNI-1, ADNI-GO, and ADNI-2, resulting in a dataset of 755 samples. We applied the sure independence screening method \citep{fan2008sure} to reduce the feature dimension, narrowing the number of SNPs to 377. Our goal is to identify the features among these 377 SNPs that are most associated with MMSE scores.


\begin{table}[htbp]
\centering
\begin{tabular}{p{3cm} p{3cm} p{8cm}}
\toprule
\makecell[c]{SNP} & \makecell[c]{Genes} & \makecell[c]{Reported brain-related/cognitive trait(s)} \\
\midrule

\makecell[c]{rs2286735 \\ rs3121458}  &  \makecell[c]{NRAP}  & \makecell[c]{Recessive dilated cardiomyopathy\\ \citet{koskenvuo2021biallelic}} \\ 
\cdashline{1-3}[0.5pt/1pt]
\makecell[c]{rs1472228}  & \makecell[c]{RERE} & \makecell[c]{Developmental delay and intellectual disability \\
\citet{Scott2019} \\ Neurodevelopmental Disorders \\ \citet{niehaus2022phenotypic}} \\
\cdashline{1-3}[0.5pt/1pt]
\makecell[c]{rs942201 \\ rs1107345} & \makecell[c]{IL2RA} & \makecell[c]{Cerebral Palsy \\ \citet{qiao2022association}}  \\
\cdashline{1-3}[0.5pt/1pt]
\makecell[c]{rs1425861}  &  \makecell[c]{LINC02725}   &    \makecell[c]{N/A}   \\
\cdashline{1-3}[0.5pt/1pt]
\makecell[c]{rs1825975}  &  \makecell[c]{PSD3}  &  \makecell[c]{Alzheimer Disease in Hippocampus \\ \citet{quan2020related}} \\
\cdashline{1-3}[0.5pt/1pt]
\makecell[c]{rs9711441} &  \makecell[c]{KIF5C}  &  \makecell[c]{Epilepsy and Psychomotor retardation \\ \citet{banerjee2024novel}}   \\
\cdashline{1-3}[0.5pt/1pt]
\makecell[c]{rs2412971 \\ rs2412973} & \makecell[c]{HORMAD2}   &  \makecell[c]{N/A}  \\
\bottomrule
\end{tabular}
\caption{Associated SNPs identified by the proposed approach}
\label{table:gene}
\end{table}

\begin{table}[htbp]
\centering
\resizebox{\textwidth}{!}{
\begin{tabular}{cccccc}
\toprule
 & \text{Proposed} & \text{LassoNet} & \text{DFS} & \text{Lasso} \\
\midrule
\text{Proposed} & [24] & \makecell{rs3121458(NRAP)\\ rs2412971(HORMAD2) \\ (PSD3)} & \makecell{rs11624198\\ rs2286735(NRAP) \\ (IL2RA)} & \text{None} \\
\cdashline{1-5}[0.5pt/1pt]
\text{LassoNet} & & [24] & \makecell{rs1481596(DLC1)\\ rs13009814\\ rs9907824(NXN) \\ (NRAP)} & \makecell{rs2385522(FER1L6)} \\
\cdashline{1-5}[0.5pt/1pt]
\text{DFS} & & & [24] & \makecell{rs9856161} \\
\cdashline{1-5}[0.5pt/1pt]
\text{Lasso} & & & & [27] \\
\bottomrule
\end{tabular}
}
\caption{Overlap of SNPs and their corresponding genes across different methods. The diagonal elements represent the number of detected SNPs by four methods.}
\label{table:overlap}
\end{table}

We applied the proposed method and detected 24 SNPs significantly associated with MMSE scores. Notably, our approach employs Gaussian score functions, even when the input data may not strictly follow a Gaussian distribution. Of these SNPs, 10 correspond to known genes, as summarized in Table \ref{table:gene}. 
A literature review revealed that most of these biomarkers are well-documented and linked to brain or cognition-related disorders. For instance, SNP rs1825975 in the \textit{PSD3} gene has been reported to play a role in Alzheimer’s disease through pathways such as amyloid-beta formation and actin cytoskeleton reorganization \citep{quan2020related}. This suggests that \textit{PSD3} could be a promising target for both diagnostic and therapeutic strategies, offering valuable insights into the molecular mechanisms of Alzheimer’s disease. Similarly, SNP rs1472228 in the \textit{RERE} gene has been associated with neurodevelopmental disorders, influencing cognitive and psychiatric traits including developmental delay, intellectual disability, and autism spectrum disorder \citep{niehaus2022phenotypic}.


We also applied LassoNet and DFS, in addition to standard Lasso, to identify significant SNPs. The complete list of SNPs detected by these methods is provided in the supplementary material for brevity. Table \ref{table:overlap} summarizes the overlap of identified biomarkers across the four approaches.
Interestingly, the gene \textit{NRAP} was detected by three non-linear methods; this gene is primarily linked to the development of cardiomyopathy, indicating a need of further investigation. In contrast, Lasso exhibited relatively low overlap with the other methods, suggesting that linear models may be insufficient in this context and that non-linear approaches could be more suitable. Overall, our method identified a greater number of SNPs supported by existing literature as being associated with cognitive traits, underscoring its effectiveness.

\section{Discussion}\label{sec10}
In this paper, we introduced a a nonparametric framework for feature selection in neural networks and general nonlinear models. Compared to gradient-descent-based methods, our approach offers greater computational efficiency while providing rigorous theoretical guarantees.
Specifically, under Gaussian design, our approach achieves feature selection consistency for H{\"o}lder smooth functions when $n=\Omega(p^2)$, while the screening-and-selection mechanism ensures consistency for neural network functions when $n=\Omega(s \log p)$. Furthermore, we employ a two-step procedure to train a neural network using the selected features and provide theoretical guarantees for predictive performance under a relaxed sparsity condition. Simulation studies demonstrate the superior performance of our method across a wide range of functions, even in the presence of complex interaction terms. In high-dimensional regimes, our approach with the screening mechanism consistently outperforms competing gradient-descent-based methods. Analyses of real genetic datasets further underscore its practical applicability in real-world settings.

The theoretical guarantees presented in this work are established under a Gaussian design, where score estimation is relatively straightforward. While our simulations confirm the effectiveness of the method under a 
$t$-distribution, score estimation becomes more challenging in general nonparametric distribution settings. This issue has been investigated in prior work, such as \citet{hyvarinen2005estimation}. Moving forward, we aim to explore alternative score estimation techniques and assess their impact on feature selection performance in neural networks.
\\

\noindent {\bf Acknowledgement}

The real genetic data used in this study is publicly available at \href{https://adni.loni.usc.edu}{https://adni.loni.usc.edu}. For interested readers to implement our approach, we have developed a Python package, ``steinfs'' in \href{https://pypi.org/project/steinfs/}{https://pypi.org/project/steinfs} with specified PyEnv. Besides, all the simulation results in this paper are accessible at \href{https://github.com/EnkiDoctor/Nonparamatric-feature-selection-based-on-Steins-Formula}{github.com/Stein-Feature-Selection}.

\bibliography{reference}


\appendix
\newpage

\addtolength{\abovedisplayskip}{4pt}           
\addtolength{\belowdisplayskip}{8pt}           
\addtolength{\abovedisplayshortskip}{3pt}
\addtolength{\belowdisplayshortskip}{7pt}
\addtolength{\jot}{3pt}                        

\setcounter{equation}{0}
\setcounter{figure}{0}
\setcounter{table}{0}
\setcounter{thm}{0}
\setcounter{lem}{0}
\setcounter{prop}{0}
\setcounter{col}{0}
\setcounter{defn}{0}
\setcounter{algorithm}{0}  
\setcounter{section}{0}     

\begin{center}
   {\bf \large Supplementary Material to\\ ``A Nonparametric Statistics Approach to Feature Selection in Deep Neural Networks with Theoretical Guarantees''}
\end{center}

\allowdisplaybreaks 

\newcounter{tempLemma}
\newcounter{fakelemma}

In the supplementary material, we provide proof for Theorem \ref{thm1} to \ref{thm4} along with several additional lemmas and propositions. Furthermore, we include more detailed information on the implementation of the numerical examples and real data analysis.

\section{Proof of Proposition \ref{prop:hessian-nonsingular}}
\begin{prop}\label{prop:hessian-nonsingular}
    For a two-layer ReLU activated neural network \( f(\bW_1 \bx) = \bW_2 \sigma(\bW_1 \bx) \), when $\bx \sim \mathcal{N}(\boldsymbol{0}, \boldsymbol{\Sigma})$, $\EE\left[\nabla^2_{\bz} f (\bW_1\bx)\right]$ is non-singular if all elements of \(\bW_2\) are non-zero.
\end{prop}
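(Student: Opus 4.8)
The plan is to exploit the additive, coordinate-separable structure of the two-layer ReLU link. Writing $f(\bz) = \bW_2 \sigma(\bz) = \sum_{i=1}^{k_1} (\bW_2)_i \sigma(z_i)$ with $\sigma(t) = \max(0,t)$, each summand depends on a single coordinate of $\bz$. Consequently the (distributional) Hessian $\nabla^2_{\bz} f$ is diagonal: the off-diagonal second partials vanish identically, since $\partial_j f = (\bW_2)_j \sigma'(z_j)$ does not depend on $z_i$ for $i \neq j$, while the $i$-th diagonal entry is $(\bW_2)_i \sigma''(z_i)$. Taking expectations, $\EE[\nabla^2_{\bz} f(\bW_1\bx)]$ is diagonal with $i$-th diagonal entry equal to $(\bW_2)_i\,\EE[\sigma''(z_i)]$, so establishing non-singularity reduces to showing each such entry is nonzero.

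The substantive step is to evaluate $\EE[\sigma''(z_i)]$. Since $\sigma$ is only piecewise linear, $\sigma''$ exists merely in the distributional sense, as a Dirac mass at the origin; this is the main technical obstacle. I would make it rigorous by mollification: replace $\sigma$ by a smooth approximation $\sigma_\varepsilon$ whose second derivative $\sigma_\varepsilon''$ forms an approximate identity concentrating at $0$, compute $\EE[\sigma_\varepsilon''(z_i)] = \int \sigma_\varepsilon''(t)\,\phi_{\tau_i}(t)\,dt$ where $\phi_{\tau_i}$ is the density of $z_i$, and pass to the limit using continuity of $\phi_{\tau_i}$ to obtain $\EE[\sigma''(z_i)] = \phi_{\tau_i}(0)$. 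An equivalent route, matching the derivation of (\ref{eq:main0}), is to bypass the pointwise Hessian and define $\EE[\nabla^2_{\bz} f]$ through the second-order Stein identity $\EE[f(\bz)\,T_{\bz}(\bz)]$, then evaluate this Gaussian integral directly; both yield the same diagonal.

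Under $\bx \sim \mathcal{N}(\boldsymbol{0}, \bSigma)$ we have $\bz = \bW_1 \bx \sim \mathcal{N}(\boldsymbol{0}, \bW_1 \bSigma \bW_1^\T)$, so $z_i$ is centered Gaussian with variance $\tau_i^2 = (\bW_1 \bSigma \bW_1^\T)_{ii}$. Because $\bW_1$ has full row rank and $\bSigma \succ 0$, the matrix $\bW_1 \bSigma \bW_1^\T$ is positive definite, forcing $\tau_i^2 > 0$ for every $i$; hence the density value $\phi_{\tau_i}(0) = (2\pi \tau_i^2)^{-1/2}$ is finite and strictly positive. The $i$-th diagonal entry of the expected Hessian is therefore $(\bW_2)_i\,(2\pi \tau_i^2)^{-1/2}$.

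Finally, a diagonal matrix is non-singular precisely when all of its diagonal entries are nonzero. Since each density factor $(2\pi \tau_i^2)^{-1/2}$ is strictly positive, the $i$-th entry vanishes if and only if $(\bW_2)_i = 0$. Thus the hypothesis that $\bW_2$ has no zero entries is exactly what guarantees that $\EE[\nabla^2_{\bz} f(\bW_1\bx)]$ is non-singular, completing the argument. I expect the only delicate point to be the rigorous treatment of $\sigma''$ via mollification (or, equivalently, via Stein's identity); the remaining steps are a direct Gaussian density computation.
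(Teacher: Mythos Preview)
Your proposal is correct and follows essentially the same approach as the paper: both identify the Hessian as diagonal with $i$-th entry $(\bW_2)_i\,\sigma''(z_i)$, interpret $\sigma''$ as a Dirac mass at the origin, and evaluate its expectation as the Gaussian density of $z_i$ at zero, yielding diagonal entries $(\bW_2)_i/\sqrt{2\pi\tau_i^2}$ with $\tau_i^2=(\bW_1\bSigma\bW_1^\T)_{ii}$. Your treatment is in fact slightly more careful than the paper's, since you flag the distributional nature of $\sigma''$ and propose mollification (or Stein's identity) to make the evaluation rigorous, and you explicitly invoke the full-row-rank condition on $\bW_1$ to guarantee $\tau_i^2>0$---a hypothesis present in the main-text statement but tacitly assumed in the appendix proof.
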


\begin{proof}
\quad For a two-layer neural network $f(\bW_1 \bx) = \bW_2 \sigma(\bW_1 \bx)$, where $\bx \in                   \mathbb{R}^p, \bW_1 \in \mathbb{R}^{k_1 \times p}, \bW_2 \in \mathbb{R}^{1\times k_1} $ and  
    $\sigma$ is the ReLU activation function. The Hessian matrix $H_z = \nabla_z^2 f(\bW_1\bx)$ has the form of: 
    \begin{align}
          H_z = \text{diag}(\bW_{21}\delta(z_1), \bW_{22}\delta(z_2), \cdots, \bW_{2k_1}\delta(z_{k_1}))
    \end{align}
    
\noindent where $\delta(z_j)$ is the Dirac delta function satisfying $\int_{-\infty}^{\infty} \delta(z) dz= 1$ and $\delta(z) = 0$ for $z \neq 0$. Since $\bx$ follows a Gaussian distribution of $N(\textbf{0}, \boldsymbol\Sigma)$, so that $\bz = \bW_1 \bx$ follows the distribution of $N(\textbf{0}, \boldsymbol\Sigma_2)$ where $\boldsymbol\Sigma_2 = \bW_1\boldsymbol\Sigma\bW_1^T$ so that: 
    \begin{align}
        \mathbb{E}\left[\delta (z)  \right] = \int_{-\infty}^{\infty} \delta(z) p_z(z) dz =  \int_{-\infty}^{\infty} \delta(z) \frac{1}{\sqrt{2\pi \sigma_z^2}} e^{-\frac{z^2}{2\sigma_z^2}} dz = \frac{1}{\sqrt{2\pi \sigma_z^2}} 
    \end{align}
where $\sigma_z$ is the standard deviation of $z$. So that the expectation of $ \nabla_z^2 f(\bx)$ has following form: 
\begin{align}
     \mathbb{E} \left[\nabla_z^2 f(\bW_1\bx) \right] = \text{diag}\left( \frac{\bW_{21}}{\sqrt{2\pi \sigma_{11}^2}}, \frac{\bW_{22}}{\sqrt{2\pi \sigma_{22}^2}}, \cdots, \frac{\bW_{2k_1}}{\sqrt{2\pi \sigma_{k_1 k_1}^2}} \right)
\end{align}

\noindent where $\sigma_{ii}$ is the $i-$th diagonal element of the covariance matrix $\bW_1\boldsymbol\Sigma\bW_1^T$. If all elements of \(\bW_2\) are non-zero, $\EE\left[\nabla^2_{\bz} f (\bW_1\bx)\right]$ is non-singular.
\end{proof}

\section{Proof of Theorems in Section \ref{sec3}}
\setcounter{lem}{2}
\subsection{Proof of Proposition \ref{prop1-}}
\begin{lem}\label{subexpinequality}
    Suppose that $x$ is a sub-gaussian random variable, then $x^2$ is a sub-exponential random variable and:
    \begin{align}
        \left\| x\right\|_{\psi_2}^2 \le \left\| x^2\right\|_{\psi_1} \le 2\left\| x\right\|_{\psi_2}^2
    \end{align}
\end{lem}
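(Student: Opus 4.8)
The plan is to argue directly from the moment-based definitions of the two norms, turning the statement into an elementary comparison of two suprema. Set $a_k = (\E|x|^k)^{1/k}$ for $k \ge 1$, so that $\|x\|_{\psi_2} = \sup_{k\ge 1} k^{-1/2} a_k$ and, since $(\E|x^2|^k)^{1/k} = (\E|x|^{2k})^{1/k} = a_{2k}^2$, we also have $\|x^2\|_{\psi_1} = \sup_{k\ge 1} k^{-1} a_{2k}^2$. Because all quantities are nonnegative, squaring commutes with the supremum, giving $\|x\|_{\psi_2}^2 = \sup_{k\ge 1} k^{-1} a_k^2$. The whole lemma then reduces to comparing $\sup_{k\ge 1} k^{-1} a_k^2$ with $\sup_{k\ge 1} k^{-1} a_{2k}^2$.

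For the lower bound $\|x\|_{\psi_2}^2 \le \|x^2\|_{\psi_1}$, I would invoke Lyapunov's inequality (monotonicity of $L^p$-norms in $p$), which shows that $k \mapsto a_k$ is nondecreasing. Hence $a_k \le a_{2k}$ for every $k\ge 1$, so $k^{-1} a_k^2 \le k^{-1} a_{2k}^2$ termwise, and taking suprema over $k$ yields the claim.

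For the upper bound $\|x^2\|_{\psi_1} \le 2\|x\|_{\psi_2}^2$, I would rewrite $k^{-1} a_{2k}^2 = 2\,(2k)^{-1} a_{2k}^2$ and reindex by $m = 2k$. Since the even integers $\{2,4,6,\dots\}$ form a subset of $\{1,2,3,\dots\}$, restricting the supremum to even indices can only decrease it, so $\sup_{k\ge 1}(2k)^{-1}a_{2k}^2 \le \sup_{m\ge 1} m^{-1} a_m^2 = \|x\|_{\psi_2}^2$. Multiplying by $2$ gives the bound, which in particular shows $\|x^2\|_{\psi_1} < \infty$, i.e.\ that $x^2$ is indeed sub-exponential.

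There is no genuine analytic obstacle here; the argument is a short chain of moment manipulations resting on Lyapunov's inequality. The only points demanding mild care are purely formal: justifying that the supremum commutes with squaring for nonnegative sequences, and handling the reindexing $m = 2k$ correctly — noting that passing to the subsequence of even indices produces an inequality rather than an equality, which is precisely what introduces the factor of $2$ in the upper bound while leaving the lower bound tight.
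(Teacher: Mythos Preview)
Your proof is correct and follows essentially the same route as the paper: both arguments use Lyapunov's inequality $a_k \le a_{2k}$ for the lower bound and the reindexing $k^{-1}a_{2k}^2 = 2\,(2k)^{-1}a_{2k}^2$ for the upper bound. Your introduction of the notation $a_k = (\E|x|^k)^{1/k}$ makes the structure more transparent, but the underlying steps are identical.
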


\begin{proof}
    \quad For $k\ge 1$, we have: 
\begin{equation}
    \begin{aligned}
        \frac{1}{k} \left(\EE \left| x^2\right|^k\right)^{1/k} = \frac{1}{k} \left(\EE \left| x\right|^{2k}\right)^{1/k} \le \frac{1}{k}  \left( \sqrt{2k} \left\| x\right\|_{\psi_2}\right)^2  = 2\left\| x\right\|_{\psi_2}^2
    \end{aligned}
\end{equation}

\noindent then $x^2$ is a sub-exponential random variable and:
    \begin{equation}
        \begin{aligned}
        \left\| x^2\right\|_{\psi_1} = \mathop{\sup}_{k\ge 1}\frac{1}{k} \left(\EE \left| x^2\right|^k\right)^{1/k} \le 2\left\| x\right\|_{\psi_2}^2
    \end{aligned}
    \end{equation}

    \noindent On the other hand, for $k\ge 1$, by Cauchy's inequality, we have:
\begin{equation}
   \begin{aligned}
        \frac{1}{\sqrt{k}} \left(\mathbb{E}\left| x\right|^k\right)^{1/k} \le \frac{1}{\sqrt{k}}\left(\mathbb{E}\left| x\right|^{2k}\right)^{1/2k} = \frac{1}{\sqrt{k}} \left(\mathbb{E}\left| x^2\right|^{k}\right)^{1/2k} \le \left\| x^2\right\|_{\psi_1}^{1/2}
    \end{aligned} 
\end{equation}
    
\noindent therefore,
\begin{equation}
   \begin{aligned}
        \left\| x^2\right\|_{\psi_1} \ge \left\{\mathop{\sup}_{k\ge 1} \frac{1}{\sqrt{k}} \left(\mathbb{E}\left| x\right|^k\right)^{1/k}\right\}^2 = \left\| x\right\|_{\psi_2}^2
    \end{aligned} 
\end{equation}

\noindent Combining with the above two conclusions, we have:
\begin{equation}
   \begin{aligned}
        \left\| x\right\|_{\psi_2}^2 \le \left\| x^2\right\|_{\psi_1} \le 2\left\| x\right\|_{\psi_2}^2
    \end{aligned} 
\end{equation}
    
\end{proof}

\begin{prop}\label{prop1-}
   When $\bx \sim \mathcal{N}(\bold{0}, \bSigma)$ with $\phi_{\min}(\bSigma)>0$, the score $T(\bx)$ satisfies: 
    \begin{equation}
        \begin{aligned}
        \mathop{\sup}_{\| \bu\|_2=1} \left\| \bu^T T(\bx) \bu\right\|_{\psi_1} \le 4\phi^{-1}_{\min}(\bSigma)
        \end{aligned}
    \end{equation}
\end{prop}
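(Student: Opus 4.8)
The plan is to reduce the quadratic form $\bu^T T(\bx)\bu$ to a centered squared Gaussian and then control its $\psi_1$ norm via the centering inequality together with Lemma \ref{subexpinequality}. First I would substitute the Gaussian form of the second-order score, $T(\bx) = \bSigma^{-1}\bx\bx^T\bSigma^{-1} - \bSigma^{-1}$, so that for any fixed unit vector $\bu$,
\[
\bu^T T(\bx)\bu = (\bu^T\bSigma^{-1}\bx)^2 - \bu^T\bSigma^{-1}\bu.
\]
Writing $a = \bu^T\bSigma^{-1}\bx$ and $v = \bu^T\bSigma^{-1}\bu$, the variable $a$ is a centered Gaussian with variance $\E[a^2] = \bu^T\bSigma^{-1}\bSigma\bSigma^{-1}\bu = v$, so the quadratic form is exactly the centered square $a^2 - \E[a^2]$.

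The next step is to bound the two scalar quantities uniformly in $\bu$. Since $\|\bu\|_2 = 1$, the variance obeys $v = \bu^T\bSigma^{-1}\bu \le \phi_{\max}(\bSigma^{-1}) = \phi^{-1}_{\min}(\bSigma)$, which is where the dependence on $\phi_{\min}(\bSigma)$ enters. Because $a$ is a centered Gaussian of variance $v$, its moment-based sub-gaussian norm satisfies $\|a\|_{\psi_2}^2 \le v$ (the standard normal has $\psi_2$ norm $\sqrt{2/\pi} \le 1$ under the moment definition used here); this is the one place that requires checking that the supremum over $k$ in the definition of $\|\cdot\|_{\psi_2}$ is attained at $k=1$.

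Then I would assemble the estimate. Using the triangle inequality for the $\psi_1$ norm together with $|\E X| \le \E|X| \le \|X\|_{\psi_1}$ gives the centering bound $\|a^2 - \E[a^2]\|_{\psi_1} \le 2\|a^2\|_{\psi_1}$. Applying Lemma \ref{subexpinequality} yields $\|a^2\|_{\psi_1} \le 2\|a\|_{\psi_2}^2$, and combining the pieces gives
\[
\left\| \bu^T T(\bx)\bu \right\|_{\psi_1} \le 4\|a\|_{\psi_2}^2 \le 4v \le 4\phi^{-1}_{\min}(\bSigma).
\]
Since the right-hand side is independent of $\bu$, taking the supremum over $\|\bu\|_2 = 1$ completes the argument.

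The computation is essentially routine; the only delicate point is the uniform control of $\|a\|_{\psi_2}$ and $v$ over all unit $\bu$, which reduces to the spectral bound $\bu^T\bSigma^{-1}\bu \le \phi^{-1}_{\min}(\bSigma)$. The clean constant $4$ then arises precisely as $2$ (centering) times $2$ (Lemma \ref{subexpinequality}), with the residual Gaussian factor $2/\pi \le 1$ absorbed harmlessly into the bound $\|a\|_{\psi_2}^2 \le v$.
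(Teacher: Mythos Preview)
Your proof is correct and follows essentially the same route as the paper: express $\bu^T T(\bx)\bu$ as a centered squared Gaussian $(\bu^T\bSigma^{-1}\bx)^2-\bu^T\bSigma^{-1}\bu$, apply the centering inequality for $\|\cdot\|_{\psi_1}$, invoke Lemma~\ref{subexpinequality}, and finish with the spectral bound $\bu^T\bSigma^{-1}\bu\le \phi_{\min}^{-1}(\bSigma)$. Your explicit remark that the Gaussian $\psi_2$ norm satisfies $\|a\|_{\psi_2}^2\le\Var(a)$ under the moment definition is a step the paper leaves implicit.
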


\begin{proof}
    \quad Denote $\bSigma^{-1}\bx$ as $\bz$, then for any $\bu\in \mathbb{R}^p$ s.t. $\Vert \bu\Vert_2 = 1$, $\bu^T \bz = \bu^T \boldsymbol{\Sigma}^{-1} \bx$ follows the centered Gaussian distribution, then by Lemma \ref{subexpinequality}, $(\bu^T \bz)^2$ is sub-exponential random variable and:
    \begin{align}
        \left\| (\bu^T \bz)^2\right\|_{\psi_1} &\le 2\left\| \bu^T \bz\right\|_{\psi_2}^2 \le 2 \text{Var}(\bu^T \bz) 
        = 2 \bu^T \boldsymbol{\Sigma}^{-1} \bu 
        \le \frac{2}{\phi_\text{min}(\boldsymbol{\Sigma})}
    \end{align}

\noindent  Note that $T(\bx) = \bSigma^{-1} \bx\bx^T \bSigma^{-1} - \bSigma^{-1}$, then:
\begin{equation}
    \begin{aligned}
        \bu^T T(\bx) \bu &= \bu^T\left(\bSigma^{-1} \bx\bx^T \bSigma^{-1} -  \bSigma^{-1}\right) \bu \\
        &= \bu^T\left(\bSigma^{-1} \bx\bx^T \bSigma^{-1} - \EE \bSigma^{-1} \bx\bx^T \bSigma^{-1}\right) \bu \\
        &= \left(\bu^T\bSigma^{-1} \bx\right)^2 - \EE\left(\bu^T\bSigma^{-1} \bx\right)^2 \\
        &= \left(\bu^T\bz\right)^2 - \EE\left(\bu^T\bz\right)^2
    \end{aligned}
\end{equation}
    
\noindent Therefore, $\bu^T T(\bx) \bu$ follows sub-exponential distribution and:
    \begin{align}
        \left\| \bu^T T(\bx)\bu\right\|_{\psi_1}  = \left\| \left(\bu^T\bz\right)^2 - \EE\left(\bu^T\bz\right)^2 \right\|_{\psi_1} 
        \le 2 \left\|\left(\bu^T\bz\right)^2\right\|_{\psi_1} \le \frac{4}{\phi_\text{min}(\boldsymbol{\Sigma})}
    \end{align}
\end{proof}

\subsection{Proof of Theorem \ref{thm1}}




\begin{lem}\label{lm3}
    Let $\bW$ be a symmetric $p\times p$ matrix, and let $\mathcal{N}_\epsilon$ be an $\epsilon$-net of $S^{p-1}$ for some $\epsilon\in [0,1/2)$. Then,
    \begin{align}
        \left\| \bW \right\|_2 = \mathop{\sup}_{\bu\in S^{p-1}} \left| \langle \bW \bu, \bu\rangle\right| \le (1-2\epsilon)^{-1} \mathop{\sup}_{\bu\in \mathcal{N}_\epsilon} \left| \langle \bW\bu,\bu\rangle\right|.
    \end{align}
\end{lem}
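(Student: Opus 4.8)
The plan is to establish the stated equality and the inequality separately, both via the standard covering-net argument for symmetric operators. First I would verify the variational identity $\|\bW\|_2 = \sup_{\bu \in S^{p-1}} |\langle \bW\bu, \bu\rangle|$. Since $\bW$ is symmetric, the spectral theorem supplies an orthonormal eigenbasis, and the spectral norm equals the largest eigenvalue in absolute value; the Rayleigh quotient $\langle \bW\bu,\bu\rangle$ sweeps out the interval spanned by the eigenvalues as $\bu$ ranges over $S^{p-1}$, so its supremum in absolute value is exactly $\max_i |\lambda_i| = \|\bW\|_2$. By compactness of $S^{p-1}$ this supremum is attained at some unit vector $\bu^*$.

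For the inequality, I would fix the maximizer $\bu^*$ and choose a net point $\bv \in \mathcal{N}_\epsilon$ with $\|\bu^* - \bv\|_2 \le \epsilon$, which exists by definition of an $\epsilon$-net. The crucial step is to control the difference of the two quadratic forms. I would write
\[
\langle \bW\bu^*, \bu^*\rangle - \langle \bW\bv, \bv\rangle = \langle \bW(\bu^* - \bv), \bu^*\rangle + \langle \bW\bv, \bu^* - \bv\rangle,
\]
splitting into two bilinear terms by adding and subtracting $\langle \bW\bv, \bu^*\rangle$. Each term is then bounded via Cauchy--Schwarz together with the operator-norm inequality $\|\bW\bw\|_2 \le \|\bW\|_2 \|\bw\|_2$, using $\|\bu^*\|_2 = \|\bv\|_2 = 1$ and $\|\bu^* - \bv\|_2 \le \epsilon$; this yields a total bound of $2\epsilon\|\bW\|_2$.

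Combining the pieces gives $\|\bW\|_2 = |\langle \bW\bu^*, \bu^*\rangle| \le \sup_{\bu\in\mathcal{N}_\epsilon}|\langle \bW\bu,\bu\rangle| + 2\epsilon\|\bW\|_2$, and rearranging---valid precisely because $\epsilon < 1/2$, so $1-2\epsilon > 0$---produces the claimed factor $(1-2\epsilon)^{-1}$. I do not expect a genuine obstacle, since the result is textbook; the one step requiring care is the algebraic splitting of the quadratic-form difference into bilinear pieces, because the naive estimate $|\langle \bW\bu^*,\bu^*\rangle - \langle \bW\bv,\bv\rangle| \le \|\bW\|_2\|\bu^*-\bv\|_2$ is false. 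It is exactly the two-term decomposition above that correctly introduces the factor $2\epsilon$ rather than $\epsilon$, and hence the constant $(1-2\epsilon)^{-1}$ appearing in the statement.
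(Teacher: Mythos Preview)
Your proposal is correct and follows essentially the same route as the paper: pick a maximizer $\bu^*$, approximate it by a net point, split the quadratic-form difference into two bilinear terms, bound each by $\epsilon\|\bW\|_2$, and rearrange. The paper's decomposition is the same one you wrote (up to relabeling and a harmless sign inside the absolute value), and it does not separately argue the equality $\|\bW\|_2=\sup_{\bu}|\langle\bW\bu,\bu\rangle|$, which you correctly justify via the spectral theorem.
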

\begin{proof}
    \quad Let $\bu\in S^{p-1}$ satisfies that,
    \begin{align}
        \left\|\bW\right\|_2 = \left| \langle \bW \bu, \bu\rangle\right|
    \end{align}
    then choose $\bu_0 \in \mathcal{N}_\epsilon$ such that,
    \begin{align}
        \left\|\bu-\bu_0\right\|_2\le \eps
    \end{align}
    By triangle inequality, we have:
\begin{equation}
   \begin{aligned}
        \left| \langle \bW\bu_0,\bu_0\rangle\right| &\ge \left| \langle \bW\bu,\bu\rangle\right| - \left| \langle \bW\bu_0,\bu_0\rangle-\langle \bW\bu,\bu\rangle\right| \\
        &= \left\|\bW\right\|_2 - \left| \langle \bW\bu_0,\bu_0-\bu\rangle+\langle \bW\left(\bu-\bu_0\right),\bu\rangle\right| \\
        &\ge \left\|\bW\right\|_2 - \left| \langle \bW\bu_0,\bu_0-\bu\rangle\right| - \left|\langle \bW\left(\bu-\bu_0\right),\bu\rangle\right| \\
        &\ge \left\|\bW\right\|_2 - \left\|\bW\right\|_2\cdot\left\|\bu_0\right\|_2\cdot\left\|\bu_0-\bu\right\|_2 - \left\|\bW\right\|_2\cdot\left\|\bu-\bu_0\right\|_2\cdot\left\|\bu\right\|_2 \\
        &\ge (1-2\eps)\left\|\bW\right\|_2
    \end{aligned} 
\end{equation}
    
\noindent  therefore,
    \begin{align}
        \left\|\bW\right\|_2\le (1-2\eps)^{-1}\left| \langle \bW\bu_0,\bu_0\rangle\right| \le (1-2\epsilon)^{-1} \mathop{\sup}_{\bu\in \mathcal{N}_\epsilon} \left| \langle \bW\bu,\bu\rangle\right|.
    \end{align}
\end{proof}

\begin{lem}\label{lm4}
    The unit Euclidean sphere $S^{p-1}$ equipped with the Euclidean metric satisfies for every $\epsilon>0$ that
    \begin{align}
        |\mathcal{N}_\eps | \le \left(1+\frac2\epsilon\right)^p
    \end{align}
    where $\mathcal{N}_\eps$ is the $\eps$-net of $S^{p-1}$.
\end{lem}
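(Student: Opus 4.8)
The plan is to use the classical volumetric packing argument. The first step is to \emph{construct} a convenient $\eps$-net rather than reason about an arbitrary one: I would take $\mathcal{N}_\eps$ to be a \emph{maximal} subset of $S^{p-1}$ whose points are pairwise at Euclidean distance at least $\eps$ (such a set exists by a greedy construction, or Zorn's lemma). Maximality immediately forces $\mathcal{N}_\eps$ to be an $\eps$-net: if some $\bu\in S^{p-1}$ satisfied $\|\bu-\bx\|_2>\eps$ for every $\bx\in\mathcal{N}_\eps$, then $\mathcal{N}_\eps\cup\{\bu\}$ would still be $\eps$-separated, contradicting maximality. Hence every point of the sphere lies within distance $\eps$ of $\mathcal{N}_\eps$, so the covering number is bounded by $|\mathcal{N}_\eps|$.

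The key step is to convert separation into a volume inequality. Since the points of $\mathcal{N}_\eps$ are at least $\eps$ apart, the open balls $B(\bx,\eps/2)$, $\bx\in\mathcal{N}_\eps$, are pairwise disjoint: a common point $\bz$ of two such balls with distinct centers $\bx,\by$ would yield $\eps\le\|\bx-\by\|_2\le\|\bx-\bz\|_2+\|\bz-\by\|_2<\eps$, a contradiction. At the same time, because each center satisfies $\|\bx\|_2=1$, every such ball is contained in the single ball $B(\boldsymbol{0},\,1+\eps/2)$. Writing $V_p$ for the volume of the unit ball in $\mathbb{R}^p$ and using that a ball of radius $r$ has volume $V_p r^p$, disjointness and containment give
\begin{align*}
|\mathcal{N}_\eps|\,V_p\,(\eps/2)^p \;\le\; V_p\,(1+\eps/2)^p .
\end{align*}
Cancelling $V_p(\eps/2)^p$ yields
\begin{align*}
|\mathcal{N}_\eps| \;\le\; \left(\frac{1+\eps/2}{\eps/2}\right)^{p} \;=\; \left(1+\frac{2}{\eps}\right)^{p},
\end{align*}
which is exactly the claimed bound.

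There is no substantive obstacle here; the argument is entirely elementary. The only points requiring a little care are bookkeeping rather than depth: choosing the net as a maximal $\eps$-separated set so that the covering property comes for free, and using \emph{open} balls of radius $\eps/2$ so that the strict triangle-inequality estimate gives genuine disjointness. Everything else reduces to the scaling of Lebesgue measure under dilations, $\mathrm{vol}\big(B(\boldsymbol{0},r)\big)=r^p\,\mathrm{vol}\big(B(\boldsymbol{0},1)\big)$ in $\mathbb{R}^p$.
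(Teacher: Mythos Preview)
Your argument is correct and is precisely the standard volumetric packing proof. The paper itself does not prove this lemma at all; it simply cites Corollary~4.2.13 of Vershynin's \emph{High-Dimensional Probability}, and the proof there is exactly the maximal $\eps$-separated set plus volume comparison you have written out.
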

\noindent Lemma \ref{lm4} is the Corollary 4.2.13 in \cite{Vershynin2018HighDimensionalP}.

\vspace{1em}
\begin{lem}\label{lmrelu}
    Suppose that $\bX \in \mathbb{R}^p$ is a sub-gaussian random vector, let $g \in \mathscr{G}_{\mathcal{S}_0}^{NN}$ be a ReLU deep neural network. Then $g(\bX)$ is a sub-gaussian random variable.
\end{lem}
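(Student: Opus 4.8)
The plan is to exploit the fact that a ReLU network is a globally Lipschitz map vanishing at the origin, which reduces the claim to the sub-gaussianity of the single scalar $\|\bX\|_2$.

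First I would record the two elementary Lipschitz facts underlying $g\in\mathscr{G}_{\mS_0}^{NN}$. Each linear layer $\bu\mapsto\bW_l\bu$ is $\|\bW_l\|_2$-Lipschitz in the $\ell_2$ metric, and each ReLU activation $\sigma_l$ acts coordinatewise through $t\mapsto\max\{t,0\}$, which is $1$-Lipschitz and fixes $0$; hence $\sigma_l$ is $1$-Lipschitz on $\ell_2$ and $\sigma_l(\mathbf{0})=\mathbf{0}$. Since $g$ is the alternating composition of these maps with a linear map on the outside, $g$ is $K$-Lipschitz with the finite constant $K=\prod_{l=1}^{L}\|\bW_l\|_2$, and $g(\mathbf{0})=0$. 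Consequently $|g(\bx)|=|g(\bx)-g(\mathbf{0})|\le K\|\bx\|_2$ for every $\bx$, so pointwise domination gives $\EE|g(\bX)|^k\le K^k\,\EE\|\bX\|_2^k$ and therefore $\|g(\bX)\|_{\psi_2}\le K\,\big\|\,\|\bX\|_2\,\big\|_{\psi_2}$.

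It then remains to show $\|\bX\|_2$ is sub-gaussian, which is the crux. Here I would \emph{not} invoke any Lipschitz-concentration inequality, because the sub-gaussian vector hypothesis only controls the one-dimensional marginals $\|\bu^\T\bX\|_{\psi_2}$ and the coordinates of $\bX$ need not be independent. Instead, writing $\|\bX\|_2^2=\sum_{i=1}^{p}X_i^2$ with $X_i=\mathbf{e}_i^\T\bX$, each $X_i$ is sub-gaussian with $\|X_i\|_{\psi_2}\le\|\bX\|_{\psi_2}$ (as $\mathbf{e}_i$ is a unit vector), so by Lemma \ref{subexpinequality} each $X_i^2$ is sub-exponential with $\|X_i^2\|_{\psi_1}\le 2\|\bX\|_{\psi_2}^2$. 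Applying the triangle inequality for the $\psi_1$ norm, which holds for arbitrary dependent summands, yields $\big\|\,\|\bX\|_2^2\,\big\|_{\psi_1}\le\sum_{i=1}^{p}\|X_i^2\|_{\psi_1}\le 2p\|\bX\|_{\psi_2}^2$. Finally, using the lower-bound half of Lemma \ref{subexpinequality} with $x=\|\bX\|_2$, namely $\|x\|_{\psi_2}^2\le\|x^2\|_{\psi_1}$, gives $\big\|\,\|\bX\|_2\,\big\|_{\psi_2}\le\sqrt{2p}\,\|\bX\|_{\psi_2}$. Combining with the display from the previous paragraph produces $\|g(\bX)\|_{\psi_2}\le K\sqrt{2p}\,\|\bX\|_{\psi_2}<\infty$, establishing that $g(\bX)$ is sub-gaussian.

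The main obstacle is precisely the dependence issue in the second step: because $\bX$ is sub-gaussian only in the marginal sense, there is no concentration of $\|\bX\|_2$ about its mean to lean on, and the argument must instead be routed through the additive $\psi_1$-norm control of $\|\bX\|_2^2$ rather than through independence or a Lipschitz-concentration bound. I would also note that the resulting $\psi_2$ constant degrades like $\sqrt{p}$, which is harmless here, since the lemma asserts only sub-gaussianity and not a dimension-free bound.
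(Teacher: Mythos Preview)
Your proof is correct and follows essentially the same route as the paper: establish that the ReLU network is globally Lipschitz (with $g(\mathbf{0})=0$), then control $\big\|\,\|\bX\|_2\,\big\|_{\psi_2}$ via the $\psi_2\leftrightarrow\psi_1$ conversion of Lemma~\ref{subexpinequality} together with the triangle inequality for $\psi_1$. The one difference is that the paper exploits the column-sparsity of $\bW_1$ to restrict to $\bX_{\mS_0}\in\mathbb{R}^s$ rather than the full $\bX\in\mathbb{R}^p$, obtaining the sharper constant $\sqrt{2s}\,L\,\|\bX\|_{\psi_2}$ in place of your $\sqrt{2p}\,K\,\|\bX\|_{\psi_2}$; this $s$-dependence is used downstream (e.g., in Lemma~\ref{nslm2} and Theorem~\ref{nsthm1}), but for the present lemma your version suffices, as you note.
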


\begin{proof}
\quad  Since $g \in \mathscr{G}_{\mathcal{S}_0}^{NN}$, we can suppose that:
\begin{align}
    g\left(\bx\right)=\bW_L\sigma_L\left(\bW_{L-1}\cdots\sigma_2\left(\bW_2\sigma_1\left(\bW_1\bx\right)\right)\right)
\end{align}
Let $\bW_1' = \left\{\bW_1\right\}_{\cdot \mS_0}$ be the matrix composed of non-zero columns of $\bW_1$, then it is obvious that $\left\|\bW_1'\right\|_2 = \left\|\bW_1\right\|_2$. We then define
\begin{equation}
    \begin{aligned}
    g_1\left(\by\right) = \bW_L\sigma_L\left(\bW_{L-1}\cdots\sigma_2\left(\bW_2\sigma_1\left(\bW_1'\by\right)\right)\right),\quad \by \in \mathbb{R}^s.
\end{aligned}
\end{equation}

\noindent Then for $\bx\in\mathbb{R}^p$, $g\left(\bx\right) = g_1\left(\bx_{\mS_0}\right)$. We first prove that $g_1$ is a Lipschitz continuous function with the Lipschitz constant $L = \prod\limits_{l=1}^L \| \bW_l\|_2$. Notice that ReLU function is Lipschitz continuous with the Lipschitz constant $1$, then for any vector $\by,\bz\in \mathbb{R}^s$, we have:
\begin{align*}
        \vert g_1(\by)- g_1(\bz)\vert &= \left| \bW_L\left[\sigma_L(\bW_{L-1}\cdots\sigma_2(\bW_2\sigma_1(\bW_1\by))) - \sigma_L(\bW_{L-1}\cdots\sigma_2(\bW_2\sigma_1(\bW_1\bz)))\right]\right| \\
        &\le \|\bW_L\|_2 \cdot\| \sigma_L(\bW_{L-1}\cdots\sigma_2(\bW_2\sigma_1(\bW_1\by))) - \sigma_L(\bW_{L-1}\cdots\sigma_2(\bW_2\sigma_1(\bW_1\bz)))\| \\
        &\le \|\bW_L\|_2 \cdot\| \bW_{L-1}\cdots\sigma_2(\bW_2\sigma_1(\bW_1\by)) - \bW_{L-1}\cdots\sigma_2(\bW_2\sigma_1(\bW_1\bz)) \| \\
        &\le \prod\limits_{l=2}^L \|\bW_l\|_2\cdot \|\bW_1'\|_2 \cdot \| \by-\bz\| \\
        &= \prod\limits_{l=1}^L \|\bW_l\|_2\cdot \| \by-\bz\|    \\
        &= L \| \by-\bz\|  \refstepcounter{equation}\tag{\theequation}
\end{align*}
     the last inequality is derived from recursive approach. Therefore, $g_1$ is $L$-Lipschitz continuous.

\noindent Let $\bX = \left(X_1,\cdots, X_p\right)^T$, since $\bX$ is a sub-gaussian random vector, $\left\|\bX_{\mS_0}\right\| = \left(\sum_{j\in\mS_0} X_j^2 \right)^{1/2}$ is a sub-gaussian random variable which satisfies that:
    \begin{align}
        \left\| \left(\sum\limits_{j\in\mS_0} X_j^2\right)^{1/2} \right\|_{\psi_2} \le \left\|\sum\limits_{j\in\mS_0} X_j^2\right\|_{\psi_1}^{1/2} \le \left(\sum\limits_{j\in\mS_0} \left\|X_j^2\right\|_{\psi_1}\right)^{1/2} \le \left(2\sum\limits_{j\in\mS_0} \left\|X_j\right\|_{\psi_2}^2\right)^{1/2}
    \end{align}
    
\noindent  then for $k \ge 1 $, we have:
\begin{equation}
     \begin{aligned}
        \frac1{\sqrt{k}}\left(\EE\left[\left| g(\bX)-g(\boldsymbol{0} )\right|^k\right]\right)^{1/k} &= \frac1{\sqrt{k}}\left(\EE\left[\left| g_1(\bX_{\mS_0})-g_1(\boldsymbol{0} )\right|^k\right]\right)^{1/k} \\
        &\le \frac L{\sqrt{k}}\left(\EE\left[\left\|\bX_{\mS_0}\right\|^k\right]\right)^{1/k} \\
        &\le L \left\| \left\|\bX_{\mS_0}\right\| \right\|_{\psi_2} \\
        &\le  L \left(2\sum\limits_{j\in\mS_0} \left\|X_j\right\|_{\psi_2}^2\right)^{1/2} \\
        &\le \sqrt{2s}L \left\|\bX\right\|_{\psi_2}
    \end{aligned}
\end{equation}
   
\noindent  which shows that $g(\bX)-g(\boldsymbol{0})$ is sub-gaussian random variable with:
    \begin{align}
        \left\|g(\bX)-g(\boldsymbol{0})\right\|_{\psi_2} \le \sqrt{2s}L \left\|\bX\right\|_{\psi_2}
    \end{align}
    
  \noindent Therefore, $g(\bX)$ is sub-gaussian random variable and:
    \begin{align}
        \left\|g(\bX)\right\|_{\psi_2} \le \left|g(\boldsymbol{0})\right| + \sqrt{2s}L \left\|\bX\right\|_{\psi_2}
    \end{align}
\end{proof}

\begin{lem}\label{lmbeta}
    Suppose that $\bX \in \mathbb{R}^s$ is a sub-gaussian random vector and $G\in \mathscr{G}$ is a $\beta$-H{\"o}lder smooth function with $0<\beta\le1$. Then $G(\bX)$ is a sub-gaussian random variable.
\end{lem}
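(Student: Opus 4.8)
The plan is to reduce the statement to a moment bound on $\|\bX\|_2^\beta$ via the Hölder condition, and then to exploit the fact that raising a sub-gaussian quantity to a power $\beta\le 1$ can only lighten its tail. First I would center $G$ at the origin: by the defining Hölder inequality of $\mathscr{G}$ in \eqref{G}, we have the pointwise bound $|G(\bX)-G(\boldsymbol{0})|\le C\|\bX\|_2^\beta$. Since a constant $c$ satisfies $\|c\|_{\psi_2}=|c|$ and the $\psi_2$-norm obeys the triangle inequality, it is enough to control $\big\|\,\|\bX\|_2^\beta\,\big\|_{\psi_2}$, after which $\|G(\bX)\|_{\psi_2}\le |G(\boldsymbol{0})|+C\big\|\,\|\bX\|_2^\beta\,\big\|_{\psi_2}$.

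Second, I would record that $R:=\|\bX\|_2=\big(\sum_{j=1}^s X_j^2\big)^{1/2}$ is itself sub-gaussian. This is exactly the computation already carried out in the proof of Lemma \ref{lmrelu}, which yields $\|R\|_{\psi_2}\le \big(2\sum_{j=1}^s \|X_j\|_{\psi_2}^2\big)^{1/2}\le \sqrt{2s}\,\|\bX\|_{\psi_2}=:\kappa$. So the task becomes showing that $R^\beta$ is sub-gaussian whenever $R$ is.

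Third, and at the heart of the argument, I would estimate the moments directly from the definition $\|R^\beta\|_{\psi_2}=\sup_{k\ge 1} k^{-1/2}\big(\E R^{\beta k}\big)^{1/k}$. Setting $m=\beta k$, I would split into two regimes. When $m\ge 1$, the sub-gaussian moment inequality $(\E R^m)^{1/m}\le \kappa\sqrt{m}$ gives $k^{-1/2}(\E R^{\beta k})^{1/k}\le \kappa^\beta \beta^{\beta/2}\, k^{(\beta-1)/2}$; because $0<\beta\le 1$ the exponent $(\beta-1)/2\le 0$, so this is at most $\kappa^\beta$ for all $k\ge 1$. When $m<1$, monotonicity of $L^p$-norms gives $(\E R^m)^{1/m}=\|R\|_{L^m}\le \|R\|_{L^1}\le \kappa$, whence $k^{-1/2}(\E R^{\beta k})^{1/k}\le k^{-1/2}\kappa^\beta\le \kappa^\beta$. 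Taking the supremum yields $\|R^\beta\|_{\psi_2}\le \kappa^\beta$, and combining with the first step gives $\|G(\bX)\|_{\psi_2}\le |G(\boldsymbol{0})|+C\kappa^\beta<\infty$.

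The step requiring the most care is the third one: the constraint $\beta\le 1$ is precisely what forces $k^{(\beta-1)/2}$ to stay bounded as $k\to\infty$, so that the $\psi_2$-norm remains finite; were $\beta>1$ this bound would diverge and one could only obtain weaker Orlicz control. A secondary subtlety is the small-$k$ regime $\beta k<1$, where the naive sub-gaussian moment bound is not sharp and one must instead invoke the monotonicity of the $L^p$-norms to compare against the $L^1$-norm. Both points are routine once isolated, but they are the reason the statement is restricted to $0<\beta\le 1$.
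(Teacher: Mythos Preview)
Your proof is correct and follows essentially the same route as the paper: center $G$ at the origin via the H\"older condition, note that $R=\|\bX\|_2$ is sub-gaussian with $\|R\|_{\psi_2}\le\sqrt{2s}\,\|\bX\|_{\psi_2}$, and then bound $k^{-1/2}(\EE R^{\beta k})^{1/k}$ uniformly in $k$ by exploiting $\beta\le 1$ so that the factor $k^{(\beta-1)/2}$ stays bounded. The only cosmetic difference is that the paper applies Jensen's inequality in the form $\EE[R^{\beta k}]=\EE[(R^k)^\beta]\le(\EE R^k)^\beta$ before invoking the sub-gaussian moment bound at the integer $k$, which sidesteps your case split on whether $\beta k<1$; the resulting bound $\|G(\bX)\|_{\psi_2}\le |G(\boldsymbol{0})|+C(2s)^{\beta/2}\|\bX\|_{\psi_2}^\beta$ is the same.
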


\begin{proof}
\quad    Note that $0<\beta \le 1$, for $k \ge 1 $, we have:
\begin{equation}
     \begin{aligned}
        \frac1{\sqrt{k}} \left(\EE\left[\left| G(\bX)-G(\boldsymbol{0} )\right|^k\right]\right)^{1/k} & \le \frac{C}{\sqrt{k}} \left(\EE\left[\left\| \bX \right\|^{k\beta}\right]\right)^{1/k} \\
        &\le \frac{C}{\sqrt{k}} \left(\EE\left[\left\| \bX \right\|^{k}\right]\right)^{\beta/k} \\
        &\le \frac{C}{\sqrt{k}} \cdot \left(\sqrt{k} \left\| \left\| \bX\right\| \right\|_{\psi_2} \right)^{\beta} \\
        &\le C\left\| \left\| \bX\right\| \right\|_{\psi_2}^\beta \\
        &\le C\left(2\sum\limits_{i=1}^s \left\|X_i\right\|_{\psi_2}^2\right)^{\beta/2} \\
        &\le  (2s)^{\beta/2} C\left\|\bX\right\|_{\psi_2}^\beta
    \end{aligned}  
\end{equation}

\noindent where $C$ is a constant, then $G(\bX)-G(\boldsymbol{0})$ is a sub-gaussian random variable. Furthermore, $G(\bx)$ is a sub-gaussian random variable and:
    \begin{align}
        \left\|G(\bX)\right\|_{\psi_2} \le \left|G(\boldsymbol{0})\right| + (2s)^{\beta/2} C\left\|\bX\right\|_{\psi_2}^\beta
    \end{align}
\end{proof}

For an integer $d>0$, let $P_d$ be the set of partitions of $[d]$ into nonempty, pairwise disjoint sets. For a partition $\mathcal{J} = \left\{J_1,\cdots, J_k\right\}$, and a $d$-indexed tensor $\bA = \left(a_{\bi}\right)_{\bi\in[n]^d}$, define:
\begin{align}
    \left\| \bA \right\|_\mathcal{J} = \mathop{\sup}\left\{\sum\limits_{\bi\in[n]^d} a_{\bi}\prod\limits_{l=1}^k \bu_{\bi_{J_l}}^{(l)}: \left\| \bu^{(l)} \right\|_2 \le 1, \bu^{(l)} \in \mathbb{R}^{n^{\left|J_l\right|}}, 1\le l\le k \right\}
\end{align}

\begin{lem}\label{lm5new}
    Let $\bx = (x_1,\cdots,x_n)$ be a random vector with independent components, such that for all $i\le n$, $\|x_i\|_{\psi_2} \le M$. Then for every polynomial $f: \mathbb{R}^n \to \mathbb{R}$ of degree $D$ and any $t>0$,
    \begin{align}
        \P\left(\vert f(\bx)-\EE f(\bx)\vert\ge t\right) \le 2\exp\left\{-\frac
        1{C_D} \eta_f(t) \right\} 
    \end{align}
    where $\eta_f(t) = \mathop{\min}_{1\le d\le D} \mathop{\min}_{\mathcal{J}\in P_d} \left(\frac{t}{M^d\| \EE \mathcal{D}^d f(\bx)\|_{\mathcal{J}}}\right\}^{2/\vert\mathcal{J}\vert}$, $\mathcal{D}^df$ denote the $d$-th derivative of $f$ and $C_D$ is a positive constant.
\end{lem}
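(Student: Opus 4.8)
The plan is to treat Lemma \ref{lm5new} as a polynomial concentration inequality and to prove it by the moment method: first establish a bound on the $L^q$-norm of the centered polynomial $f(\bx)-\EE f(\bx)$ in terms of the partition norms $\|\EE\mathcal{D}^d f(\bx)\|_{\mathcal{J}}$, and then invert this moment bound into the stated tail bound by optimizing Markov's inequality over $q$. Concretely, I would aim to prove the moment inequality
\[
    \left\| f(\bx) - \EE f(\bx) \right\|_q \;\le\; C_D \sum_{d=1}^{D} \sum_{\mathcal{J}\in P_d} q^{|\mathcal{J}|/2}\, M^d\, \left\| \EE\, \mathcal{D}^d f(\bx)\right\|_{\mathcal{J}}, \qquad q\ge 2,
\]
which is the sub-gaussian analogue of Lata\l{}a's moment estimates for Gaussian chaoses, the factor $M^d$ recording the replacement of standard Gaussians by variables with $\psi_2$-norm at most $M$.

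To obtain this moment bound I would exploit that, since $f$ has degree $D$, its Taylor expansion terminates, so the centered polynomial $f(\bx)-\EE f(\bx)$ decomposes into homogeneous chaos components of orders $d=1,\dots,D$ whose coefficient tensors are exactly the averaged derivative tensors $\EE\mathcal{D}^d f(\bx)$. For each homogeneous component I would (i) pass to the tetrahedral (distinct-index) part by absorbing diagonal terms into lower-order components, (ii) decouple the resulting chaos via the decoupling inequalities of de la Pe\~na and Montgomery-Smith so that the $d$ copies of $\bx$ become independent, and (iii) control the decoupled chaos using hypercontractivity, which holds for independent sub-gaussian variables with constants governed by $M=\max_i\|x_i\|_{\psi_2}$. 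The combinatorial core is then Lata\l{}a's identification of the $L^q$-norm of a $d$-homogeneous chaos with $\sum_{\mathcal{J}\in P_d} q^{|\mathcal{J}|/2}\|\cdot\|_{\mathcal{J}}$, carried out uniformly over $d\le D$.

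Given the moment inequality, the tail bound follows by a standard inversion. Setting $q=\eta_f(t)/C_D'$ for a sufficiently large constant $C_D'$ depending only on $D$, the defining minimum in $\eta_f(t)$ guarantees that $M^d\|\EE\mathcal{D}^d f(\bx)\|_{\mathcal{J}}\,\eta_f(t)^{|\mathcal{J}|/2}\le t$ for every admissible $(d,\mathcal{J})$; since $P_1,\dots,P_D$ are finite with cardinality depending only on $D$, choosing $C_D'$ large makes the geometric gain $q^{|\mathcal{J}|/2}\le \eta_f(t)^{|\mathcal{J}|/2}/\sqrt{C_D'}$ dominate both the prefactor $C_D$ and the number of partitions, so that $\|f(\bx)-\EE f(\bx)\|_q\le t/e$. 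Markov's inequality then yields $\P(|f(\bx)-\EE f(\bx)|\ge t)\le (\|f-\EE f\|_q/t)^q\le e^{-q}$, which matches the claimed bound after renaming constants; the regime $q<2$ is handled trivially, since there the right-hand side of the lemma already exceeds one.

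The main obstacle is the moment inequality itself: Lata\l{}a's identification of chaos moments with partition norms for orders $d\ge 3$ is genuinely delicate, and transferring it from the Gaussian to the sub-gaussian setting requires the decoupling and hypercontractivity steps to be executed uniformly in the degree. Since this is precisely the content of the Adamczak--Wolff polynomial concentration inequality and its sub-gaussian refinements, in the write-up I would either invoke that result directly after checking that the bound $\|x_i\|_{\psi_2}\le M$ supplies the required hypercontractivity, or reproduce the moment estimate along the lines sketched above.
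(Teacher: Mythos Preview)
Your proposal is correct in spirit and outlines essentially the Adamczak--Wolff argument: decompose the centered polynomial into homogeneous chaos components via the terminating Taylor expansion, decouple and apply hypercontractivity to reduce to Lata\l{}a-type moment estimates indexed by partitions, and then invert via Markov. You even note at the end that this is ``precisely the content of the Adamczak--Wolff polynomial concentration inequality.''

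The paper, however, does not prove this lemma at all: it simply states that Lemma~\ref{lm5new} is Theorem~1.4 in Adamczak and Wolff (2015) and moves on. So your write-up is doing substantially more than the paper requires. If your aim is to match the paper, a one-line citation suffices; if your aim is a self-contained treatment, your sketch is the right roadmap, though you should be aware that the Lata\l{}a partition-norm moment identification for $d\ge 3$ is genuinely intricate and would occupy several pages if carried out in full rather than invoked.
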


\noindent Lemma \ref{lm5new} is the Theorem 1.4 in \cite{adamczak2015concentration}. We borrow the technique of Lemma S6.2 from \cite{su2024estimating} and give the tail inequality for the product of sub-gaussian variable and sub-exponential variable. We summarize this as the following Lemma \ref{lm8new}.

\vspace{1em}
\begin{lem}\label{lm8new}
    Let $\left\{(x_i,y_i)\right\}_{i=1}^n$ be a set of independent random pairs with $\mathop{\sup}_{1\le i\le n} \|x_i\|_{\psi_2}\le M_{x}$ and $\mathop{\sup}_{1\le i\le n} \|y_i\|_{\psi_1} \le M_{y}$. Let $M = M_{x}M_{y}$, then there exists constants $c, d>0$ such that for any $t \ge \frac{dM}{n} $,
    \begin{align}
        \P\left(\left|\frac1n\sum\limits_{i=1}^n \left[x_iy_i -\EE x_iy_i\right]\right| \ge t\right) \le 4\exp\left\{-c\mathop{\min}\left[ \left(\frac{t\sqrt{n}}{M}\right)^2, \left(\frac{tn}{M}\right)^{2/3}\right]\right\}
    \end{align}
\end{lem}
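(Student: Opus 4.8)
The plan is to show that each centered product is sub-Weibull of order $2/3$ and then apply a Bernstein-type inequality adapted to such heavy-tailed summands. First I would bound the moments of $x_iy_i$ directly. By Cauchy--Schwarz, $\EE|x_iy_i|^k\le(\EE|x_i|^{2k})^{1/2}(\EE|y_i|^{2k})^{1/2}$ for every integer $k\ge1$. The definitions of the Orlicz norms give $(\EE|x_i|^{2k})^{1/(2k)}\le C\sqrt{k}\,M_x$ for the sub-gaussian factor and $(\EE|y_i|^{2k})^{1/(2k)}\le Ck\,M_y$ for the sub-exponential factor (the latter being the $\psi_1$/$\psi_2$ comparison made precise in Lemma \ref{subexpinequality}), so that
\begin{align*}
\left(\EE|x_iy_i|^k\right)^{1/k}\le C\,k^{3/2}M_xM_y=C\,k^{3/2}M.
\end{align*}
A moment growth of order $k^{1/\alpha}$ with $1/\alpha=3/2$ identifies $x_iy_i$ --- and, after centering, $Z_i:=x_iy_i-\EE x_iy_i$, since centering changes the moments by at most a bounded factor --- as sub-Weibull with exponent $\alpha=2/3$ and Orlicz norm $\|Z_i\|_{\psi_{2/3}}\le C'M$. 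Heuristically $x_iy_i$ behaves like the degree-three monomial $x_iw_i^2$ in sub-gaussian variables, so the exponent $2/3$ is precisely the $d=3$ term appearing in the Adamczak--Wolff bound of Lemma \ref{lm5new}.

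Second, I would feed these independent, mean-zero, sub-Weibull($2/3$) variables into a generalized Bernstein inequality, following the truncation technique behind Lemma S6.2 of \cite{su2024estimating}. Because the summands are heavier than sub-exponential, no global exponential moment is finite, so the argument proceeds by truncation: write $x_iy_i=x_iy_i\mathbf{1}(|y_i|\le\tau)+x_iy_i\mathbf{1}(|y_i|>\tau)$ for a level $\tau$. On the bounded piece $x_iy_i\mathbf{1}(|y_i|\le\tau)$ is sub-gaussian with $\psi_2$-norm at most $\tau M_x$, so a sub-gaussian Hoeffding/Bernstein bound controls the centered average of this part by $\exp(-cnt^2/(\tau^2M_x^2))$; on the excess piece the sub-exponential tail $\P(|y_i|>\tau)\le2e^{-\tau/(CM_y)}$ together with a Cauchy--Schwarz bound on $\EE|x_iy_i|\mathbf{1}(|y_i|>\tau)$ shows that contribution is negligible. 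Optimizing $\tau$ to trade off the sub-gaussian deviation of the truncated part against the heavy-tail contribution of the excess yields the two competing regimes, and substituting $K\asymp M$ with total deviation $nt$ gives
\begin{align*}
\P\!\left(\left|\tfrac1n\sum_{i=1}^n Z_i\right|\ge t\right)\le4\exp\!\left\{-c\min\!\left[\left(\tfrac{t\sqrt n}{M}\right)^2,\ \left(\tfrac{tn}{M}\right)^{2/3}\right]\right\},
\end{align*}
where the first term is the Gaussian regime from the truncated part and the second is the sub-Weibull tail from the excess, with crossover near $t\asymp M/n^{1/4}$.

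Finally, the hypothesis $t\ge dM/n$ is what keeps the inequality in its meaningful range: it forces the total deviation $nt$ to exceed a constant multiple of the single-summand scale $M$, so that $(tn/M)^{2/3}$ is bounded below by a constant and the truncation level can be chosen large relative to $M_y$; below this scale the bound would be vacuous. I expect the main obstacle to be this second step --- establishing the two-regime Bernstein bound for sub-Weibull($2/3$) summands with the correct exponents. Unlike the sub-exponential case, it cannot be read off from a single exponential-moment computation; the delicate part is choosing the truncation level as a function of both $t$ and $n$ and verifying that the bounded (Gaussian) deviation and the heavy-tail remainder are each dominated by the claimed right-hand side, which is where essentially all of the technical bookkeeping resides.
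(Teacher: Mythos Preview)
Your first step---bounding $(\EE|x_iy_i|^k)^{1/k}\lesssim k^{3/2}M$ and thereby identifying $x_iy_i$ as sub-Weibull with exponent $2/3$---is correct and is exactly the moment growth the paper relies on. The divergence is in the second step. The paper does \emph{not} argue by truncating $y_i$ and optimizing a cutoff; instead it uses the cube-root trick: set $z_i^{+}=\bigl((x_iy_i)^{+}\bigr)^{1/3}$ and $z_i^{-}=\bigl((x_iy_i)^{-}\bigr)^{1/3}$, observe these are sub-Gaussian (this is equivalent to your sub-Weibull($2/3$) observation), and write $\frac{1}{n}\sum_i x_iy_i=\frac{1}{n}\sum_i\bigl[(z_i^{+})^3-(z_i^{-})^3\bigr]$, a degree-$3$ polynomial in independent sub-Gaussian variables. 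Lemma~\ref{lm5new} (Adamczak--Wolff) then gives the two-regime bound directly, with the $(tn/M)^{2/3}$ term arising as the $|\mathcal{J}|=D=3$ term in $\eta_f$. This is precisely the mechanism behind Lemma~S6.2 of \cite{su2024estimating} (which uses degree~$4$ for a sub-exponential $\times$ sub-exponential product), so your description of that lemma as a ``truncation technique'' is a misreading.

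Your truncation route is not wrong in principle---it is the standard sub-Weibull Bernstein analysis---but it is genuinely harder here: you must choose $\tau$ as a function of both $t$ and $n$, separately recenter the truncated and excess pieces, and verify that the centering bias of the excess part is below $t/2$ while its random fluctuation is controlled; the sketch you give does not yet do this. The paper's approach sidesteps all of that bookkeeping by converting the problem into a polynomial-concentration question for which an off-the-shelf bound is already available. You actually note the heuristic (``the exponent $2/3$ is precisely the $d=3$ term in Lemma~\ref{lm5new}'') but then don't use it; promoting that remark to the main argument is the simpler path.
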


\noindent The only difference between Lemma \ref{lm8new} and Lemma S6.2 from \cite{su2024estimating} is that the constructed polynomial function $F$ is of order three instead of order four, and the constructed sub-gaussian random variables are $z_i^+ = \left\{\left(x_iy_i\right)^+\right\}^{1/3}$ and $z_i^- = \left\{\left(x_iy_i\right)^-\right\}^{1/3}$, where $u^+ = \max\left\{u, 0\right\}$ and $u^- = \max\left\{-u, 0\right\}$. Then Lemma \ref{lm5new} can be applied directly to prove Lemma \ref{lm8new}. 

\vspace{1em}
\begin{lem}\label{e2bound}
Assume the conditions in Theorem \ref{thm1}, we define matrix $\bE_1 = \frac{1}{n}\sum_{i=1}^{n} y_i T(\bx_i) - \EE yT(\bx)$. For any $\nu > 0$, when $n > \frac{1}{c^2}(\log \frac{4}{\nu}+ \log 9 \cdot p)^2 $, with probability at lease $1-\nu$, we have:
\begin{align}
     \left\| \bE_1 \right\|_2 =  \left\| \frac{1}{n}\sum_{i=1}^{n} y_i T(\bx_i) - \EE yT(\bx) \right\|_2  \le  \frac{2M}{\sqrt{cn}} \sqrt{\log \frac{4}{\nu} + \log 9 \cdot p}
\end{align}
where $c, M >0$ are constants. 
\end{lem}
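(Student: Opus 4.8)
The plan is to bound the spectral norm of the (symmetric) matrix $\bE_1$ by a discretization argument, reducing it to a scalar concentration problem that is settled by the product tail bound of Lemma \ref{lm8new}. Since $T(\bx)$ is the Hessian of $-\log P(\bx)$, it is symmetric, hence so is $\bE_1$, and Lemma \ref{lm3} applies. First I would fix $\epsilon = 1/4$ and take an $\epsilon$-net $\mathcal{N}_{1/4}$ of $S^{p-1}$, which by Lemma \ref{lm4} satisfies $|\mathcal{N}_{1/4}| \le (1+8)^p = 9^p$. Lemma \ref{lm3} then gives
\[
\|\bE_1\|_2 \le 2\sup_{\bu\in\mathcal{N}_{1/4}} \left|\langle \bE_1 \bu, \bu\rangle\right|,
\]
so it suffices to control the quadratic form $\langle\bE_1\bu,\bu\rangle$ for each fixed net point $\bu$ and then take a union bound.

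For a fixed $\bu\in\mathcal{N}_{1/4}$, write
\[
\langle\bE_1\bu,\bu\rangle = \frac1n\sum_{i=1}^n\Big[y_i\,\bu^\T T(\bx_i)\bu - \EE\big(y\,\bu^\T T(\bx)\bu\big)\Big],
\]
a centered average of i.i.d.\ products. Under model (\ref{model2}) with Gaussian noise, $y_i = g(\bx_i)+\eps_i$ is sub-gaussian: $g(\bx_i)$ is sub-gaussian by Lemma \ref{lmrelu} and $\eps_i$ is Gaussian, so $\|y_i\|_{\psi_2}$ is bounded by a constant. The factor $\bu^\T T(\bx_i)\bu$ is sub-exponential with $\|\bu^\T T(\bx_i)\bu\|_{\psi_1}\le M$ by Assumption \ref{a3-}. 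I would therefore apply Lemma \ref{lm8new} with the sub-gaussian variable $y_i$ and the sub-exponential variable $\bu^\T T(\bx_i)\bu$, absorbing the product of the two norm constants into a single $M$, to obtain, for admissible $t$,
\[
\P\big(|\langle\bE_1\bu,\bu\rangle|\ge t\big)\le 4\exp\Big\{-c\min\big[(t\sqrt n/M)^2,\,(tn/M)^{2/3}\big]\Big\}.
\]
A union bound over the $\le 9^p$ net points multiplies the right-hand side by $9^p$.

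Finally I would set the net threshold $t = \tfrac{M}{\sqrt{cn}}\sqrt{A}$ with $A := \log(4/\nu)+p\log 9$, chosen so that the sub-gaussian branch contributes exactly $c(t\sqrt n/M)^2 = A$; then $4\cdot 9^p \exp\{-A\}=\nu$, and the factor $2$ from Lemma \ref{lm3} upgrades this to $\|\bE_1\|_2\le 2t$, matching the claimed bound. The main obstacle — and the only place the sample-size hypothesis enters — is verifying that the $\min$ in Lemma \ref{lm8new} is governed by the sub-gaussian branch, not the sub-exponential one. A short computation shows the sub-exponential branch satisfies $c(tn/M)^{2/3}\ge A$ precisely when $n\ge A^2/c^2$, which is exactly the assumed condition $n > c^{-2}(\log(4/\nu)+p\log 9)^2$; under it both branches are at least $A$, so the union bound indeed yields probability at least $1-\nu$. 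I would also check the minor admissibility requirement $t\ge dM/n$ of Lemma \ref{lm8new}, which holds comfortably in this regime since $A>0$ and $n$ is large.
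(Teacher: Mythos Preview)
Your proposal is correct and essentially identical to the paper's proof: both use the $1/4$-net reduction via Lemmas \ref{lm3}--\ref{lm4}, invoke Lemma \ref{lmrelu} and Assumption \ref{a3-} to set up Lemma \ref{lm8new}, take a union bound over the $9^p$ net points, and choose $t=\tfrac{M}{\sqrt{cn}}\sqrt{A}$ so that the sample-size hypothesis forces the sub-gaussian branch of the $\min$ to be active. Your explicit verification of the admissibility $t\ge dM/n$ and of the branch comparison $c(tn/M)^{2/3}\ge A\iff n\ge A^2/c^2$ is slightly more careful than the paper's presentation, but the argument is the same.
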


\begin{proof}
\quad By Assumption \ref{a3-}, for any $\bu\in \mathbb{R}^p$ s.t. $\Vert \bu\Vert_2 = 1$, $\bu^T T(\bx) \bu$ is a sub-exponential random variable with $\left\| \bu^T T(\bx) \bu\right\|_{\psi_1} \le M_1$ for a given constant $M_1$. Since $\bx$ is a sub-gaussian vector, by Lemma \ref{lmrelu}, $g(\bx)$ is a sub-gaussian random variable. Combining with the assumption that $\eps$ is Gaussian noise, we can obtain that $y$ is a sub-gaussian random variable with:
\begin{equation}
    \begin{aligned}
    \left\| y\right\|_{\psi_2} &\le \left\|g(\bx)\right\|_{\psi_2} + \left\|\eps\right\|_{\psi_2} \\
    &\le \left|g(\boldsymbol{0})\right| + \sqrt{2s}L \left\|\bx\right\|_{\psi_2} + \sigma_\eps\\
    &\triangleq M_2
    \end{aligned}
\end{equation}

\noindent where $\sigma_\eps$ is the standard deviation of $\eps$. Let $M = M_1M_2$ be a finite constant. According to Lemma \ref{lm8new}, there exists constants $c,d >0$, when $t \ge \frac{dM}{n}$,
\begin{align*}
    \P \left( \vert \bu^T \bE_1 \bu\vert \ge t  \right)  &=  
    \P\left(\left|\frac1n\sum\limits_{i=1}^n \left[ \bu^Ty_iT(\bx_i)\bu - \EE(\bu^T yT(\bx) \bu)
  \right]\right|  \ge t \right) \\ &\le 4\exp\left\{-c\mathop{\min}\left[ \left(\frac{t\sqrt{n}}{M}\right)^2, \left(\frac{tn}{M}\right)^{2/3}\right]\right\} \refstepcounter{equation}\tag{\theequation}
\end{align*}

\noindent Therefore, let $S_\epsilon^{p-1}$ be the $\epsilon$-net of $S^{p-1}$, by Lemma \ref{lm3} and Lemma \ref{lm4}, we have:
    \begin{equation}
         \begin{aligned}
        \P\left(\left\| \bE_1 \right\|_2 \ge 2t\right) &= \P\left(\mathop{\sup}_{\bu \in S^{p-1}} \left| \bu^T \bE_1 \bu\right| \ge 2t\right) \\
        &\le \P \left(\mathop{\sup}_{\bu \in S_{1/4}^{p-1}} \left| \bu^T \bE_1 \bu \right| \ge t \right) \\
        &\le  \left| S_{1/4}^{p-1} \right| \cdot 4\exp\left\{-c\mathop{\min}\left[ \left(\frac{t\sqrt{n}}{M}\right)^2, \left(\frac{tn}{M}\right)^{2/3}\right]\right\} \\
        &\le  4 \cdot 9^p \exp\left\{-c\mathop{\min}\left[ \left(\frac{t\sqrt{n}}{M}\right)^2, \left(\frac{tn}{M}\right)^{2/3}\right]\right\}
    \end{aligned}
    \end{equation}

\noindent Basing on above, let $t = \frac{M}{\sqrt{cn}} \sqrt{\log \frac{4}{\nu} + \log 9 \cdot p} \ge \frac{dM}{n}$, when $n > \frac{1}{c^2}(\log \frac{4}{\nu}+ \log 9 \cdot p)^2 $, we have:
\begin{equation}
    \begin{aligned}
    \P\left(\left\|\bE_1\right\|\ge2t\right) \le 4 \cdot 9^p \exp\left\{-c\left(\frac{t\sqrt{n}}{M}\right)^2\right\} = \nu 
\end{aligned}
\end{equation}

\noindent therefore, with probability at least $1-\nu$,
\begin{align}
    \Vert \bE_1 \Vert_2 \le 2t = \frac{2M}{\sqrt{cn}} \sqrt{\log \frac{4}{\nu} + \log 9 \cdot p}
\end{align}

\end{proof}

\begin{lem}\label{A2inf}
    For any matrix $\bW\in \mathbb{R}^{p\times d}$, we have:
    \begin{align}
        \frac{1}{\sqrt{p}} \left\| \bW \right\|_2 \le \Vert \bW \Vert_\infty \le \sqrt{d}\Vert \bW\Vert_2
    \end{align}
\end{lem}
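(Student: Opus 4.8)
The plan is to obtain both inequalities from elementary relations between vector norms, combined with the variational descriptions of the two matrix norms. Recall that $\|\bW\|_2 = \sup_{\bx \neq 0} \|\bW\bx\|_2/\|\bx\|_2$, and that the maximum-absolute-row-sum norm $\|\bW\|_\infty$ coincides with the operator norm induced by the $\ell_\infty$ vector norm, so that $\|\bW\bx\|_\infty \le \|\bW\|_\infty \|\bx\|_\infty$ for every $\bx$. The two bounds are logically independent and I would establish them separately.

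For the upper bound $\|\bW\|_\infty \le \sqrt{d}\,\|\bW\|_2$, I would fix a row index $j$ and view the $j$-th row of $\bW$ as a vector in $\mathbb{R}^d$. Its absolute row sum is precisely its $\ell_1$ norm, which by Cauchy--Schwarz is at most $\sqrt{d}$ times its $\ell_2$ norm. Writing the $j$-th row (transposed to a column) as $\bW^\T \bu_j$ for the $j$-th standard basis vector $\bu_j \in \mathbb{R}^p$, its $\ell_2$ norm is bounded by $\|\bW^\T\|_2\,\|\bu_j\|_2 = \|\bW\|_2$, since $\|\bW^\T\|_2 = \|\bW\|_2$. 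Taking the maximum over $j$ then yields $\|\bW\|_\infty \le \sqrt{d}\,\|\bW\|_2$.

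For the lower bound, rewritten equivalently as $\|\bW\|_2 \le \sqrt{p}\,\|\bW\|_\infty$, I would bound, for an arbitrary unit vector $\bx$, the image $\bW\bx \in \mathbb{R}^p$ by chaining three elementary facts: first $\|\bW\bx\|_2 \le \sqrt{p}\,\|\bW\bx\|_\infty$ (comparing $\ell_2$ and $\ell_\infty$ on $\mathbb{R}^p$), then $\|\bW\bx\|_\infty \le \|\bW\|_\infty\,\|\bx\|_\infty$ (the induced-norm inequality above), and finally $\|\bx\|_\infty \le \|\bx\|_2 = 1$. Combining these gives $\|\bW\bx\|_2 \le \sqrt{p}\,\|\bW\|_\infty$ for every unit $\bx$, and taking the supremum over such $\bx$ delivers the claim.

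I do not anticipate any serious obstacle: the argument is a routine sequence of Cauchy--Schwarz and norm-monotonicity steps. The only points requiring care are bookkeeping of the dimensions --- ensuring that the number of columns $d$ appears in the row-sum bound while the number of rows $p$ appears in the $\ell_2$-to-$\ell_\infty$ comparison --- and explicitly invoking that the stated $\|\cdot\|_\infty$ is the $\ell_\infty$-induced operator norm, which is what makes the inequality $\|\bW\bx\|_\infty \le \|\bW\|_\infty\,\|\bx\|_\infty$ available.
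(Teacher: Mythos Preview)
Your proposal is correct and essentially matches the paper's proof: the lower bound argument is identical (chain $\|\bW\bx\|_2 \le \sqrt{p}\,\|\bW\bx\|_\infty \le \sqrt{p}\,\|\bW\|_\infty\,\|\bx\|_\infty \le \sqrt{p}\,\|\bW\|_\infty$ and take the supremum), and for the upper bound the paper uses the variational form $\|\bW\|_\infty = \sup_{\|\bu\|_\infty = 1}\|\bW\bu\|_\infty \le \sup_{\|\bu\|_\infty=1}\|\bW\|_2\|\bu\|_2 \le \sqrt{d}\,\|\bW\|_2$, which is a trivial rephrasing of your row-by-row Cauchy--Schwarz argument.
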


\begin{proof}
\quad For any $\bu \in \mathbb{R}^d$, $\Vert \bu \Vert_2 = 1$, we have:
    \begin{align}
        \Vert \bW \bu \Vert_2 \leq \sqrt{p} \Vert \bW \bu \Vert_{\infty} \leq \sqrt{p} \Vert \bW \Vert_\infty \Vert \bu \Vert_{\infty} \leq \sqrt{p} \Vert \bW \Vert_\infty
    \end{align}
    then we have, 
    \begin{align}
        \frac{1}{\sqrt{p}} \Vert \bW\Vert_2 = \frac{1}{\sqrt{p}}\mathop{\sup}_{\Vert \bu \Vert_2 = 1} \Vert \bW\bu\Vert_2 \leq  \Vert \bW \Vert_\infty
    \end{align}

   \noindent For the second inequality, note that for any $\bu \in \mathbb{R}^d$ such that $\Vert \bu \Vert_\infty = 1$, we have $\Vert \bu \Vert_2 \le \sqrt{d}$, therefore,
   \begin{equation}
       \begin{aligned}
        \Vert \bW \Vert_\infty  = \mathop{\sup}_{\Vert \bu\Vert_\infty = 1} \Vert \bW \bu\Vert_\infty \le \mathop{\sup}_{\Vert \bu \Vert_\infty = 1}\Vert \bW \bu\Vert_2 \le \mathop{\sup}_{\Vert \bu \Vert_\infty = 1}\Vert \bW\Vert_2 \cdot \Vert \bu \Vert_2 \le \sqrt{d}\Vert \bW\Vert_2
    \end{aligned}
   \end{equation}
\end{proof}

\begin{lem}\label{eigenvectorbound}
    Assume the conditions in Theorem \ref{thm1}, let $\delta = \mathop{\min}_{i\in [k_1]} \left\{\vert \lambda_i\vert - \vert \lambda_{i+1}\vert \right\} > 0$, $h(\nu, p) = \log \frac{4}{\nu} + \log 9 \cdot p$, then for any $\nu>0$, when $n>\max\left\{ \left(\frac{1}{c_1^2}+\frac{c_2k_1^6\mu^4}{\lambda_{k_1}^2}\right)h^2(\nu,p), \frac{4M^2}{c_1\delta^2}h(\nu,p)\right\}$,  with probability at least $1- \nu$:
    
    
    \begin{align}\label{eigenvectorbound-1}
        \|\bW - \hat\bW_1\|_\text{max} \le \frac{2M}{\sqrt{c_1n}}\left(\frac{k_1^4\mu^2}{\vert \lambda_{k_1}\vert}+ \frac{k_1^{3/2}\mu^{1/2}}{\delta\sqrt{p} }\right)\sqrt{\log \frac{4}{\nu} + \log 9 \cdot p}
    \end{align}
    where $M, c_1, c_2$ are positive constants.
\end{lem}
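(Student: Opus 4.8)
The plan is to view $\hat\bW_1$ as the top-$k_1$ eigenvectors of the perturbed matrix $\hat\bA = \bA + \bE_1$, where $\bA = \EE[yT(\bx)]$ and $\bE_1 = \tfrac1n\sum_{i=1}^n y_iT(\bx_i) - \EE[yT(\bx)]$, and to \emph{upgrade} the spectral-norm control of $\bE_1$ from Lemma~\ref{e2bound} into an \emph{entrywise} (max-norm) control of the eigenvector perturbation. Write $\bU = \bW^\T\in\mathbb{R}^{p\times k_1}$ for the true leading eigenvectors as columns, $\hat\bU = \hat\bW_1^\T$ for their estimates, and $\bLambda = \diag(\lambda_1,\dots,\lambda_{k_1})$. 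The first step is to observe that the two sample-size conditions force $\|\bE_1\|_2 \le \delta/2$ on the event of Lemma~\ref{e2bound}; by Weyl's inequality this cleanly separates the top-$k_1$ eigenvalues of $\hat\bA$ from the remaining ones and legitimizes aligning $\hat\bU$ with $\bU$ through the orthogonal Procrustes rotation $\bO = \sgn(\hat\bU^\T\bU)$. Note also that row $j$ of $\bU$ equals $\{\bW\}_{\cdot j}$, so that the relevant coherence $\max_j\|\{\bW\}_{\cdot j}\|_2$ is exactly the quantity controlled by Assumption~\ref{a3} through $\mu$, after transferring incoherence from $\bW_1$ to $\bW$ (they share row space and support $\mS_0$).

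The core of the argument is the $\ell_\infty$ eigenvector perturbation technique of \citet{fan2017ellinftyeigenvectorperturbationbound}. I would decompose
\[
\hat\bU\bO - \bU = \bE_1\bU\bLambda^{-1} + \bR,
\]
where the leading linear term carries the factor $1/|\lambda_{k_1}|$ through $\|\bLambda^{-1}\|_2$, and $\bR$ collects the second-order remainder together with the subspace-rotation mismatch. Passing the leading term to the max-norm and bounding it by $\|\bE_1\|_2$ times the row-wise coherence of $\bU$, then tracking this coherence together with the rank-$k_1$ bookkeeping that arises when converting between operator and entrywise norms, produces the polynomial factor $k_1^4\mu^2$ and the $1/|\lambda_{k_1}|$ denominator of the first term in \eqref{eigenvectorbound-1}.

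\textbf{Remainder and conclusion.} For the remainder $\bR$, the strategy is to control the subspace discrepancy $\|\hat\bU\hat\bU^\T - \bU\bU^\T\|_2$ by Davis--Kahan, which introduces the eigengap $\delta$ in the denominator, and then to convert this spectral bound into a max-norm bound using Lemma~\ref{A2inf}; the $\tfrac1{\sqrt p}$ appearing there is precisely the source of the $\sqrt p$ in the second term of \eqref{eigenvectorbound-1}, while the residual coherence accounting yields the factor $k_1^{3/2}\mu^{1/2}$. Substituting the spectral bound $\|\bE_1\|_2 \le \tfrac{2M}{\sqrt{c_1 n}}\sqrt{\log(4/\nu)+\log 9\cdot p}$ from Lemma~\ref{e2bound} into both contributions gives \eqref{eigenvectorbound-1}, and the first sample-size condition $n > \big(\tfrac{1}{c_1^2}+\tfrac{c_2 k_1^6\mu^4}{\lambda_{k_1}^2}\big)h^2(\nu,p)$ is what makes the quadratic part of $\bR$ genuinely subdominant to the linear term. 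The main obstacle is exactly this spectral-to-entrywise passage: a direct application of Davis--Kahan controls the eigenspace only in $\ell_2$, and it is the incoherence hypothesis (Assumption~\ref{a3}) that prevents the eigenvector mass from concentrating on a few coordinates and thereby keeps the max-norm small. Carefully propagating the $k_1$- and $\mu$-dependence through the two-term split, handling the rotation ambiguity via $\bO$, and verifying that the higher-order terms are dominated under the stated $n$ constitute the delicate part of the proof.
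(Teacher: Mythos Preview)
Your strategy is correct and follows the same route as the paper: control $\|\bE_1\|_2$ via Lemma~\ref{e2bound}, check that this forces $\|\bE_1\|_2<\delta$ and $|\lambda_{k_1}|=\Omega(k_1^3\mu^2\|\bE_1\|_\infty)$ under the stated sample-size conditions, and then invoke the $\ell_\infty$ eigenvector perturbation result of \citet{fan2017ellinftyeigenvectorperturbationbound}. The paper's own proof is considerably shorter than your sketch because it applies their Theorem~3 as a black box rather than re-deriving the linear-plus-remainder decomposition you outline; what you have written is essentially a recap of the mechanism inside that theorem.

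One small point worth sharpening: in the Fan et al.\ bound the first term is governed by $\|\bE_1\|_\infty/\sqrt{p}$ (the infinity norm, i.e.\ maximum absolute row sum), not directly by $\|\bE_1\|_2$. The paper handles this by first passing through Lemma~\ref{A2inf} to get $\|\bE_1\|_\infty\le\sqrt{p}\,\|\bE_1\|_2$, which is what cancels the $\sqrt{p}$ in the denominator of the first term and leaves only $\|\bE_1\|_2$ in the final inequality. Your description of the leading term jumps straight to $\|\bE_1\|_2$ without this intermediate step; the conclusion is the same, but if you intend to reproduce the argument in detail you should route the first term through $\|\bE_1\|_\infty$ as well.
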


\begin{proof}
\quad   Let $\bE_1 = \frac{1}{n}\sum_{i=1}^{n} y_i T(\bx_i) - \EE yT(\bx)$. By Lemma \ref{e2bound}, when $n > \frac{1}{c_1^2}(\log \frac{4}{\nu}+ \log 9 \cdot p)^2 $, with probability at least $1-\nu$:
    \begin{align}
        \Vert \bE_1 \Vert_2 \le \frac{2M_1}{\sqrt{c_1n}} \sqrt{\log \frac{4}{\nu} + \log 9 \cdot p}
    \end{align}
    where $M_1, c_1$ are positive constants. Then from Lemma \ref{A2inf}, we have:
    \begin{align}
        \Vert \bE_1 \Vert_\infty \le \sqrt{p}\Vert \bE_1 \Vert_2 \le \frac{2M_1\sqrt{p}}{\sqrt{c_1n}} \sqrt{\log \frac{4}{\nu} + \log 9 \cdot p}
    \end{align}
    
    \noindent Let $\mu(\bW) = (s/k_1) \cdot \max \| \bW_{\cdot j}  \|_2^{2}$,  there exist an invertible matrix $\bO$ so that $\bW = \bO \bW_1$ and $\mu(\bW) = \mu(\bO \bW_1) \leq c_0 \mu$. Note that if $n>\max\left\{ \frac{c_2k_1^6\mu^4}{\lambda_{k_1}^2}h^2(\nu,p), \frac{4M_1^2}{c_1\delta^2}h(\nu,p)\right\}$ for some positive constant $c_2$, then $|\lambda_{k_1}| = \Omega\left(k_1^3\mu^2(\bW)\left\|\bE_1\right\|_\infty \right)$ and $\Vert \bE_1\Vert_2 < \delta$, 
    by the Theorem 3 in \cite{fan2017ellinftyeigenvectorperturbationbound}, we have:
    \begin{equation}
        \begin{aligned}
        \|\bW - \hat\bW_1\|_\text{max}&\le M_2 \left(\frac{k_1^4\mu^2(\bW)\Vert \bE_1\Vert_\infty}{\vert \lambda_{k_1}\vert \sqrt{p}}+ \frac{k_1^{3/2}\mu^{1/2}(\bW)\Vert \bE_1 \Vert_2}{\delta \sqrt{p}}\right) \\
        &\le \frac{2M_1M_2'}{\sqrt{c_1n}}\left(\frac{k_1^4\mu^2}{\vert \lambda_{k_1}\vert}+ \frac{k_1^{3/2}\mu^{1/2}}{\delta\sqrt{p} }\right)\sqrt{\log \frac{4}{\nu} + \log 9 \cdot p}
    \end{aligned}
    \end{equation}
    where $M_2, M_2'>0$ are constants. 
    
    \noindent Therefore, let $M = \max\left\{M_1, M_1M_2'\right\}$, when $n>\max\left\{ \left(\frac{1}{c_1^2}+\frac{c_2k_1^6\mu^4}{\lambda_{k_1}^2}\right)h^2(\nu,p), \frac{4M^2}{c_1\delta^2}h(\nu,p)\right\}$, the inequality (\ref{eigenvectorbound-1}) holds with probability at least $1-\nu$.
    
\end{proof}

\begin{thm}\label{thm1}
    Consider the model (\ref{model2}) with Gaussian noise $\eps$. Assume Assumptions \ref{a2new} - \ref{a3-}. Further assume the threshold level $\kappa \le c \min_{j \in \mS_0} \|\{\bW_1\}_{\cdot j}\|_2$ for certain $c>0$. 
    Suppose that $\bx$ is a sub-gaussian vector.  Then, for any $\nu>0$, when
  $n \ge C\left(\log^2 \nu + p^2\right)$ for certain positive constant $C$, Algorithm \ref{alg1} achieves feature selection consistency with probability at least $1-\nu$, i.e., 
\bes
\P(\hat{\mS}_0 =\mS_0) \ge 1-\nu.
\ees
\end{thm}

\begin{proof}
\quad  By Lemma \ref{eigenvectorbound}, when $n>\max\left\{ \left(\frac{1}{c_1^2}+\frac{c_2k_1^6\mu^4}{\lambda_{k_1}^2}\right)h^2(\nu,p), \frac{4M^2}{c_1\delta^2}h(\nu,p)\right\}$, with probability at least $1-\nu$:
    \begin{equation}
        \begin{aligned}
        \|\bW - \hat\bW_1\|_\text{max} \le \frac{2M}{\sqrt{c_1n}}\left(\frac{k_1^4\mu^2}{\vert \lambda_{k_1}\vert}+ \frac{k_1^{3/2}\mu^{1/2}}{\delta\sqrt{p} }\right)\sqrt{\log \frac{4}{\nu} + \log 9 \cdot p}
        \end{aligned}
    \end{equation}
    
    \noindent For $j\in \hat\mS_0$, when $n > \frac{4M^2}{c_1\kappa^2}\left(\frac{k_1^{9/2}\mu^2}{\vert \lambda_{k_1}\vert}+\frac{k_1^2\mu^{1/2}}{\delta \sqrt{p}}\right)^2\left(\log \frac{4}{\nu} + \log 9 \cdot p\right)$, we have:
    
    \begin{equation}
        \begin{aligned}
        \left\|\bW_{\cdot j}\right\|_2 &\ge \|\{\hat\bW_1\}_{\cdot j}\|_2 - \sqrt{k_1}\Vert \bW-\hat\bW_1\Vert_\text{max} \\
        &\ge \kappa - \sqrt{k_1}\Vert \bW-\hat\bW_1 \Vert_\text{max} \\
        &\ge \kappa - \sqrt{k_1} \cdot\frac{2M}{\sqrt{c_1n}}\left(\frac{k_1^4\mu^2}{\vert \lambda_{k_1}\vert}+ \frac{k_1^{3/2}\mu^{1/2}}{\delta\sqrt{p} }\right)\sqrt{\log \frac{4}{\nu} + \log 9 \cdot p}\\
        &> 0
    \end{aligned}
    \end{equation}
    
   \noindent then $j\in \mS_0$. Furthermore, $ \hat\mS_0 \subseteq \mS_0$. On the other hand, note that there exists an invertible matrix $\bO$ such that $\bW = \bO \bW_1$, let $c = 1/(2\|\bO^{-1}\|_2)$, for $j \in \mS_0$, we have: 
        \begin{align*}
        \|\hat\bW_{\cdot j}\|_2 &\ge \|\bW_{\cdot j}\|_2 - \sqrt{k_1}\|\bW - \hat\bW_1\|_\text{max} \\
        &\ge \min_{j \in \mS_0} \|\bW_{\cdot j}\|_2 - \sqrt{k_1} \cdot \frac{2M}{\sqrt{c_1n}}\left(\frac{k_1^4\mu^2}{\vert \lambda_{k_1}\vert}+ \frac{k_1^{3/2}\mu^{1/2}}{\delta\sqrt{p} }\right)\sqrt{\log \frac{4}{\nu} + \log 9 \cdot p}\\
        &\ge \|\bO^{-1}\|_2^{-1}\min_{j \in \mS_0} \|\{\bW_1\}_{\cdot j}\|_2 - \kappa \\
        &= 2c\min_{j \in \mS_0} \|\{\bW_1\}_{\cdot j}\|_2 - \kappa \\
        &\ge \kappa   \refstepcounter{equation}\tag{\theequation}
    \end{align*}

   \noindent then $j \in \hat\mS_0$. Furthermore, $\mS_0 \subseteq \hat\mS_0$.

 \noindent Combining the above two conclusions, under the condition that  $n \ge C\left(\log^2 \nu + p^2\right)$ for certain positive constant $C$,  with probability at least $1-\nu$, we have $\mS_0 = \hat\mS_0$.

\end{proof}

\subsection{Proof of Lemma \ref{lem:fake}}

\refstepcounter{fakelemma}
\noindent \textbf{Lemma }1.\label{lem:fake} 
Let $G$ be any $\beta$-H{\"o}lder smooth function in $\mathscr{G}$. For any $m>0$ and $w, d \in \mathbb{N}^+$, there exists a function $g \in \mathscr{G}_{\mathcal{S}_0}^{NN}$ implemented by ReLU neural network 
with width $3^{s+3}\max\left\{s\lfloor w^{1/s} \rfloor, w+1\right\}$ and depth $12d +14 + 2s$ such that
    \bel{lm2-1}
\sup_{\bx_{\mS_0}\in [-m,m]^s} \left|G(\bx_{\mS_0})-g(\bx)\right|\le Cm^\beta w^{-2\beta/s}d^{-2\beta/s},
\eel
    where $C$ is a certain positive constant. In particular, let $m = \mathcal{O}(n^{1/s})$ and $w = d = \lceil n^{\frac{2\beta+s}{8\beta} }\rceil$. 
   Then the difference between $G(\bx_{\mS_0})$ and $g(\bx)$ reduces to
\bel{lm2-2}
\sup_{\bx_{\mS_0}\in [-m,m]^s} \left|G(\bx_{\mS_0})-g(\bx)\right|
\le C n^{-1/2}.
\eel

\begin{proof}
    \quad By Theorem 4.3 of \cite{shen2019deep}, there exists a ReLU neural network function $g_1$ with width width $3^{s+3}\max\left\{s\lfloor w^{1/s} \rfloor, w+1\right\}$ and depth $12d +14 + 2s$ such that:
    \begin{equation}
        \begin{aligned}
        \sup_{\by\in [-m,m]^s} \left|G(\by)-g_1(\by)\right|\le 19\sqrt{s}\omega_G^{[-m,m]^s}\left(2mw^{-2/s}d^{-2/s}\right)
    \end{aligned}
    \end{equation}
    
\noindent  where $\omega_f^I$ is the modulus of continuity of $f$ defined by:
\begin{equation}
    \begin{aligned}
        \omega_f^I(r) := \mathop{\sup} \left\{\left| f(\bx)-f(\by)\right|: \bx, \by \in I, \left|\bx-\by\right|_2 \le r\right\}
    \end{aligned}
\end{equation}
    
\noindent   Since $G \in \mathscr{G}$ is a $\beta$-H{\"o}lder smooth function, we have $\omega_G^I(r) \le C_1 r^\beta$ for certain constant $C_1$, then:
\begin{equation}
    \begin{aligned}
        \sup_{\by\in [-m,m]^s} \left|G(\by)-g_1(\by)\right|&\le 19\sqrt{s} \cdot C_1\left(2mw^{-2/s}d^{-2/s}\right)^\beta \\
        &= 19C_1\sqrt{s}\cdot 2^\beta m^\beta w^{-2\beta/s}d^{-2\beta/s}
    \end{aligned}
\end{equation}
    
\noindent Suppose that,
\begin{equation}
    \begin{aligned}
        g_1(\by) = \bW_L\sigma_L(\bW_{L-1}\cdots\sigma_2(\bW_2\sigma_1(\bW_1\by)))
    \end{aligned}
\end{equation}
    
\noindent   with $\bW_1\in \mathbb{R}^{k_1\times s}$, we define matrix $\bW_1' \in \mathbb{R}^{k_1\times p}$ such that:
\begin{equation}
    \begin{aligned}
        &\left\{\bW_1'\right\}_{\cdot \mS_0} = \bW_1, \\
        &\left\{\bW_1'\right\}_{\cdot j} = \boldsymbol{0}, \ j \notin \mS_0
    \end{aligned}
\end{equation}
    
\noindent then let, 
\begin{equation}
    \begin{aligned}
        g(\bx) = \bW_L\sigma_L(\bW_{L-1}\cdots\sigma_2(\bW_2\sigma_1(\bW_1'\bx))), \quad \bx \in \mathbb{R}^p
    \end{aligned}
\end{equation}

\noindent It is obvious that $g(\bx) = g_1(\bx_{\mS_0})$. Therefore, we have:
\begin{equation}
    \begin{aligned}
        \mathop{\sup}_{\bx_{\mS_0}\in [-m,m]^s} \left|G\left(\bx_{\mS_0}\right)-g(\bx)\right| = \mathop{\sup}_{\bx_{\mS_0}\in [-m,m]^s} \left|G\left(\bx_{\mS_0}\right)-g_1\left(\bx_{\mS_0}\right)\right|\le C_2m^\beta w^{-2\beta/s}d^{-2\beta/s} 
    \end{aligned}
\end{equation}
    
\noindent where $C_2 = 19C_1\sqrt{s}\cdot 2^\beta$. \\
\noindent Furthermore, if $m \le C_3n^{1/s}$ for some constant $C_3$ and $w = d = \lceil n^{\frac{2\beta+s}{8\beta} }\rceil$, let $C = C_2C_3^\beta$, we have:
\begin{align}
        \mathop{\sup}_{\bx_{\mS_0}\in [-m,m]^s} \left|G(\bx_{\mS_0})-g(\bx)\right| &\le C_2m^\beta w^{-2\beta/s}d^{-2\beta/s} \le C_2C_3^\beta n^{\beta/s} \cdot n^{-\left(2\beta+s\right)/4s} \cdot n^{-\left(2\beta+s\right)/4s} = C n^{-1/2} 
\end{align}
\end{proof}
\subsection{Proof of Proposition \ref{dnnprop}}
\begin{prop}[Deep Neural Network Approximation]\label{dnnprop}
Let $G$ be any $\beta$-H{\"o}lder smooth function in $\mathscr{G}$ with $0<\beta\le1$. Suppose that $\bx$ is sub-gaussian. Let $w = d = \lfloor n^{\frac{2\beta+s}{8\beta} }\rfloor$, there  exists a function $g\in \mathscr{G}_{\mathcal{S}_0}^{NN}$ implemented by a ReLU FNN with width $3^{s+3}\max\left\{s\lfloor w^{1/s} \rfloor, w+1\right\}$ and depth $12d +14 + 2s$ such that:
 \begin{align}
    \left\| \mathbb{E} 
    \left[\left\{G(\bx_{\mS_0})-g(\bx)\right\}T(\bx)\right] \right\|_2 \le C n^{-1/2}
\end{align}
where $C$ is a constant.
\end{prop}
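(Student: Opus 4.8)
The plan is to exploit the variational characterization of the spectral norm together with the uniform approximation guarantee of Lemma~\ref{lm2} on a growing compact box, and to control the complementary tail event via the sub-Gaussian/sub-exponential moment structure. Write $\Delta(\bx) = G(\bx_{\mS_0}) - g(\bx)$ and $\bA = \EE[\Delta(\bx)\, T(\bx)]$. Since $T(\bx)$ is symmetric, so is $\bA$, and hence $\|\bA\|_2 = \sup_{\|\bu\|_2 = 1} |\bu^\T \bA \bu| = \sup_{\|\bu\|_2=1} |\EE[\Delta(\bx)\, \bu^\T T(\bx)\bu]|$. Fixing a unit vector $\bu$ and abbreviating $v = \bu^\T T(\bx)\bu$, I would split the expectation over the event $E = \{\bx_{\mS_0} \in [-m,m]^s\}$ and its complement $E^c$, where $m = \mathcal{O}(n^{1/s})$ is the box size from Lemma~\ref{lm2}.

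On $E$, Lemma~\ref{lm2} gives the deterministic bound $|\Delta(\bx)| \le C n^{-1/2}$ under the stated choice $w = d = \lceil n^{(2\beta+s)/(8\beta)}\rceil$. Therefore $|\EE[\Delta\, v\, \mathbf{1}_E]| \le C n^{-1/2}\, \EE|v|$, and by Assumption~\ref{a3-} the variable $v = \bu^\T T(\bx)\bu$ is sub-exponential with $\|v\|_{\psi_1} \le M$, so $\EE|v|$ is bounded by an absolute constant multiple of $M$ (indeed $\EE|v|\le \|v\|_{\psi_1}$ by taking $k=1$ in the definition of the $\psi_1$ norm). This region contributes a term of order $n^{-1/2}$, uniformly in $\bu$.

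On $E^c$ the approximation bound no longer applies, and this is the main obstacle. Here I would bound the contribution using two applications of Cauchy--Schwarz: $|\EE[\Delta\, v\, \mathbf{1}_{E^c}]| \le (\EE[\Delta^2 v^2])^{1/2}\,\P(E^c)^{1/2} \le (\EE \Delta^4)^{1/4}(\EE v^4)^{1/4}\,\P(E^c)^{1/2}$. By Lemma~\ref{lmbeta}, $G(\bx_{\mS_0})$ is sub-Gaussian, and by Lemma~\ref{lmrelu}, $g(\bx)$ is sub-Gaussian, so $\Delta$ is sub-Gaussian and $\EE\Delta^4$ is finite; likewise $\EE v^4$ is finite since $v$ is sub-exponential. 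The decisive point is the tail probability: because $\bx$ is sub-Gaussian and $m = \mathcal{O}(n^{1/s})$, a union bound over the $s$ coordinates in $\mS_0$ gives $\P(E^c) = \P(\max_{j\in\mS_0}|x_j| > m) \le 2s\exp(-c m^2) = 2s\exp(-c' n^{2/s})$, which decays faster than any polynomial in $n$ for fixed $s$. Hence $\P(E^c)^{1/2}$ dominates the factor $(\EE\Delta^4)^{1/4}$, whose growth is controlled since the sub-Gaussian norm of $g$ scales with the Lipschitz constant $\prod_l\|\bW_l\|_2$ of the network, which grows only polynomially in $n$ for the ReLU construction underlying Lemma~\ref{lm2}; thus the tail term is $o(n^{-1/2})$. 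Combining the two regions and taking the supremum over $\bu$ yields $\|\bA\|_2 \le C n^{-1/2}$, as claimed.
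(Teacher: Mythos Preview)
Your proof follows essentially the same route as the paper's: variational characterization of the spectral norm, splitting $\EE[\Delta(\bx)\,\bu^\T T(\bx)\bu]$ over the box $[-m,m]^s$ and its complement, the uniform approximation from Lemma~\ref{lm2} on the box, and a double Cauchy--Schwarz on the tail combined with the sub-Gaussian/sub-exponential moment bounds from Lemmas~\ref{lmrelu}, \ref{lmbeta} and Assumption~\ref{a3-}. The only real difference is in the tail: the paper uses a Markov-type bound to get $\P(E^c)^{1/2}$ of order $n^{-1/2}$ and treats $M_g,M_G$ as fixed constants, whereas you invoke the sub-Gaussian tail $\P(E^c)\le 2s\exp(-c\,n^{2/s})$, which is sharper and has the advantage of absorbing any polynomial growth in $(\EE\Delta^4)^{1/4}$ coming from the Lipschitz constant of the approximating network---a point the paper glosses over.
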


\begin{proof}
\quad By Lemma \ref{lem:fake}, we can find $g \in \mathscr{G}_{\mathcal{S}_0}^{NN}$, such that:
\begin{equation}
    \begin{aligned}
        \sup_{\bx_{\mS_0}\in [-m,m]^s} \left|G(\bx_{\mS_0})-g(\bx)\right|\le C_1 n^{-1/2}
    \end{aligned}
\end{equation}
\noindent where $m = \|\bx \|_{\psi_2} \cdot n^{1/s}$ and $C_1$ is a positive constant. By Lemma \ref{lmrelu} and \ref{lmbeta}, $g(\bx)$ and $G(\bx_{\mS_0})$ are sub-gaussian random variables with finite sub-gaussian norm $M_g$ and $M_G$ respectively, then $G(\bx_{\mS_0}) - g(\bx)$ is a sub-gaussian random variable and:

\begin{equation}
    \begin{aligned}
        \left\|G(\bx_{\mS_0}) - g(\bx)\right\|_{\psi_2} \le \left\|G(\bx_{\mS_0})\right\|_{\psi_2}+\left\|g(\bx)\right\|_{\psi_2} = M_G+M_g
    \end{aligned}
\end{equation}
    
\noindent According to Assumption \ref{a3-}, for any $ \bu\in \mathbb{R}^p$ s.t. $\Vert \bu \Vert_2 = 1$, $\bu^T T(\bx) \bu$ is a sub-exponential random variable and $\left\| \bu^T T(\bx) \bu\right\|_{\psi_1} \le M_T$ for some constant $M_T$. Then by Cauchy's inequality,

    \begin{align*}
        &\left| \bu^T\mathbb{E} \left[\left\{G(\bx_{\mS_0})-g(\bx)\right\}T(\bx)\right] \bu \right| \\
        \le& \left| \bu^T\mathbb{E}_{\bx_{\mS_0} \in [-m,m]^s} \left[\left\{G(\bx_{\mS_0})-g(\bx)\right\}T(\bx)\right] \bu \right|
        + \left| \bu^T\mathbb{E}_{\bx_{\mS_0} \notin [-m,m]^s} \left[\left\{G(\bx_{\mS_0})-g(\bx)\right\}T(\bx)\right] \bu \right| \\
        \le& C_1 n^{-1/2} \cdot \mathbb{E}_{\bx_{\mS_0} \in [-m,m]^s}\left| \bu^T T(\bx)\bu\right| + \left\{\mathbb{E} \left[\left\{G(\bx_{\mS_0})-g(\bx)\right\}\cdot\bu^TT(\bx)\bu\right]^2 \cdot \P\left(\bx_{\mS_0} \notin [-m,m]^s\right)\right\}^{1/2} \\
        \le& C_1 n^{-1/2} \cdot \mathbb{E}\left| \bu^T T(\bx)\bu\right| + \left\{\mathbb{E} \left[G(\bx_{\mS_0})-g(\bx)\right]^4 \cdot \mathbb{E}\left[\bu^TT(\bx)\bu\right]^4\right\}^{1/4} \cdot \left\{\P\left(\bx_{\mS_0} \notin [-m,m]^s\right)\right\}^{1/2} \\
        \le& C_1 n^{-1/2} \cdot\left\| \bu^T T(\bx) \bu\right\|_{\psi_1} + 2 \left\|G(\bx_{\mS_0})-g(\bx)\right\|_{\psi_2}\cdot 4\left\| \bu^T T(\bx) \bu\right\|_{\psi_1} \cdot \left\{\P\left(\bx_{\mS_0} \notin [-m,m]^s\right)\right\}^{1/2} \\
        \le& C_1M_Tn^{-1/2} + 8M_T(M_g+M_G)\cdot \prod\limits_{j\in\mS_0} \left\{\EE \left[|x_j|/m\right]\right\}^{1/2} \\
        \le& C_1M_Tn^{-1/2} + 8M_T(M_g+M_G)\cdot \prod\limits_{j\in\mS_0} \left\| x_j\right\|_{\psi_2}^{1/2} \cdot m^{-s/2} \\
        \le&C_1M_Tn^{-1/2} + 8M_T(M_g+M_G)\cdot m^{-s/2}\left\| \bx\right\|_{\psi_2}^{s/2}  \\
        =&\left\{C_1M_T+ 8M_T(M_g+M_G) \right\} n^{-1/2}  \refstepcounter{equation}\tag{\theequation}
    \end{align*}

\noindent  Take $C = C_1M_T+ 8M_T(M_g+M_G)$, we have:
\begin{equation}
    \begin{aligned}
        \left\|\mathbb{E} \left[\left\{G(\bx_{\mS_0})-g(\bx)\right\}T(\bx)\right]\right\|_2 = \mathop{\sup}_{\|\bu\|_2=1} \left| \bu^T\mathbb{E} \left[\left\{G(\bx_{\mS_0})-g(\bx)\right\}T(\bx)\right] \bu \right| \le Cn^{-1/2}.
    \end{aligned}
\end{equation}
    
\end{proof}

\subsection{Proof of Theorem \ref{thm2}}
\begin{lem}\label{lm10}
    Assume the conditions in Theorem \ref{thm2} hold, then for any $\nu>0$, when $n>\max\left\{ \left(\frac{1}{c_1^2}+\frac{c_2k_1^6\mu^4}{\lambda_{k_1}^2}\right)h_1^2(\nu,p), \frac{4M^2}{c_1\delta^2}h_1(\nu,p)\right\}$,  with probability at least $1- \nu$:
    \begin{align}\label{lm10-1}
        \|\bW - \hat\bW_1\|_\text{max} \le \frac{2M}{\sqrt{c_1n}}\left(\frac{k_1^4\mu^2}{\vert \lambda_{k_1}\vert}+ \frac{k_1^{3/2}\mu^{1/2}}{\delta\sqrt{p} }\right)\left(\sqrt{\log \frac{4}{\nu} + \log 9 \cdot p}+K\right)
    \end{align}
    where $h_1(\nu,p) = h(\nu,p)+K$ and $c_1, c_2, M, K$ are positive constants.
\end{lem}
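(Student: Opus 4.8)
The plan is to reduce this to the DNN-case argument of Lemma \ref{eigenvectorbound} by replacing the population target $\EE[yT(\bx)]$ with $\bA^\ast := \EE[g(\bx)T(\bx)]$, whose top-$k_1$ eigenvectors are exactly $\bW$, and then to absorb the resulting approximation bias into the additive constant $K$. Writing $\hat\bA = \tfrac1n\sum_{i=1}^n y_i T(\bx_i)$, I would decompose $\hat\bA - \bA^\ast = \bE_{\mathrm{stat}} + \bE_{\mathrm{approx}}$, where $\bE_{\mathrm{stat}} = \hat\bA - \EE[yT(\bx)]$ and $\bE_{\mathrm{approx}} = \EE[yT(\bx)] - \EE[g(\bx)T(\bx)]$. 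Since $\eps$ has mean zero and is independent of $\bx$, we have $\EE[\eps T(\bx)] = 0$, so $\EE[yT(\bx)] = \EE[G(\bx_{\mS_0})T(\bx)]$, and hence $\bE_{\mathrm{approx}} = \EE\big[\{G(\bx_{\mS_0}) - g(\bx)\}T(\bx)\big]$ is precisely the object controlled by Proposition \ref{dnnprop}.

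For the statistical term $\bE_{\mathrm{stat}}$, I would rerun the proof of Lemma \ref{e2bound} essentially verbatim. The only input requiring re-checking is that $y$ is sub-gaussian: because $\bx$ is sub-gaussian, Lemma \ref{lmbeta} shows $G(\bx_{\mS_0})$ is sub-gaussian, and combined with the Gaussian noise $\eps$ this gives a finite $\psi_2$-norm for $y = G(\bx_{\mS_0}) + \eps$. With this, Lemma \ref{lm8new} together with the $\epsilon$-net bounds (Lemmas \ref{lm3} and \ref{lm4}) yields, for $n > \tfrac{1}{c_1^2}h^2(\nu,p)$, the bound $\|\bE_{\mathrm{stat}}\|_2 \le \tfrac{2M}{\sqrt{c_1 n}}\sqrt{\log\tfrac{4}{\nu} + \log 9\cdot p}$ with probability at least $1-\nu$.

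For the bias term, Proposition \ref{dnnprop} gives directly $\|\bE_{\mathrm{approx}}\|_2 \le C n^{-1/2}$, which I would rewrite as $\tfrac{2M}{\sqrt{c_1 n}}\cdot K$ with $K = C\sqrt{c_1}/(2M)$. Adding the two via the triangle inequality produces $\|\hat\bA - \bA^\ast\|_2 \le \tfrac{2M}{\sqrt{c_1 n}}\big(\sqrt{\log\tfrac{4}{\nu}+\log 9\cdot p} + K\big)$, i.e. the analogue of Lemma \ref{e2bound} with $h(\nu,p)$ replaced by $h_1(\nu,p) = h(\nu,p) + K$. From here the argument mirrors Lemma \ref{eigenvectorbound}: convert to the $\infty$-norm by $\|\hat\bA-\bA^\ast\|_\infty \le \sqrt{p}\,\|\hat\bA-\bA^\ast\|_2$ (Lemma \ref{A2inf}), and apply the $\ell_\infty$ eigenvector perturbation bound (Theorem 3 of \cite{fan2017ellinftyeigenvectorperturbationbound}) to $\bW = \mathrm{Eigen}_{k_1}(\bA^\ast)$, using $\mu(\bW)\le c_0\mu$ (from an invertible $\bO$ with $\bW = \bO\bW_1$ and the incoherence Assumption \ref{a3}). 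The two branches of the sample-size hypothesis are exactly what put us in the regime of that theorem: $n > \tfrac{4M^2}{c_1\delta^2}h_1(\nu,p)$ forces $\|\hat\bA - \bA^\ast\|_2 < \delta$, while $n > (\tfrac{1}{c_1^2} + \tfrac{c_2 k_1^6\mu^4}{\lambda_{k_1}^2})h_1^2(\nu,p)$ forces $|\lambda_{k_1}| = \Omega(k_1^3\mu^2(\bW)\|\hat\bA-\bA^\ast\|_\infty)$.

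The main obstacle is conceptual rather than computational: one must anchor the perturbation step to the \emph{approximating} network $g$ and not to $G$ itself. This is why $\bW = \mathrm{Eigen}_{k_1}(\EE[g(\bx)T(\bx)])$ is the correct reference — the second-order Stein identity (\ref{eq:main0}) holds exactly for $g \in \mathscr{G}_{\mS_0}^{NN}$, so $\EE[g(\bx)T(\bx)] = \bW_1^\T\,\EE[\nabla^2_{\bz} f(\bW_1\bx)]\,\bW_1$ has row space equal to that of $\bW_1$ and therefore support $\mS_0$, whereas no such exact identity is available for the generic Hölder function $G$. The smoothness gap between $G$ and $g$ enters only through the bias term, which Proposition \ref{dnnprop} shows is of the same $n^{-1/2}$ order as the statistical fluctuation and is thus harmless; and the hypothesis (imposed in Theorem \ref{thm2}) that $g$ satisfies Assumptions \ref{a2new} and \ref{a3} is what keeps $\lambda_{k_1}$, $\delta$, and $\mu$ bounded away from degeneracy throughout.
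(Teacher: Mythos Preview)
Your proposal is correct and follows essentially the same route as the paper: decompose $\hat\bA - \EE[g(\bx)T(\bx)]$ into the statistical term (handled by the Lemma \ref{e2bound} argument, with sub-gaussianity of $y$ coming from Lemma \ref{lmbeta} since here $y=G(\bx_{\mS_0})+\eps$) plus the approximation bias (handled by Proposition \ref{dnnprop}), absorb the $Cn^{-1/2}$ bias into the additive constant $K$, and then invoke Lemma \ref{A2inf} and the Fan et al.\ $\ell_\infty$ eigenvector perturbation bound exactly as in Lemma \ref{eigenvectorbound}. The paper's proof is the same argument with the same constants (it sets $K=\sqrt{c}M_2/(2M_1)$), and your closing paragraph explaining why one must anchor to $g$ rather than $G$ is a useful clarification the paper leaves implicit.
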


\begin{proof}
    \quad Define matrix $\bE_2 = \frac{1}{n}\sum_{i=1}^{n} y_i T(\bx_i) - \EE g(\bx)T(\bx)$. By Lemma \ref{e2bound}, when $n > \frac{1}{c_1^2}\left(\log \frac{4}{\nu}+\right.\\
    \left.\log 9 \cdot p\right)^2 $, with probability at least $1-\nu$:
\begin{equation}
    \begin{aligned}
        \left\| \frac{1}{n}\sum_{i=1}^{n} y_i T(\bx_i) - \EE yT(\bx) \right\|_2  \le  \frac{2M_1}{\sqrt{c_1n}} \sqrt{\log \frac{4}{\nu} + \log 9 \cdot p}
    \end{aligned}
\end{equation}
    
\noindent  where $c_1, M_1>0$ are constants, combining with Proposition \ref{dnnprop}, we have:
\begin{equation}
    \begin{aligned}
        \left\| \bE_2 \right\|_2 &\le \left\| \frac{1}{n}\sum_{i=1}^{n} y_i T(\bx_i) - \EE yT(\bx) \right\|_2 + \left\| \mathbb{E} 
        \left[\left\{G(\bx_{\mS_0})-g(\bx)\right\}T(\bx)\right] \right\|_2 \\
        &\le \frac{2M_1}{\sqrt{c_1n}} \sqrt{\log \frac{4}{\nu} + \log 9 \cdot p} + \frac{M_2}{\sqrt{n}}
    \end{aligned}
\end{equation}
    
\noindent where $M_2>0$ is a constant, then from Lemma \ref{A2inf}, we can further obtain that:
\begin{equation}
    \begin{aligned}
        \left\| \bE_2 \right\|_\infty \le \sqrt{p} \left\| \bE_2 \right\|_2 \le \frac{2M_1\sqrt{p}}{\sqrt{c_1n}} \sqrt{\log \frac{4}{\nu} + \log 9 \cdot p} + \frac{M_2\sqrt{p}}{\sqrt{n}}
    \end{aligned}
\end{equation}
    
\noindent Let $K = \frac{\sqrt{c}M_2}{2M_1}$ and $h_1(\nu,p) = h(\nu,p)+K$, if $n>\max\left\{ \frac{c_2k_1^6\mu^4}{\lambda_{k_1}^2}h_1^2(\nu,p), \frac{4M_1^2}{c_1\delta^2}h_1(\nu,p)\right\}$ for some positive constant $c_2$, then $|\lambda_{k_1}| = \Omega\left(k_1^3\mu^2(\bW)\left\|\bE_2\right\|_\infty\right)$ and $\Vert \bE_2\Vert_2 < \delta$, 
    by the Theorem 3 in \cite{fan2017ellinftyeigenvectorperturbationbound}, we have:
    \begin{equation}
        \begin{aligned}
            \|\bW - \hat\bW_1\|_\text{max}&\le M_3 \left(\frac{k_1^4\mu^2(\bW)\Vert \bE_2\Vert_\infty}{\vert \lambda_{k_1}\vert \sqrt{p}}+ \frac{k_1^{3/2}\mu^{1/2}(\bW)\Vert \bE_2 \Vert_2}{\delta \sqrt{p}} \right) \\
            &\le \frac{2M_1M_3'}{\sqrt{c_1n}}\left (\frac{k_1^4\mu^2}{\vert \lambda_{k_1}\vert}+  \frac{k_1^{3/2}\mu^{1/2}}{\delta\sqrt{p} }\right)\left(\sqrt{\log \frac{4}{\nu} + \log 9 \cdot p}+K\right)
        \end{aligned}
    \end{equation}
    where $M_3, M_3'$ are positive constants. Therefore, let $M = \max\left\{M_1, M_1M_3'\right\}$, when $n>\max\left\{ \left(\frac{1}{c_1^2}+\frac{c_2k_1^6\mu^4}{\lambda_{k_1}^2}\right)h_1^2(\nu,p), \frac{4M^2}{c_1\delta^2}h_1(\nu,p)\right\}$, the inequality (\ref{lm10-1}) holds with probability at least $1-\nu$.
\end{proof}

\begin{thm}\label{thm2}
    Consider model (\ref{model1}) with $G\in\mathcal{G}$ with $\eps$ being Gaussian noise.  
    Let $g$ be a function in $\mathscr{G}_{\mathcal{S}_0}^{NN}$ such that (\ref{lm2-2}) holds. 
 Assume Assumptions \ref{a2new} to \ref{a3-} hold. 
 Let $\bW=\text{Eigen}_{k_1}\big(\mathbb{E}g(\bx)T(\bx)\big)$ and assume $\kappa \le (1/2) \mathop{\min}_{j\in \mathcal{S}_0} \|\{\bW\}_{\cdot j}\|_2$.
Suppose that $\bx$ is a sub-gaussian vector. Then, for any $\nu>0$, when $n \ge C \left(\log^2 \nu +p^2\right)$ for certain positive constant $C$, Algorithm \ref{alg1} achieves feature selection consistency with probability at least $1-\nu$, i.e.,
    \begin{align*}
        \P(\hat{\mS}_0 =\mS_0) \ge 1-\nu.
    \end{align*}

\end{thm}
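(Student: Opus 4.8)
The plan is to follow the same architecture as the proof of Theorem \ref{thm1}, replacing the exact DNN representation by the approximating network $g$ supplied by Lemma \ref{lem:fake} and absorbing the extra approximation bias through Proposition \ref{dnnprop}. The two relevant objects are the population matrix $\bW = \text{Eigen}_{k_1}(\EE g(\bx)T(\bx))$ and the empirical estimate $\hbW_1 = \text{Eigen}_{k_1}((1/n)\sum_{i=1}^n y_i T(\bx_i))$. Everything reduces to showing that $\hbW_1$ is close to $\bW$ in max-norm and that $\bW$ has exactly the column support $\mS_0$, after which the hard-thresholding step decides membership correctly.

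First I would fix the support of $\bW$. Since $g \in \mathscr{G}_{\mS_0}^{NN}$, the second-order Stein formula gives $\EE g(\bx)T(\bx) = \bW_1^\T \, \EE[\nabla^2_{\bz} f(\bW_1\bx)] \, \bW_1$, and Assumption \ref{a2new} makes the middle factor nonsingular; hence $\EE g(\bx)T(\bx)$ and $\bW_1$ share the same row space, so there is an invertible $\bO$ with $\bW = \bO\bW_1$ and $\{j : \|\{\bW\}_{\cdot j}\|_2 \neq 0\} = \mS_0$. This is precisely where the structural conditions (Assumptions \ref{a2new} and \ref{a3}) are consumed, exactly as in the proof of Theorem \ref{thm1}.

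Next I would control the estimation error $\bE_2 = (1/n)\sum_{i=1}^n y_i T(\bx_i) - \EE g(\bx)T(\bx)$. Writing $y_i = G(\{\bx_i\}_{\mS_0}) + \eps_i$ and using that the Gaussian noise is mean-zero and independent of $\bx$, so that $\EE[\eps T(\bx)] = \boldsymbol{0}$, the error splits as $\bE_2 = \big[(1/n)\sum_i y_i T(\bx_i) - \EE y T(\bx)\big] + \EE\big[\{G(\bx_{\mS_0}) - g(\bx)\}T(\bx)\big]$. The first bracket is the pure statistical fluctuation, bounded by Lemma \ref{e2bound}, and the second is the approximation bias, bounded by Proposition \ref{dnnprop} at rate $n^{-1/2}$. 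Passing from the spectral to the $\infty$-norm via Lemma \ref{A2inf} and then applying the $\ell_\infty$ eigenvector perturbation bound of \citet{fan2017ellinftyeigenvectorperturbationbound} is exactly the content of Lemma \ref{lm10}, which yields $\|\bW - \hbW_1\|_{\max} = O\big(n^{-1/2}(\sqrt{\log(1/\nu)+p}+K)\big)$ with probability at least $1-\nu$.

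Finally I would run the thresholding argument verbatim from Theorem \ref{thm1}, using $\big|\,\|\{\hbW_1\}_{\cdot j}\|_2 - \|\{\bW\}_{\cdot j}\|_2\,\big| \le \sqrt{k_1}\,\|\bW - \hbW_1\|_{\max}$. If $j \notin \mS_0$ then $\|\{\bW\}_{\cdot j}\|_2 = 0$, so the perturbation keeps $\|\{\hbW_1\}_{\cdot j}\|_2$ below $\kappa$ and $j \notin \hat\mS_0$; if $j \in \mS_0$ then $\|\{\bW\}_{\cdot j}\|_2 \ge 2\kappa$ by the assumption $\kappa \le (1/2)\min_{j\in\mS_0}\|\{\bW\}_{\cdot j}\|_2$, so $\|\{\hbW_1\}_{\cdot j}\|_2 \ge \kappa$ and $j \in \hat\mS_0$. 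Combining the two inclusions gives $\hat\mS_0 = \mS_0$. I expect the main obstacle to lie entirely inside Lemma \ref{lm10}: spectral- or Frobenius-norm perturbation bounds are too coarse for column-wise hard thresholding, so the \emph{entrywise} control is essential and requires both the incoherence condition (Assumption \ref{a3}) and the $\ell_\infty$ perturbation theory, and one must check that the approximation bias of Proposition \ref{dnnprop} matches the $n^{-1/2}$ statistical rate so that it never dominates. The sample-size requirement $n \ge C(\log^2\nu + p^2)$ is then inherited from the regime of validity of the sub-exponential concentration underlying Lemma \ref{lm10} — the $\epsilon$-net union bound over the sphere combined with the heavier sub-exponential tails of $y\,T(\bx)$ — rather than from the thresholding step itself, so the complexity is identical to that of Theorem \ref{thm1}.
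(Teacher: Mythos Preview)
Your proposal is correct and follows essentially the same route as the paper: you invoke Stein's formula plus Assumption \ref{a2new} to pin down the support of $\bW$, decompose $\bE_2$ into the statistical fluctuation handled by Lemma \ref{e2bound} and the approximation bias handled by Proposition \ref{dnnprop}, feed both into the $\ell_\infty$ eigenvector perturbation bound (this is exactly Lemma \ref{lm10}), and close with the two-sided thresholding argument. Your phrasing of the thresholding step via the contrapositive ($j\notin\mS_0 \Rightarrow j\notin\hat\mS_0$) is slightly cleaner than the paper's, but logically equivalent; the only small point you leave implicit is that Lemma \ref{e2bound} still applies under model (\ref{model1}) because $G(\bx_{\mS_0})$ is sub-Gaussian by Lemma \ref{lmbeta}, hence so is $y$.
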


\begin{proof}
    \quad  By Lemma \ref{eigenvectorbound}, when $n>\max\left\{ \left(\frac{1}{c_1^2}+\frac{c_2k_1^6\mu^4}{\lambda_{k_1}^2}\right)h_1^2(\nu,p), \frac{4M^2}{c_1\delta^2}h_1(\nu,p)\right\}$, with probability at least $1-\nu$:
    \begin{equation}
        \begin{aligned}
        \|\bW - \hat\bW_1\|_\text{max} \le \frac{2M}{\sqrt{c_1n}}\left(\frac{k_1^4\mu^2}{\vert \lambda_{k_1}\vert}+ \frac{k_1^{3/2}\mu^{1/2}}{\delta\sqrt{p} }\right)\left(\sqrt{\log \frac{4}{\nu} + \log 9 \cdot p}+K\right)
        \end{aligned}
    \end{equation}
    where $h_1(\nu,p) = h(\nu, p) + K$ and $c_1,c_2,M,K$ are positive constants.
    
    \noindent  Then for $j\in \hat\mS_0$, when $n > \frac{4M^2}{c_1\kappa^2}\left(\frac{k_1^{9/2}\mu^2}{\vert \lambda_{k_1}\vert}+\frac{k_1^2\mu^{1/2}}{\delta \sqrt{p}}\right)^2\left(\sqrt{\log \frac{4}{\nu} + \log 9 \cdot p}+K\right)^2$, we have:

    \begin{align*}
        \|\bW_{\cdot j}\|_2 &\ge \|\{\hat\bW_1\}_{\cdot j}\|_2 - \sqrt{k_1}\Vert \bW-\hat\bW_1\Vert_\text{max} \\
        &\ge \kappa - \sqrt{k_1}\Vert \bW-\hat\bW_1 \Vert_\text{max} \\
        &\ge \kappa - \sqrt{k_1} \cdot\frac{2M}{\sqrt{c_1n}}\left(\frac{k_1^4\mu^2}{\vert \lambda_{k_1}\vert}+ \frac{k_1^{3/2}\mu^{1/2}}{\delta\sqrt{p} }\right)\left(\sqrt{\log \frac{4}{\nu} + \log 9 \cdot p}+K\right)\\
        &> 0   \refstepcounter{equation}\tag{\theequation}
    \end{align*}

   \noindent then $j\in \mS_0$. Furthermore, $ \hat\mS_0 \subseteq \mS_0$. On the other hand, let $j \in \mS_0$, then we have:
        \begin{align*}
        \|\hat\bW_{\cdot j}\|_2 &\ge \|\bW_{\cdot j}\|_2 - \sqrt{k_1}\|\bW - \hat\bW_1\|_\text{max} \\
        &\ge \min_{j \in \mS_0} \|\bW_{\cdot j}\|_2 - \sqrt{k_1} \cdot \frac{2M}{\sqrt{c_1n}}\left(\frac{k_1^4\mu^2}{\vert \lambda_{k_1}\vert}+ \frac{k_1^{3/2}\mu^{1/2}}{\delta\sqrt{p} }\right)\left(\sqrt{\log \frac{4}{\nu} + \log 9 \cdot p}+K\right)\\
        &\ge \min_{j \in \mS_0} \|\bW_{\cdot j}\|_2 - \kappa \\
        &\ge \kappa
        \refstepcounter{equation}\tag{\theequation}
    \end{align*}
then $j \in \hat\mS_0$. Furthermore, $\mS_0 \subseteq \hat\mS_0$. Combining the above two conclusions, under the condition that  $n \ge C\left(\log^2 \nu + p^2\right)$ for certain positive constant $C$,  with probability at least $1-\nu$, we have $\mS_0 = \hat\mS_0$.

\end{proof}

\section{Proof of Theorems in Section \ref{sec_high}}
\subsection{Proof of Lemma \ref{lem:fake2}:}

\refstepcounter{fakelemma}
\noindent \textbf{Lemma }2.\label{lem:fake2} 
 Consider model  (\ref{model2}) with Gaussian noise $\eps$. Suppose Assumptions \ref{min2rd}-\ref{submatrix} hold. Let $\zeta\in(0,1)$  denote the screening proportion, and let $ \hat{\mS}_0^\zeta$ be as in (\ref{s0zeta}). If $\zeta = cn^{-1/3}$, 
   we have
    \bel{lm1-2}
    \P\left(\mS_0\subset \hat{\mS}_0^\zeta\right) \geq 1 -  s\exp\left\{-C n^{2/3}\right\},
    \eel
where $c$ and $C$ are certain constants.

\begin{proof}
    \quad On the one hand, according to the Stein's formula and assume Assumption \ref{min2rd} holds, for $k \in \mS_0$ there exits  a  constant $c_1$ so that:
    \begin{align}
        | \bA_{kk}| =  \EE \left|\frac{\partial^2 g(\bx)}{\partial x_k^2} \right|\ge c_1, \ \ g\in \mathscr{G}_{\mathcal{S}_0}^{NN},\  k \in\mS_0
    \end{align}
    According to Assumption \ref{a3-}, $T_{jk}(\bx)$ is a sub-exponential random variable and suppose that $\mathop{\sup}_{1\le j,k\le p} \left\| T_{jk}(\bx) \right\|_{\psi_1} \le M_1$ for some constant $M_1$. Since $\bx$ is a sub-gaussian random vector, by Lemma \ref{lmrelu}, $g(\bx)$ is also a sub-gaussian random variable. Combining with $\epsilon$ is a gaussian noise, then $y$ is a sub-gaussian variable with $\|y\|_{\psi_2} \leq M_2$. Let $M = M_1 M_2$, according to Lemma \ref{lm8new}, there exists  constants $d_1, d_2 >0$ so that when $n \geq 2d_2M/ c_1$, 
    \begin{equation}
    \begin{aligned}
    \P \left( \mathop{\sup}_{k\in\mS_0} \left| \hat{\bA}_{kk}-\bA_{kk}\right| \ge c_1/2  \right)  &=   
    \P\left(\mathop{\sup}_{k\in\mS_0}\left|\frac1n\sum\limits_{i=1}^n \left[ y_iT_{kk}(\bx_i) - \EE(yT_{kk}(\bx) )
    \right]\right|  \ge c_1/2 \right) \\ &\le 4s\exp\left\{-d_1\mathop{\min}\left[ \left(\frac{c_1\sqrt{n}}{2M}\right)^2, \left(\frac{c_1n}{2M}\right)^{2/3}\right]\right\}
    \end{aligned}
    \end{equation}

    \noindent Note that for $k \in \mS_0$, we have: 
    \begin{align}
        |\hat{\bA}_{kk}| \ge |\bA_{kk}| - |\hat{\bA}_{kk}-\bA_{kk}| \ge c_1  - |\hat{\bA}_{kk}-\bA_{kk}|
    \end{align}
    therefore, 
     \begin{equation}
        \begin{aligned}
            \P \left( \mathop{\sup}_{k\in\mS_0} \left| \hat{\bA}_{kk}\right| \ge c_1/2  \right)  &\ge  \P \left( \mathop{\sup}_{k\in\mS_0} \left| \hat{\bA}_{kk}-\bA_{kk}\right| \le  c_1/2  \right) \\
            &\ge 1 - 4s\exp\left\{-d_1\mathop{\min}\left[ \left(\frac{c_1\sqrt{n}}{2M}\right)^2, \left(\frac{c_1n}{2M}\right)^{2/3}\right]\right\}
        \end{aligned}
    \end{equation}

\noindent  On the other hand, we consider,
    \begin{equation}\label{thm3pf-2}
        \begin{aligned}
            \left\| \diag\left\{\hat{\bA}\right\}\right\|_2^2 &= \left\| \diag\left\{\frac1n\sum\limits_{i=1}^n y_iT(\bx_i)\right\}\right\|_2^2 \\
            &= \left\| \frac1n \by^T \bV_{[p]} \right\|_2^2 \\
            &= \frac{1}{n^2} \by^T\bV_{[p]}\bV_{[p]}^T\by \\
            &\le \frac{1}{n^2} \lambda_{\text{max}}(\bV_{[p]}\bV_{[p]}^T) \by^T\by
        \end{aligned}
    \end{equation}
    where $\by = (y_1,\cdots,y_n)^T$ and $[p] = \{1, 2, \cdots,p\}$. Since $y$ is sub-gaussian random variable, by Lemma \ref{subexpinequality}, $y^2$ is sub-exponential random variable with:
    \begin{align}
        \left\|y^2\right\|_{\psi_1} \le 2\left\|y\right\|_{\psi_2}^2 \le 2M_2^2
    \end{align}

\noindent According to Bernstein's inequality, for some constant $d_3>0$,
    \begin{equation}\label{thm3pf-3}
        \begin{aligned}
            \P\left(\frac1n \by^T\by > \EE y^2+1\right) = \P\left(\frac1n \sum\limits_{i=1}^n y_i^2 - \EE y^2> 1\right) \le \exp\left\{-d_3\mathop{\min}\left(\frac{1}{4M_2^2},\frac{1}{2M_2}\right)n\right\}
        \end{aligned}
    \end{equation}

\noindent Combining with Assumption \ref{submatrix}, for some constants $c_2,c_3>0$, we have:
\begin{equation}\label{thm3pf-4}
        \begin{aligned}
        \P\left(\left\| \diag\left\{\hat{\bA}\right\}\right\|_2^2 > \frac{c_2p}{n}\left(\EE y^2+1\right)\right) & \le \P\left(\frac{1}{n^2} \lambda_{\text{max}}\left(\bV_{[p]}\bV_{[p]}^T\right) \by^T\by > \frac{c_2p}{n}\left(\EE y^2+1\right)\right) \\
        &\le \P\left(\lambda_{\text{max}}\left(p^{-1}\bV_{[p]}\bV_{[p]}^T\right) > c_2\right) + \P\left(\frac{1}{n}\by^T\by> \EE y^2+1\right) \\
        &\le \exp\left\{-c_3n\right\} + \exp\left\{-d_3\mathop{\min}\left(\frac{1}{4M_2^2},\frac{1}{2M_2}\right)n\right\}
    \end{aligned}
\end{equation}

\noindent therefore, let $c_4 = \min\left\{c_3,\frac{d_3}{4M_2^2},\frac{d_3}{2M_2}\right\}$, we can obtain that,
    \bel{thm3pf-5}
    \P\left(\left\| \diag\left\{\hat{\bA}\right\}\right\|_2^2 \le \frac{c_2p}{n}\left(\EE y^2+1\right)\right) \ge 1 - 2e^{-c_4n}
    \eel
\noindent Further let $C_1 = \min\left\{\frac{c_1^2d_1}{4M^2},\frac{c_1^{2/3}d_1}{(2M)^{2/3}}, c_4\right\}$, we have:
    \bel{thm3pf-6}
    \P\left(\left\| \diag\left\{\hat{\bA}\right\}\right\|_2^2 \le \frac{c_2p}{n}\left(\EE y^2+1\right) \text{ and } \mathop{\sup}_{k\in\mS_0} \left| \hat{\bA}_{kk}\right| \ge \frac{c_1}{2}\right) \ge 1 - (4s+2)\exp\left\{-C_1n^{\frac{2}{3}}\right\}
    \eel
\noindent In other words, with probability at least $1- (4s+2)\exp\left\{-C_1n^{2/3}\right\}$, 
    \begin{align}
            \text{Card} \left\{ 1\le j \le p: \left|\hat{\bA}_{jj}\right| \ge \mathop{\min}_{k\in\mS_0} \left|\hat{\bA}_{kk}\right|\right\} \le \frac{4}{c_1^2}\left\| \diag\left\{\hat{\bA}\right\}\right\|_2^2 \le \frac{4c_2p}{c_1^2n}\left(\EE y^2+1\right)
    \end{align}
\noindent where $\text{Card}\left\{\cdot\right\}$ is the cardinality of a set. Therefore, when $\zeta = cn^{-1/3} \ge \frac{4c_2}{c_1^2n}\left(\EE y^2+1\right)$, we have:
    \bel{thm3pf-7}
    \P\left(\mS_0\subset\hat{\mS}_0^\zeta\right) \ge 1- (4s+2)\exp\left\{-C_1 n^{2/3}\right\} \geq 1 - s\exp\left\{-C n^{2/3}\right\}
    \eel
\end{proof}

\subsection{Proof of Theorem \ref{algo2consistency}}
\begin{lem}\label{lemmascreening}
    Under conditions of Lemma \ref{lemmascreening1step} with $\zeta = c_1 n^{-1/3}$, $p_0 = c_2 n^{1/3}$ and  $\log p \leq c_3 n /s$ for some positive constant $c_1, c_2, c_3$. Starting from $p$ features, let $\hat{\mS}_{0,R}^\zeta$ be the final index set after $R$ rounds of iterative screening steps. Then, for some $C>0$, we have:
    \begin{equation}
                \mathbb{P}\left(\mS_0 \subset \hat{\mS}_{0, R}^\zeta\right) \geq 1 - \exp\left\{-Cn^{2/3} \right\}
    \end{equation}

\end{lem}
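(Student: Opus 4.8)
The plan is to reduce the multi-round statement to $R$ applications of the single-round guarantee in Lemma~\ref{lemmascreening1step} and then control the accumulated failure probability by a union bound. Write $\mathcal{I}_0 = \{1,\dots,p\}$ and let $\mathcal{I}_r$ denote the feature set surviving after the $r$-th screening step, so that $p_r := |\mathcal{I}_r| = \lfloor \zeta p_{r-1}\rfloor$ and $\mathcal{I}_R = \hat{\mS}_{0,R}^\zeta$. Because each step keeps the fixed proportion $\zeta = c_1 n^{-1/3}$ of the current coordinates, the cardinality sequence $(p_r)$ is \emph{deterministic}; in particular the number of rounds $R$ needed to drop from $p$ down to $p_0 = c_2 n^{1/3}$ does not depend on the data. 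Solving $p\,\zeta^{R} \asymp p_0$ yields $R = O\!\big(\log p/\log n\big)$, a bound I will use at the end.

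First I would verify that Lemma~\ref{lemmascreening1step} applies at \emph{every} round, which requires two things. (i) The cardinality condition of Assumption~\ref{submatrix}, namely $c n^{1/3} < p_r \le p$: this holds throughout the loop since the algorithm screens only while $p_r > p_0 = c_2 n^{1/3}$, so taking $c_2 \ge c$ keeps each intermediate dimension admissible. (ii) Conditional on $\mS_0 \subseteq \mathcal{I}_{r-1}$, the restricted problem on the coordinates $\mathcal{I}_{r-1}$ still satisfies the hypotheses of Lemma~\ref{lemmascreening1step}. For Gaussian $\bx$ the marginal $\bx_{\mathcal{I}_{r-1}}$ is again Gaussian, and since $\mS_0 \subseteq \mathcal{I}_{r-1}$ we have $\EE(y\mid \bx_{\mathcal{I}_{r-1}}) = g(\bx_{\mS_0})$; applying the second-order Stein identity to the marginal score $T(\bx_{\mathcal{I}_{r-1}})$ shows its population diagonal equals $\EE[\partial^2 g/\partial x_k^2]$ on $\mS_0$ (hence $\ge c$ by Assumption~\ref{min2rd}) and vanishes off $\mS_0$. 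Thus the minimum-signal and concentration conditions transfer to the restricted problem with uniformly controlled constants, and the crucial feature is that the resulting failure bound $s\exp\{-C n^{2/3}\}$ of Lemma~\ref{lemmascreening1step} does \emph{not} depend on the ambient dimension $p_r$.

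Next I would run the union bound. Let $F_r = \{\mS_0 \subseteq \mathcal{I}_{r-1}\} \cap \{\mS_0 \not\subseteq \mathcal{I}_r\}$ be the event that $\mS_0$ survives the first $r-1$ rounds but is lost at round $r$; on $\{\mS_0 \not\subseteq \mathcal{I}_R\}$ at least one $F_r$ must occur, since screening only removes coordinates. By the previous paragraph, $\P(F_r) \le \P(\mS_0 \not\subseteq \mathcal{I}_r \mid \mS_0 \subseteq \mathcal{I}_{r-1}) \le s\exp\{-C n^{2/3}\}$ uniformly in $r$ (the rounds share the same data and are \emph{not} independent, but no independence is needed for a union bound). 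Summing gives
\[
\P\big(\mS_0 \not\subseteq \hat{\mS}_{0,R}^\zeta\big) \;\le\; \sum_{r=1}^{R}\P(F_r)\;\le\; R\,s\,\exp\{-C n^{2/3}\}.
\]
Finally I would absorb the prefactor $Rs$ into the exponent. Using $R = O(\log p/\log n)$ together with the standing condition $\log p \le c_3 n/s$ gives $R s \lesssim s\log p/\log n \lesssim n/\log n$, a quantity of polynomial order in $n$. Since $\log(n/\log n) = o(n^{2/3})$, we have $R s\exp\{-C n^{2/3}\} \le \exp\{-C' n^{2/3}\}$ for a slightly smaller constant $C'>0$ and all large $n$, which is the claimed bound.

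The main obstacle I anticipate is step (ii): making rigorous that each screening round inherits the hypotheses of Lemma~\ref{lemmascreening1step} with a failure probability that is \emph{uniform in the current dimension} $p_r$ \emph{and} uniform over the data-dependent surviving sets $\mathcal{I}_{r-1}$. The dimension-uniformity is exactly what lets the $n^{2/3}$ rate survive the union bound; it rests on the fact that the exponential rate in Lemma~\ref{lemmascreening1step} is driven by the concentration of the $s$ true-feature diagonals and by Assumption~\ref{submatrix} (both governed by $n$, not by $p_r$), while the ambient dimension enters only through polynomial factors that are dominated. Handling the conditioning on the random $\mathcal{I}_{r-1}$ cleanly — ideally by bounding each round's failure on the underlying high-probability concentration event rather than literally conditioning on a random index set — is the delicate bookkeeping the argument must get right.
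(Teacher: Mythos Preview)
Your proposal is correct and follows essentially the same route as the paper: apply Lemma~\ref{lemmascreening1step} at each round (invoking that Assumption~\ref{submatrix} is stated uniformly over subsets $\mathcal{I}$ with $cn^{1/3}<|\mathcal{I}|\le p$), take a union bound over the $R$ rounds, bound $R=O(\log p/\log n)\le O(n/(s\log n))$ via $\log p\le c_3 n/s$, and absorb the polynomial prefactor $Rs$ into the exponential. If anything, your treatment of step~(ii)---checking that the restricted problem inherits the hypotheses with a failure rate uniform in $p_r$ and over the random $\mathcal{I}_{r-1}$---is more careful than the paper's, which simply asserts the conditional bound.
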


\begin{proof}
    \quad Denote $\hat{\mS}_{0,r}^\zeta$ to be the selected index set after $r$ screening rounds, according to Lemma \ref{lemmascreening1step}, we have: 
    \begin{align}
        \P\left(\mS_0\subset \hat{\mS}_{0,1}^\zeta\right) \geq 1 - s\exp\left\{-C'n^{2/3}\right\}
    \end{align}
    \noindent Since Assumption $\ref{submatrix}$ holds for any index subset $\mathcal{I}$ satisfying $ cn^{1/3} < \text{Card}\{\mathcal{I}\} \leq p$, therefore: 
    \begin{align}
        \P\left(\mS_0\subset \hat{\mS}_{0,r}^\zeta   | \mS_0\subset \hat{\mS}_{0,r-1}^\zeta \right) \geq 1 - s\exp\left\{-C'n^{2/3}\right\}
    \end{align}
    According to Bonferroni's inequality, we could obtain: 
    \begin{align}
         \P\left(\mS_0\subset \hat{\mS}_{0,r}^\zeta \right) \geq 1 -  r\cdot s\exp\left\{-C'n^{2/3}\right\}, \ 1\leq r\leq R
    \end{align}
    \noindent Since
    \begin{align}
         \text{Card}\{\hat{\mS}_{0, R-1}^\zeta\} = \zeta^{R-1}p  \geq  p_0 = c_2 n^{1/3}
    \end{align}
    and $\zeta = c_1 n^{-1/3}$, then there exists $c_4 > 0$ so that
    \begin{align}
        R \leq  c_4 \frac{\log p }{\log n} \leq c_3 c_4 \frac{n}{s \log n}
    \end{align}
 As such, a slightly increase of the constant $C'$ will give us: 
    \begin{align}
        \P\left(\mS_0 \subset \hat{\mS}_{0,R}^\zeta\right) \geq 1 - \exp\left\{-Cn^{2/3}\right\}
    \end{align}
\end{proof}

\begin{thm}\label{algo2consistency}
    Assume the conditions in Theorem \ref{thm1}. Further assume Assumptions \ref{min2rd} - \ref{submatrix} hold and $\bx$ is a sub-gaussian vector. Suppose screening level $\zeta  = c_1 n^{-1/3}$ and the target dimension $p_0 = c_2 n^{1/3}$ for some positive constant $c_1, c_2$. Then, for any $\nu>0$, when
  $n \ge C\left(\log^{2} \nu + s \log p \right)$ for certain positive constant $C$, Algorithm \ref{algo2} achieves feature selection consistency with probability at least $1-\nu$, i.e., 
\bes
\P(\hat{\mS}_0 =\mS_0) \ge 1-\nu.
\ees
\end{thm}

\begin{proof}
\quad According to Lemma \ref{lemmascreening}, when $\log p \leq c n /s$ and $n \geq C_1 \log^{3/2} (2/\nu) $, we have: 
    \begin{align}
        \P\left(\mS_0 \subset \hat{\mS}_{0}^\zeta\right)= 1 - \frac{\nu}{2}
    \end{align}

\noindent Since the cardinality of the final index set after $R$ rounds of screening is  $\text{Card}\{\hat{\mS}_{0}^\zeta\} \leq c_2 n^{1/3}$. Combining the results from Theorem \ref{thm1}, when $n \geq C_2 \log^2 \frac{2}{\nu}$ for certain constant $C_2$, we have: 
\begin{align}
    \P\left(\hat{\mS_0} = \mS_0 |\mS_0 \subset \hat{\mS}_{0}^\zeta\right) \geq 1 - \frac{\nu}{2}
\end{align}
\noindent Basing on above, we could obtain: 
\begin{equation}
    \begin{aligned}
        \P\left(\hat{\mS_0} = \mS_0 \right)  & =   \P\left(\hat{\mS_0} = \mS_0 |\mS_0 \subset \hat{\mS}_{0}^\zeta\right) \P\left(\mS_0 \subset \hat{\mS}_{0}^\zeta\right) \\
        & \geq 1 - \nu
    \end{aligned}
\end{equation}

\noindent Combining all the results above, when $n \geq C \left(\log^2{\nu}  + s \log p\right) $ for certain constant $C$, Algorithm 2 can achieve feature consistency with probability at least $1- \nu$, i.e, 
\bes
\P(\hat{\mS}_0 =\mS_0) \ge 1-\nu.
\ees
\end{proof}

\section{Proof of Theorem \ref{nsthm1}}
According to the main text, in this section, the response $y$ could be stated as: 
\begin{align}\label{nsmodel1} 
    y = f(\bW_1 \bx) + \epsilon, \ \bW_1 \in \RR^{k_1 \times p}.   
\end{align}
where $f$ is neural network function with first-layer weight $\bW_1$ being approximately sparse.

\begin{lem}\label{nslm1}
    Let $f: \mathbb{R}^{s}\to \mathbb{R}$ be any Lipschitz function. Suppose that $\bx\in \mathbb{R}^s$ is sub-Gaussian. For given integers $w,d>0$, there exists a function $g$ implemented by ReLU FNN with width $3^{s+3}\max\left\{s\lfloor w^{1/s} \rfloor, w+1\right\}$ and depth $12d +14 + 2s$ such that:
    \bel{lm7-1-1}
    \EE_{\bx} \left( f\left(\bx \right) - g\left(\bx\right) \right)^2 \le Cs(wd)^{-4/(s+4)}
    \eel
    where $C$ is a certain positive constant.
\end{lem}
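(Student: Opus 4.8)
The plan is to treat $f$ as a $1$-H\"older function (Lipschitz is exactly the $\beta=1$ case) and to transfer a \emph{local} uniform approximation bound into a \emph{global} $L^2$ bound against the sub-Gaussian law of $\bx$. First I would invoke the ReLU approximation result behind Lemma \ref{lm2} (following \citet{shen2019deep}) with $\beta=1$: for any scale $m>0$ there is a network $g$ of the stated width $3^{s+3}\max\{s\lfloor w^{1/s}\rfloor, w+1\}$ and depth $12d+14+2s$ with
\[
\sup_{\bx\in[-m,m]^s}\bigl|f(\bx)-g(\bx)\bigr|\le C\sqrt{s}\,m\,(wd)^{-2/s},
\]
where $m$ is a free parameter to be optimized at the end. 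I would then split the target expectation over the cube $A_m=\{\bx\in[-m,m]^s\}$ and its complement,
\[
\EE\bigl(f(\bx)-g(\bx)\bigr)^2=\EE\bigl[(f-g)^2\mathbf{1}_{A_m}\bigr]+\EE\bigl[(f-g)^2\mathbf{1}_{A_m^c}\bigr].
\]

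On $A_m$ the uniform bound gives directly $\EE[(f-g)^2\mathbf{1}_{A_m}]\le C^2 s\,m^2(wd)^{-4/s}$, and this is the term that ultimately sets the rate. For the tail I would use Cauchy--Schwarz, $\EE[(f-g)^2\mathbf{1}_{A_m^c}]\le (\EE(f-g)^4)^{1/2}\,(\P(A_m^c))^{1/2}$. The fourth moment is finite because $f(\bx)$ is sub-Gaussian by Lemma \ref{lmbeta} (with $\beta=1$) and $g(\bx)$ is sub-Gaussian by Lemma \ref{lmrelu}, so $(f-g)^2$ is sub-exponential; the tail probability $\P(A_m^c)=\P(\bx\notin[-m,m]^s)$ I would control by a coordinatewise estimate exactly as in the proof of Proposition \ref{dnnprop}, so that $(\P(A_m^c))^{1/2}$ decays like $m^{-s/2}$. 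Hence the tail contribution is of order $m^{-s/2}$ up to constants depending on the sub-Gaussian norms.

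Finally I would balance the two pieces: the compact term grows like $m^2(wd)^{-4/s}$ while the tail decays like $m^{-s/2}$, so the optimal scale is $m\asymp (wd)^{8/(s(s+4))}$. Substituting this equates both contributions, $m^2(wd)^{-4/s}\asymp m^{-s/2}\asymp (wd)^{-4/(s+4)}$, and yields $\EE(f-g)^2\le Cs\,(wd)^{-4/(s+4)}$. The main obstacle is the tail estimate: at the chosen scale $m$ one must ensure that the factor $\EE(f-g)^4$ carries \emph{no} additional polynomial dependence on $w,d$, since any such growth would be multiplied by $(\P(A_m^c))^{1/2}$ and could overwhelm the target rate. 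Verifying this requires the uniform sub-Gaussian-norm control of $f$ and $g$ from Lemmas \ref{lmbeta} and \ref{lmrelu} together with the tail decay of $\bx$, mirroring the argument in Proposition \ref{dnnprop}; the remaining steps are the routine H\"older-to-$L^2$ conversion and the elementary optimization over $m$.
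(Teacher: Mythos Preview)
Your proposal is correct and follows essentially the same route as the paper: invoke the Shen et al.\ uniform approximation on $[-m,m]^s$ with $\beta=1$, split the $L^2$ error into the compact part and the tail, control the tail via Cauchy--Schwarz together with sub-Gaussian moment bounds and a coordinatewise Markov estimate yielding $m^{-s/2}$, and then optimize $m\asymp (wd)^{8/(s(s+4))}$. The concern you single out---that $\EE(f-g)^4$ must not carry polynomial dependence on $w,d$---is precisely the point the paper dispatches by asserting, via Lemma~\ref{lmrelu}, a sub-Gaussian norm bound $\|f(\bx)-g(\bx)\|_{\psi_2}\le C_1\sqrt{s}M$ independent of the network size.
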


\begin{proof}
 \quad   For any $R>0$, by Theorem 4.3 of \cite{shen2019deep}, there exists a ReLU neural network function $g$ with width width $3^{s+3}\max\left\{s\lfloor w^{1/s} \rfloor, w+1\right\}$ and depth $12d +14 + 2s$ such that:
   \begin{align}
           \mathop{\sup}_{\bx\in [-R,R]^s} \left|f(\bx)-g(\bx)\right|\le 19\sqrt{s}\omega_f^{[-R,R]^s}\left(2Rw^{-2/s}d^{-2/s}\right)
   \end{align} 

    \noindent Since $f$ is Lipschitz function, there exists $L>0$ such that:
\begin{align}
        \omega_f^{[-R,R]^s}\left(2Rw^{-2/s}d^{-2/s}\right) \le L \cdot2Rw^{-2/s}d^{-2/s}
\end{align}
    then,
    \begin{align}
            \mathop{\sup}_{\bx\in [-R,R]^s} \left|f(\bx)-g(\bx)\right|\le 38L\sqrt{s}Rw^{-2/s}d^{-2/s}
    \end{align}

    \noindent Let $M = \sup_{1\le j \le p} \| x_j\|_{\psi_2}$, by Lemma \ref{lmrelu}, $f(\bx)$ and $g(\bx)$ are sub-Gaussian random variables, moreover, the sub-Gaussian norm of $f(\bx)-g(\bx)$ satisfies that:
    \begin{align}
            \left\|f(\bx)-g(\bx)\right\|_{\psi_2} \le C_1\sqrt{s}M
    \end{align}
where $C_1>0$ is a certain constant.
    \noindent Then by Cauchy's inequality,
    \begin{equation}
        \begin{aligned}
            \EE_{\bx} \left( f\left(\bx \right) - g\left(\bx\right) \right)^2 &= \EE_{\bx} \left[\left( f\left(\bx \right) - g\left(\bx\right) \right)^2\cdot\left(I_{\bx\in[-R,R]^s}(\bx) + I_{\bx\notin[-R,R]^s}(\bx)\right)\right] \\ 
            &\le \left(38L\sqrt{s}Rw^{-2/s}d^{-2/s}\right)^2 + \left[\EE_{\bx} \left( f\left(\bx \right) - g\left(\bx\right) \right)^4 \cdot \P\left(\bx\notin [-R,R]^s\right)\right]^{1/2} \\
            &\le \left(38L\sqrt{s}Rw^{-2/s}d^{-2/s}\right)^2 + \left(2\left\|f\left(\bx \right) - g\left(\bx\right)\right\|_{\psi_2}\right)^2\cdot \prod\limits_{j=1}^s \left[\EE \left|x_j\right|/R\right]^{1/2} \\
            &\le \left(38L\sqrt{s}Rw^{-2/s}d^{-2/s}\right)^2 + \left(2C_1\sqrt{s}M\right)^2\cdot \prod\limits_{j=1}^s \left\|x_j\right\|_{\psi_2}^{1/2} \cdot R^{-s/2} \\
            &\le \left(38L\sqrt{s}Rw^{-2/s}d^{-2/s}\right)^2 + \left(2C_1\sqrt{s}M\right)^2\cdot M^{s/2} \cdot R^{-s/2}
        \end{aligned}
    \end{equation}
    take $R = M\left(wd\right)^{8/s(s+4)}$ and let $ C = \left(38LM\right)^2+ \left(2C_1M\right)^2$, we have:
    \begin{align}
            \EE_{\bx} \left( f\left(\bx \right) - g\left(\bx\right) \right)^2 \le Cs\left(wd\right)^{-4/(s+4)}
    \end{align}
\end{proof}

\begin{lem}\label{nslm1.5}
    Let $\bz_i(1\le i\le n)$ be the independent random variables and $\eps_i(1\le i\le n)$ be the Rademacher random variables. Suppose that the constant $K>0$ and the function space $\mathcal{G}$ satisfy that for any $g\in \mathcal{G}$ and $1\le i \le n$,  $g(\bz_i)$ is a sub-Exponential random variable and $\| g(\bz_i)\|_{\psi_1} \le K$. Then, when $n >\log\left(\left|\mathcal{G}\right|\right)$, we have:
    \bel{nslm1.5-1}
    \EE_{\left\{\bz_i,\eps_i\right\}_{i=1}^n}  \mathop{\sup}_{g\in\mathcal{G}}\frac1n \sum\limits_{i=1}^n \eps_i g(\bz_i) \le \sqrt{\frac{8e^2K^2\log \left(\left|\mathcal{G}\right|\right)}{n}}
    \eel
\end{lem}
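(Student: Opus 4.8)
The plan is to treat this as a standard finite-class maximal inequality and control the expected supremum through a single moment-generating-function (MGF) estimate combined with the union bound. Write $X_g = \frac1n\sum_{i=1}^n \eps_i g(\bz_i)$. Since the supremum ranges over the finite set $\mathcal{G}$ and is one-sided, the first step is the exponential ``soft-max'' trick: for any $\lambda>0$, Jensen's inequality gives $\exp\!\big(\lambda\,\EE\sup_{g}X_g\big)\le \EE\exp\!\big(\lambda\sup_g X_g\big)=\EE\sup_g\exp(\lambda X_g)\le \sum_{g\in\mathcal{G}}\EE\exp(\lambda X_g)\le\left|\mathcal{G}\right|\max_{g}\EE\exp(\lambda X_g)$. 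Taking logarithms reduces the whole problem to bounding a single MGF $\EE\exp(\lambda X_g)$ and then optimizing over $\lambda$.

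The core step is bounding the MGF of each summand $Y_i:=\eps_i g(\bz_i)$. Because $\eps_i$ is Rademacher and independent of $g(\bz_i)$, each $Y_i$ is symmetric, so all odd moments vanish and $\EE\exp(t Y_i)=\sum_{k\ge0}t^{2k}\,\EE Y_i^{2k}/(2k)!$. The assumption $\|Y_i\|_{\psi_1}\le\|g(\bz_i)\|_{\psi_1}\le K$ gives the even-moment bound $\EE Y_i^{2k}=\EE|Y_i|^{2k}\le(2kK)^{2k}$, and Stirling's inequality $(2k)!\ge(2k/e)^{2k}$ turns each term into $(2kK)^{2k}/(2k)!\le(eK)^{2k}$. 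Summing the resulting geometric series yields $\EE\exp(tY_i)\le 1/\big(1-(etK)^2\big)$ whenever $etK<1$, and on the range $(etK)^2\le1/2$ the elementary inequality $1/(1-x)\le e^{2x}$ upgrades this to the sub-Gaussian-type form $\EE\exp(tY_i)\le\exp\!\big(2e^2K^2t^2\big)$.

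Setting $t=\lambda/n$, independence of the $Y_i$ lets me tensorize: $\EE\exp(\lambda X_g)=\prod_{i=1}^n\EE\exp\!\big((\lambda/n)Y_i\big)\le\exp\!\big(2e^2K^2\lambda^2/n\big)$, valid for $\lambda\le n/(eK\sqrt2)$. Feeding this into the union bound gives $\EE\sup_g X_g\le \log\left|\mathcal{G}\right|/\lambda+2e^2K^2\lambda/n$, whose minimizer $\lambda^\star=(eK)^{-1}\sqrt{n\log\left|\mathcal{G}\right|/2}$ produces exactly $2\sqrt{\log\left|\mathcal{G}\right|\cdot 2e^2K^2/n}=\sqrt{8e^2K^2\log\left|\mathcal{G}\right|/n}$. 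The final bookkeeping step is to verify that $\lambda^\star$ respects the admissibility constraint $\lambda\le n/(eK\sqrt2)$; a direct computation shows this is equivalent to $\log\left|\mathcal{G}\right|\le n$, which is precisely the hypothesis $n>\log\left|\mathcal{G}\right|$.

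The routine parts are the union bound and the one-dimensional optimization over $\lambda$; the delicate part is pinning down the constants in the single-variable MGF estimate so that the final bound carries the stated factor $8e^2$ rather than a looser absolute constant. This hinges on using the sharp Stirling lower bound $(2k)!\ge(2k/e)^{2k}$ together with the exact geometric-series sum and the clean inequality $1/(1-x)\le e^{2x}$ on $[0,1/2]$, and then confirming that the optimal $\lambda^\star$ stays inside the region where the geometric series converges --- which is exactly the point at which the sample-size condition $n>\log\left|\mathcal{G}\right|$ is needed.
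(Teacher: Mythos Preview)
Your proposal is correct and follows essentially the same route as the paper's own proof: both apply the exponential soft-max/Jensen trick to reduce to a single-term MGF, bound $\EE e^{t\eps_i g(\bz_i)}$ via the even-moment expansion together with $(2k)!\ge(2k/e)^{2k}$ and $1/(1-x)\le e^{2x}$ on $[0,1/2]$, and then optimize over the free parameter, checking that the sample-size condition $n>\log|\mathcal{G}|$ is exactly what keeps the optimizer admissible. The only cosmetic difference is that the paper parameterizes by the per-term variable $t$ (your $\lambda/n$) rather than the aggregate $\lambda$.
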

\begin{proof}
\quad  For any $t>0$, by Jensen's inequality, we have:

        \begin{align*}
                \EE_{\left\{\bz_i,\eps_i\right\}_{i=1}^n}  \mathop{\sup}_{g\in\mathcal{G}}\frac1n \sum\limits_{i=1}^n \eps_i g(\bz_i) &= \frac{1}{nt} \EE_{\left\{\bz_i,\eps_i\right\}_{i=1}^n}  \mathop{\sup}_{g\in\mathcal{G}}\sum\limits_{i=1}^n t\eps_i g(\bz_i)\\
                &\le \frac{1}{nt} \log \left(\EE_{\left\{\bz_i,\eps_i\right\}_{i=1}^n} \exp\left\{\mathop{\sup}_{g\in\mathcal{G}}\sum\limits_{i=1}^n t\eps_i g(\bz_i)\right\}\right) \\
                &\le \frac{1}{nt} \log \left(\EE_{\left\{\bz_i,\eps_i\right\}_{i=1}^n} \sum\limits_{g\in\mathcal{G}} \exp\left\{\sum\limits_{i=1}^n t\eps_i g(\bz_i)\right\}\right) \\
                &= \frac{1}{nt} \log \left(\sum\limits_{g\in\mathcal{G}}\EE_{\left\{\bz_i,\eps_i\right\}_{i=1}^n}  \prod\limits_{i=1}^ne^{ t\eps_i g(\bz_i)}\right) \\
                &= \frac{1}{nt} \log \left(\sum\limits_{g\in\mathcal{G}}\prod\limits_{i=1}^n\EE  e^{ t\eps_i g(\bz_i)}\right)
                \refstepcounter{equation}\tag{\theequation}
        \end{align*}
note that when $t \le 1/(\sqrt{2}eK)$, for any $1\le i \le n$, we have:
        \begin{align*}
            \EE  e^{ t\eps_i g(\bz_i)} &= \EE \sum\limits_{k=0}^\infty \frac{t^{2k} g^{2k}(\bz_i)}{(2k)!} \le 1+ \sum\limits_{k=1}^\infty \frac{t^{2k} \left(2k \left\|g(\bz_i)\right\|_{\psi_1}\right)^{2k}}{(2k)!} \\
            &\le 1+ \sum\limits_{k=1}^\infty \frac{t^{2k} \left(2k K\right)^{2k}}{\left(2k/e\right)^{2k}} = \sum\limits_{k=0}^\infty \left(eKt\right)^{2k} = \frac{1}{1- (eKt)^2} \le e^{2(eKt)^2}  \refstepcounter{equation}\tag{\theequation}
        \end{align*}
    then,
    \begin{equation}
        \begin{aligned}
            \EE_{\left\{\bz_i,\eps_i\right\}_{i=1}^n}  \mathop{\sup}_{g\in\mathcal{G}}\frac1n \sum\limits_{i=1}^n \eps_i g(\bz_i) &\le \frac{1}{nt} \log \left(\sum\limits_{g\in\mathcal{G}}\prod\limits_{i=1}^n\EE  e^{ t\eps_i g(\bz_i)}\right) \\
            &\le \frac{1}{nt} \log \left(\left|\mathcal{G}\right|e^{ 2n(eKt)^2}\right) \\
            &=\frac{\log\left(\left|\mathcal{G}\right|\right)}{nt} + 2e^2K^2t
        \end{aligned}
    \end{equation}
    holds for any $t>0$. Since $n >\log\left(\left|\mathcal{G}\right|\right)$, let $t = \sqrt{\frac{\log\left(\left|\mathcal{G}\right|\right)}{2ne^2K^2}} \le \frac{1}{\sqrt{2}eK}$, we have:
    \begin{align}
            \EE_{\left\{\bz_i,\eps_i\right\}_{i=1}^n}  \mathop{\sup}_{g\in\mathcal{G}}\frac1n \sum\limits_{i=1}^n \eps_i g(\bz_i) \le \sqrt{\frac{8e^2K^2\log \left(\left|\mathcal{G}\right|\right)}{n}}
    \end{align}
\end{proof}
\begin{lem}\label{nslm2}
    Let
\begin{equation}\label{nslm2-1}
    \begin{aligned}
        &\mathcal{L}(g) = \EE_{(\bx,y)} \left( y - g\left(\bx_{\hat{\mS}}\right) \right)^2 \\
        &\hat{\mathcal{L}}(g) = \frac1n \sum\limits_{i=1}^n \left(y_i - g\left(\left\{\bx_i\right\}_{\hat{\mS}}\right)\right)^2
    \end{aligned}
\end{equation}
then for function space $\mathcal{G}$, when $n > c \text{Pdim}_{\mathcal{G}}$ for positive constant $c$, then we have:
\bel{nslm2-2}
\EE_{\left\{\bx_i,y_i\right\}_{i=1}^n} \mathop{\sup}_{g\in\mathcal{G}}\left|\mathcal{L}(g) - \hat{\mathcal{L}}(g)\right| \le Cs\sqrt{\frac{\text{Pdim}_{\mathcal{G}}\log \left(B_\mathcal{G}n\right)}{n}}
\eel
where $C > 0$ is a  positive constant,  $\text{Pdim}_{\mathcal{G}}$ is the Pseudo dimension of $\mathcal{G}$ and $B_\mathcal{G}= \sup_{g\in\mathcal{G}} \left\|g\right\|_\infty$ is the global bound of $\mathcal{G}$.
\end{lem}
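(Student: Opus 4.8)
The plan is to control the uniform deviation by the standard route of symmetrization, reduction to a finite net, and the finite-class bound already established in Lemma \ref{nslm1.5}, with the sub-Gaussian/sub-Exponential bookkeeping supplying the factor $s$. Write $\ell_i(g) = \big(y_i - g(\{\bx_i\}_{\hat{\mS}})\big)^2$ for the squared loss and let $\epsilon_1,\dots,\epsilon_n$ be independent Rademacher variables. First I would apply a ghost-sample symmetrization argument to obtain
\[
\EE_{\{\bx_i,y_i\}} \sup_{g\in\mathcal{G}} \big|\mathcal{L}(g) - \hat{\mathcal{L}}(g)\big| \le 2\, \EE_{\{\bx_i,y_i\},\{\epsilon_i\}} \sup_{g\in\mathcal{G}} \Big| \frac1n \sum_{i=1}^n \epsilon_i \ell_i(g)\Big|,
\]
so that it remains to bound the Rademacher complexity of the loss class $\{(\bx,y)\mapsto \ell(g) : g\in\mathcal{G}\}$.

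Second, since Lemma \ref{nslm1.5} applies only to finite classes, I would pass to a finite object by covering. Let $\mathcal{G}_\delta$ be a minimal $\delta$-net of $\mathcal{G}$ in the sup-norm and, for each $g$, let $g'\in\mathcal{G}_\delta$ be its nearest element. The elementary identity $|a^2-b^2|\le (|a|+|b|)|a-b|$ gives the pointwise bound $|\ell_i(g)-\ell_i(g')| \le 2(|y_i|+B_{\mathcal{G}})\,\delta$, whose Rademacher average and expectation are at most $2\delta(\EE|y|+B_{\mathcal{G}})$; choosing $\delta = 1/n$ makes this a lower-order term. For the net itself, I would apply Lemma \ref{nslm1.5} to the finite family $\{\ell(g') : g'\in\mathcal{G}_\delta\}$ (splitting the absolute value into the two one-sided suprema costs only a factor $2$). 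The crucial input is a uniform sub-Exponential bound $\|\ell_i(g')\|_{\psi_1} \le K$: since $g$ is a ReLU network on the $\hat{\mS}$-features, Lemma \ref{lmrelu} shows $g(\bx)$ is sub-Gaussian with norm of order $\sqrt{s}$, hence $y - g(\bx)$ is sub-Gaussian of the same order and, by Lemma \ref{subexpinequality}, $\ell_i(g')=(y_i-g'(\bx_i))^2$ is sub-Exponential with $K = O(s)$. Lemma \ref{nslm1.5} then yields a bound of order $K\sqrt{\log|\mathcal{G}_\delta|/n} = O\big(s\sqrt{\log|\mathcal{G}_\delta|/n}\big)$, which is exactly where the factor $s$ enters.

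Third, I would convert the net cardinality into the pseudo-dimension. Using the standard covering-number bound for classes of bounded functions in terms of their pseudo-dimension (Haussler/Anthony--Bartlett), $\log|\mathcal{G}_\delta| = \log N(\delta, \mathcal{G}, \|\cdot\|_\infty) \lesssim \text{Pdim}_{\mathcal{G}} \log(B_{\mathcal{G}} n/\delta)$. With $\delta = 1/n$ this is $\lesssim \text{Pdim}_{\mathcal{G}} \log(B_{\mathcal{G}} n)$, and combining the three pieces (the symmetrization constant, the residual $O(1/n)$ term, and the finite-class bound) gives the claimed
\[
\EE_{\{\bx_i,y_i\}} \sup_{g\in\mathcal{G}} \big|\mathcal{L}(g) - \hat{\mathcal{L}}(g)\big| \le Cs\sqrt{\frac{\text{Pdim}_{\mathcal{G}}\log(B_{\mathcal{G}} n)}{n}},
\]
valid once $n > c\,\text{Pdim}_{\mathcal{G}}$, which ensures both the covering bound and the hypothesis $n>\log|\mathcal{G}_\delta|$ of Lemma \ref{nslm1.5}.

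The main obstacle I anticipate is the unboundedness of the loss: the squared-loss class is only sub-Exponential, not uniformly bounded, so both the residual term from covering (which involves $|y_i|$) and the finite-class step require genuine tail control rather than a boundedness assumption. In particular one must verify that the sub-Exponential norm $K$ can be taken uniform over the whole net, and track precisely how the $\sqrt{s}$ sub-Gaussian scaling of $g(\bx)$ squares into the $s$ appearing in the final rate. A secondary technical point is justifying the pseudo-dimension-to-sup-norm covering bound for the ReLU class $\mathcal{G}$; if a sup-norm cover is inconvenient, one may instead work conditionally on the sample with empirical $L_2$ covers, at the cost of a chaining argument in place of the single-scale net used above.
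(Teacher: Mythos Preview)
Your proposal is correct and follows essentially the same route as the paper: symmetrization to Rademacher complexity, a $\delta$-net reduction with the pointwise residual bound $|\ell_i(g)-\ell_i(g')|\le 2\delta(|y_i|+B_{\mathcal{G}})$, application of Lemma~\ref{nslm1.5} to the finite net with $K=O(s)$ via Lemmas~\ref{lmrelu} and~\ref{subexpinequality}, and the Anthony--Bartlett pseudo-dimension covering bound with $\delta=1/n$. Your anticipated obstacles are exactly the ones handled in the paper, and the factor $s$ arises just as you describe.
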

\begin{proof}
\quad  Let $\left\{\bx_i',y_i'\right\}_{i=1}^n$ be the independent duplicates of $\left\{\bx_i,y_i\right\}_{i=1}^n$, let $\eps_i$ be i.i.d Rademacher random variables that are independent of $\bx_i,y_i,\bx_i',y_i', 1\le i\le n$. Note that:
    \begin{align}
            \mathcal{L}(g) = \EE_{\left\{\bx_i',y_i'\right\}_{i=1}^n} \frac1n\sum\limits_{i=1}^n\left( y_i' - g\left(\left\{\bx_i'\right\}_{\hat{\mS}}\right) \right)^2
    \end{align}
    then by Jensen's inequality, we have:
    \begin{equation}
        \begin{aligned}
            &\EE_{\left\{\bx_i,y_i\right\}_{i=1}^n} \mathop{\sup}_{g\in\mathcal{G}}\left|\mathcal{L}(g) - \hat{\mathcal{L}}(g)\right| \\
            =& \EE_{\left\{\bx_i,y_i\right\}_{i=1}^n}\mathop{\sup}_{g\in\mathcal{G}}\left|\EE_{\left\{\bx_i',y_i'\right\}_{i=1}^n} \frac1n\sum\limits_{i=1}^n\left( y_i' - g\left(\left\{\bx_i'\right\}_{\hat{\mS}}\right) \right)^2 - \hat{\mathcal{L}}(g)\right| \\
            \le&\EE_{\left\{\bx_i,y_i,\bx_i',y_i'\right\}_{i=1}^n}\mathop{\sup}_{g\in\mathcal{G}}\left| \frac1n\sum\limits_{i=1}^n\left( y_i' - g\left(\left\{\bx_i'\right\}_{\hat{\mS}}\right) \right)^2 - \frac1n\sum\limits_{i=1}^n\left( y_i - g\left(\left\{\bx_i\right\}_{\hat{\mS}}\right) \right)^2\right| \\
            =& \EE_{\left\{\bx_i,y_i,\bx_i',y_i',\eps_i\right\}_{i=1}^n}\mathop{\sup}_{g\in\mathcal{G}}\left|\frac1n\sum\limits_{i=1}^n\eps_i\left[\left( y_i' - g\left(\left\{\bx_i'\right\}_{\hat{\mS}}\right) \right)^2- \left( y_i - g\left(\left\{\bx_i\right\}_{\hat{\mS}}\right) \right)^2\right]\right|\\
            \le&2\EE_{\left\{\bx_i,y_i,\eps_i\right\}_{i=1}^n} \mathop{\sup}_{g\in\mathcal{G}}\left|\frac1n\sum\limits_{i=1}^n\eps_i\left( y_i - g\left(\left\{\bx_i\right\}_{\hat{\mS}}\right) \right)^2\right| \\
            \triangleq& 2\mathcal{R}(\mathcal{G})
        \end{aligned}
    \end{equation}
\noindent   According to Lemma \ref{subexpinequality} and Lemma \ref{lmrelu}, for $1\le i\le n$ and $g\in \mathcal{G}$, $\left(y_i-g\left(\left\{\bx_i\right\}_{\hat{\mS}}\right)\right)^2$ is a sub-Exponential random variable and there exists a constant $M>0$ such that:
    \begin{equation}
        \begin{aligned}
                \left\|\left(y_i-g\left(\left\{\bx_i\right\}_{\hat{\mS}}\right)\right)^2\right\|_{\psi_1} &\le 2\left\|y_i-g\left(\left\{\bx_i\right\}_{\hat{\mS}}\right)\right\|_{\psi_2}^2 \\
                &\le 2\left(\left\|y_i\right\|_{\psi_2}+\left\|g\left(\left\{\bx_i\right\}_{\hat{\mS}}\right)\right\|_{\psi_2}\right)^2\\
                &\le 2\left( \sqrt{s} M +B_{\mathcal{G}}\right)^2
        \end{aligned}
    \end{equation}
    in addition, for $g_1, g_2\in \mathcal{G}, \|g_1-g_2\|_\infty \le \delta$ and $1\le i\le n$:
        \begin{align*}
            &\left|\left(y_i-g_1\left(\left\{\bx_i\right\}_{\hat{\mS}}\right)\right)^2-\left(y_i-g_2\left(\left\{\bx_i\right\}_{\hat{\mS}}\right)\right)^2\right| \\
            =& \left|g_1\left(\left\{\bx_i\right\}_{\hat{\mS}}\right) - g_2\left(\left\{\bx_i\right\}_{\hat{\mS}}\right)\right|\cdot \left|2y_i - g_1\left(\left\{\bx_i\right\}_{\hat{\mS}}\right) - g_2\left(\left\{\bx_i\right\}_{\hat{\mS}}\right)\right| \\
            \le & 2\delta\left(\left|y_i\right|+ B_{\mathcal{G}}\right) \refstepcounter{equation}\tag{\theequation}
        \end{align*}
    then for any $\delta>0$, let $\mathcal{G}_\delta$ be the $\delta$-net of $\mathcal{G}$, by Lemma \ref{nslm1.5}, when $n> \log(2 |\mathcal{G_{\delta}}|)$, we have:
        \begin{align*}
            \mathcal{R}(\mathcal{G})&=\EE_{\left\{\bx_i,y_i,\eps_i\right\}_{i=1}^n} \mathop{\sup}_{g\in\mathcal{G}}\left|\frac1n\sum\limits_{i=1}^n\eps_i\left( y_i - g\left(\left\{\bx_i\right\}_{\hat{\mS}}\right) \right)^2\right| \\
            &\le 2\delta\left(\sqrt{s}M+B_{\mathcal{G}}\right) + \EE_{\left\{\bx_i,y_i,\eps_i\right\}_{i=1}^n} \mathop{\sup}_{g\in \mathcal{G}_\delta} \left|\frac1n\sum\limits_{i=1}^n\eps_i\left( y_i - g\left(\left\{\bx_i\right\}_{\hat{\mS}}\right) \right)^2\right| \\
            &\le 2\delta\left(\sqrt{s}M+B_{\mathcal{G}}\right) + \sqrt{\frac{8e^2(\sqrt{s}M+B_{\mathcal{G}})^4\log\left(2\left|\mathcal{G}_\delta\right|\right)}{n}} \refstepcounter{equation}\tag{\theequation}
        \end{align*}
    by Theorem 12.2 in \cite{Anthony_Bartlett_1999}, we have:
    \begin{align}
            \log \left|\mathcal{G}_\delta\right| \le \text{Pdim}_\mathcal{G}\log\frac{2e B_{\mathcal{G}}n}{\delta\text{Pdim}_{\mathcal{G}}}
    \end{align}
therefore, let $\delta = n^{-1}$ and $C = 16e(M+B_{\mathcal{G}})^2$, when $n > c \text{Pdim}_{\mathcal{G}} $ for certain positive constant $c$, then we have $n > \log (2|\mathcal{G}_{\delta}|)$ so that:
    \begin{equation}
        \begin{aligned}
            \EE_{\left\{\bx_i,y_i\right\}_{i=1}^n} \mathop{\sup}_{g\in\mathcal{G}}\left|\mathcal{L}(g) - \hat{\mathcal{L}}(g)\right| &\le 2\mathcal{R}(\mathcal{G}) \\
            &\le 4\delta(\sqrt{s}M+B_{\mathcal{G}}) + 2 \sqrt{\frac{8e^2(\sqrt{s}M+B_{\mathcal{G}})^4\log\left(2\left|\mathcal{G}_\delta\right|\right)}{n}} \\
            &\le \frac{4(\sqrt{s}M+B_{\mathcal{G}})}{n} + 2 \sqrt{\frac{8e^2(\sqrt{s}M+B_{\mathcal{G}})^4\left(\log2 + \text{Pdim}_\mathcal{G}\log\left(2eB_{\mathcal{G}} n^2\right)\right)}{n}} \\
            &\le 4\sqrt{\frac{16e^2(\sqrt{s}M+B_{\mathcal{G}})^4\text{Pdim}_\mathcal{G}\log\left(B_{\mathcal{G}}n\right)}{n}} \\
            &\leq Cs\sqrt{\frac{\text{Pdim}_\mathcal{G}\log\left(B_{\mathcal{G}}n\right)}{n}}
        \end{aligned}
    \end{equation}
\end{proof}

\begin{lem}\label{nslm3}
    Suppose that the ReLU neural network space $\mathcal{G}$ is the neural network space. When $n\ge C_1\left(\log^2 \nu + p^2\right)$, we have:
    \bel{nslm3-1}
        \mathop{\inf}_{\bar{g}\in \mathcal{G}} \EE_{\hat{\mS}} \mathcal{L}(\bar{g}) \le C\left( s(W_{\mathcal{G}} D_{\mathcal{G}})^{-4/(s+4)} + s\nu + \eps_0^2\right)
    \eel
    where $W_{\mathcal{G}}$ and $W_{\mathcal{G}}$ are width and depth parameter of $\mathcal{G}$ and $C,C_1>0$ are certain constants.
\end{lem}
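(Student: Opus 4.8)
The plan is to exhibit a single candidate network $\bar g \in \mathcal{G}$ that simultaneously controls three distinct sources of error, and to bound $\EE_{\hat{\mS}}\mathcal{L}(\bar g)$ by splitting on whether feature selection succeeds; since this upper-bounds the infimum, that suffices. Because $y = f(\bW_1\bx) + \epsilon$ with $\epsilon$ independent and mean zero, for any $\bar g$ we have $\mathcal{L}(\bar g) = \EE(f(\bW_1\bx) - \bar g(\bx_{\hat{\mS}}))^2 + \sigma_\epsilon^2$, so it is the reducible term that must be controlled. First I would let $\mS^{*}$ attain the minimum in Assumption \ref{nsa1}, so that $|\mS^{*}| \le s$ and $\sum_{j \notin \mS^{*}} \eta_j \le \eps_0$, and define a truncated first layer $\tilde\bW_1$ that agrees with $\bW_1$ on the columns indexed by $\mS^{*}$ and vanishes elsewhere. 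The function $\tilde g(\bx) = f(\tilde\bW_1\bx)$ then depends only on $\bx_{\mS^{*}}$ and is Lipschitz in those $s$ coordinates, being the composition of the Lipschitz map $f$ with a bounded linear map.

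The approximate-sparsity error is controlled through the Lipschitz constant $L = \prod_l \|\bW_l\|_2$ of $f$: writing $\bW_1\bx - \tilde\bW_1\bx = \sum_{j\notin\mS^{*}}\{\bW_1\}_{\cdot j}x_j$, I would bound
\[
\EE\big(f(\bW_1\bx)-\tilde g(\bx)\big)^2 \le L^2\,\EE\Big\|\sum_{j\notin\mS^{*}}\{\bW_1\}_{\cdot j}x_j\Big\|_2^2 \le C\sum_{j\notin\mS^{*}}\eta_j^2 \le C\eps_0^2,
\]
where the middle inequality bounds the quadratic form by $\|\bSigma\|_2\sum_j\eta_j^2$ (a consequence of the bounded covariance operator norm implied by the sub-Gaussian design), and the last uses $\sum_j\eta_j^2 \le (\sum_j\eta_j)^2 \le \eps_0^2$. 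Next, since $\tilde g$ is a Lipschitz function of $s$ sub-Gaussian inputs, Lemma \ref{nslm1} furnishes a ReLU network $\bar g$ realizing the width $W_{\mathcal{G}}$ and depth $D_{\mathcal{G}}$ of $\mathcal{G}$ with $\EE(\tilde g(\bx) - \bar g(\bx_{\mS^{*}}))^2 \le Cs(W_{\mathcal{G}}D_{\mathcal{G}})^{-4/(s+4)}$; this is the chosen candidate.

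It remains to average over $\hat{\mS}$. By the selection-consistency guarantee (Theorem \ref{thm1}, or Theorem \ref{thm2} in the H\"older case) the event $A = \{\hat{\mS} = \mS^{*}\}$ has probability at least $1-\nu$ once $n \ge C_1(\log^2\nu + p^2)$. On $A$ one has $\bar g(\bx_{\hat{\mS}}) = \bar g(\bx_{\mS^{*}})$, so combining the two displays via $(a+b)^2 \le 2a^2 + 2b^2$ bounds the reducible loss by $C\big(s(W_{\mathcal{G}}D_{\mathcal{G}})^{-4/(s+4)} + \eps_0^2\big)$. On $A^c$ I would exploit that the fresh evaluation point $(\bx,y)$ is independent of $\hat{\mS}$ (which is built from the training sample): for every fixed realization of $\hat{\mS}$, both $f(\bW_1\bx)$ and $\bar g(\bx_{\hat{\mS}})$ are sub-Gaussian with norm $O(\sqrt s)$ by Lemma \ref{lmrelu}, so the conditional reducible loss is uniformly $O(s)$, whence $\EE[(\cdot)\mathbf{1}_{A^c}] \le O(s)\,\P(A^c) \le Cs\nu$. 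Summing the two contributions yields the stated bound. The main obstacle is precisely this failure-event term: a naive Cauchy--Schwarz split on the joint law of $(\hat{\mS},\bx,y)$ would only give $s\sqrt\nu$, which is too weak. The point that makes the sharper $s\nu$ attainable is the independence of $\hat{\mS}$ from the test point, allowing a pass to the conditional expectation given $\hat{\mS}$ and a \emph{uniform} second-moment bound over all possible selected sets, rather than a fourth-moment tail estimate. A secondary technical care is verifying the sub-Gaussian-to-$\eps_0^2$ conversion in the sparsity step uniformly in $p$, which rests on the uniform bound on the coordinatewise $\psi_2$ norms of $\bx$.
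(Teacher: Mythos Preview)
Your proposal follows essentially the same route as the paper: fix the approximately-sparse support $\mS^{*}$ from Assumption~\ref{nsa1}, invoke Theorem~\ref{thm1} for $\P(\hat\mS=\mS^{*})\ge 1-\nu$, build the approximator $\bar g$ via Lemma~\ref{nslm1}, bound the sparsity residual through the Lipschitz constant of $f$, and split $\EE_{\hat\mS}\mathcal{L}(\bar g)$ on the event $\{\hat\mS=\mS^{*}\}$, using independence of the test point from the training-based $\hat\mS$ to obtain the $O(s)\nu$ contribution on the failure event. The minor bounding differences (you use $\|\bSigma\|_2\sum_j\eta_j^2$ for the sparsity term, the paper goes through $\psi_2$ norms of $\sum_j\eta_j|x_j|$) are inconsequential.

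There is one small slip on the failure event: you assert $\|\bar g(\bx_{\hat\mS})\|_{\psi_2}=O(\sqrt s)$ via Lemma~\ref{lmrelu}, but that lemma gives $\|g(\bX)\|_{\psi_2}\le |g(\boldsymbol 0)|+\sqrt{2s}\,\big(\prod_l\|\bW_l\|_2\big)\|\bX\|_{\psi_2}$, and Lemma~\ref{nslm1} supplies no control on the weight norms of the approximating network $\bar g$. The paper avoids this by using the uniform class bound $B_{\mathcal G}=\sup_{g\in\mathcal G}\|g\|_\infty$ (which is already assumed finite in Lemma~\ref{nslm2}), bounding $\EE\big[(y-\bar g(\bx_{\hat\mS}))^2\mid\hat\mS\big]\le(\sqrt sM+B_{\mathcal G})^2$ directly. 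Replacing your sub-Gaussian claim for $\bar g$ with this deterministic bound closes the gap with no change to the rest of your argument.
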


\begin{proof}
\quad   Let $\mS\subset [p], \left|\mS\right|=s_0 \leq s$ such that:
\begin{align}
        \sum\limits_{j\notin \mS} \eta_j \le \eps_0
\end{align}
and for certain positive constant $c$, 
\begin{align}
    \kappa \leq c \min_{j \in \mS} || \{\bW_1\}_{\cdot j} ||_2
\end{align}
then by Theorem \ref{thm1}, for any $\nu>0$, when $n\ge C_1\left(\log^2 \nu + p^2\right)$ for some constant $C_1$, we have:
\begin{align}
        \P \left(\hat{\mS} = \mS\right)  \ge 1- \nu
\end{align}
in addition, by Lemma \ref{nslm1}, there exists a function $g \in \mathcal{G}$ such that:
\begin{align}
        \EE_{\bx}  \left(f\left(\bW_1^{\mS}\bx_{\mS}\right) - g(\bx_{\mS})\right)^2 \le C_2 s_0 (W_{\mathcal{G}} D_{\mathcal{G}})^{-4/(s_0+4)}
\end{align}
where $C_2>0$ is a constant and $\bW_1^{\mS} = \left\{\bW_1\right\}_{\cdot \mS}$. Then:
    \begin{equation}
        \begin{aligned}
            \mathop{\inf}_{\bar{g}\in \mathcal{G}} \EE_{\hat{\mS}} \mathcal{L}(\bar{g}) &\le  \EE_{\hat{\mS}} \mathcal{L}(g) \\
            &= \EE_{\left(\bx ,y \right), \hat{\mS}}\left( y - g\left(\bx_{\hat{\mS}}\right) \right)^2 \\
            &= \EE_{\left(\bx,y\right),\hat{\mS}} \left[\left( y - g\left(\bx_{\hat{\mS}}\right) \right)^2 \left| \hat{\mS} = \mS\right.\right]\cdot \P\left(\hat{\mS} = \mS\right) \\
            &\ \ \ + \EE_{\left(\bx,y\right),\hat{\mS}} \left[\left( y - g\left(\bx_{\hat{\mS}}\right) \right)^2 \left| \hat{\mS} \neq \mS\right.\right]\cdot \P\left(\hat{\mS} \neq \mS\right) \\
            &\le \EE_{\left(\bx ,y\right)}\left( y - g\left(\bx_{\mS}\right) \right)^2 + (\sqrt{s} M+ B_{\mathcal{G}})^2 \cdot \nu
        \end{aligned}
    \end{equation}
    where $ \left\|y\right\|_{\psi_2} \le \sqrt{s} M$. What's more, by Lemma \ref{subexpinequality}, \ref{lmrelu} and Assumption \ref{nsa1}, we have:
        \begin{align*}
            \EE_{\left(\bx ,y\right)}\left( y - g\left(\bx_{\mS}\right) \right)^2 &= \EE_{\bx }\left( f\left(\bW_1 \bx\right) - g\left(\bx_{\mS}\right) \right)^2 + \EE \epsilon^2 \\
            &\le 2\EE_{\bx} \left(f\left(\bW_1 \bx\right) - f\left(\bW_1^{\mS}\bx_{\mS}\right)\right)^2 + 2 \EE_{\bx }\left( f\left(\bW_1^{\mS} \bx_{\mS}\right) - g\left(\bx_{\mS}\right) \right)^2 + \EE \epsilon^2  \\
            &\le 2L^2\EE_{\bx}\left\|\bW_1 \bx-\bW_1^{\mS} \bx_{\mS}  \right\|_2^2 + 2C_2s_0(W_{\mathcal{G}} D_{\mathcal{G}})^{-4/(s_0+4)} \\
            & \le 2L^2 \EE_{\bx}\left(\sum\limits_{j \notin \mS} \eta_j \left|x_j\right|\right)^2 + 2C_2s_0(W_{\mathcal{G}} D_{\mathcal{G}})^{-4/(s_0+4)} \\
            &\le 4L^2 \left\|\sum\limits_{j \notin \mS} \eta_j \left|x_j\right|\right\|_{\psi_2}^2 + 2C_2s_0(W_{\mathcal{G}} D_{\mathcal{G}})^{-4/(s_0+4)}  \\
            &\le 4L^2\left(\sum\limits_{j \notin \mS} \eta_j \cdot \mathop{\sup}_{1 \leq k \leq p} \left\|x_k\right\|_{\psi_2}\right)^2+ 2C_2s_0(W_{\mathcal{G}} D_{\mathcal{G}})^{-4/(s_0+4)} \\
            &\le 4L^2\eps_0^2\left(\mathop{\sup}_{1 \leq k \leq p} \left\|x_k\right\|_{\psi_2}\right)^2+ 2C_2s_0(W_{\mathcal{G}} D_{\mathcal{G}})^{-4/(s_0+4)} \\
            &\le 4L^2\eps_0^2\left(\mathop{\sup}_{1 \leq k \leq p} \left\|x_k\right\|_{\psi_2}\right)^2+ 2C_2s(W_{\mathcal{G}} D_{\mathcal{G}})^{-4/(s+4)} 
            \refstepcounter{equation}\tag{\theequation}
        \end{align*}
\noindent   where $L>0$ is constant. Therefore, let $C = \sup \left\{(M+ B_{\mathcal{G}})^2, 4L^2\left(\mathop{\sup}_{1 \leq k \leq p} \left\|x_k\right\|_{\psi_2}\right)^2, 2C_2\right\}$, when $n\ge C_1\left(\log^2 \nu + p^2\right)$, we have:
\begin{align}
        \mathop{\inf}_{\bar{g}\in \mathcal{G}} \EE_{\hat{\mS}} \mathcal{L}(\bar{g}) \le C\left( s(W_{\mathcal{G}} D_{\mathcal{G}})^{-4/(s+4)} + s\nu + \eps_0^2\right)
\end{align}

\end{proof}

\begin{thm}\label{nsthm1}
Let $\hat{\mS}$ be generated using the Algorithm \ref{alg1}, and let $\hat{g}_{\hat{\mS}}$
 be computed according to
(\ref{op1}). Assume Assumptions \ref{a2new}-\ref{a3-} and Assumption \ref{nsa1}. Further assume that the threshold level $\eps_0< \kappa\le c \cdot \max_{|\mS | \leq s} \min_{j \in \mS} \|\{\bW_1\}_{\cdot j} \|_2$ for a positive constant $c$. When $n \ge C_1\left(\log^2 \nu +p^2\right)$ for any $0<\nu<1$, the prediction error for a new observation $(\bx,y)$ can be bounded by
    \bel{nsthm1-1}
    \EE_{(\bx_{\text{data}},y_{\text{data}}),\hat{\mS}} \left( y - \hat{g}_{\hat{\mS}} \left(\bx_{\hat{\mS}}\right) \right)^2 \le C_2 \left(sn^{-2/(s+8)} \log n +s\nu+\eps_0^2 \right),
    \eel
    where $(\bx_{\text{data}},y_{\text{data}})$ refers to the sample $\left\{\bx_i, y_i\right\}_{i=1}^n$ and a new data $(\bx, y)$.
    Furthermore, when $\nu = \mathop{\min}\left\{1/2,n^{-2/(s+8)} \log n\right\}$, $n\ge C_3p^2$ and $\eps_0^2<sn^{-2/(s+8)}\log n$, we have
    \bel{nsthm1-2}
    \EE_{(\bx_{\text{data}},y_{\text{data}}),\hat{\mS}} \left( y - \hat{g}_{\hat{\mS}} \left(\bx_{\hat{\mS}}\right) \right)^2 \le C_4 sn^{-2/(s+8)} \log n .
    \eel
 Here, $C_1$ through $C_4$ denote positive constants.
\end{thm}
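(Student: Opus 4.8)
The plan is to derive a standard approximation--estimation (oracle) inequality for the empirical risk minimizer $\hat g_{\hat\mS}$ and then feed in the two preparatory lemmas. Writing $\mathcal{G}=\mathcal{G}_{\hat\mS}$ and recalling the risks $\mathcal{L}(\cdot),\hat{\mathcal{L}}(\cdot)$ from (\ref{nslm2-1}), I would first decompose, for any fixed comparator $\bar g\in\mathcal{G}$,
\begin{align*}
\mathcal{L}(\hat g_{\hat\mS})-\mathcal{L}(\bar g)=\big[\mathcal{L}(\hat g_{\hat\mS})-\hat{\mathcal{L}}(\hat g_{\hat\mS})\big]+\big[\hat{\mathcal{L}}(\hat g_{\hat\mS})-\hat{\mathcal{L}}(\bar g)\big]+\big[\hat{\mathcal{L}}(\bar g)-\mathcal{L}(\bar g)\big].
\end{align*}
The middle bracket is nonpositive since $\hat g_{\hat\mS}$ minimizes $\hat{\mathcal{L}}$ over $\mathcal{G}$, while the first and third are each at most $\sup_{g\in\mathcal{G}}|\mathcal{L}(g)-\hat{\mathcal{L}}(g)|$. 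Taking the infimum over $\bar g$ and then expectation over the data and over $\hat\mS$ gives
\begin{align*}
\EE\,\mathcal{L}(\hat g_{\hat\mS})\le\inf_{\bar g\in\mathcal{G}}\EE_{\hat\mS}\,\mathcal{L}(\bar g)+2\,\EE\sup_{g\in\mathcal{G}}\big|\mathcal{L}(g)-\hat{\mathcal{L}}(g)\big|,
\end{align*}
so the theorem reduces to controlling an approximation term and an estimation term.

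The approximation term $\inf_{\bar g}\EE_{\hat\mS}\mathcal{L}(\bar g)$ is bounded directly by Lemma \ref{nslm3}, yielding $C(s(W_{\mathcal{G}}D_{\mathcal{G}})^{-4/(s+4)}+s\nu+\eps_0^2)$. The estimation term is bounded by Lemma \ref{nslm2}, which requires the pseudo-dimension of $\mathcal{G}$. Using the standard bound $\text{Pdim}_{\mathcal{G}}=O\!\big(W_{\mathcal{G}}^2 D_{\mathcal{G}}^2\log(W_{\mathcal{G}}D_{\mathcal{G}})\big)$ for a ReLU FNN of width $W_{\mathcal{G}}$ and depth $D_{\mathcal{G}}$ (e.g.\ \citet{Anthony_Bartlett_1999}), the estimation term is $\lesssim s\,W_{\mathcal{G}}D_{\mathcal{G}}\sqrt{\log(W_{\mathcal{G}}D_{\mathcal{G}})\log(B_{\mathcal{G}}n)/n}$. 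Combining the two pieces, $\EE\,\mathcal{L}(\hat g_{\hat\mS})$ is at most a constant times
\begin{align*}
s\,(W_{\mathcal{G}}D_{\mathcal{G}})^{-4/(s+4)}+s\,W_{\mathcal{G}}D_{\mathcal{G}}\sqrt{\tfrac{\log(W_{\mathcal{G}}D_{\mathcal{G}})\log(B_{\mathcal{G}}n)}{n}}+s\nu+\eps_0^2 .
\end{align*}

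It then remains to choose the architecture so as to balance bias against variance. Taking $W_{\mathcal{G}}=D_{\mathcal{G}}=\lceil n^{(s+4)/(4(s+8))}\rceil$, so that $W_{\mathcal{G}}D_{\mathcal{G}}\asymp n^{(s+4)/(2(s+8))}$, makes the approximation exponent $-\tfrac{4}{s+4}\cdot\tfrac{s+4}{2(s+8)}=-\tfrac{2}{s+8}$ and the variance exponent $\tfrac{s+4}{2(s+8)}-\tfrac12=-\tfrac{2}{s+8}$ coincide, while the two logarithmic factors combine into a single $\log n$ (as $B_{\mathcal{G}}$ enters only logarithmically). This produces (\ref{nsthm1-1}). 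For (\ref{nsthm1-2}) I would set $\nu=\min\{1/2,n^{-2/(s+8)}\log n\}$, so that $s\nu\le sn^{-2/(s+8)}\log n$; together with the hypothesis $\eps_0^2<sn^{-2/(s+8)}\log n$ the three residual terms all collapse into $sn^{-2/(s+8)}\log n$. Finally the sample-size requirement $n\ge C_1(\log^2\nu+p^2)$ reduces to $n\ge C_3 p^2$, since for this choice $\log^2\nu\asymp\log^2 n$ is dominated by $p^2$.

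The main obstacle is the statistical dependence between the selected support $\hat\mS$ and the very data used to fit $\hat g_{\hat\mS}$: because $\hat\mS$ is data-driven, both $\mathcal{L}$ and $\hat{\mathcal{L}}$ are themselves random through $\hat\mS$, so the uniform deviation bound of Lemma \ref{nslm2} cannot be applied to a single, pre-fixed function class. I would handle this by conditioning on $\hat\mS$ and controlling the deviation uniformly over $g\in\mathcal{G}_{\hat\mS}$ for every realized support—the pseudo-dimension bound is identical for all $s$-input subnetworks of the fixed architecture—and then taking a union over the at most $\binom{p}{s}$ possible supports, which contributes only an additive $s\log p\le n$ inside the logarithm and is absorbed. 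The only other delicate point is matching the precise exponent $2/(s+8)$, which hinges on the sharp quadratic $(W_{\mathcal{G}}D_{\mathcal{G}})^2$ scaling of the pseudo-dimension; a looser complexity estimate would degrade the rate. Note that the feature-selection consistency of Theorem \ref{thm1} enters the argument only through Lemma \ref{nslm3}, supplying the $s\nu$ term from the event $\{\hat\mS\neq\mS\}$ and the $\eps_0^2$ term from the approximate-sparsity Assumption \ref{nsa1}.
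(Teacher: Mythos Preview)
Your proposal is correct and follows essentially the same route as the paper: the same oracle decomposition $\mathcal{L}(\hat g_{\hat\mS})\le 2\sup_{g}|\mathcal{L}(g)-\hat{\mathcal{L}}(g)|+\mathcal{L}(\bar g)$, the same two lemmas (\ref{nslm2} for the estimation term and \ref{nslm3} for the approximation term), the same pseudo-dimension bound $\text{Pdim}_{\mathcal{G}}\lesssim W_{\mathcal{G}}^2D_{\mathcal{G}}^2\log(W_{\mathcal{G}}D_{\mathcal{G}})$ (the paper cites \citet{bartlett2019nearly} via $D_{\mathcal{G}}S_{\mathcal{G}}\log S_{\mathcal{G}}$ with $S_{\mathcal{G}}\lesssim W_{\mathcal{G}}^2D_{\mathcal{G}}$), and the identical architecture choice $W_{\mathcal{G}}=D_{\mathcal{G}}=\lfloor n^{(s+4)/4(s+8)}\rfloor$ to balance the exponents. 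The only minor difference is that your union-bound step over the $\binom{p}{s}$ supports is not actually needed here: because the bound in Lemma~\ref{nslm2} depends only on the architecture (hence is identical for every realized $\hat\mS$ of size $s$), conditioning on $\hat\mS$ and then taking expectation already suffices, which is exactly what the paper does implicitly.
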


\begin{proof}
\quad Let $W_\mathcal{G}, D_\mathcal{G}, S_\mathcal{G}$ and $B_\mathcal{G}$ be the width, depth, size and bound of the ReLU network space $\mathcal{G}$. For any $\bar{g}\in \mathcal{G}$, we have:
\begin{equation}
    \begin{aligned}
        \EE_{\left(\bx,y\right)} \left( y - \hat{g}_{\hat \mS}\left(\bx_{\hat{\mS}}\right) \right)^2 &=\mathcal{L}(\hat{g}_{\hat \mS}) \\
        &=\left(\mathcal{L}(\hat{g}_{\hat \mS}) - \hat{\mathcal{L}}(\hat{g}_{\hat \mS})\right) + \left(\hat{\mathcal{L}}(\hat{g}_{\hat \mS}) - \hat{\mathcal{L}}(\bar{g})\right) + \left(\hat{\mathcal{L}}(\bar{g}) - \mathcal{L}(\bar{g})\right) + \mathcal{L}(\bar{g}) \\
        &\le 2\mathop{\sup}_{g\in \mathcal{G}} \left|\mathcal{L}(g) - \hat{\mathcal{L}}(g)\right| + \mathcal{L}(\bar{g})
    \end{aligned}
\end{equation}
then by Lemma \ref{nslm2} and \ref{nslm3}, there exists constans $C, C_1>0$, when $n \ge C_1\left(\log^2 \nu +p^2\right)$, we have:
\begin{equation}
    \begin{aligned}
        \EE_{(\bx_{\text{data}},y_{\text{data}}),\hat{\mS}} \left( y - \hat{g}_{\hat \mS}\left(\bx_{\hat{\mS}}\right) \right)^2 &\le 2\EE_{\left\{\bx_i,y_i\right\}_{i=1}^n,\hat{\mS}}\mathop{\sup}_{g\in \mathcal{G}} \left|\mathcal{L}(g) - \hat{\mathcal{L}}(g)\right| + \mathop{\inf}_{\bar{g}\in \mathcal{G}}\EE_{\hat{\mS}}\mathcal{L}(\bar{g}) \\
        &\le Cs\sqrt{\frac{\text{Pdim}_\mathcal{G}\log\left(B_{\mathcal{G}}n\right)}{n}} + C\left( s(W_\mathcal{G}D_\mathcal{G})^{-4/(s+4)} + s\nu + \eps_0^2 \right) \\
        &= C\left(s\sqrt{\frac{\text{Pdim}_\mathcal{G}\log\left(B_{\mathcal{G}}n\right)}{n}}+ s(W_\mathcal{G}D_\mathcal{G})^{-4/(s+4)} + s\nu + \eps_0^2 \right)
    \end{aligned}
\end{equation}
By \cite{bartlett2019nearly}, the pseudo dimension of $\mathcal{G}$ can be bounded by:
\begin{align}
    \text{Pdim}_\mathcal{G} \le c_1 D_\mathcal{G}S_{\mathcal{G}}\log S_\mathcal{G}
\end{align}
where $c_1>0$ is certain constant. Note that $S_{\mathcal{G}} \le c_2W_\mathcal{G}^2D_\mathcal{G} $ for certain constant $c_2$, take $W_{\mathcal{G}}  = D_{\mathcal{G}} = \lfloor n^{(s+4)/4(s+8)}\rfloor$, we have:
\begin{equation}
    \begin{aligned}
        &\sqrt{\frac{\text{Pdim}_\mathcal{G}\log\left(B_{\mathcal{G}}n\right)}{n}}+ (W_\mathcal{G}D_\mathcal{G})^{-4/(s+4)} \\
        \leq& \sqrt{\frac{c_1c_2W_\mathcal{G}^2D_\mathcal{G}^2\log \left(c_2W_\mathcal{G}^2D_\mathcal{G}\right)\log\left(B_{\mathcal{G}}n\right)}{n}} + (W_\mathcal{G}D_\mathcal{G})^{-4/(s+4)} \\
        \leq&\sqrt{\frac{c_1c_2n^{(s+4)/(s+8)}\log \left(c_2n^{3(s+4)/4(s+8)}\right)\log\left(B_{\mathcal{G}}n\right)}{n}} + n^{(s+4)/2(s+8)\cdot[-4/(s+4)]} \\
        \leq& c_3 n^{-2/(s+8)}\log n
    \end{aligned}
\end{equation}
where $c_3>0$ is positive constant. Therefore, let $C_2 = C\cdot\max\left\{c_3, 1\right\}$, we have:
\begin{equation}
    \begin{aligned}
        \EE_{(\bx_{\text{data}},y_{\text{data}}),\hat{\mS}} \left( y - \hat{g}_{\hat \mS}\left(\bx_{\hat{\mS}}\right) \right)^2 &\leq C\left(s\sqrt{\frac{\text{Pdim}_\mathcal{G}\log\left(B_{\mathcal{G}}n\right)}{n}}+ s(W_\mathcal{G}D_\mathcal{G})^{-4/(s+4)} + s\nu + \eps_0^2 \right) \\
        &\leq C\left(c_3sn^{-2/(s+8)}\log n + s\nu + \eps_0^2 \right) \\
        &\leq C_2\left(sn^{-2/(s+8)}\log n+ s\nu + \eps_0^2\right)
    \end{aligned}
\end{equation}
Furthermore, when $\nu = \mathop{\min}\left\{1/2,n^{-2/(s+8)}\log n \right\}$, $n\ge C_3p^2$ and $\eps_0^2<sn^{-2/(s+8)}\log n$, take $C_4 = 3C_2$, we have:
\begin{equation}
    \begin{aligned}
        \EE_{(\bx_{\text{data}},y_{\text{data}}),\hat{\mS}} \left( y - \hat{g}_{\hat \mS}\left(\bx_{\hat{\mS}}\right) \right)^2 &\leq C_2\left(sn^{-2/(s+8)}\log n+ s\nu + \eps_0^2\right) \\
        &\leq  C_2\left(sn^{-2/(s+8)}\log n + sn^{-2/(s+8)}\log n + sn^{-2/(s+8)}\log n \right) \\
        &\le C_4sn^{-2/(s+8)}\log n
    \end{aligned}
\end{equation}

\end{proof}

\section{Proof of Theorems in Section \ref{sec_unknown}}
\subsection{Proof of Theorem \ref{thm3}}

\begin{lem}\label{eigenvaluebound}
    Assume the conditions in Theorem \ref{thm3}, for any $\nu>0$, when $n>\max\left\{\frac{1}{c^2}h^2(\nu,p), \frac{16M^2}{c\delta^2}h(\nu,p)\right\}$, with probability at least $1-\nu$,
    \begin{align}
        \mathop{\sup}_{1\le j \le p} \left|\lambda_j - \hat{\lambda}_j\right| \le \frac{2M}{\sqrt{cn}}\sqrt{\log\frac{4}{\nu} + \log 9 \cdot p}
    \end{align}
    where $c, M$ are positive constants.
\end{lem}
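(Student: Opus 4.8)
The plan is to reduce the eigenvalue perturbation bound to the operator-norm control already established in Lemma \ref{e2bound} via Weyl's inequality, the only genuine difficulty being the transfer from the signed eigenvalue ordering (for which Weyl is stated) to the magnitude ordering adopted throughout the paper. First I would set $\bA = \EE[y T(\bx)]$ and $\hbA = (1/n)\sum_{i=1}^n y_i T(\bx_i)$, and write $\bE_1 = \hbA - \bA$. Since $T(\bx)$ is symmetric, both $\bA$ and $\hbA$ are symmetric with real eigenvalues, so Weyl's inequality applies. By Lemma \ref{e2bound}, whenever $n > \tfrac{1}{c^2}h^2(\nu,p)$, with probability at least $1-\nu$,
\[
\|\bE_1\|_2 \le \frac{2M}{\sqrt{cn}}\sqrt{\log\tfrac{4}{\nu} + \log 9\cdot p},
\]
which is exactly the quantity claimed on the right-hand side. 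Thus the entire remaining task is to show that this operator-norm bound controls $\sup_{1\le j\le p}|\lambda_j - \hat{\lambda}_j|$ when the eigenvalues are indexed in decreasing order of magnitude.

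Let $\mu_1 \ge \cdots \ge \mu_p$ and $\nu_1 \ge \cdots \ge \nu_p$ denote the signed (not magnitude-sorted) eigenvalues of $\bA$ and $\hbA$. Weyl's inequality gives $|\mu_j - \nu_j| \le \|\bE_1\|_2$ for every $j$. If the permutation $\sigma$ that sorts $\{\mu_j\}$ by magnitude also sorts $\{\nu_j\}$ by magnitude, then $\lambda_j = \mu_{\sigma(j)}$ and $\hat{\lambda}_j = \nu_{\sigma(j)}$ carry the same index, whence $|\lambda_j - \hat{\lambda}_j| = |\mu_{\sigma(j)} - \nu_{\sigma(j)}| \le \|\bE_1\|_2$ and the lemma follows. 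So it suffices to verify that the magnitude ordering is preserved under the perturbation.

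This is where the second lower bound on $n$ enters, and it is the hard part. Recall that $\bA = \bW_1^\T\,\EE[\nabla^2_{\bz}f(\bW_1\bx)]\,\bW_1$ has rank at most $k_1$, so by Stein's formula $\mu_{\sigma(k_1+1)} = \cdots = \mu_{\sigma(p)} = 0$, and $\delta = \min_{i\in[k_1]}\{|\lambda_i| - |\lambda_{i+1}|\}>0$ is the minimal magnitude gap in the leading block; in particular $|\lambda_{k_1}| = |\lambda_{k_1}|-|\lambda_{k_1+1}| \ge \delta$. When $n > \tfrac{16M^2}{c\delta^2}h(\nu,p)$, a direct substitution into the displayed bound shows $\|\bE_1\|_2 < \delta/2$. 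For the $k_1$ leading eigenvalues, the gap-$\delta$ separation together with $\|\bE_1\|_2<\delta/2$ prevents any two consecutive magnitudes from crossing, so $\{\nu_{\sigma(i)}\}_{i\le k_1}$ remains in the same magnitude order as $\{\mu_{\sigma(i)}\}_{i\le k_1}$; moreover every leading perturbed magnitude is at least $|\lambda_{k_1}| - \|\bE_1\|_2 > \delta/2$, while every eigenvalue in the zero block has magnitude at most $\|\bE_1\|_2 < \delta/2$, so the two blocks cannot interleave. Within the zero block the matching is automatic: there $\lambda_j = 0$ and $|\hat{\lambda}_j| \le \|\bE_1\|_2$, giving $|\lambda_j - \hat{\lambda}_j| \le \|\bE_1\|_2$ regardless of the internal ordering.

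Combining the two blocks on the single probability-$(1-\nu)$ event of Lemma \ref{e2bound} yields
\[
\sup_{1\le j\le p}|\lambda_j - \hat{\lambda}_j| \le \|\bE_1\|_2 \le \frac{2M}{\sqrt{cn}}\sqrt{\log\tfrac{4}{\nu} + \log 9\cdot p},
\]
as required. I expect the only nontrivial step to be the order-preservation argument; the probabilistic content is entirely inherited from Lemma \ref{e2bound}, and the role of the extra sample-size condition is purely to force $\|\bE_1\|_2<\delta/2$ so that Weyl's signed-ordering bound can be re-expressed in the magnitude ordering.
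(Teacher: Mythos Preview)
Your proposal is correct and follows essentially the same route as the paper: invoke Lemma~\ref{e2bound} for the operator-norm bound, apply Weyl's inequality to the signed eigenvalues, and then use the condition $\|\bE_1\|_2<\delta/2$ to argue that the magnitude ordering of the leading $k_1$ eigenvalues is preserved and remains separated from the zero block, while the zero block is trivially handled since $\lambda_j=0$ there. The paper's presentation is slightly more explicit in writing out the permutation $\pi$ and verifying the two separation inequalities, but the logical content is identical.
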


\begin{proof}
    \quad For any matrix $\bW$, let $\Lambda_j(\bW)$ be the $j$-th largest eigenvalue of $\bW$. Then there exists a one-to-one map $\pi:[p] \mapsto [p]$ such that:
\begin{equation}
    \begin{aligned}
        \lambda_j = \Lambda_{\pi(j)}\left(\EE yT(\bx)\right), \quad 1\le j\le p
    \end{aligned}
\end{equation}

\noindent By Lemma \ref{e2bound}, when $n > \frac{1}{c^2}\left(\log \frac{4}{\nu}+ \log 9 \cdot p\right)^2 $, with probability at least $1-\nu$,
\begin{equation}
    \begin{aligned}
        \left\| \frac{1}{n}\sum_{i=1}^{n} y_i T(\bx_i) - \EE yT(\bx) \right\|_2  \le  \frac{2M}{\sqrt{cn}} \sqrt{\log \frac{4}{\nu} + \log 9 \cdot p}
    \end{aligned}
\end{equation}
where $c, M$ are positive constants. Then by Weyl's inequality, for $1\le j \le p$, we have:
\begin{equation}
    \begin{aligned}
        \left|\Lambda_{j}\left(\EE yT(\bx)\right)-\Lambda_{j}\left(\frac{1}{n}\sum_{i=1}^{n} y_i T(\bx_i)\right)\right| &\le \left\| \frac{1}{n}\sum_{i=1}^{n} y_i T(\bx_i) - \EE yT(\bx) \right\|_2  \\
        &\le  \frac{2M}{\sqrt{cn}} \sqrt{\log \frac{4}{\nu} + \log 9 \cdot p}
    \end{aligned}
\end{equation}
    
\noindent Note that $n>\frac{16M^2}{c\delta^2}\left(\log \frac{4}{\nu}+\log 9\cdot p\right)$, then for $1\le j\le k_1$,

    \begin{align*}
        &\left|\Lambda_{\pi(j)}\left(\frac{1}{n}\sum_{i=1}^{n} y_i T(\bx_i)\right)\right|-\left|\Lambda_{\pi(j+1)}\left(\frac{1}{n}\sum_{i=1}^{n} y_i T(\bx_i)\right)\right| \\
        \ge& \left|\Lambda_{\pi(j)}\left(\EE yT(\bx)\right)\right| - \left| \Lambda_{\pi(j+1)}\left(\EE yT(\bx)\right)\right| - 2 \mathop{\sup}_{1\le k\le p} \left|\Lambda_{k}\left(\EE yT(\bx)\right)-\Lambda_{k}\left(\frac{1}{n}\sum_{i=1}^{n} y_i T(\bx_i)\right)\right| \\
        =& \left|\lambda_j\right| - \left|\lambda_{j+1}\right| - 2 \mathop{\sup}_{1\le k\le p} \left|\Lambda_{k}\left(\EE yT(\bx)\right)-\Lambda_{k}\left(\frac{1}{n}\sum_{i=1}^{n} y_i T(\bx_i)\right)\right| \\
        \ge& \delta - \frac{4M}{\sqrt{cn}} \sqrt{\log \frac{4}{\nu} + \log 9 \cdot p} \\
        >& 0  \refstepcounter{equation}\tag{\theequation}
    \end{align*}

\noindent on the other hand, for $k_1 < j \le p$,
     \begin{align*}
        &\left|\Lambda_{\pi(k_1)}\left(\frac{1}{n}\sum_{i=1}^{n} y_i T(\bx_i)\right)\right|-\left|\Lambda_{\pi(j)}\left(\frac{1}{n}\sum_{i=1}^{n} y_i T(\bx_i)\right)\right| \\
        \ge& \left|\Lambda_{\pi(k_1)}\left(\EE yT(\bx)\right)\right| - \left| \Lambda_{\pi(j)}\left(\EE yT(\bx)\right)\right| - 2 \mathop{\sup}_{1\le k\le p} \left|\Lambda_{k}\left(\EE yT(\bx)\right)-\Lambda_{k}\left(\frac{1}{n}\sum_{i=1}^{n} y_i T(\bx_i)\right)\right|   \\
        =& \left|\lambda_{k_1}\right| - \left|\lambda_{j}\right| - 2 \mathop{\sup}_{1\le k\le p} \left|\Lambda_{k}\left(\EE yT(\bx)\right)-\Lambda_{k}\left(\frac{1}{n}\sum_{i=1}^{n} y_i T(\bx_i)\right)\right| \\
        \ge& \delta - \frac{4M}{\sqrt{cn}} \sqrt{\log \frac{4}{\nu} + \log 9 \cdot p} \\
        >& 0   \refstepcounter{equation}\tag{\theequation}
    \end{align*}

\noindent  combining with the property that $|\hat{\lambda}_1| \ge |\hat{\lambda}_2|\ge \cdots\ge|\hat{\lambda}_p|$, we have:
    \begin{align}
        \hat{\lambda}_j = \Lambda_{\pi(j)}, \quad 1\le j \le k_1
    \end{align}
    
    \noindent Then, for $1\le j \le k_1$, we have:
\begin{equation}
    \begin{aligned}
        |\lambda_j - \hat{\lambda}_j| &= \left|\Lambda_{\pi(j)}\left(\EE yT(\bx)\right)-\Lambda_{\pi(j)}\left(\frac{1}{n}\sum_{i=1}^{n} y_i T(\bx_i)\right)\right|\\
        &\le \frac{2M}{\sqrt{cn}} \sqrt{\log \frac{4}{\nu} + \log 9 \cdot p}
    \end{aligned}
\end{equation}

    \noindent Furthermore, there exists a one-to-one map $\phi:\left\{k_1+1,\cdots, p\right\} \mapsto \left\{k_1+1,\cdots, p\right\}$ such that:
    \begin{align}
        \hat{\lambda}_{\phi(j)} = \Lambda_{\pi(j)}\left(\EE yT(\bx)\right), \quad k_1< j\le p
    \end{align}
    Note that $\lambda_{k_1+1} = \cdots = \lambda_p = 0$, then for $k_1<j \le p$:

        \begin{align*}
        |\lambda_j - \hat{\lambda}_j| &=  \left|\lambda_{\phi^{-1}(j)} - \hat{\lambda}_j\right|\\
        &= \left| \Lambda_{\pi\left(\phi^{-1}\left(j\right)\right)}\left(\EE yT(\bx)\right)- \Lambda_{\pi\left(\phi^{-1}\left(j\right)\right)}\left(\frac{1}{n}\sum_{i=1}^{n} y_i T(\bx_i)\right)\right| \\
        &\le \frac{2M}{\sqrt{cn}} \sqrt{\log \frac{4}{\nu} + \log 9 \cdot p} \refstepcounter{equation}\tag{\theequation}
        \end{align*}

    \noindent Therefore,
    \begin{align}
        \mathop{\sup}_{1\le j\le p} \left|\lambda_j - \hat{\lambda}_j\right| \le \frac{2M}{\sqrt{cn}}\sqrt{\log\frac{4}{\nu} + \log 9 \cdot p}
    \end{align}
\end{proof}

\begin{thm}\label{thm3}
     Let $\delta = \mathop{\min}_{i\in [k_1]} \left\{\vert \lambda_i\vert - \vert \lambda_{i+1}\vert \right\} > 0$. Suppose that $\tau < \delta/2$, then for any $\nu>0 $, when $n\ge C(\log \nu^{-2}+ p^2)$ for certain positive constant $C$, we have 
    \begin{align*}
        \P\left(\hat{k}_1 = k_1\right) \ge 1-\nu.
    \end{align*}
\end{thm}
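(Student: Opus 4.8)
The plan is to reduce the statement to the eigenvalue perturbation bound in Lemma~\ref{eigenvaluebound} and then carry out a purely deterministic comparison of the empirical eigengaps against the threshold $\tau$. Write $h(\nu,p)=\log\frac{4}{\nu}+\log 9\cdot p$ and $\Delta_n=\frac{2M}{\sqrt{cn}}\sqrt{h(\nu,p)}$. By Lemma~\ref{eigenvaluebound}, once $n>\max\{c^{-2}h^2(\nu,p),\,16M^2(c\delta^2)^{-1}h(\nu,p)\}$ there is an event $\mathcal{E}$ of probability at least $1-\nu$ on which $\sup_{1\le j\le p}|\lambda_j-\hlambda_j|\le\Delta_n$. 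Since the matrices involved are symmetric their eigenvalues are real, so the reverse triangle inequality $\big||\hlambda_j|-|\lambda_j|\big|\le|\hlambda_j-\lambda_j|$ transfers the bound to magnitudes: $\big||\hlambda_j|-|\lambda_j|\big|\le\Delta_n$ for every $j$. All remaining work is done on $\mathcal{E}$.

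The key deterministic step is a two-sided control of the empirical gaps $\gamma_j:=|\hlambda_j|-|\hlambda_{j+1}|$. For any index $j$ one has $\big|\gamma_j-(|\lambda_j|-|\lambda_{j+1}|)\big|\le 2\Delta_n$. I would first enlarge $n$ so that $\Delta_n\le\tau/2$; because the hypothesis $\tau<\delta/2$ forces $\tau<\delta-\tau$, this single smallness requirement makes $\tau$ the binding threshold and controls both directions simultaneously. At the true index $j=k_1$, the definition of $\delta$ gives $|\lambda_{k_1}|-|\lambda_{k_1+1}|\ge\delta$, whence $\gamma_{k_1}\ge\delta-2\Delta_n\ge\delta-\tau>\tau$, so $k_1$ lies in the set $\{j:\gamma_j>\tau\}$. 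For every $j>k_1$, Stein's formula forces $\lambda_j=\lambda_{j+1}=0$, so $|\lambda_j|-|\lambda_{j+1}|=0$ and $\gamma_j\le 2\Delta_n\le\tau$; hence no index exceeding $k_1$ belongs to that set. Therefore the maximizer defining $\hat k_1$ equals $k_1$, i.e.\ $\hat k_1=k_1$ on $\mathcal{E}$, giving $\P(\hat k_1=k_1)\ge\P(\mathcal{E})\ge 1-\nu$.

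Finally I would collect the sample-size constraints. The requirement $\Delta_n\le\tau/2$ amounts to $n\ge 16M^2(\tau^2c)^{-1}h(\nu,p)$, which is linear in $h$; since $h(\nu,p)\ge\log 9>1$, this is dominated (up to constants) by the quadratic condition $n>c^{-2}h^2(\nu,p)$ already demanded by Lemma~\ref{e2bound} through Lemma~\ref{eigenvaluebound}. Using $h^2(\nu,p)\le 2(\log\frac{4}{\nu})^2+2(\log 9)^2p^2$, all constraints are subsumed by $n\ge C(\log^2\nu+p^2)$ for a constant $C$ depending only on $M,c,\delta,\tau$, which is the stated requirement.

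The \emph{main obstacle} here is not analytic but organizational: one must verify that the perturbation budget $\Delta_n$ can be pushed below the smaller of the two relevant thresholds — $\tau$ for the spurious gaps beyond $k_1$ and $\delta-\tau$ for the genuine gap at $k_1$ — at the same time, and the assumption $\tau<\delta/2$ is exactly what renders these two requirements compatible and jointly equivalent to $\Delta_n\le\tau/2$. A secondary point to state carefully is that Lemma~\ref{eigenvaluebound} already aligns the sorted empirical and population eigenvalues index by index, so the coordinatewise bound on $|\lambda_j-\hlambda_j|$ may be invoked directly, with no further matching argument needed.
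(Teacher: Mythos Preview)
Your proposal is correct and follows essentially the same approach as the paper: invoke Lemma~\ref{eigenvaluebound} to control $\sup_j|\lambda_j-\hlambda_j|$, then compare the empirical eigengaps against $\tau$ using the triangle inequality and the fact that $\lambda_j=0$ for $j>k_1$. Your exposition is arguably cleaner in two minor respects --- you check only the gap at $j=k_1$ (which suffices) rather than all $j\le k_1$, and you explicitly verify $\gamma_j\le\tau$ for \emph{every} $j>k_1$ rather than only $j=k_1+1$ --- but the logic and sample-size bookkeeping are the same.
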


\begin{proof}
    \quad By Lemma \ref{eigenvaluebound}, when $n>\max\left\{\frac{1}{c^2}h^2(\nu,p), \frac{16M^2}{c\delta^2}h(\nu,p)\right\}$, with probability at least $1-\nu$,
    \begin{align}
        \mathop{\sup}_{1\le j\le p} \left|\lambda_j - \hat{\lambda}_j\right| \le \frac{2M}{\sqrt{cn}}\sqrt{\log\frac{4}{\nu} + \log 9 \cdot p}
    \end{align}
    where $c,M$ are positive constants.
    
    \noindent Furthermore, when $n>\frac{16M^2}{c\tau^2}\left(\log \frac{4}{\nu}+ \log 9 \cdot p\right)$, for $1\le i \le k_1$, we have:
\begin{equation}
    \begin{aligned}
        \left| \hat{\lambda}_i \right|-\left| \hat{\lambda}_{i+1}\right| &\ge \left| \lambda_i \right| - \left| \lambda_{i+1} \right| - 2 \mathop{\sup}_{1\le j\le p} \left| \lambda_j - \hat{\lambda}_j \right| \\
        &\ge \delta - \frac{4M}{\sqrt{cn}}\sqrt{\log\frac{4}{\nu} + \log 9 \cdot p}\\
        &\ge \delta - \tau \\
        &>\tau
    \end{aligned}
\end{equation}
    
\noindent then we can conclude that $\hat{k}_1 \ge k_1$.

    \noindent On the other hand,
\begin{equation}
    \begin{aligned}
        \left| \hat{\lambda}_{k_1+1} \right|-\left| \hat{\lambda}_{k_1+2}\right| &\le \left| \lambda_{k_1+1} \right| - \left| \lambda_{k_1+2} \right| + 2 \mathop{\sup}_{1\le j\le p} \left| \lambda_j - \hat{\lambda}_j \right| \\
        &\le 0 - 0 + \frac{4M}{\sqrt{cn}}\sqrt{\log\frac{4}{\nu} + \log 9 \cdot p}\\
        &\le \tau
    \end{aligned}
\end{equation}
    
\noindent by the definition of $\hat{k}_1$, we have $\hat{k}_1 \le k_1$.

    \noindent Combining the above two conclusions, under the condition that $n \ge C\left(\log^2 \nu + p^2\right)$ for certain constant $C$, with probability at least $1-\nu$, we have $\hat{k}_1 = k_1$.

\end{proof}

\subsection{Proof of Theorem \ref{thm4}}
\begin{lem}\label{matrixbound}
    Assume the conditions in Theorem \ref{thm4}, for any $\nu>0$, when $n> \max\left\{\frac{1}{c^2}h_1^2(\nu,p), \frac{M}{c}h_1(\nu,p)\right\}$, with probability at least $1-\nu$,
    \begin{align}
        \left\|\frac{1}{n}\sum\limits_{i=1}^n y_i \hat{T}(\bx_i) - \EE yT(\bx) \right\|_2 \le \frac{2M}{\sqrt{cn}}\sqrt{\log \frac{16}{\nu}+ \log 9\cdot p}
    \end{align}
    where $h_1(\nu,p) = h(\nu,p)+\log 4$ and $c, M$ are positive constants.
\end{lem}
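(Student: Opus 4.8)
The plan is to split the target into a score-estimation error and a sampling error,
\[
\frac1n\sum_{i=1}^n y_i\hat{T}(\bx_i)-\EE yT(\bx)
=\underbrace{\frac1n\sum_{i=1}^n y_i\big(\hat{T}(\bx_i)-T(\bx_i)\big)}_{=:\,\bDelta_S}
+\underbrace{\Big(\frac1n\sum_{i=1}^n y_iT(\bx_i)-\EE yT(\bx)\Big)}_{=\,\bE_1},
\]
and to control the two pieces separately, allocating probability $\nu/4$ to each of the high-probability events so that a union bound returns total probability $1-\nu$. The term $\bE_1$ is exactly the object bounded in Lemma~\ref{e2bound}; applying that lemma with $\nu$ replaced by $\nu/4$ gives, on an event of probability at least $1-\nu/4$, $\|\bE_1\|_2\le \frac{2M}{\sqrt{cn}}\sqrt{\log\frac{16}{\nu}+\log 9\cdot p}$, which is precisely the right-hand side of the claim and explains the bookkeeping $h_1=h+\log4$, since $\log\frac{4}{\nu/4}=\log\frac{16}{\nu}=\log\frac4\nu+\log4$. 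It then remains to show that $\bDelta_S$ is of the same order and can be folded into the constants.

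For $\bDelta_S$, write $\bDelta:=\hat{\bSigma}^{-1}-\bSigma^{-1}$, $\hbM:=\frac1n\sum_{i=1}^n y_i\bx_i\bx_i^\T$ and $\bar{y}:=\frac1n\sum_{i=1}^n y_i$. Since $\hat{T}(\bx)=\hat{\bSigma}^{-1}\bx\bx^\T\hat{\bSigma}^{-1}-\hat{\bSigma}^{-1}$ and $T(\bx)=\bSigma^{-1}\bx\bx^\T\bSigma^{-1}-\bSigma^{-1}$, summing over $i$ yields
\[
\bDelta_S=\hat{\bSigma}^{-1}\hbM\hat{\bSigma}^{-1}-\bSigma^{-1}\hbM\bSigma^{-1}-\bar{y}\,\bDelta .
\]
Telescoping the first difference as $\hat{\bSigma}^{-1}\hbM\bDelta+\bDelta\hbM\bSigma^{-1}$ and using submultiplicativity of the operator norm gives
\[
\|\bDelta_S\|_2\le \big(\|\hat{\bSigma}^{-1}\|_2+\|\bSigma^{-1}\|_2\big)\,\|\hbM\|_2\,\|\bDelta\|_2+|\bar{y}|\,\|\bDelta\|_2 ,
\]
so the problem reduces to controlling $\|\bDelta\|_2$, $\|\hbM\|_2$, $|\bar{y}|$ and $\|\hat{\bSigma}^{-1}\|_2$.

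The dominant factor is $\|\bDelta\|_2=\|\hat{\bSigma}^{-1}-\bSigma^{-1}\|_2$. For Gaussian $\bx_i\sim\mathcal N(\mathbf0,\bSigma)$, standard covariance concentration (e.g.\ \cite{Vershynin2018HighDimensionalP}) gives, on an event of probability at least $1-\nu/4$, $\|\hat{\bSigma}-\bSigma\|_2=O\big(\|\bSigma\|_2\sqrt{(p+\log(1/\nu))/n}\big)$; once $n$ exceeds the stated linear threshold $\tfrac{M}{c}h_1(\nu,p)$ this is at most $\tfrac12\phi_{\min}(\bSigma)$, so $\hat{\bSigma}$ is invertible with $\|\hat{\bSigma}^{-1}\|_2\le 2\phi_{\min}^{-1}(\bSigma)$ and, via $\bDelta=\hat{\bSigma}^{-1}(\bSigma-\hat{\bSigma})\bSigma^{-1}$, $\|\bDelta\|_2\le\|\hat{\bSigma}^{-1}\|_2\|\bSigma^{-1}\|_2\|\hat{\bSigma}-\bSigma\|_2=O\big(\sqrt{(p+\log(1/\nu))/n}\big)$. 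For $\|\hbM\|_2$ and $|\bar{y}|$ one argues as in Lemma~\ref{e2bound}: $\hbM$ concentrates around the bounded fixed matrix $\EE[y\bx\bx^\T]$ and $\bar{y}$ around $\EE[y]$, using that $y$ is sub-Gaussian (Lemma~\ref{lmrelu}), that $\bx\bx^\T$ has sub-exponential quadratic forms (as in Assumption~\ref{a3-}), and that $\|\bSigma\|_2\le\|\bx\|_{\psi_2}^2$ is bounded; hence on an event of probability $1-\nu/4$ both are bounded by absolute constants. Combining, $\|\bDelta_S\|_2=O\big(\sqrt{(p+\log(1/\nu))/n}\big)$, of the same order as the bound on $\|\bE_1\|_2$, and summing the two pieces while absorbing the multiplicative constants into a redefined pair $(c,M)$ yields $\|\bDelta_S\|_2+\|\bE_1\|_2\le \frac{2M}{\sqrt{cn}}\sqrt{\log\frac{16}{\nu}+\log 9\cdot p}$ on the intersection of the four events, of probability at least $1-\nu$.

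The main obstacle is the control of $\|\bDelta\|_2$: one must establish Gaussian covariance concentration at resolution $\sqrt{(p+\log(1/\nu))/n}$ with a tail matching the $\log(16/\nu)$ budget, and check that the extra condition $n>\tfrac{M}{c}h_1(\nu,p)$ is exactly what guarantees both invertibility of $\hat{\bSigma}$ (so $\hat{\bSigma}^{-1}$ and $\|\bDelta\|_2$ are well defined and bounded) and that $\|\bDelta_S\|_2$ stays of the same order as the sampling error. Once this is in place, the remaining steps are routine operator-norm manipulations and a union bound.
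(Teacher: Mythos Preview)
Your proposal is correct and follows essentially the same strategy as the paper: split into the sampling error $\bE_1$ (handled by Lemma~\ref{e2bound} at confidence $\nu/4$, which is exactly what produces the shift $h_1=h+\log4$) plus a score-estimation error controlled via Gaussian covariance concentration for $\|\hat{\bSigma}-\bSigma\|_2$, the resolvent identity for $\|\hat{\bSigma}^{-1}-\bSigma^{-1}\|_2$, and an operator-norm bound for the empirical second-moment matrix $\hbM$. The only cosmetic difference is in how the algebra is grouped: the paper expands $\hat{\bSigma}^{-1}\hbM\hat{\bSigma}^{-1}-\bSigma^{-1}\hbM\bSigma^{-1}$ as $\bDelta\hbM\bDelta-\bDelta\hbM\bSigma^{-1}-\bSigma^{-1}\hbM\bDelta$ and then absorbs the $\bar{y}\,\bDelta$ correction into the last term by replacing $\hbM$ there with the centered sum $\tfrac1n\sum_i y_i(\bx_i\bx_i^\T-\bSigma)$, which lets them reuse a single Lemma~\ref{e2bound}-type bound instead of controlling $|\bar{y}|$ separately; your two-term telescoping with a standalone $|\bar{y}|\,\|\bDelta\|_2$ is equally valid and yields the same rate.
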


\begin{proof}
    \quad For any $\bu\in \mathbb{R}^p$ s.t. $\Vert \bu\Vert_2 = 1$, and $1\le i \le n$, $\bu^T\bx_i$ follows the Gaussian distribution with mean zero, then by Lemma \ref{subexpinequality}, $\left(\bu^T\bx_i\right)^2$ is sub-exponential random variable with:
\begin{equation}
    \begin{aligned}
        \left\|\left(\bu^T\bx_i\right)^2\right\|_{\psi_1} \le 2 \left\|\bu^T\bx_i\right\|_{\psi_2}^2 
        \le 2 \text{Var}\left(\bu^T\bx_i\right) 
        = 2 \bu^T \bSigma \bu  \le 2\left\|\bSigma\right\|_2
    \end{aligned}
\end{equation}
    
\noindent note that $\EE \bx_i\bx_i^T = \bSigma$, then $\left(\bu^T\bx_i\right)^2-\bu^T \bSigma \bu$ is centered sub-exponential random variable and:
    \begin{align}\label{matrixbound-1}
        \left\|\left(\bu^T\bx_i\right)^2-\bu^T \bSigma \bu\right\|_{\psi_1} \le 2 \left\|\left(\bu^T\bx_i\right)^2\right\|_{\psi_1} \le 4\left\|\bSigma\right\|_2
    \end{align}
    By Bernstein's inequality, we have:
\begin{equation}
    \begin{aligned}
        \P\left(\left|\bu^T\left(\hat{\bSigma} - \bSigma\right)\bu\right|\ge t\right) =& \P\left(\left|\frac1n \sum\limits_{i=1}^n \left(\bu^T\bx_i\right)^2 - \bu^T \bSigma \bu\right|\ge t\right) \\
        \le & 2 \exp\left\{-c_1 \mathop{\min}\left(\frac{t^2}{M_1^2}, \frac{t}{M_1}\right)n\right\}
    \end{aligned}
\end{equation}

\noindent where $ M_1 = 4\left\|\bSigma\right\|_2$ and $c_1$ is a positive constant. Let $S_\eps^{p-1}$ be the $\eps$-net of $S^{p-1}$, by Lemma \ref{lm3} and \ref{lm4}, we have:

\begin{equation}
    \begin{aligned}
        &\P\left(\left\|\hat{\bSigma} - \bSigma\right\|_2 \ge 2t\right) \\
        =& \P\left(\mathop{\sup}_{\bu\in S^{p-1}} \left|\bu^T\left(\hat{\bSigma} - \bSigma\right)\bu\right| \ge 2t\right) \\
        \le &\P\left(\mathop{\sup}_{\bu\in S_{1/4}^{p-1}} \left|\bu^T\left(\hat{\bSigma} - \bSigma\right)\bu\right| \ge t\right) \\
        \le& \left| S_{1/4}^{p-1}\right|\cdot 2\exp\left\{-c_1 \mathop{\min}\left(\frac{t^2}{M_1^2}, \frac{t}{M_1}\right)n\right\} \\
        \le &2\cdot 9^p \exp\left\{-c_1 \mathop{\min}\left(\frac{t^2}{M_1^2}, \frac{t}{M_1}\right)n\right\}
    \end{aligned}
\end{equation}
    
\noindent let $t = \frac{M_1}{\sqrt{c_1n}}\sqrt{\log \frac{8}{\nu}+\log9\cdot p}$, when $n > \frac{1}{c_1}\left(\log\frac{8}{\nu}+ \log 9\cdot p\right)$, with probability at least $1-\nu/4$,
    \begin{align}
        \left\|\hat{\bSigma} - \bSigma\right\|_2 \le 2t = \frac{2M_1}{\sqrt{c_1n}}\sqrt{\log \frac{8}{\nu}+\log9\cdot p}
    \end{align}

    \noindent Furthermore, when $n>\frac{16M_1^2}{c_1\phi_{\min}^2\left(\bSigma\right)}\left(\log\frac{8}{\nu}+ \log 9\cdot p\right) = \frac{16M_1^2\left\|\bSigma^{-1}\right\|_2^2}{c_1}\left(\log\frac{8}{\nu}+ \log 9\cdot p\right)$, by Weyl's inequality, we have:
    
    \begin{align*}\label{matrixbound-2}
        \left\|\hat{\bSigma}^{-1} - \bSigma^{-1}\right\|_2 
        \le& \left\|\hat{\bSigma}^{-1}\right\|_2\cdot \left\|\bSigma-\hat{\bSigma}\right\|_2\cdot\left\|\bSigma^{-1}\right\|_2 \\
        \le&\frac{2t}{\phi_{\min}\left(\bSigma\right)\phi_{\min}\left(\hat{\bSigma}\right)} \\
        \le& \frac{2t}{\phi_{\min}\left(\bSigma\right)\left[\phi_{\min}\left(\bSigma\right) -\left\|\hat{\bSigma} - \bSigma\right\|_2 \right]} \refstepcounter{equation}\tag{\theequation}\\  
        \le&\frac{2t}{\phi_{\min}\left(\bSigma\right)\left[\phi_{\min}\left(\bSigma\right) -2t \right]} \\
        \le& \frac{4t}{\phi_{\min}^2\left(\bSigma\right)} \\
        =&\frac{4M_1\left\|\bSigma^{-1}\right\|_2^2}{\sqrt{c_1n}}\sqrt{\log \frac{8}{\nu}+\log9\cdot p} 
    \end{align*}

\noindent  From inequality (\ref{matrixbound-1}), we have:
    \begin{align}
        \mathop{\sup}_{\|\bu\|_2=1} \left\|\bu^T \left(\bx\bx^T-\bSigma\right) \bu\right\|_{\psi_1} \le 2 \mathop{\sup}_{\|\bu\|_2=1} \left\|\bu^T\bx\bx^T\bu\right\|_{\psi_1} \le 4\left\|\bSigma\right\|_2
    \end{align}
    then by Lemma \ref{e2bound}, there exists constants $c_2, M_2>0$, when $n > \frac{1}{c_2^2}\left(\log\frac{16}{\nu}+ \log 9\cdot p\right)^2$, each of the following inequalities holds with probability at least $1-\nu/4$:
    
    \begin{align}
        &\left\|\frac{1}{n}\sum\limits_{i=1}^n y_i \bx_i\bx_i^T - \EE y
        \bx\bx^T\right\|_2 \le \frac{2M_2}{\sqrt{c_2n}} \sqrt{\log \frac{16}{\nu}+ \log 9\cdot p}  \label{matrixbound-3}\\
        & \left\|\frac{1}{n}\sum\limits_{i=1}^n y_i \left(\bx_i\bx_i^T-\bSigma\right) - \EE y
        \left(\bx\bx^T-\bSigma\right)\right\|_2 \le \frac{2M_2}{\sqrt{c_2n}} \sqrt{\log \frac{16}{\nu}+ \log 9\cdot p} \label{matrixbound-4} \\
        &\left\|\frac{1}{n}\sum\limits_{i=1}^n y_i T(\bx_i) - \EE yT(\bx) \right\|_2 \le \frac{2M_2}{\sqrt{c_2n}} \sqrt{\log \frac{16}{\nu}+ \log 9\cdot p} \label{matrixbound-5}
    \end{align}
    where $c_2, M_2$ are positive constants. Note that $\frac{M_2}{\sqrt{c_2n}} \sqrt{\log \frac{16}{\nu}+ \log 9\cdot p} \le 1$, by Cauchy's inequality, we have:
        \begin{align*}
        \left\|\frac{1}{n}\sum\limits_{i=1}^n y_i \bx_i\bx_i^T\right\|_2 &\le \left\|\EE y
        \bx\bx^T\right\|_2 + 2 \\
        & = \mathop{\sup}_{\|\bu\|_2=1} \left| \bu^T \EE y \bx\bx^T\bu\right| + 2 \\
        & \le \mathop{\sup}_{\|\bu\|_2=1} \left( \EE y^2 \cdot \EE \left[\bu^T\bx\bx^T\bu\right]^2\right)^{1/2} +2\\
        &\le \mathop{\sup}_{\|\bu\|_2=1} \sqrt{2}\left\|y\right\|_{\psi_2}\cdot 2 \left\|\bu^T\bx\bx^T\bu\right\|_{\psi_1} +2\\
        &\le 4\sqrt{2} \left\|\bSigma\right\|_2\left\|y\right\|_{\psi_2} +2
        \refstepcounter{equation}\tag{\theequation}
    \end{align*}
and,
    \begin{align*}
        \left\|\frac{1}{n}\sum\limits_{i=1}^n y_i \left(\bx_i\bx_i^T-\bSigma\right)\right\|_2 &\le \left\|\EE y
        \left(\bx\bx^T-\bSigma\right)\right\|_2 + 2 \\
        &= \mathop{\sup}_{\|\bu\|_2=1} \left| \bu^T \EE y \left(\bx\bx^T-\bSigma\right)\bu\right| + 2 \\
        &\le 8\sqrt{2} \left\|\bSigma\right\|_2\left\|y\right\|_{\psi_2} +2 \refstepcounter{equation}\tag{\theequation}
    \end{align*}
 \noindent Let $M_3 = \max\left\{1,16M_1^2\left\| \bSigma^{-1}\right\|_2^2\right\}$, $M_4 = 8\sqrt{2} \left\|\bSigma\right\|_2\left\|y\right\|_{\psi_2} + 2$ and $h_1(\nu,p) = h(\nu,p) + \log 4$, by inequalities (\ref{matrixbound-2}), (\ref{matrixbound-3}) and (\ref{matrixbound-4}), when $n> \max\left\{\frac{1}{c_2^2}h_1^2(\nu,p), \frac{M_3}{c_1}h_1(\nu,p)\right\}$, with probability at least $1-\frac{3\nu}{4}$,
 

         \begin{align*}
        &\left\|\frac{1}{n}\sum\limits_{i=1}^n y_i \hat{T}(\bx_i) - \frac{1}{n}\sum\limits_{i=1}^n y_i T(\bx_i)\right\|_2 \\
        =& \left\| \left(\bSigma^{-1}-\hat{\bSigma}^{-1}\right) \frac1n \sum\limits_{i=1}^n y_i\bx_i\bx_i^T\left(\bSigma^{-1}-\hat{\bSigma}^{-1}\right) -\left(\bSigma^{-1}-\hat{\bSigma}^{-1}\right)\frac1n \sum\limits_{i=1}^n y_i\bx_i\bx_i^T \bSigma^{-1} \right. \\
        &\quad \left.- \bSigma^{-1} \frac1n \sum\limits_{i=1}^n y_i\left(\bx_i\bx_i^T-\bSigma\right)\left(\bSigma^{-1}-\hat{\bSigma}^{-1}\right) \right\|_2 \\
        \le& \left\| \left(\bSigma^{-1}-\hat{\bSigma}^{-1}\right) \frac1n \sum\limits_{i=1}^n y_i\bx_i\bx_i^T\left(\bSigma^{-1}-\hat{\bSigma}^{-1}\right) \right\|_2 + \left\| \left(\bSigma^{-1}-\hat{\bSigma}^{-1}\right)\frac1n \sum\limits_{i=1}^n y_i\bx_i\bx_i^T \bSigma^{-1}\right\|_2 \\
        &\quad + \left\|\bSigma^{-1} \frac1n \sum\limits_{i=1}^n y_i\left(\bx_i\bx_i^T-\bSigma\right)\left(\bSigma^{-1}-\hat{\bSigma}^{-1}\right) \right\|_2 \\
        \le& M_4\left\| \bSigma^{-1}-\hat{\bSigma}^{-1}\right\|_2^2 + M_4\left\| \bSigma^{-1}-\hat{\bSigma}^{-1}\right\|_2 \left\| \bSigma^{-1}\right\|_2 + M_4\left\| \bSigma^{-1}\right\|_2\left\| \bSigma^{-1}-\hat{\bSigma}^{-1}\right\|_2 \\
        \le& M_4\cdot\left(\frac{4M_1\left\|\bSigma^{-1}\right\|_2^2}{\sqrt{c_1n}}\sqrt{\log \frac{8}{\nu}+\log9\cdot p} + 2\left\| \bSigma^{-1}\right\|_2\right)\cdot \frac{4M_1\left\|\bSigma^{-1}\right\|_2^2}{\sqrt{c_1n}}\sqrt{\log \frac{8}{\nu}+\log9\cdot p} \\
        \le& 2M_4\left(2\left\| \bSigma^{-1}\right\|_2^2 + \left\| \bSigma^{-1}\right\|_2 \right)\cdot \frac{4M_1\left\| \bSigma^{-1}\right\|_2^2}{\sqrt{c_1n}}\sqrt{\log \frac{8}{\nu}+\log9\cdot p} \refstepcounter{equation}\tag{\theequation}
    \end{align*}

    \noindent combining with inequality (\ref{matrixbound-5}),  with probability at least $1-\nu$, we have:
        \begin{align*}
        &\left\|\frac{1}{n}\sum\limits_{i=1}^n y_i \hat{T}(\bx_i) - \EE yT(\bx)\right\|_2 \\
        \le& \left\|\frac{1}{n}\sum\limits_{i=1}^n y_i \hat{T}(\bx_i) - \frac{1}{n}\sum\limits_{i=1}^n y_i T(\bx_i)\right\|_2 + \left\|\frac{1}{n}\sum\limits_{i=1}^n y_i T(\bx_i) - \EE yT(\bx) \right\|_2 \\
        \le& \frac{2M_5}{\sqrt{c_1n}}\sqrt{\log \frac{8}{\nu}+ \log 9\cdot p} + \frac{2M_2}{\sqrt{c_2n}}\sqrt{\log \frac{16}{\nu}+ \log 9\cdot p} \refstepcounter{equation}\tag{\theequation}
    \end{align*}
 
\noindent   where $M_5 = 4M_1\left\| \bSigma^{-1}\right\|_2^2\cdot M_4\left(2\left\| \bSigma^{-1}\right\|_2^2 + \left\| \bSigma^{-1}\right\|_2 \right)$. Therefore, let $M = \max\left\{M_2+M_5, M_3\right\}$ and $c = \min\left\{c_1,c_2\right\}$, when $n> \max\left\{\frac{1}{c^2}h_1^2(\nu,p), \frac{M}{c}h_1(\nu,p)\right\}$, with probability at least $1-\nu$, we have:
\begin{equation}
    \begin{aligned}
    &\left\|\frac{1}{n}\sum\limits_{i=1}^n y_i \hat{T}(\bx_i) - \EE yT(\bx)\right\|_2\\
    \le &\frac{2M_5}{\sqrt{c_1n}}\sqrt{\log \frac{8}{\nu}+ \log 9\cdot p} + \frac{2M_2}{\sqrt{c_2n}}\sqrt{\log \frac{16}{\nu}+ \log 9\cdot p}\\
    \le& \frac{2M}{\sqrt{cn}}\sqrt{\log \frac{16}{\nu}+ \log 9\cdot p}
\end{aligned}
\end{equation}
\end{proof}

\begin{lem}\label{eigenvectorbound2}
    Assume the conditions in Theorem \ref{thm4} hold, then for any $\nu>0$, when $n>\max\left\{ \left(\frac{1}{c_1^2}+\frac{c_2k_1^6\mu^4}{\lambda_{k_1}^2}\right)h_1^2(\nu,p), \left(\frac{4M^2}{c_1\delta^2}+\frac{M}{c_1}\right)h_1(\nu,p)\right\}$, with probability at least $1-\nu$:
    \begin{align}\label{eigenvectorbound2-1}
        \|\bW - \hat\bW_1\|_\text{max} \le \frac{2M}{\sqrt{c_1n}}\left(\frac{k_1^4\mu^2}{\vert \lambda_{k_1}\vert}+ \frac{k_1^{3/2}\mu^{1/2}}{\delta\sqrt{p} }\right)\left(\sqrt{\log \frac{16}{\nu} + \log 9 \cdot p}+K\right)
    \end{align}
    where $h_1(\nu,p) = h(\nu,p)+K$ and $c_1, c_2, M, K$ are positive constants.
\end{lem}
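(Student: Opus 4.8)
The plan is to reuse the three-step skeleton from the proof of Lemma~\ref{eigenvectorbound} essentially verbatim, substituting the estimated-score concentration bound from Lemma~\ref{matrixbound} in place of the exact-score bound from Lemma~\ref{e2bound}. The key observation is that Lemma~\ref{matrixbound} has already absorbed the extra fluctuation coming from replacing $\bSigma^{-1}$ by $\hat{\bSigma}^{-1}$, and delivers a bound of the \emph{same order} as in the known-covariance case (merely with $\log\frac{16}{\nu}$ in place of $\log\frac{4}{\nu}$ and a possibly larger constant $M$). Consequently every downstream perturbation estimate carries over with only cosmetic changes.

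First I would define the error matrix $\bE_3 = \frac{1}{n}\sum_{i=1}^n y_i \hat{T}(\bx_i) - \EE yT(\bx)$ and invoke Lemma~\ref{matrixbound}: for $n > \max\{\frac{1}{c_1^2}h_1^2(\nu,p),\, \frac{M}{c_1}h_1(\nu,p)\}$ we obtain $\|\bE_3\|_2 \le \frac{2M}{\sqrt{c_1 n}}\sqrt{\log\frac{16}{\nu}+\log 9\cdot p}$ with probability at least $1-\nu$. Passing to the max-row-sum norm via Lemma~\ref{A2inf} gives $\|\bE_3\|_\infty \le \sqrt{p}\,\|\bE_3\|_2$. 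Recalling that Stein's formula yields $\bW = \bO\bW_1$ for an invertible $\bO$, so that the coherence obeys $\mu(\bW)\le c_0\mu$, I would then verify the two hypotheses of Theorem~3 in \cite{fan2017ellinftyeigenvectorperturbationbound}: the eigengap condition $\|\bE_3\|_2 < \delta$, guaranteed by the term $\frac{4M^2}{c_1\delta^2}h_1(\nu,p)$ in the sample-size requirement, and the signal condition $|\lambda_{k_1}| = \Omega(k_1^3\mu^2(\bW)\|\bE_3\|_\infty)$, guaranteed by the term $\frac{c_2 k_1^6\mu^4}{\lambda_{k_1}^2}h_1^2(\nu,p)$. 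Applying that theorem and collapsing the various constants into a single $M$ and a single $K$ (the latter absorbing the additive $\log 4$ shift appearing in Lemma~\ref{matrixbound}'s definition of $h_1$) produces the claimed bound~(\ref{eigenvectorbound2-1}).

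The main obstacle is bookkeeping rather than anything conceptual: one must check that a single choice of the constant $M$ and of $K$ in $h_1(\nu,p)=h(\nu,p)+K$ simultaneously satisfies the invertibility-and-closeness requirement on $\hat{\bSigma}$ inherited from Lemma~\ref{matrixbound} \emph{and} the two perturbation prerequisites above. Concretely, the additive threshold $\frac{M}{c_1}h_1(\nu,p)$ (which ensures $\hat{\bSigma}$ is invertible and within $\delta$ of $\bSigma$ in spectral norm) must be folded into the $\frac{4M^2}{c_1\delta^2}h_1(\nu,p)$ term by enlarging the constant; once this is arranged, the remainder of the argument is identical to that of Lemma~\ref{eigenvectorbound}, and the factor $\sqrt{\log\frac{16}{\nu}+\log 9\cdot p}+K$ emerges from combining the $\|\bE_3\|_\infty$ and $\|\bE_3\|_2$ estimates inside the Fan--Wang--Zhong bound.
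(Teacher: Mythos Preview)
Your skeleton is right but you have centered the error at the wrong target, and this is not merely cosmetic. Under Theorem~\ref{thm4} the primary setting is model~(\ref{model1}) with $y=G(\bx_{\mS_0})+\eps$ and the conditions of Theorem~\ref{thm2}, where $\bW=\text{Eigen}_{k_1}\bigl(\EE g(\bx)T(\bx)\bigr)$ for the DNN approximant $g$. Hence the relevant perturbation is
\[
\bE_3=\frac{1}{n}\sum_{i=1}^n y_i\hat T(\bx_i)-\EE g(\bx)T(\bx),
\]
not $\frac{1}{n}\sum y_i\hat T(\bx_i)-\EE yT(\bx)$ as you wrote. Lemma~\ref{matrixbound} only controls the latter; to reach the former you must add the DNN approximation error $\bigl\|\EE[(G(\bx_{\mS_0})-g(\bx))T(\bx)]\bigr\|_2\le M_2 n^{-1/2}$ via Proposition~\ref{dnnprop}, exactly as in the proof of Lemma~\ref{lm10}. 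You have omitted this step entirely.

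This omission also explains why your account of $K$ is off. The $\log 4$ shift from Lemma~\ref{matrixbound} lives \emph{inside} the square root (turning $\log\frac{4}{\nu}$ into $\log\frac{16}{\nu}$); it cannot produce the additive $+K$ that sits \emph{outside} $\sqrt{\log\frac{16}{\nu}+\log 9\cdot p}$ in~(\ref{eigenvectorbound2-1}). That additive constant arises precisely from the $M_2/\sqrt{n}$ approximation term: writing $\frac{2M_1}{\sqrt{c_1 n}}\sqrt{\cdots}+\frac{M_2}{\sqrt{n}}=\frac{2M_1}{\sqrt{c_1 n}}\bigl(\sqrt{\cdots}+\tfrac{\sqrt{c_1}M_2}{2M_1}\bigr)$ is what forces $K=\max\{\tfrac{\sqrt{c_1}M_2}{2M_1},\log 4\}$ in the paper's proof. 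Once you insert the Proposition~\ref{dnnprop} step after invoking Lemma~\ref{matrixbound}, the rest of your argument (Lemma~\ref{A2inf}, then the Fan--Wang--Zhong bound with the two prerequisites) goes through exactly as you describe.
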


\begin{proof}
\quad    Define matrix $\bE_3 = \frac{1}{n}\sum\limits_{i=1}^n y_i \hat{T}(\bx_i) - \EE g(\bx)T(\bx)$. By Lemma \ref{matrixbound}, when $n> \max\left\{\frac{1}{c_1^2}h_2^2(\nu,p), \frac{M_1}{c_1}h_2(\nu,p)\right\}$, with probability at least $1-\nu$:
    \begin{align}
        \left\|\frac{1}{n}\sum\limits_{i=1}^n y_i \hat{T}(\bx_i) - \EE yT(\bx)\right\|_2 \le \frac{2M_1}{\sqrt{c_1n}}\sqrt{\log \frac{16}{\nu}+ \log 9\cdot p}
    \end{align}
    where $h_2(\nu,p) = h(\nu, p)+\log 4$ and $c_1, M_1$ are positive constants. Combining with Proposition \ref{dnnprop}, we have:
    \begin{equation}
        \begin{aligned}
        \left\|\bE_3\right\|_2 &\le \left\|\frac{1}{n}\sum\limits_{i=1}^n y_i \hat{T}(\bx_i) - \EE yT(\bx)\right\|_2 + \left\| \mathbb{E} 
        \left[\left\{G(\bx_{\mS_0})-g(\bx)\right\}T(\bx)\right] \right\|_2 \\
        \le& \frac{2M_1}{\sqrt{c_1n}}\sqrt{\log \frac{16}{\nu}+ \log 9\cdot p} + \frac{M_2}{\sqrt{n}}
    \end{aligned}
    \end{equation}
    
\noindent  where $M_2>0$ is a constant. From Lemma \ref{A2inf}, we can further obtain that:
    \begin{align}
        \left\|\bE_3\right\|_\infty \le \sqrt{p}\left\|\bE_3\right\|_2 \le \frac{2M_1\sqrt{p}}{\sqrt{c_1n}}\sqrt{\log \frac{16}{\nu}+ \log 9\cdot p} + \frac{M_2\sqrt{p}}{\sqrt{n}}
    \end{align}
    Let $K = \max\left\{\frac{\sqrt{c}M_2}{2M_1},\log 4\right\}$ and $h_1(\nu,p) = h(\nu,p)+K$, if $n>\max\left\{ \frac{c_2k_1^6\mu^4}{\lambda_{k_1}^2}h_1^2(\nu,p), \frac{4M_1^2}{c_1\delta^2}h_1(\nu,p)\right\}$ for some positive constant $c_2$, then $|\lambda_{k_1}| = \Omega\left(k_1^3\mu^2(\bW)\left\|\bE_3\right\|_\infty\right)$ and $\Vert \bE_3\Vert_2 < \delta$, by the Theorem 3 in \cite{fan2017ellinftyeigenvectorperturbationbound}, we have:
    \begin{equation}
        \begin{aligned}
            \|\bW - \hat\bW_1\|_\text{max}&\le M_3 \left(\frac{k_1^4\mu^2(\bW)\Vert \bE_3\Vert_\infty}{\vert \lambda_{k_1}\vert \sqrt{p}}+ \frac{k_1^{3/2}\mu^{1/2}(\bW)\Vert \bE_3 \Vert_2}{\delta \sqrt{p}}\right) \\
            &\le \frac{2M_1M_3'}{\sqrt{c_1n}}\left(\frac{k_1^4\mu^2}{\vert \lambda_{k_1}\vert}+  \frac{k_1^{3/2}\mu^{1/2}}{\delta\sqrt{p} }\right)\left(\sqrt{\log \frac{16}{\nu} + \log 9 \cdot p}+K\right)
        \end{aligned}
    \end{equation}
    where $M_3,M_3'$ are positive constants. Therefore, we may let $M = \max\left\{M_1, M_1M_3\right\}$, so that when $n>\max\left\{ \left(\frac{1}{c_1^2}+\frac{c_2k_1^6\mu^4}{\lambda_{k_1}^2}\right) h_1^2(\nu,p), \left(\frac{4M^2}{c_1\delta^2}+\frac{M}{c_1}\right)h_1(\nu,p)\right\}$, the inequality (\ref{eigenvectorbound2-1}) holds with probability at least $1-\nu$.
    
\end{proof}

\vspace{1em}
\begin{thm}\label{thm4}
    Suppose $(\bx_i,y_i)$ follows model (\ref{model1}) and assume the conditions in Theorem \ref{thm2}. Suppose that $\bx_i\sim \mathcal{N}(\mathbf{0}, \bSigma)$. Let $\hat{\bSigma} = (1/n) \sum\limits_{i=1}^n \bx_i\bx_i^T$ and  $\hat{T}(\bx) = \hat{\bSigma}^{-1}\bx\bx^T \hat{\bSigma}^{-1} - \hat{\bSigma}^{-1}$.  Then, for any $\nu>0$, when
  $n \ge C\left(\log^2 \nu + p^2\right)$ for certain positive constant $C$, Algorithm \ref{alg1} guarantees feature selection consistency for H{\"o}lder smooth functions, i.e.,
$\P(\hat{\mS}_0 =\mS_0) \ge 1-\nu$.
Moreover, if $(\bx_i,y_i)$ follows model (\ref{model2}) and the conditions in Theorem \ref{thm1} hold, we have feature selection consistency for DNNs.
\end{thm}

\begin{proof}
    \quad We only need to prove the case that $(\bx_i,y_i)$ follows model (\ref{model1}). By Lemma \ref{eigenvectorbound2}, when $n>\max\left\{ \left(\frac{1}{c_1^2}+\frac{c_2k_1^6\mu^4}{\lambda_{k_1}^2}\right)h_1^2(\nu,p), \left(\frac{4M^2}{c_1\delta^2}+\frac{M}{c_1}\right)h_1(\nu,p)\right\}$, with probability at least $1-\nu$:
    \begin{equation}
        \begin{aligned}
        \|\bW - \hat\bW_1\|_\text{max} \le \frac{2M}{\sqrt{c_1n}}\left(\frac{k_1^4\mu^2}{\vert \lambda_{k_1}\vert}+ \frac{k_1^{3/2}\mu^{1/2}}{\delta\sqrt{p} }\right)\left(\sqrt{\log \frac{16}{\nu} + \log 9 \cdot p}+K\right)
        \end{aligned}
    \end{equation}
    where $h_1(\nu,p) = h(\nu, p) + K$ and $c_1,c_2,M,K$ are positive constants.
    
    \noindent Then for $j\in \hat\mS_0$, when $n > \frac{4M^2}{c_1\kappa^2}\left(\frac{k_1^{9/2}\mu^2}{\vert \lambda_{k_1}\vert}+\frac{k_1^2\mu^{1/2}}{\delta \sqrt{p}}\right)^2\left(\sqrt{\log \frac{16}{\nu} + \log 9 \cdot p}+K\right)^2$, we have:
    
    \begin{equation}
        \begin{aligned}
        \|\bW_{\cdot j}\|_2 &\ge \|\{\hat\bW_1\}_{\cdot j}\|_2 - \sqrt{k_1}\Vert \bW-\hat\bW_1\Vert_\text{max} \\
        &\ge \kappa - \sqrt{k_1}\Vert \bW-\hat\bW_1 \Vert_\text{max} \\
        &\ge \kappa - \sqrt{k_1} \cdot\frac{2M}{\sqrt{c_1n}}\left(\frac{k_1^4\mu^2}{\vert \lambda_{k_1}\vert}+ \frac{k_1^{3/2}\mu^{1/2}}{\delta\sqrt{p} }\right)\left(\sqrt{\log \frac{16}{\nu} + \log 9 \cdot p}+K\right)\\
        &> 0
    \end{aligned}
    \end{equation}
    
   \noindent then $j\in \mS_0$. Furthermore, $ \hat\mS_0 \subseteq \mS_0$.

   \noindent On the other hand, let $j \in \mS_0$, then we have:
    \begin{equation}
        \begin{aligned}
        \|\hat\bW_{\cdot j}\|_2 &\ge \|\bW_{\cdot j}\|_2 - \sqrt{k_1}\|\bW - \hat\bW_1\|_\text{max} \\
        &\ge \min_{j \in \mS_0} \|\bW_{\cdot j}\|_2 - \sqrt{k_1} \cdot \frac{2M}{\sqrt{c_1n}}\left(\frac{k_1^4\mu^2}{\vert \lambda_{k_1}\vert}+ \frac{k_1^{3/2}\mu^{1/2}}{\delta\sqrt{p} }\right)\left(\sqrt{\log \frac{16}{\nu} + \log 9 \cdot p}+K\right)\\
        &\ge \min_{j \in \mS_0} \|\bW_{\cdot j}\|_2 - \kappa \\
        &\ge \kappa
    \end{aligned}
    \end{equation}
    
   \noindent then $j \in \hat\mS_0$. Furthermore, $\mS_0 \subseteq \hat\mS_0$.

 \noindent Combining the above two conclusions, under the condition that  $n \ge C\left(\log^2 \nu + p^2\right)$ for certain positive constant $C$,  with probability at least $1-\nu$, we have $\mS_0 = \hat\mS_0$.
    
\end{proof}

\section{Additional Results of the Numerical Analysis}
In this section, we provide additional details on the analysis of the Alzheimer's Disease Neuroimaging Initiative (ADNI) dataset, as well as the implementation specifics of the competing methods and some additional simulation results.

\subsection{Real Data Analysis}
 The origin dataset is obtained by filtering out patients with MMSE scores from three phases of the study: ADNI-1, ADNI-GO, and ADNI-2, which gives a dataset containing 755 samples with over 620000 SNPs. Since we primarily focus on the B Allele Frequency (BAF), we filter out the SNP columns with BAF mean values exceeding 0.1 to ensure the inclusion of significant minor alleles. In terms of the sure screening process, we standardize the BAF values and select top $50\%$ of the sample size SNPs according to absolute marginal correlation values between BAF and MMSE scores. The final processed data consists of 755 samples, each containing 377 SNP BAF values.    

The SNPs selected by the proposed method have been shown in the manuscript. Here, we list the SNPs selected by LassoNet (Table \ref{table:lassonet}), DFS (Table \ref{table:dfs}), as well as Lasso (Table \ref{table:lasso}). 

\begin{table}[htbp!]
\centering
\begin{tabular}{p{3cm} p{3cm} p{8cm}}
\toprule
\makecell[c]{SNP} & \makecell[c]{Genes} & \makecell[c]{Reported brain-related/cognitive trait(s)} \\
\midrule

\makecell[c]{rs2385522} & \makecell[c]{FER1L6} & \makecell[c]{N/A} \\
\cdashline{1-3}[0.5pt/1pt]
\makecell[c]{rs958127} & \makecell[c]{NOL4} & \makecell[c]{N/A} \\
\cdashline{1-3}[0.5pt/1pt]
\makecell[c]{rs4921944} & \makecell[c]{PSD3} & \makecell[c]{Alzheimer Disease in Hippocampus \\ \cite{quan2020related}} \\
\cdashline{1-3}[0.5pt/1pt]
\makecell[c]{rs2412971} & \makecell[c]{HORMAD2} & \makecell[c]{N/A} \\
\cdashline{1-3}[0.5pt/1pt]
\makecell[c]{rs7923523} & \makecell[c]{MYOF} & \makecell[c]{N/A} \\
\cdashline{1-3}[0.5pt/1pt]
\makecell[c]{rs1481596} & \makecell[c]{DLC1} & \makecell[c]{N/A} \\
\cdashline{1-3}[0.5pt/1pt]
\makecell[c]{rs9855289} & \makecell[c]{ERC2} & \makecell[c]{N/A} \\
\cdashline{1-3}[0.5pt/1pt]
\makecell[c]{rs3121458} & \makecell[c]{NRAP} & \makecell[c]{Recessive dilated cardiomyopathy\\ \cite{koskenvuo2021biallelic}} \\
\cdashline{1-3}[0.5pt/1pt]
\makecell[c]{rs9907824} & \makecell[c]{NXN} & \makecell[c]{Alzheimer's Disease \\ \cite{blanco2022nxn}} \\

\bottomrule
\end{tabular}
\caption{Associated SNPs with gene identifiers by LassoNet}
\label{table:lassonet}
\end{table}

\begin{table}[htbp!]
\centering
\begin{tabular}{p{3cm} p{3cm} p{8cm}}
\toprule
\makecell[c]{SNP} & \makecell[c]{Genes} & \makecell[c]{Reported brain-related/cognitive trait(s)} \\
\midrule

\makecell[c]{rs7861396 \\ rs10976040} & \makecell[c]{KDM4C} & \makecell[c]{N/A} \\
\cdashline{1-3}[0.5pt/1pt]
\makecell[c]{rs1481596} & \makecell[c]{DLC1} & \makecell[c]{N/A} \\
\cdashline{1-3}[0.5pt/1pt]
\makecell[c]{rs11256433} & \makecell[c]{IL2RA} & \makecell[c]{Cerebral Palsy \\ \cite{qiao2022association}} \\
\cdashline{1-3}[0.5pt/1pt]
\makecell[c]{rs6569364} & \makecell[c]{NKAIN2} & \makecell[c]{Developmental delay and recurrent infections\\ \cite{yue2006disruption}}  \\
\cdashline{1-3}[0.5pt/1pt]
\makecell[c]{rs12480922} & \makecell[c]{SULF2} & \makecell[c]{Brain development and Neuronal plasticity \\ \cite{kalus2009differential}} \\
\cdashline{1-3}[0.5pt/1pt]
\makecell[c]{rs7568590} & \makecell[c]{EPCAM-DT} & \makecell[c]{N/A} \\
\cdashline{1-3}[0.5pt/1pt]
\makecell[c]{rs9907824} & \makecell[c]{NXN} & \makecell[c]{Alzheimer's Disease \\ \cite{blanco2022nxn}} \\

\bottomrule
\end{tabular}
\caption{Associated SNPs with gene identified by DFS}
\label{table:dfs}
\end{table}

\begin{table}[htbp!]
\centering
\begin{tabular}{p{3cm} p{3cm} p{8cm}}
\toprule
\makecell[c]{SNP} & \makecell[c]{Genes} & \makecell[c]{Reported brain-related/cognitive trait(s)} \\
\midrule

\makecell[c]{rs17068548} & \makecell[c]{SYNPR} & \makecell[c]{N/A} \\
\cdashline{1-3}[0.5pt/1pt]
\makecell[c]{rs12499028} & \makecell[c]{KLF3-AS1} & \makecell[c]{Improve Cerebral Ischemia-Reperfusion Injury \\ \cite{cao2024msc} } \\
\cdashline{1-3}[0.5pt/1pt]
\makecell[c]{rs7807724} & \makecell[c]{IQUB} & \makecell[c]{Bipolar Disorder with Anxiety Disorders \\ \cite{kerner2013rare} } \\
\cdashline{1-3}[0.5pt/1pt]
\makecell[c]{rs1008917} & \makecell[c]{RBPMS2} & \makecell[c]{ Alzheimer Disease \\ \cite{patel2021set}} \\
\cdashline{1-3}[0.5pt/1pt]
\makecell[c]{rs2385522} & \makecell[c]{FER1L6} & \makecell[c]{N/A} \\
\cdashline{1-3}[0.5pt/1pt]
\makecell[c]{rs2456200} & \makecell[c]{ITGA1} & \makecell[c]{N/A} \\
\cdashline{1-3}[0.5pt/1pt]
\makecell[c]{rs2075650} & \makecell[c]{TOMM40} & \makecell[c]{Mitochondrial Dysfunction \\ \cite{lee2021tomm40}} \\
\cdashline{1-3}[0.5pt/1pt]
\makecell[c]{rs4072374} & \makecell[c]{ RNASEH1} & \makecell[c]{N/A} \\

\bottomrule
\end{tabular}
\caption{Associated SNPs with gene identified by Lasso}
\label{table:lasso}
\end{table}


\subsection{Implementation details}
In this section, we present the technical details of the implementation of different methods. 


The LassoNet is implemented with the LassoNet package in \href{https://github.com/lasso-net/lassonet}{github.com/lassonet}.
The dropout rate was set to be $0$ by default. Two hidden layers are considered in LassoNet, with widths being 100 and 50 respectively. For all the cases in the simulation studies, we use the ``path'' function provided by the official package, we set the ``lambda\_start'' to be $10$ and the ``path\_multiplier'' to be $1.25$ so that we could achieve a balance between performance and efficiency. Since LassoNet will train a series of models with different penalty level $\lambda$, we select the model with lowest validation MSE which is partitioned from $20\%$ of the training set.

The DFS algorithm was implemented using the source code available at \href{https://github.com/cyustcer/Deep-Feature-Selection}{github.com/Deep-Feature-Selection}. The initial learning rate and the weight decay rate were set to their default values of 0.1 and 0.0025, respectively. Similar to LassoNet, we utilized a default neural network structure with two hidden layers with widths $100$ and $50$. 
The parameter $Ts$, which controls the optimization on the given support, was set to 25 for default nonlinear case in their package. We note that increasing the value of $Ts$ can improve accuracy a little bit, but it would also significantly increase the training time.

All the experiments are conducted using servers equipped with Intel Xeon Gold 5218R CPUs. The LassoNet and DFS methods are implemented with CUDA acceleration and run on NVIDIA GeForce RTX 3090 GPUs.

\subsection{Simulation Results on Correlation Intensity $\rho$}
In addition to the experiments reported in the main text, we further evaluate the robustness of our method and the baselines under correlated features by varying the correlation intensity $\rho$. As described, the covariance matrix is defined as $\Sigma_{j,k} = \rho^{|j-k|}$ for $1 \leq j,k \leq p$. In the main text, we set $\rho = 0$ (i.e., independent features). Here, we extend the analysis by allowing $\rho$ to range from $0$ to $0.5$, while keeping the sample size fixed at $n=2500$ and the feature dimension at $p=2000$. The results are presented in Figure \ref{fig:high_rho}.

\begin{figure}[htbp!]
    \centering
    \includegraphics[width=1\linewidth]{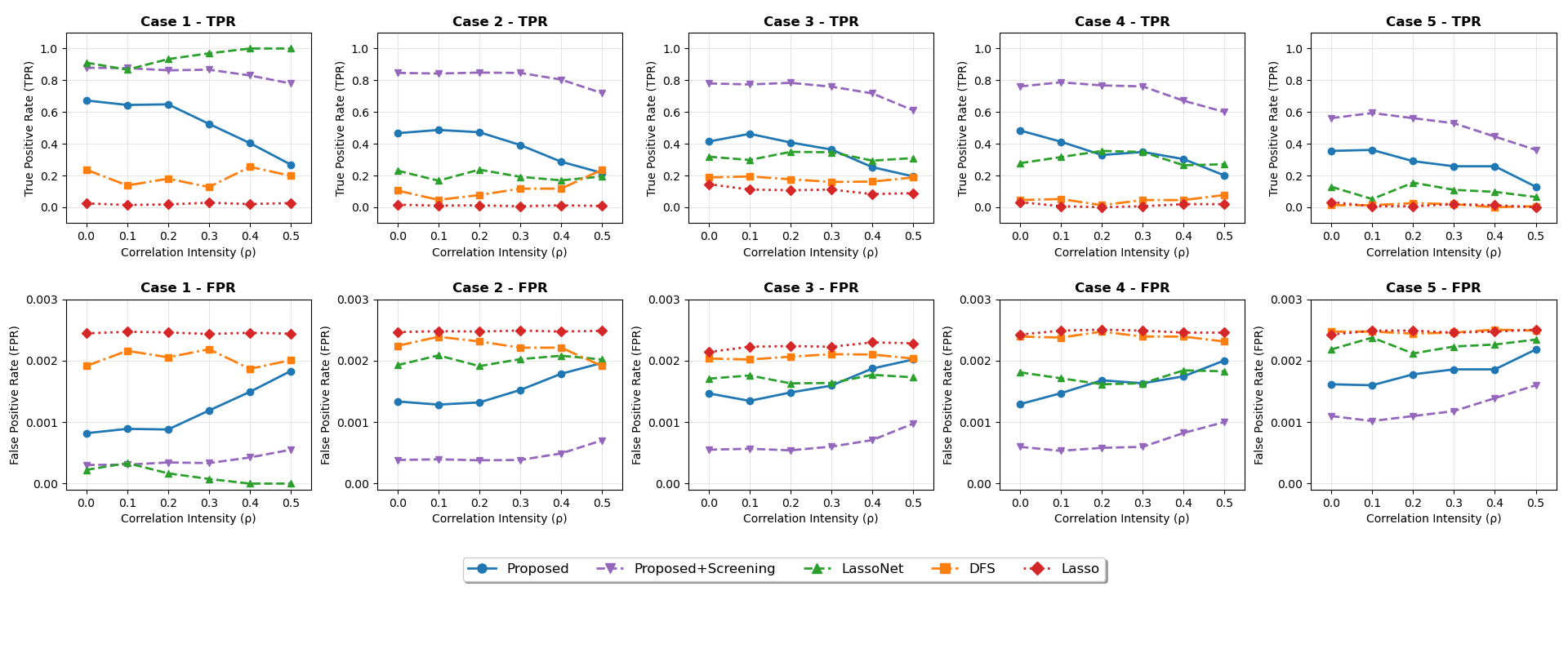}
    \caption{TPR and FPR of different methods with respect to different correlation intensity $\rho$ }
    \label{fig:high_rho}
\end{figure}

As expected, the performance of our two approach's  decline as $ \rho $ increases. Nevertheless, our proposed approach with screening mechanism consistently outperform the deep learning baselines across nearly all settings. The sole exception is Case 1, where LassoNet achieves near-perfect performance regardless of the value of $\rho$.

\subsection{Covariance Estimation in High Dimensional Scheme}
As mentioned in the main text, when $p >n$, the  empirical covariance  $\hat\bSigma=(1/n)\sum_{i=1}^n \bx_i\bx_i^\T$ is singular and therefore cannot be directly employed for the second-order score $\bT(\bx)$, which relies on the inverse covariance for Gaussian inputs. To address this challenge and evaluate the robustness of our approach, we estimate the precision matrix using the graphical lasso \citep{friedman2008sparse}. We conduct simulations with $p=3000$ and $n=2000$, drawing covariates from $\mathcal{N}(\mathbf{0}, \bSigma)$ where $\Sigma_{j,k} = \rho^{|j-k|}$ and $\rho \in [0, 0.6]$. The regularization parameter $\alpha$ is tuned for each $\rho$ to ensure the estimated precision matrix is positive definite and numerically stable.

\begin{table}[htbp!]
\centering
\caption{High Dimensional Feature Selection Performance with Unknown Covariance Matrix}
\label{tab:graphical_lasso}
\begin{tabular}{S[table-format=1.1] 
                 S[table-format=1.3] 
                 S[table-format=1.4] 
                 S[table-format=1.2e-2]}
\toprule
{$\rho$} & {$\alpha$} & {TPR} & {FPR} \\
\midrule
0.0 & 0.05 & 1.0000 & 0.00e+0 \\
0.1 & 0.05 & 1.0000 & 0.00e+0 \\
0.2 & 0.1 & 1.0000 & 0.00e+0 \\
0.3 & 0.1 & 0.9862 & 2.30e-5 \\
0.4 & 0.1 & 0.9724 & 4.61e-5 \\
0.5 & 1 & 0.9310 & 1.15e-4 \\
0.6 & 1 & 0.7862 & 3.57e-4 \\
\bottomrule
\end{tabular}
\end{table}
The results are reported in Table \ref{tab:graphical_lasso}. Our method remains robust even when $p>n$: TPR exceeds 78\% and FPR remains near zero even at $\rho=0.6$. Notably, when $\rho = 0.6$, the regularized condition number is approximately 114, yet the method still performs well, demonstrating its practical applicability to real-world data.

\subsection{Computational Efficiency of 2-step Approach}

In Section \ref{mse} of the main text, we adopt a two-step approach so that our model could fit the data. Here, we compare the computational efficiency of our method with the competing deep learning-based approaches, with detailed runtimes summarized in Table \ref{tab:detailed_time_comparison_sample_size}.
The computation times of LassoNet and DFS are highly sensitive to their respective tuning parameters — the path multiplier ($M_\lambda$) for LassoNet and the intersection parameter ($Ts$) for DFS. Accordingly, we report runtimes across a range of realistic values for these parameters. As shown in the table, our two-step approach consistently exhibits substantially superior computational efficiency compared to both LassoNet and DFS across all evaluated sample sizes ($n = 100$ to $5000$). This advantage arises because the feature selection phase of our method does not rely on iterative gradient-based optimization of a neural network objective, thereby avoiding the heavy computational burden associated with training deep models on the full high-dimensional input.

\begin{table}[htbp!]
\centering
\caption{Runtime Comparison for Different Methods varying Sample Sizes ($n$) (Mean ± SD, Seconds)}
\label{tab:detailed_time_comparison_sample_size}
\begin{adjustbox}{max width=\textwidth} 
\begin{threeparttable}
\small 
\begin{tabular}{lccccc}
\toprule
Method & \multicolumn{5}{c}{Sample Size $n$} \\
\cmidrule{2-6}
 & 100 & 500 & 1000 & 2000 & 5000 \\
\midrule
2-Step Approach & 0.027 ± 0.004 & 0.185 ± 0.010 & 0.188 ± 0.041 & 0.126 ± 0.030 & 0.849 ± 0.038 \\
DFS (Ts=25) & 0.417 ± 0.010 & 0.364 ± 0.011 & 0.388 ± 0.022 & 0.659 ± 0.035 & 1.254 ± 0.130 \\
DFS (Ts=40) & 0.813 ± 0.017 & 0.431 ± 0.028 & 0.433 ± 0.060 & 0.801 ± 0.079 & 1.967 ± 0.221 \\
DFS (Ts=55) & 1.089 ± 0.008 & 0.605 ± 0.084 & 0.565 ± 0.047 & 1.042 ± 0.059 & 2.726 ± 0.349 \\
LassoNet ($M_\lambda$=1.3) & 0.940 ± 0.009 & 2.114 ± 0.040 & 2.203 ± 0.075 & 3.565 ± 0.287 & 8.883 ± 0.921 \\
LassoNet ($M_\lambda$=1.2) & 1.231 ± 0.015 & 2.585 ± 0.079 & 2.414 ± 0.155 & 4.289 ± 0.179 & 9.679 ± 1.132 \\
LassoNet ($M_\lambda$=1.1) & 1.665 ± 0.015 & 3.657 ± 0.038 & 4.065 ± 0.230 & 5.417 ± 0.178 & 13.235 ± 1.536 \\
\bottomrule
\end{tabular}
\begin{tablenotes}
\footnotesize
\item Note: Experiments are done under Case $1$ with parameters $p=200$, $k_1=5$, $s=5$.
\end{tablenotes}
\end{threeparttable}
\end{adjustbox}
\end{table}

\subsection{Simulation Results on Selection of $k_1$ and $s$}
For selecting $k_1$, a straightforward approach is to use the eigengap of $(1/n)\sum_{i=1}^n y_i\bT(\bx_i)$, i.e., $|\lam_k|-|\lam_{k+1}|$. Intuitively, one can identify $k_1$ at the index where the eigengap drops sharply, i.e., choose $k$ where $|\hat{\lambda}_{k-1}| - |\hat{\lambda}_{k}|$ is much larger than $|\hat{\lambda}_{k}| - |\hat{\lambda}_{k+1}|$. More formally, inspired by the gap-statistic idea for determining the number of clusters \citep{tibshirani2001gap} we define the absolute eigengap ratio $r(k)=\frac{|\hat{\lambda}_{k-1}| - |\hat{\lambda}_{k}|}{|\hat{\lambda}_{k}| - |\hat{\lambda}_{k+1}| + \gamma_{\text{reg}}} $, where $\gamma_{\text{reg}}$ is a small regularization constant to ensure numerical stability, and select $k_1$ by maximizing $r(k)$. This ratio-based rule yields stable choices of $k_1$ across a wide range of settings. Taking Case 1 as an illustration, Figure \ref{fig:k1} below illustrates $k_1$ selection via both the eigengap and the ratio metric, where $k_1 = 5$ is clearly identified by both methods.

\begin{figure}[htbp!]
\centering
\includegraphics[width=0.8\linewidth]{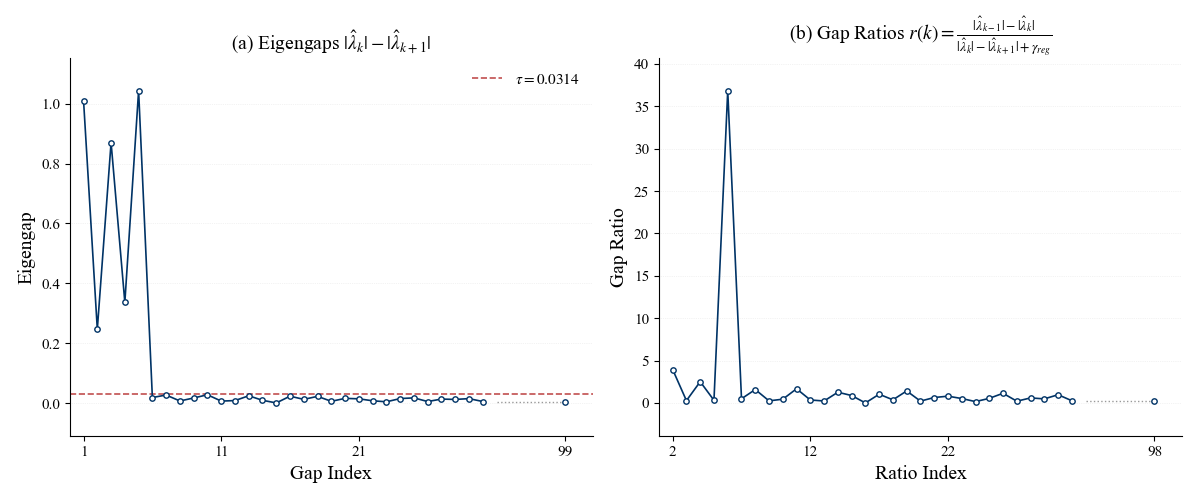}
\caption{Visualization of the Eigenvalues of the empirical $\mathbb{E}[y T(\bx)]$}
\label{fig:k1}
\end{figure}

To determine the number of selected features $s$ in a fully data-driven manner, we adapt a Bayesian Information Criterion (BIC)-based approach inspired by the nonlinear feature selection framework of \citet{chen2021nonlinear}. As described in Section~\ref{sec_prediction} of the main text, we first rank the features using our method, then retrain a neural network using only the top $s$ features. This two-step strategy yields substantially lower prediction MSE compared to training on the full set of variables. To choose $s$, we evaluate the refitted neural network for a range of candidate $s$ values and compute the corresponding BIC:
\begin{align}
   \text{BIC} = n \cdot \ln\big(\text{MSE}(s)\big) + \lam_s \cdot \ln(n),
\end{align}
where $\text{MSE}(s)$ is the prediction MSE obtained from a neural network trained on the $s$ selected features. Strictly speaking, $\lam_s$ corresponds to the total number of parameters in a neural network with $s$ features. To avoid the intricacies of architectural tuning, we fix $\lam_s=100 s$ throughout all simulation experiments.

\begin{figure}[htbp!]
    \centering
    \includegraphics[width=0.75\linewidth]{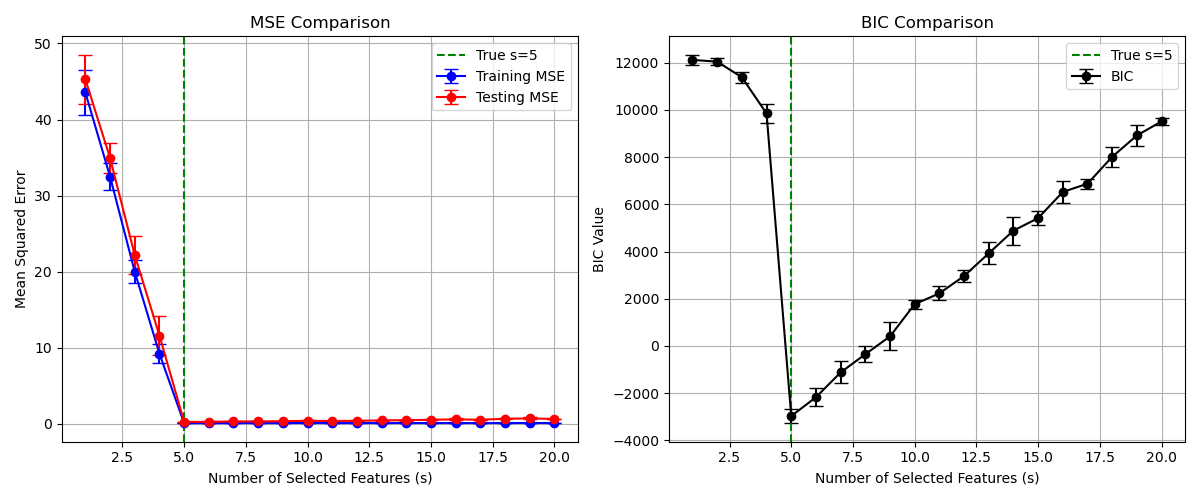}
    \caption{Selection of $s$ using the BIC criterion in Case~1 ($n=3000$, $\lambda_{\text{s}}=100s$).}
    \label{fig:kappa}
\end{figure}

As an illustration, Figure~\ref{fig:kappa} displays the BIC curve for Case~1 with $n=3000$. The criterion correctly identifies the true number of active features $s=5$, achieving a markedly lower BIC than neighboring values. This demonstrates that the proposed BIC-based procedure can reliably recover the correct sparsity level in practice, providing a principled and fully automated way to choose $s$ without prior knowledge of the ground truth form of function.

\end{document}